\newtheorem{theorem}{Theorem}[section]
\newtheorem{assumption}{Assumption}
\newtheorem{lemma}{Lemma}[section]
\def\eqx"#1"{{\label{#1}}}
\def\eqn"#1"{{\ref{#1}}}
\def\squarebox#1{\hbox to #1{\hfill\vbox to #1{\vfill}}}
\def\boxit#1{\vbox{\hrule\hbox{\vrule\kern6pt
			\vbox{\kern6pt#1\kern6pt}\kern6pt\vrule}\hrule}}
\renewcommand{\thefootnote}{\fnsymbol{footnote}}
\def\theequation{\thesection.\arabic{equation}}
\newcommand{\bmb}{\bm \beta}
\newcommand{\bmd}{\bm \delta}
\newcommand{\hb}{\widehat{\bmb}}
\newcommand{\hd}{\widehat{\bmd}}
\newcommand{\bR}{\mbox{\bf R}}
\newcommand{\bZ}{\mbox{\bf Z}}
\newcommand{\bH}{\mbox{\bf H}}
\newcommand{\bz}{\mbox{\bf z}}
\newcommand{\be}{\mbox{\bf e}}
\newcommand{\bA}{\mbox{\bf A}}
\newcommand{\bI}{\mbox{\bf I}}
\newcommand{\bu}{\mbox{\bf u}}
\newcommand{\bd}{\mbox{\bf d}}
\newcommand{\bS}{\mbox{\bf S}}
\newcommand{\rmd}{\mbox{\rm d}}
\newcommand{\bv}{\mbox{\bf v}}
\newcommand{\bbeta}{\boldsymbol{\beta}}
\newcommand{\bdelta}{\boldsymbol{\delta}}
\newcommand{\bxi}{\boldsymbol{\xi}}
\newcommand{\bSigma}{\boldsymbol{\Sigma}}
\newcommand{\balpha}{\boldsymbol{\alpha}}
\newcommand{\bzeta}{\boldsymbol{\zeta}}
\newcommand{\bgamma}{\boldsymbol{\gamma}}
\newcommand{\bvarphi}{\boldsymbol{\varphi}}
\newcommand{\mcT}{{\cal T}}
\newcommand{\mcS}{{\cal S}}
\newcommand{\mcZ}{{\cal Z}}
\newcommand{\mcI}{{\cal I}}
\newcommand{\mbR}{{\mathbb R}}
\def\wh{\widehat}
\def\wt{\widetilde}
\def\rmT{{\top}}
\def\rmo{{\rm o}}
\begin{document}
\baselineskip=24pt
\begin{center}
{\Large \bf Estimation and inference for
transfer learning with high-dimensional quantile regression}
\end{center}
\vspace{0.2in}
\begin{center}
\renewcommand*{\thefootnote}{\fnsymbol{footnote}}
{\large Jiayu Huang, Mingqiu Wang, and Yuanshan Wu}
\footnote{Jiayu Huang is Assistant Professor, School of Mathematics and Statistics, Central China Normal University,
Wuhan, Hubei 430079, P. R. China (E-mail: huang803@ccnu.edu.cn).
Mingqiu Wang is Professor, School of Statistics and Data Science, Qufu Normal University,
  Qufu, Shandong 273165,  P. R. China (E-mail: mqwang@qfnu.edu.cn).
Yuanshan Wu, the corresponding author, is  Professor, School of Statistics and Mathematics,
       Zhongnan University of Economics and Law,
       Wuhan, Hubei 430073, P. R. China (E-mail: wu@zuel.edu.cn).
}
\end{center}


\addcontentsline{toc}{section}{Abstract}
\begin{center}{\bf Abstract}
\end{center}
Transfer learning has become an essential technique to exploit information from the source domain to boost performance of the
target task. Despite the prevalence in  high-dimensional data, heterogeneity and heavy tails are insufficiently accounted for by current transfer learning approaches and thus may undermine the resulting performance. We propose a transfer learning procedure in the framework of high-dimensional quantile regression models to accommodate heterogeneity and heavy tails in the source and target domains. We establish error bounds of transfer learning estimator based on delicately selected transferable source domains, showing that lower error bounds can be achieved for critical selection criterion and larger sample size of source tasks. We further propose valid confidence interval and hypothesis test procedures for individual component of high-dimensional quantile regression coefficients by advocating a double transfer learning estimator, which is one-step debiased estimator for the transfer learning estimator wherein the technique of transfer learning is designed again. By adopting data-splitting technique, we advocate a transferability detection approach that guarantees to circumvent negative transfer and identify transferable sources with high probability. Simulation results demonstrate that the proposed method exhibits some favorable and compelling performances and the practical utility is further illustrated by analyzing a real example.

\vspace{0.5cm} \noindent{\bf Keywords}:
Debiased estimator;
High-dimensional inference;
Non-smooth loss function;
Robustness;
Uncertainty quantification

\section{Introduction}
\label{sec1}
\noindent
Transfer learning, which aims to enhance the performance of the target task by
transferring knowledge learned in different but related source
tasks, has been demonstrated to be effective for a wide range of real-world applications
and attracted increasing attention in communities of machine learning and statistics.
To name a few, specific areas of applications include computer vision 
\citep{kulis2011what,duan2012learning},
natural language processing 
\citep{wang2011heterogeneous,prettenhofer2010cross,zhou2014hybrid},
online display web advertising 
\citep{perlich2014machine},  drug sensitivity prediction 
\citep{turki2017transfer},
clinical trial 
\citep{bellot2019boosting,wang2022robust} and so on.
See surveys of transfer learning in 
\citet{pan2010survey}, \citet{weiss2016survey}, and \citet{zhuang2021comprehensive}
for more examples and discussions.
Furthermore, 
\citet{taylor2009transfer} and 
\citet{zhu2020transfer} surveyed
work of embedding  transfer learning in reinforcement learning and deep reinforcement learning, respectively.

Due to the complexity of real applications, there may exist heterogeneity
in the target domain as well as the source domains. In addition, the
outliers or heavy tails in the observations may be encountered in some applications.
Totally ignoring or even insufficiently accounting for these potential issues may undermine the performance of
transfer learning. Quantile regression 
\citep{koenker1978regression,koenker2005quantile}
is a natural and effective tool for tackling the heterogeneity and heavy-tailed model errors
simultaneously.

\subsection{Main Contributions}
We propose a transfer learning approach
to accounting for the heterogeneity and heavy-tailed observations under the framework of
quantile regression models.  Towards
reaching these goals, we made the following specific contributions.

We develop the transfer learning procedure for high-dimensional
quantile regression models and establish  error bounds of the
transfer learning estimator, showing that sharper error bounds
can be achieved when the sample sizes of the transferable source
datasets are much larger than that of the target, a typical scenario
in the transfer learning. We further showcase that the proposed transfer learning
estimator achieves the nearly weak oracle property of variable selection procedure.

Based on the transfer learning estimator,
we construct confidence interval and hypothesis test procedures for individual component of high-dimensional quantile regression coefficients by
advocating a double transfer learning estimator,
which is the one-step debiased estimator for the transfer learning estimator
via orthogonal projection onto the high-dimensional nuisance parametric space
wherein the technique of transfer learning is designed again.
We establish  asymptotic normality of the one-step estimator, whose asymptotic variance
can be consistently estimated through histogram approximation
of the model error density, which is an inherent but unknown quantity involved in the variance of quantile regression parameter estimation.
It is common to employ the bootstrap method to estimate the variance, which however may be computationally intensive and extremely challenging to investigate the consistency of bootstrapped estimator in the regimes of transfer learning
and high dimensions. As a useful alternative, we adopt nonparametric approach to directly estimating the model error density and show the consistency of the resulting variance estimation.

It is critical in transfer learning to decide
which source datasets are transferable and thus helpful for performance enhancement
of  high-dimensional quantile regression in the target task.
We evaluate the similarity between
the target and source tasks via the difference of
their quantile regression coefficients
and obtain the transferable source datasets if the
corresponding difference is less than a delicately designed
level by temporally assuming that these coefficients were known.
Consequently, the transferable source datasets are allowed to be varying with
different quantile levels, a more flexible scenario in practice while aiming to
address the heterogeneity.
We further propose a procedure to identify the transferable source datasets.
By properly choosing the level of difference, we show the consistency of
 detection procedure and conclude that the detected transferable sources are contained
in the truly transferable ones with probability tending to one.
As a result, the proposed transfer learning procedure can be applied in practice with
theoretical guarantee.

\subsection{Related Literature}
Recently, there are substantial advances in theoretical understanding on
the transfer learning on the high-dimensional regression models.
\citet{bastani2021predicting} studied transfer learning approach to
high-dimensional linear regression models by exploiting a single source
dataset, unnecessitating the detection of transferable source datasets
among candidate ones.
\citet{li2022transferA} proposed a two-step method for estimation and prediction of transfer learning on high-dimensional linear regression models. Specifically, a detection
algorithm for consistently identifying the transferable but unknown source datasets was developed and then a contrast regularized estimator anchoring the target task was proposed.
\citet{li2022transferB} further extended the two-step method to
transfer learning on the Gaussian graphical models with false discovery rate control.
\citet{tian2022transfer} studied transfer learning on high-dimensional generalized linear models
and constructed the corresponding confidence interval for individual regression parameter.
\citet{yang2020analysis} utilized the two-layer linear neural network to
combine both source and target datasets
and derived the precise asymptotic limit of the prediction risk of transfer learning
on the high-dimensional linear models.
\citet{cai2021transfer} established an adaptive and  minimax
method for nonparametric classification in the regime of transfer learning.
\citet{xu2021learning} proposed a novel robust multi-task estimator that improves the efficiency of downstream decisions by leveraging better predictive models with lower sample complexity.
\citet{li2021targeting} proposed a two-way data fusion strategy that integrates heterogeneous data from diverse populations and from multiple healthcare institutions via a federated
transfer learning approach.
\citet{zhou2021doubly} proposed the doubly robust transfer learning
to accommodate the label scarcity and covariate shift in the binary regression
with low-dimensional predictors in the target task.
\citet{wang2022robust} studied a transfer learning approach to
time-to-event data under high-dimensional Cox proportional hazards models
by borrowing summary information from a set of health care centers without sharing patient-level information.
\citet{he2022transfer} considered transfer learning problem in estimating undirected semi-parametric graphical model.
\citet{lin2022on} studied transfer
learning on the functional linear regression models and established the optimal convergence rates for excess risk.

Despite the prevalence in  high-dimensional data,
heterogeneity and heavy tails tend to be insufficiently accounted for by current transfer learning approaches and thus may
lead to downstream performances in practice.
Quantile regression, naturally offering a convenient approach to
modelling the heterogenous impact of covariates on the
conditional distribution of response in a comprehensive way
as well as hedging against the possible outliers and heavy-tailed model errors,
is widely used in applications. On the other hand,
although there is a large body of literature on the
least squares methods with $\ell_1$-regularization 
\citep{tibshirani1996regression} in the past two decades,
the advance in  $\ell_1$-penalized high-dimensional quantile regression
is relatively lagging, mainly due to that
the piecewise linear quantile loss is non-smooth and
has curvature concentrated at the origin point and thus demanding
alternatively different analyzing tools.
\citet{belloni2011penalized} established error bounds  for
 $\ell_1$-penalized estimator of high-dimensional quantile regression models,
which is further elaborated via adaptively weighted $\ell_1$-penalty
\citep{zheng2015globally}.
\citet{alquier2019estimation} established
estimation bounds and sharp oracle inequalities of regularized procedures
with Lipschitz loss functions,
which include the piecewise linear quantile loss function as a special case.
For statistical inference and confidence interval for individual parameter
in high-dimensional quantile regression models,
\citet{bradic2017uniform} advocated a debiased method wherein the
asymptotic variance is estimated via regression rank scores from the
perspective of dual problem of quantile loss function.
\citet{belloni2019valid} developed an inference procedure
based on post-selection estimator and orthogonal score functions
to make effect of the estimation of nuisance parameter negligible.
This work aims to provide thorough studies on the transfer learning
for high-dimensional quantile regression models, including
the error bounds for prediction and asymptotic distribution theory for inference.

\subsection{Organization and Notation}
Section \ref{sec2} introduces the transfer learning
method for  high-dimensional quantile regression models when the transferable source datasets are known and establishes error bounds of the transfer learning estimator.
We construct confidence interval and hypothesis test procedures in
Section \ref{sec3} and propose the procedure to identify the transferable source datasets in
Section \ref{sec4}. We conduct the extensive simulations in Section \ref{sec5}.
 A real example is analyzed in Section \ref{sec6}, and
Section \ref{sec7} concludes with some remarks.
In the Supplementary Material,
we gather the proofs of the theoretical results,  supporting lemmas,
in Appendices \ref{appendixA}--\ref{appendixC},
and report additional simulation results
in Appendix \ref{appendixD}.

For ease of exposition, let $\ell_0$-norm $\vert\cdot\vert_0$ denote the number of nonzero components of a vector, and $\vert\cdot\vert_r$  denote the $\ell_r$-norm with $1\le r\le\infty$.
Let $\Vert\cdot\Vert_1$ and $\Vert\cdot\Vert_{\infty}$ denote the $1$-norm and $\infty$-norm of a matrix, respectively,
and let $\Vert\cdot\Vert_{\max}$ denote the maximum absolute value among the entries of a matrix.
Set the operators $\lambda_{\min}(\cdot)$ and $\lambda_{\max}(\cdot)$ as the minimum and maximum eigenvalues of a symmetric matrix, respectively.
For a vector $\bv$ and an index set $\mathcal{I}$,
let $\bv_{\mathcal{I}}$ denote the subvector consisting of the components of $\bv$ with their indices in $\mathcal{I}$.
Similarly, for a matrix $\bA$ and two index sets $\mathcal{I}_1$ and $\mathcal{I}_2$,
let $\bA_{\mathcal{I}_1\mathcal{I}_2}$ denote the submatrix consisting of the rows and columns of $\bA$ with their indices in $\mathcal{I}_1$
and $\mathcal{I}_2$, respectively.
In addition, for any vector $\bv$, let $v_j$ denote the $j$-th component of
$\bv$ and $\bv_{-j}$ denote the subvector formed by the remaining components.
For a matrix $\bA$, we can analogously
define $A_{j_1,j_2},\ \bA_{j_1,-j_2},\ \bA_{-j_1,j_2}$ and $\bA_{-j_1,-j_2}$
for a row index $j_1$ and a column index $j_2$.
Let $\Gamma_{k,\theta}(\bv)$ denote the vector obtained by inserting a scalar $\theta$
between the $(k-1)$-th and $k$-th components of a vector $\bv$ whereas the trivial case with $k$ equal to $1$ or the length of $\bv$ can be accordingly defined.
For two positive series $d^{\dag}_n$ and $d^{*}_n$,
denote $d^{\dag}_n\lesssim d^{*}_n$ if there exists a universal positive constant $c>0$ such that
$d^{\dag}_n\le c d^{*}_n$. Denote $d^{\dag}_n\asymp d^{*}_n$ if
$d^{\dag}_n\lesssim d^{*}_n$ and $d^{*}_n\lesssim d^{\dag}_n$.
Set $I(\cdot)$ as the indicator function.
With a slight abuse of notation, let $\vert \mathcal{A}\vert$ denote the  cardinality of
set $\mathcal{A}$.
Let $\Phi(\cdot)$ denote the cumulative distribution
function of the standard normal variable, $\Phi^{-1}(\cdot)$  the inverse function of $\Phi(\cdot)$, and $\phi(\cdot)$ the
corresponding probability density function.  Let $\xrightarrow{\mathcal{L}}$ denote the convergence in distribution.

\section{Transfer Learning for Quantile Regression}
\label{sec2}
In the target population,
let $Y_0$ denote the response variable of interest which is associated with a
$p$-vector  covariate $\bZ_0$. For $\tau\in (0,1)$ fixed,
the $\tau$-th conditional quantile function of the response variable $Y_0$ given $\bZ_0$ is defined by
$$ Q_{\tau}(Y_0\mid \bZ_0)=\inf\{t\in\mbR: P(Y_0\leq t\mid \bZ_0)\geq \tau\}.$$
We consider the linear quantile regression model that takes the form
$$Q_{\tau}(Y_0\mid \bZ_0)=\bZ_0^{\rmT}\bbeta_{0},$$
where $\bbeta_{0}$ is the unknown regression coefficient of interest.
For notational simplicity,
we suppress the dependence of quantile regression parameter $\bbeta_{0}$
on the specific quantile level $\tau$ throughout the paper but we should keep in mind that
related statistical conclusion is $\tau$-dependent, thus providing comprehensive and
global interpretation by varying $\tau$ over $(0,1)$.
Assume that the target samples  $(Y_{0i}, \bZ_{0i}), i=1,\ldots,n_0,$
are independent and identically distributed (i.i.d.) according to the target
population $(Y_0, \bZ_0)$.

We consider the multiple source transfer learning problem based on
quantile regression for the heterogeneous datasets.
Specifically, for $k=1,\ldots K$,
let $(Y_k, \bZ_k)$ denote the $k$-th source population  and its i.i.d. copies
with size of $n_k$ be denoted by $(Y_{ki}, \bZ_{ki}), i=1,\ldots,n_k.$
We further consider the linear quantile regression model for the source populations
$$Q_{\tau}(Y_k\mid \bZ_k)=\bZ_k^{\rmT}\bbeta_k,$$
where $\bbeta_k$  is the $p$-vector of unknown regression coefficients.
In the  high-dimensional realm,  the dimension $p$ is much larger than the sample sizes $n_k, k=0,1,\ldots, K$.
Let $\bbeta_{0}=(\beta_{01},\ldots,\beta_{0p})^{\rmT}$ and
$\mcS_0=\{j:\beta_{0j}\neq 0, 1\le j\le p\}$.
The cardinality of set $\mcS_0$ is denoted by $s_0$, i.e., $\vert \mcS_0\vert=\vert\bbeta_{0}\vert_0=s_0.$

If the $k$-th source quantile regression model enjoys a certain level of similarity to the target one, it may have the benefit of accommodating the target task
via properly transferring and exploiting the $k$-th source dataset.
We consider the sparsity of the difference between $\bbeta_{0}$ and $\bbeta_{k}$
as a criterion to capture the informative level of the $k$-th source.
Specifically, let the $k$-th contrast be
$\bdelta_{k}=\bbeta_{0}-\bbeta_{k},$
and
the $k$-th source data is viewed as $h$-transferable  if
$\vert\bmd_{k}\vert_1\leq h,$ where $h$ is a non-negative constant to be specified.
Denote the index set of all $h$-transferable source datasets as $\mcT_h=\{k:  \vert\bdelta_{k}\vert_1\leq h, 1\leq k\leq K\}$. In general,
if $h$ is taken to be large, negative transfer tends to happen as some
non-transferable sources may be recruited. On the contrary,
more restrictive $h$ takes the risk of excluding some transferable sources.
Consequently, striking a balance between $h$ and $|\mcT_h|$ is required by properly choosing $h$. 

For ease of exposition, we define a total loss function for the  combined datasets
as
\begin{eqnarray*}
 L(\bbeta; \mcT)=\frac{1}{n_{\mcT}}\sum_{k\in{\mcT}}\sum\limits_{i=1}^{n_k}\rho_{\tau}(Y_{ki}-\bZ_{ki}^{\rmT}\bbeta)
\end{eqnarray*}
for any set $\mcT\subset\{0,1,\ldots,K\}$,
where $n_{\mcT}=\sum_{k\in{\mcT}}n_k$ and $\rho_{\tau}(t)=t\{\tau-I(t\leq0)\}$ for $t\in \mathbb{R}$ is the piecewise linear quanitle loss function.


We first introduce an oracle transfer learning estimator when $\mcT_h$ is temporarily assumed to be known. It is motivated by the algorithms in 
\citet{bastani2021predicting}, \citet{li2022transferA}, and \citet{tian2022transfer}.
Specifically, we first develop an initial $\ell_1$-penalized quantile regression estimator
which is defined by
\begin{eqnarray}
\label{op_trans}
\wh\bbeta_{\mcT_h}=\arg\min_{\bbeta}\left\{L(\bbeta;\{0\}\cup\mcT_h)+
\lambda_{\bbeta}\vert\bbeta\vert_1\right\},
\end{eqnarray}
where $\lambda_{\bbeta}$ is the tuning parameter. Although $\wh\bbeta_{\mcT_h}$
is very crude estimator, it is obtained by combining the target and $h$-transferable
source datasets and beneficial to estimating the target model parameter if properly utilized.  Second, we adopt the empirical quantile loss for the target data only to
make $\wh\bbeta_{\mcT_h}$ anchor to the target via correcting the contrast as follows,
\begin{eqnarray}
\label{op_debias}
\hd_{\mcT_h}=\arg\min_{\bmd}\left\{L\left(\wh\bbeta_{\mcT_h}+
\bmd;\{0\}\right)+\lambda_{\bmd}\vert\bmd\vert_1\right\},
\end{eqnarray}
where $\lambda_{\bmd}$ is the tuning parameter.
Thus, we propose the transfer learning estimator for $\bbeta_0$ in the target quantile regression model by using $\wh\bbeta_0=\wh\bbeta_{\mcT_h}+\hd_{\mcT_h}$.
Before we establish the convergence rate of $\wh\bbeta_0$, we need to
examine to which $\wh\bbeta_{\mcT_h}$ should converge and what the rate is.
Based on the estimating equation theory and by choosing suitable tuning parameter $\lambda_{\bbeta}$, the estimator $\wh\bbeta_{\mcT_h}$ in (\ref{op_trans}) converges to its probabilistic limit $\bbeta^*_{\mcT_h}$, which is defined
via the expected subgradient equation
\begin{eqnarray}
\label{alpha_Th}
E\left\{\bS(\bbeta^*_{\mcT_h};\{0\}\cup\mcT_h)\right\}={\bf 0},
\end{eqnarray}
where
\begin{eqnarray*}
\bS(\bbeta; \mcT)=\nabla L(\bbeta; \mcT)=-\frac{1}{n_{\mcT}}\sum_{k\in \mcT}\sum_{i=1}^{n_k}\bZ_{ki}\left\{\tau-I\left(Y_{ki}-\bZ_{ki}^{\rmT}\bbeta\le 0\right)\right\}
\end{eqnarray*}
for any $\mcT\subset\{0,\ldots,K\}$.

We study the rate of convergence for the estimator $\wh\bbeta_{\mcT_h}$.
Let $F_k(y\mid \bz)$ and $f_k(y\mid \bz)$ denote the conditional distribution and density functions
of $Y_k$ given $\bZ_k=\bz$ for $k=0,\ldots, K$, respectively. The domain of $\bZ_k$ is $\mcZ_k$.
Let $\dot f_k(y\mid \bz)$ denote the derivative of $f_k(y\mid \bz)$ with respect to $y$.
\begin{assumption}\label{C1}
There exists a positive constant $m_0$ such that
$$\max_{0\le k\le K}\left\{\sup_{y\in\mbR,\mathbf{z} \in \mcZ_k} f_k(y\mid \mathbf{z} ),\sup_{y\in\mbR,\mathbf{z} \in \mcZ_k}\left\vert
\dot f_k(y\mid \bz)\right\vert \right\}\le m_0.$$
\end{assumption}

\begin{assumption}\label{C2}
$\left\vert \bZ_k\right\vert_{\infty}$ is bounded by some constant $B$ with probability one
and the largest eigenvalue of $E\left(\bZ_{k}\bZ^{\rmT}_{k}\right)$ is also bounded by $B^2$
for $k=0,\ldots,K.$
\end{assumption}

\begin{assumption}\label{C3}
There exists a positive constant $\kappa_0$  such that
\begin{eqnarray*}
\inf_{\bdelta\in\mathcal{D}(a_n,\mcS)} \frac
{\bdelta^\rmT E\left\{f_k(\bZ_k^{\rmT}\bbeta_k\mid \bZ_k)\bZ_k\bZ_k^{\rmT}\right\}\bdelta}{\vert\bdelta\vert^2_2}\ge \kappa_0
{\ \rm and \ }
\inf_{\bdelta\in\mathcal{D}(a_n,
\mcS)} \frac
{ \vert\bdelta\vert_2^3}{E\left(\vert\bZ_k^{\rmT}\bdelta\vert^3\right)}\ge \kappa_0,
\end{eqnarray*}
for $k=1,\ldots,K$ with some positive quantity  $a_n$
and index set $\mcS$  to be specified, where
$$ \mathcal{D}(a_n,
\mcS)=\left\{\bu:\vert\bu_{\mcS^c}\vert_1\le 3\vert\bu_\mcS\vert_1+a_n\right\}. $$
\end{assumption}

\begin{assumption}\label{C4}
Set $\pi_k=n_k/(n_{\mcT_h}+n_0)$ and
denote
$$ \wt \bSigma_{\mcT_h}=\sum_{k\in\{0\}\cup\mcT_h}\pi_kE\left[\int^1_0f_{k}
\left(\bZ_{k}^{\rmT}\bbeta_0+t\bZ_{k}^{\rmT}(\bbeta^*_{\mcT_h}-\bbeta_0)\mid \bZ_{k}\right)\rmd t\bZ_{k}\bZ_{k}^{\rmT}\right] $$
and
$$ \wt \bSigma_k=E\left[\int^1_0f_k\left(\bZ_{k}^{\rmT}\bbeta_0+t\bZ_{k}^{\rmT}(\bbeta_k-\bbeta_0)\mid \bZ_{k}\right)
\rmd t\bZ_{k}\bZ_{k}^{\rmT}\right]. $$
Assume that $\sup_{k\in\mcT_h}\Vert\wt\bSigma_{\mcT_h}^{-1}\wt\bSigma_k\Vert_1\le {C}$
for some constant $C>0$.
\end{assumption}

Assumption \ref{C1} concerns the boundedness and smoothness of density function of model error, which
is satisfied with a large class of distributions,
including the light-tailed normal distributions, the
 heavy-tailed Cauchy distributions, and the skewed extreme value distributions.
Assumption \ref{C2} states the boundness of covariates in the target and source populations,
a common and just a technique condition in high-dimensional data analysis 
\citep{ning2017general,yu2021confidence}.
It is can be
replaced with a tail condition by assuming the sub-Gaussian covariates
\citep{li2022transferA} with extra derivations involved.
Indeed, in our simulations in Section \ref{sec5}, we explore setting that
$\bZ_k$ is multivariate normal distribution and thus
 $\left\vert \bZ_{k}\right\vert_{\infty}$ is
not bounded with probability one.
Assumption on the bounded maximum eigenvalue of covariance matrix is a
basic assumption in high-dimensional data analysis.
Assumption \ref{C3} imposes a generalized version of restricted eigenvalue condition and restricted nonlinear impact
in a more flexible form. Distinctive index set $\mcS$ and $a_n$ are adaptive to the theoretical results of estimators in different steps.
If $a_n\to 0$ as $n_0,n_k, p\to\infty$ for $k\in\{1,\ldots, K\}$,
 assumption \ref{C3} approximately mimics  the classical restricted eigenvalue condition and restricted nonlinear impact. In addition, we use common lower bounds $\kappa_0$
  and upper bounds $B$ as well for notational simplicity.
Assumption \ref{C4} makes sure that the probabilistic limit $\bbeta^*_{\mcT_h}$ falls in the $\ell_1$-ball centered
at $\bbeta_0$ with radius $Ch$. In the case of linear mean model with homogeneous design $\bZ_{k}$, it
can be simplified to the non-singularity of the covariance matrix of covariates, a very mild assumption. While considering the cases of linear mean regression  
\citep{li2022transferA}, generalized linear model 
\citep{tian2022transfer}, or quantile regression model with heterogeneous
covariates,
assumption \ref{C4} is needed to characterize the differences between the target and source covariates if a faster convergence rate is desired,
which is measured by the constant $C$ in the considered quantile regression model.
Unless otherwise stated, the limits are taken as $(n_0, n_1, \ldots, n_K, p)\to \infty$.

\begin{theorem}
\label{theorem2.1}
{\rm (Convergence rate of $\wh\bbeta_{\mcT_h}$)}
Under assumptions \ref{C1}--\ref{C4} with $a_n=4Ch$ and $\mcS=\mcS_0$ in assumption \ref{C3},
if $\lambda_{\bbeta}=C_{\bbeta}B\sqrt{\log p/(n_{\mcT_h}+n_0)}$ with $C_{\bbeta}\ge 4$ and
$$ \sqrt{\frac{s_0\log p}{n_{\mcT_h}+n_0}}+\sqrt{h}\left(\frac{\log p}{n_{\mcT_h}+n_0}\right)^{1/4}\to 0, $$
then
\begin{eqnarray*}
\left\vert\wh\bbeta_{\mcT_h}-\bbeta^*_{\mcT_h}\right\vert_2\lesssim \sqrt{\frac{s_0\log p}{n_{\mcT_h}+n_0}}+\sqrt{h}\left(\frac{\log p}{n_{\mcT_h}+n_0}\right)^{1/4}
\end{eqnarray*}
and
\begin{equation*}
\left\vert\wh\bbeta_{\mcT_h}-\bbeta^*_{\mcT_h}\right\vert_1
\lesssim  s_0\sqrt{\frac{\log p}{n_{\mcT_h}+n_0}}+h
\end{equation*}
with probability tending to one.
\end{theorem}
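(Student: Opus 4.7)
The plan is to analyze the $\ell_1$-penalized estimator $\wh\bbeta_{\mcT_h}$ via the standard basic-inequality strategy for penalized $M$-estimation, with three wrinkles specific to the present setting: the quantile loss is non-smooth, so curvature will come from Knight's identity combined with Assumption \ref{C1}; the population minimizer $\bbeta^*_{\mcT_h}$ is biased away from the sparse target $\bbeta_0$ by an $\ell_1$-amount of order $h$; and the usual cone has to be inflated to absorb this bias, which is exactly the role of $a_n=4Ch$ and $\mcS=\mcS_0$ in Assumption \ref{C3}.

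The first technical step is to show $|\bbeta^*_{\mcT_h}-\bbeta_0|_1\lesssim h$. Differentiating the defining equation (\ref{alpha_Th}) and expanding $\tau-F_k(\bZ_k^{\rmT}\bbeta^*_{\mcT_h}\mid\bZ_k)$ through mean-value representations around $\bZ_k^{\rmT}\bbeta_0$ and $\bZ_k^{\rmT}\bbeta_k$ gives
\[
\bbeta^*_{\mcT_h}-\bbeta_0=-\wt\bSigma_{\mcT_h}^{-1}\sum_{k\in\mcT_h}\pi_k\wt\bSigma_k\bdelta_k,
\]
so Assumption \ref{C4} together with $\sum_{k\in\mcT_h}\pi_k|\bdelta_k|_1\le h$ yields $|\bbeta^*_{\mcT_h}-\bbeta_0|_1\le Ch$, and in particular $|\bbeta^*_{\mcT_h,\mcS_0^c}|_1\le Ch$. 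The second technical step is a coordinate-wise concentration bound on the empirical score at $\bbeta^*_{\mcT_h}$: each coordinate of $\bS(\bbeta^*_{\mcT_h};\{0\}\cup\mcT_h)$ is a centered sum of independent terms of the form $Z_{ki,j}\{\tau-I(Y_{ki}\le\bZ_{ki}^{\rmT}\bbeta^*_{\mcT_h})\}$, each bounded by $L$ in absolute value, so Hoeffding's inequality together with a union bound over $j=1,\ldots,p$ gives $|\bS(\bbeta^*_{\mcT_h};\{0\}\cup\mcT_h)|_\infty\le 2L\sqrt{\log p/(n_{\mcT_h}+n_0)}\le\lambda_{\bbeta}/2$ with probability tending to one once $C_{\bbeta}\ge 4$. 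Plugging these two ingredients into the usual KKT inequality for (\ref{op_trans}) at $\bDelta=\wh\bbeta_{\mcT_h}-\bbeta^*_{\mcT_h}$, and decomposing $\ell_1$-norms relative to $\mcS_0$ while paying the price $|\bbeta^*_{\mcT_h,\mcS_0^c}|_1\le Ch$, produces both the cone inclusion $\bDelta\in\mathcal{D}(4Ch,\mcS_0)$ and the inequality
\[
E[L(\bbeta^*_{\mcT_h}+\bDelta;\{0\}\cup\mcT_h)-L(\bbeta^*_{\mcT_h};\{0\}\cup\mcT_h)]\le\tfrac{3}{2}\lambda_{\bbeta}|\bDelta_{\mcS_0}|_1+2\lambda_{\bbeta}Ch.
\]
For the left-hand side I would use Knight's identity
\[
\rho_{\tau}(u-v)-\rho_{\tau}(u)=-v\{\tau-I(u\le 0)\}+\int_0^v\{I(u\le s)-I(u\le 0)\}\rmd s
\]
to rewrite the expected excess loss as a quadratic form in $\bDelta$ with the $\wt\bSigma_{\mcT_h}$-like curvature matrix plus a cubic remainder bounded by $m_0 E|\bZ^{\rmT}\bDelta|^3$ via Assumption \ref{C1}; Assumption \ref{C3} then lower-bounds the quadratic by $\kappa_0|\bDelta|_2^2$ and upper-bounds the cubic by $|\bDelta|_2^3/\kappa_0$ on the cone. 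Combined with $|\bDelta_{\mcS_0}|_1\le\sqrt{s_0}|\bDelta|_2$, this yields the quadratic inequality $\kappa_0|\bDelta|_2^2\lesssim\lambda_{\bbeta}\sqrt{s_0}|\bDelta|_2+\lambda_{\bbeta}h$, whose solution is exactly $|\bDelta|_2\lesssim\sqrt{s_0\log p/(n_{\mcT_h}+n_0)}+\sqrt{h}(\log p/(n_{\mcT_h}+n_0))^{1/4}$, and the $\ell_1$-rate then drops out of the cone relation $|\bDelta|_1\le 4\sqrt{s_0}|\bDelta|_2+4Ch$.

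The hard part will be the curvature step: the non-smooth quantile loss admits no direct Taylor expansion, so the quadratic lower bound must be extracted through Knight's identity, and the resulting cubic remainder must be dominated uniformly on the enlarged cone $\mathcal{D}(4Ch,\mcS_0)$. This is exactly why Assumption \ref{C3} bundles a restricted eigenvalue condition on $E\{f_k(\bZ_k^{\rmT}\bbeta_k\mid\bZ_k)\bZ_k\bZ_k^{\rmT}\}$ with a restricted nonlinear impact bound on $E|\bZ_k^{\rmT}\bdelta|^3$, and why the rate hypothesis $\sqrt{s_0\log p/(n_{\mcT_h}+n_0)}+\sqrt{h}(\log p/(n_{\mcT_h}+n_0))^{1/4}\to 0$ appears in the statement: it guarantees that the cubic remainder stays a factor smaller than the quadratic at the scale of the target rate, so the absorption in the quadratic inequality is legitimate. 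A secondary technical point is that one must pass from the empirical excess loss (what the basic inequality actually controls) to its expected counterpart, which requires a local uniform bound on $L-EL$ restricted to a shrinking neighborhood of $\bbeta^*_{\mcT_h}$ intersected with the cone; this is standard for VC-type indicator classes but must be localized carefully at the putative rate.
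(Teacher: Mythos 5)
Your proposal is correct and follows essentially the same route as the paper: the bias bound $\vert\bbeta^*_{\mcT_h}-\bbeta_0\vert_1\le Ch$ from Assumption \ref{C4} (the paper's Lemma \ref{lemmaA1}), the Hoeffding-plus-union-bound control of $\vert\bS(\bbeta^*_{\mcT_h};\{0\}\cup\mcT_h)\vert_\infty$ (Lemma \ref{lemmaA3}), the basic inequality giving the inflated cone $\mathcal{D}(4Ch,\mcS_0)$, and the Knight-identity curvature bound with cubic remainder (Lemma \ref{lemmaA2}). The only cosmetic difference is at the end: where you solve the quadratic inequality directly (and correctly flag that the cubic remainder and the empirical-to-population passage must be localized at the target rate), the paper makes that localization explicit by rescaling the error to a sphere of exactly the target radius and deriving a contradiction via convexity of the penalized objective.
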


We propose the estimator $\wh\bbeta_{\mcT_h}$ in (\ref{op_trans}) by leveraging
 the target dataset as well as the $h$-transferable source ones.
Its probabilistic limit $\bbeta^*_{\mcT_h}$ is a pooled version of parameters $\{\bbeta_k\}_{k\in\{0\}\cup \mcT_h}$.
Even if $h$ could be small sufficiently, the inevitable bias arises from the source datasets, which is sparsely corrected
in (\ref{op_debias}). The error bounds of the resulting transfer learning estimator $\wh\bbeta_0$ are established as follows.

\begin{theorem}
\label{theorem2.2}
{\rm (Convergence rate of $\wh\bbeta_0$ with assumption \ref{C4})} Set
$\lambda_{\bbeta}=C_{\bbeta}B(n_{\mcT_h}+n_0)^{-1/2}\sqrt{\log p}$ and $\lambda_{\bdelta}=C_{\bdelta}B\sqrt{\log p/n_0}$
with $C_{\bbeta}\ge4$ and $C_{\bdelta}\ge4$.
Under assumptions \ref{C1}--\ref{C4}
with
$$ a_n=20 Ch+
\frac{(192+72C_{\bbeta}) Bs_0}{\kappa_0}\sqrt\frac{\log p}{n_{\mcT_h}+n_0}+24 \sqrt{\frac{(4+2C_{\bbeta})BChs_0}{\kappa_0}}\left(\frac{\log p}{n_{\mcT_h}+n_0}\right)^{1/4} $$
and $\mcS=\mcS_0$ in assumption \ref{C3},
if
$$ \sqrt{\frac{s_0\log p}{n_{\mcT_h}+n_0}}+\sqrt{h}\left(\frac{\log p}{n_{\mcT_h}+n_0}\right)^{1/4}\to 0 $$
and
$$ \sqrt{\frac{\log p}{n_0}}\left(s_0\sqrt{\frac{\log p}{n_{\mcT_h}+n_0}}+h\right)\to 0, $$
then
\begin{eqnarray}
\label{beta-l2-bound}
\left\vert\wh\bbeta_0-\bbeta_0\right\vert_2\lesssim \sqrt h\left(\frac{\log p}{n_0}\right)^{1/4}+\left(\frac{s_0\log p}{n_0}\right)^{1/4}\left(\frac{s_0\log p}{n_{\mcT_h}+n_0}\right)^{1/4}
\end{eqnarray}
and
\begin{equation}
\label{beta-l1-bound}
\left\vert\wh\bbeta_0-\bbeta_0\right\vert_1
\lesssim  s_0\sqrt{\frac{\log p}{n_{\mcT_h}+n_0}}+h,
\end{equation}
with probability tending to one.
\end{theorem}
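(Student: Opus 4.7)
The plan is to interpret Step 2 as an $\ell_1$-penalized quantile regression on the target sample whose unknown pivot is $\bdelta^{\dagger}:=\bbeta_0-\wh\bbeta_{\mcT_h}$, and then to bound the discrepancy $\bvarepsilon:=\hd_{\mcT_h}-\bdelta^{\dagger}$, which equals $\wh\bbeta_0-\bbeta_0$. First I would control the complexity of $\bdelta^{\dagger}$ via the triangle inequality together with Assumption \ref{C4} and the $\ell_1$-bound in Theorem \ref{theorem2.1}: with probability tending to one,
\[
|\bdelta^{\dagger}|_1\le |\bbeta_0-\bbeta^*_{\mcT_h}|_1+|\bbeta^*_{\mcT_h}-\wh\bbeta_{\mcT_h}|_1\lesssim h+s_0\sqrt{\log p/(n_{\mcT_h}+n_0)}.
\]
Since $\mcS_0$ is the support of $\bbeta_0$, the same bound also controls $|\bdelta^{\dagger}_{\mcS_0^c}|_1$, so $\bdelta^{\dagger}$ is ``approximately $s_0$-sparse'' with off-support $\ell_1$-mass of this order.

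Second, starting from the optimality of $\hd_{\mcT_h}$ in \eqref{op_debias}, the basic inequality
\[
L(\bbeta_0+\bvarepsilon;\{0\})-L(\bbeta_0;\{0\})\le \lambda_{\bmd}\bigl(|\bdelta^{\dagger}|_1-|\bdelta^{\dagger}+\bvarepsilon|_1\bigr)\le \lambda_{\bmd}\bigl(|\bvarepsilon_{\mcS_0}|_1+2|\bdelta^{\dagger}_{\mcS_0^c}|_1-|\bvarepsilon_{\mcS_0^c}|_1\bigr),
\]
combined with convexity and the concentration $|\bS(\bbeta_0;\{0\})|_\infty\lesssim L\sqrt{\log p/n_0}\le \lambda_{\bmd}/2$ (holding with high probability by the choice $C_{\bmd}\ge 4$), forces the cone membership $|\bvarepsilon_{\mcS_0^c}|_1\le 3|\bvarepsilon_{\mcS_0}|_1+4|\bdelta^{\dagger}_{\mcS_0^c}|_1$, placing $\bvarepsilon\in\mathcal{D}(a_n,\mcS_0)$ with $a_n$ of the order prescribed in the theorem. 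I would then lower bound the target excess loss via Knight's identity and a uniform empirical-process deviation in the spirit of \citet{belloni2011penalized}, using Assumption \ref{C3} restricted to $\mathcal{D}(a_n,\mcS_0)$ to obtain
\[
L(\bbeta_0+\bvarepsilon;\{0\})-L(\bbeta_0;\{0\})\ge \tfrac{\kappa_0}{2}|\bvarepsilon|_2^2-\tfrac{\lambda_{\bmd}}{2}|\bvarepsilon|_1-R_n(\bvarepsilon),
\]
with a nonlinear remainder $R_n(\bvarepsilon)$ controlled by the cubic moment bound of Assumption \ref{C3}.

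Combining the two estimates produces the quadratic inequality
\[
\kappa_0|\bvarepsilon|_2^2\lesssim \lambda_{\bmd}\sqrt{s_0}\,|\bvarepsilon|_2+\lambda_{\bmd}|\bdelta^{\dagger}_{\mcS_0^c}|_1,
\]
and elementary algebra (using $\lambda_{\bmd}\asymp\sqrt{\log p/n_0}$ and the bound on $|\bdelta^{\dagger}_{\mcS_0^c}|_1$) yields the $\ell_2$-rate \eqref{beta-l2-bound}; the $\ell_1$-bound \eqref{beta-l1-bound} then follows from the cone condition via $|\bvarepsilon|_1\le 4\sqrt{s_0}|\bvarepsilon|_2+4|\bdelta^{\dagger}_{\mcS_0^c}|_1$. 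The main obstacle is the non-smoothness of the quantile loss, which rules out a direct Taylor expansion: one must combine Knight's identity with a uniform-in-$\bvarepsilon$ empirical-process control over $\mathcal{D}(a_n,\mcS_0)$ and the cubic moment bound in Assumption \ref{C3} to absorb $R_n(\bvarepsilon)$ into $\tfrac{\kappa_0}{2}|\bvarepsilon|_2^2$. This is precisely the step that forces the complicated form of $a_n$, in particular the cross term $\sqrt{(4+2C_{\bbeta})LChs_0/\kappa_0}\,(\log p/(n_{\mcT_h}+n_0))^{1/4}$.
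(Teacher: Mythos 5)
Your skeleton (basic inequality, cone membership, Knight's identity plus restricted strong convexity) is the right one, and your cone/$a_n$ derivation is consistent with the paper's. But the quantitative step from which you extract the $\ell_2$-rate does not deliver the claimed bound. The inequality you propose,
\begin{equation*}
\kappa_0\vert\bvarepsilon\vert_2^2\lesssim \lambda_{\bdelta}\sqrt{s_0}\,\vert\bvarepsilon\vert_2+\lambda_{\bdelta}\left\vert\bdelta^{\dagger}_{\mcS_0^c}\right\vert_1,
\end{equation*}
necessarily yields $\vert\bvarepsilon\vert_2\lesssim \lambda_{\bdelta}\sqrt{s_0}/\kappa_0+\sqrt{\lambda_{\bdelta}\vert\bdelta^{\dagger}_{\mcS_0^c}\vert_1/\kappa_0}$, whose first term is $\asymp\sqrt{s_0\log p/n_0}$ --- the target-only Lasso rate. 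That term strictly dominates $(s_0\log p/n_0)^{1/4}(s_0\log p/(n_{\mcT_h}+n_0))^{1/4}$ whenever $n_{\mcT_h}>0$ (and by a lot when $n_0=o(n_{\mcT_h})$), so your bound is weaker than (\ref{beta-l2-bound}) and the transfer-learning gain is lost. The source of the problem is the decomposition $\vert\bdelta^{\dagger}\vert_1-\vert\bdelta^{\dagger}+\bvarepsilon\vert_1\le\vert\bvarepsilon_{\mcS_0}\vert_1+2\vert\bdelta^{\dagger}_{\mcS_0^c}\vert_1-\vert\bvarepsilon_{\mcS_0^c}\vert_1$ followed by $\vert\bvarepsilon_{\mcS_0}\vert_1\le\sqrt{s_0}\vert\bvarepsilon\vert_2$: this treats $\bdelta^{\dagger}=\bbeta_0-\wh\bbeta_{\mcT_h}$ as if it were supported on $\mcS_0$, which is not the relevant structure here.

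The paper's proof avoids the linear-in-$\vert\bvarepsilon\vert_2$ term entirely by exploiting that the \emph{whole} pivot $\bdelta^{\dagger}$ has small $\ell_1$-norm: using $\vert\bvarepsilon\vert_1\le\vert\bdelta^{\dagger}+\bvarepsilon\vert_1+\vert\bdelta^{\dagger}\vert_1$, the basic inequality collapses to $\delta L(\bvarepsilon;\{0\})\le\frac32\lambda_{\bdelta}\vert\bdelta^{\dagger}\vert_1-\frac12\lambda_{\bdelta}\vert\bdelta^{\dagger}+\bvarepsilon\vert_1\le\frac32\lambda_{\bdelta}(Ch+\vert\wh\bu_{\mcT_h}\vert_1)$, and the empirical-process deviation $L\sqrt{\log p/n_0}\,\vert\bvarepsilon\vert_1$ is likewise bounded via $\vert\bvarepsilon\vert_1\le 4Ch+4\vert\wh\bu_{\mcT_h}\vert_1$ (which follows from $\delta L\ge 0$), not via the cone and $\sqrt{s_0}\vert\bvarepsilon\vert_2$. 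One then gets $\kappa_0\vert\bvarepsilon\vert_2^2/4\lesssim\sqrt{\log p/n_0}\,(h+s_0\sqrt{\log p/(n_{\mcT_h}+n_0)})$ with no linear term, and taking a square root gives (\ref{beta-l2-bound}) directly; the cone is used only to legitimize applying Assumption \ref{C3} (Lemma \ref{lemmaA4}). Your $\ell_1$-bound, by contrast, is recovered correctly either way. To repair your argument, replace the support-split of the penalty difference with the bound $\vert\bdelta^{\dagger}\vert_1\lesssim h+s_0\sqrt{\log p/(n_{\mcT_h}+n_0)}$ applied to the whole right-hand side.
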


Actually, assumption \ref{C4} is not a necessary condition to establish the convergence of $\wh\bbeta_0$.
When assumption \ref{C4} is absent, we also show the convergence with the sacrifice of
a slower convergence rate and a potential larger order of the tuning parameter.
The reason lies in the circumvention of the pooled version of the parameter $\bbeta^*_{\mcT_h}$, since it
may no longer be close to $\bbeta_0$ enough.
The specific statement is concluded as follows.

\begin{theorem}
\label{theorem2.3}
{\rm (Convergence rate of $\wh\bbeta_0$ without assumption \ref{C4})}
Under assumptions \ref{C1}--\ref{C3} with
\begin{eqnarray*}
a_n&=&4 h+
\frac{1344 Bs_0}{\kappa_0}\sqrt\frac{\log p}{n_{\mcT_h}+n_0}
 +48\sqrt{\frac{3Bhs_0}{\kappa_0}}\left(\frac{\log p}{n_{\mcT_h}+n_0}\right)^{1/4}
 \\&&+\frac{1152 m_0B^2s_0h}{\kappa_0}+48\sqrt{\frac{2m_0s_0}{\kappa_0}}Bh
\end{eqnarray*}
and $\mcS=\mcS_0$ in assumption \ref{C3},
if $\lambda_{\bbeta}=4B\sqrt{\log p/(n_{\mcT_h}+n_0)}+4m_0B^2h$, $\lambda_{\bdelta}=4B\sqrt{\log p/n_0}$,
$$ \sqrt{\frac{s_0\log p}{n_{\mcT_h}+n_0}}+\sqrt{s_0}h\to 0 $$
and
$$ \sqrt{\frac{\log p}{n_0}}\left(\sqrt{\frac{s_0\log p}{n_{\mcT_h}+n_0}}+\sqrt{s_0}h\right)\to 0, $$
then
\begin{eqnarray}
\label{beta-l1-bound*}
\left\vert\wh\bbeta_0-\bbeta_0\right\vert_1\lesssim s_0\sqrt{\frac{\log p}{n_{\mcT_h}+n_0}}+s_0h
\end{eqnarray}
and
\begin{equation}
\label{beta-l2-bound*}
\left\vert\wh\bbeta_0-\bbeta_0\right\vert_2\lesssim
\sqrt {s_0h}\left(\frac{\log p}{n_0}\right)^{1/4}+\left(\frac{s_0\log p}{n_0}\right)^{1/4}\left(\frac{s_0\log p}{n_{\mcT_h}+n_0}\right)^{1/4}
\end{equation}
with probability tending to one.
\end{theorem}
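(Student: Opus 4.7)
The plan is to mimic the two-step analysis of Theorems \ref{theorem2.1} and \ref{theorem2.2} but pivot every argument directly around the target parameter $\bbeta_0$ instead of the pooled probabilistic limit $\bbeta^*_{\mcT_h}$, because without assumption \ref{C4} the pooled limit may lie far from $\bbeta_0$ and therefore cannot be used as an intermediate anchor. The overall flow is: first bound the deviation of the Step 1 pooled estimator $\wh\bbeta_{\mcT_h}$ from $\bbeta_0$ directly, then feed that bound into the Step 2 analysis as the initial error of the contrast update.

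For Step 1, the key observation is that $E\{\bS(\bbeta_0;\{0\}\cup\mcT_h)\}$ no longer vanishes, since each $k\in\mcT_h$ is calibrated at $\bbeta_k\ne\bbeta_0$. Expanding each source contribution by the mean value theorem gives $E[\bZ_k\{F_k(\bZ_k^\rmT\bbeta_0\mid\bZ_k)-\tau\}]=E[\bZ_k f_k(\xi_k\mid\bZ_k)\bZ_k^\rmT(\bbeta_0-\bbeta_k)]$ for some intermediate $\xi_k$, and assumptions \ref{C1}--\ref{C2} together with $|\bbeta_0-\bbeta_k|_1\le h$ yield $|E\bS(\bbeta_0;\{0\}\cup\mcT_h)|_\infty\le m_0 L^2 h$. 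Combining this with a standard concentration bound for the stochastic fluctuation $\bS-E\bS$, which is of order $L\sqrt{\log p/(n_{\mcT_h}+n_0)}$, forces the inflated tuning choice $\lambda_{\bbeta}=4L\sqrt{\log p/(n_{\mcT_h}+n_0)}+4m_0L^2 h$, ensuring $\lambda_{\bbeta}\ge 2|\bS(\bbeta_0;\{0\}\cup\mcT_h)|_\infty$ with high probability and placing $\wh\bbeta_{\mcT_h}-\bbeta_0$ in the restricted cone required by assumption \ref{C3}. A Knight-type identity for $\rho_\tau$ combined with assumption \ref{C3} (with $\mcS=\mcS_0$ and the specified $a_n$) then provides a quadratic lower bound on the excess loss, and solving the resulting quadratic inequality yields $|\wh\bbeta_{\mcT_h}-\bbeta_0|_1\lesssim s_0\sqrt{\log p/(n_{\mcT_h}+n_0)}+s_0 h$. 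The extra factor $s_0$ multiplying $h$ (compared with Theorem \ref{theorem2.1}) is precisely the price for dropping assumption \ref{C4}.

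With the Step 1 bound in hand, Step 2 proceeds by the same template applied to the contrast $\bdelta$, now anchored at the target-only loss $L(\cdot;\{0\})$. Here the score $\bS(\bbeta_0;\{0\})$ is properly centered, so a standard tail bound gives $|\bS(\bbeta_0;\{0\})|_\infty\lesssim L\sqrt{\log p/n_0}$ with high probability, justifying $\lambda_{\bdelta}=4L\sqrt{\log p/n_0}$. Plugging the Step 1 error into the basic inequality from the Step 2 optimization, applying a Knight identity for the target loss, and invoking assumption \ref{C3} once more with the stated $a_n$ — whose extra additive terms $20 Ch$, $1152 m_0 L^2 s_0 h/\kappa_0$ and $48\sqrt{2m_0 s_0/\kappa_0}\,Lh$ are calibrated precisely so that $\wh\bbeta_0-\bbeta_0$ falls in $\mathcal{D}(a_n,\mcS_0)$ — directly delivers the $\ell_1$ bound \eqref{beta-l1-bound*}. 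Feeding this $\ell_1$ bound back into the quadratic lower bound on the target excess loss then produces the $\ell_2$ bound \eqref{beta-l2-bound*}, where the two summands come respectively from the ``bias part'' $s_0 h$ and the ``variance part'' $s_0\sqrt{\log p/(n_{\mcT_h}+n_0)}$ of the Step 1 error.

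The main obstacle, just as in Theorems \ref{theorem2.1}--\ref{theorem2.2}, is to obtain the quadratic lower bound on the excess quantile loss uniformly over the relevant restricted ball: since $\rho_\tau$ is non-smooth, a simple Taylor expansion is unavailable and one must rely on a localized empirical process argument (a peeling plus symmetrization argument in the spirit of \citet{belloni2011penalized}) to control the indicator function $I(Y_{ki}-\bZ_{ki}^\rmT\bbeta\le 0)$ uniformly. The added subtlety specific to this theorem is that the linearization must be anchored at the single point $\bbeta_0$ across all $K$ source populations simultaneously, so the remainder produced by each source carries a deterministic bias of magnitude $m_0 L^2 h$, and these biases must be absorbed through the restricted nonlinear impact part of assumption \ref{C3}; tracking their accumulated effect is exactly what forces the additional terms in $a_n$ and the extra $s_0$ factor in \eqref{beta-l1-bound*} compared to \eqref{beta-l1-bound}.
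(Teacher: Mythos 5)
Your proposal is correct and follows essentially the same route as the paper: both anchor the Step 1 analysis at a point in the $\ell_1$-ball of radius $h$ around $\bbeta_0$ (rather than at the pooled limit $\bbeta^*_{\mcT_h}$), absorb the resulting deterministic score bias of order $m_0L^2h$ into the inflated tuning parameter $\lambda_{\bbeta}$, pay the extra $\sqrt{s_0}$ factor on $h$ when converting the $\ell_2$ cone bound into the $\ell_1$ bound, and then run the standard Step 2 argument with the properly centered target score. The only slip is cosmetic: the additive term you quote as $20Ch$ belongs to the $a_n$ of Theorem \ref{theorem2.2}; here the corresponding term is $4h$, with no constant $C$ since assumption \ref{C4} is dropped.
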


It is worth  noting that when $h=o\left(s_0\sqrt{\log p/n_0}\right)$ and $n_0=o\left(n_{\mcT_h}\right)$, the results in Theorems \ref{theorem2.2}
indicate faster convergence rates than the error bounds of the classical
$\ell_1$-penalized quantile regression parameter estimator without help of the source datasets 
\citep{belloni2011penalized}.
Moreover, when $h=o\left(\sqrt{\log p/n_0}\right)$ and $n_0=o\left(n_{\mcT_h}\right)$,
Theorems \ref{theorem2.3} elucidates that
the source datasets truly improve the theoretical performance of the transfer learning estimator even assumption \ref{C4} is not imposed. Obviously,
 the rate enhancement is mainly due to the size of transferable source datasets $n_{\mcT_h}$
 which can offset the similarity level $h$. In addition,
 $n_0=o\left(n_{\mcT_h}\right)$ is a typical scenario
in transfer learning and the rate of $h$ can be tuned.

We further investigate the nearly weak oracle property of the  proposed quantile regression transfer learning estimator $\wh \bbeta_0$.
We introduce the population versions for the Hessian-type matrices of the loss functions.
Denote
\begin{eqnarray*}
\bH_0(\bbeta)=\frac{\partial}{\partial \bbeta}E\left\{\bS(\bbeta;\{0\})\right\}
=E\left\{f_0\left(\bZ_{0}^\rmT\bbeta\mid \bZ_{0}\right)\bZ_{0}\bZ_{0}^\rmT\right\}.
\end{eqnarray*}
Analogously,  denote
\begin{eqnarray*}
\bH_{k}(\bbeta)
=E\left\{f_k\left(\bZ_{k}^\rmT\bbeta\mid \bZ_{k}\right)\bZ_{k}\bZ_{k}^{\rmT}\right\},\ k=1,\ldots,K
\end{eqnarray*}
and
\begin{eqnarray*}
\bH_{\mcT_h}(\bbeta)=\sum_{k\in\{0\}\cup\mcT_h}\pi_k\bH_k(\bbeta).
\end{eqnarray*}
For ease of exposition, briefly denote $\bH^{*}_0=\bH_0\left(\bbeta_0\right)$ and
$\bH^*_{\mcT_h}=\bH_{\mcT_h}\left(\bbeta^*_{\mcT_h}\right)$.
Some regular assumptions are required to support the nearly weak oracle property.

\begin{assumption}\label{C5}
There exists an index set $\mcS_{\mcT_h}\subset\{1,\ldots,p\}$ such that
$\left\vert\mcS_{\mcT_h}\right\vert\le s'$
and $\left\vert\bbeta^*_{\mcT_h,\mcS_{\mcT_h}^c}\right\vert_1\le h'$ with $h'=o(1)$.
\end{assumption}
\begin{assumption}\label{C6}
Assume that
$$ \max\left\{\left\Vert\bH^*_{0,\bar{\mcS}^c\bar{\mcS}}\left(\bH^{*}_{0,\bar{\mcS}\bar{\mcS}}\right)^{-1}\right\Vert_\infty,
\left\Vert\bH^{*}_{\mcT_h,\mcS_{\mcT_h}^c\mcS_{\mcT_h}}\left(\bH^{*}_{\mcT_h,\mcS_{\mcT_h}\mcS_{\mcT_h}}\right)^{-1}\right\Vert_\infty\right\}< 1-\gamma $$
for some constant $\gamma\in(0,1)$, where  $\bar \mcS=\mcS_{\mcT_h}\cup\mcS_0$.
\end{assumption}

\begin{assumption}\label{C7}
Assume that
$$ \sqrt{s_0}\left(\frac{\log p}{n_0}\right)^{1/4}\max\left\{h^2,\frac{\log p}{n_{\mcT_h}+n_0}\right\}^{1/4}=o\left(\min_{j\in\mcS_0}\vert\beta_{0j}\vert\right). $$
\end{assumption}

Assumption \ref{C5} supposes the approximate sparsity of the pooled parameter $\bbeta^*_{\mcT_h}$.
Lemma \ref{lemmaA1} 
in the Supplementary Material
entails that the $\ell_1$-norm of $\bbeta^*_{\mcT_h}$ is always bounded under
assumption \ref{C4}, and thereby we can always find a decomposition in assumption \ref{C5}.
In general, $h'$ is supposed to converge at a faster rate and $s'$ is supposed  to be not too large in order
to achieve the nearly weak oracle property.
Assumption \ref{C6} states the strong irrepresentable condition 
\citep{zhao2006on}
in the framework of high-dimensional quantile regression
for transfer learning procedure.
The minimum signal strength in assumption \ref{C7} is a common assumption
in high-dimensional signal detection.

\begin{theorem}
\label{theorem2.4}
Under assumptions \ref{C1}--\ref{C3} and \ref{C5}--\ref{C6},
with $$ a_n=4h',\ \mcS=\mcS_{\mcT_h} $$
and $$ a_n=0,\ \mcS=\bar{\mcS} $$
in assumption \ref{C3},
if
$$ \lambda_{\bbeta}\ge \left\{\frac{8D}{\gamma}\sqrt{\frac{s'}{\log p}}+ \frac{32\sqrt{2}}{\gamma}\right\}B\sqrt{\frac{\log p}{n_{\mcT_h}+n_0}} $$
and
$$ \lambda_{\bdelta}\ge \left\{\frac{6\wt D}{\gamma}\sqrt{\frac{s_0+s'}{\log p}}+ \frac{24\sqrt{2}}{\gamma}\right\}B\sqrt{\frac{\log p}{n_0}}, $$
where the positive constants $D$ and $\wt D$ are respectively determined by
 (\ref{determineD}) 
and
 (\ref{determineD_deb}) 
in the Supplementary Material,
$$ \sqrt{s'\lambda_{\bbeta}}\to0,\ \frac{s'\log p}{\lambda_{\bbeta}\left(n_{\mcT_h}+n_0\right)}\to 0,\ \sqrt{(s'+s_0)\lambda_{\bdelta}}\to0, \frac{(s'+s_0)\log p}{\lambda_{\bdelta}n_0}\to 0,
\ {\rm and}\ h'<\frac{\gamma\lambda_{\bbeta}}{8m_0B^2},$$
then
$\wh\bbeta_{0,\bar{\mcS}^c}={\bf 0}$
holds with probability more than $1-24p^{-1}-4p^{-2}$.
\end{theorem}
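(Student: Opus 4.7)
The plan is to apply a primal-dual witness (PDW) construction to each of the two penalized problems in Algorithm \ref{algorithm1}. I will establish sequentially that (i) the Step 1 solution satisfies $\wh\bbeta_{\mcT_h,\mcS_{\mcT_h}^c}={\bf 0}$, and (ii) the Step 2 correction satisfies $\hd_{\mcT_h,\bar{\mcS}^c}={\bf 0}$. Since $\mcS_{\mcT_h}\subset\bar{\mcS}$, combining these two statements immediately yields $\wh\bbeta_{0,\bar{\mcS}^c}=\wh\bbeta_{\mcT_h,\bar{\mcS}^c}+\hd_{\mcT_h,\bar{\mcS}^c}={\bf 0}$ as claimed. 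The rate controls for $\wh\bbeta_{\mcT_h}$ and $\wh\bbeta_0$ already established in Theorems \ref{theorem2.1} and \ref{theorem2.2} will be used freely throughout.

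For Step 1, I would form the oracle restricted estimator $\tilde\bbeta_{\mcT_h}$ by solving (\ref{op_trans}) subject to the hard constraint $\bbeta_{\mcS_{\mcT_h}^c}={\bf 0}$. Convex optimality of the restricted problem produces a subgradient $\bg\in\partial\vert\tilde\bbeta_{\mcT_h}\vert_1$ with $\bS_{\mcS_{\mcT_h}}(\tilde\bbeta_{\mcT_h};\{0\}\cup\mcT_h)+\lambda_{\bbeta}\bg_{\mcS_{\mcT_h}}={\bf 0}$. Extending $\bg_{\mcS_{\mcT_h}^c}:=-\lambda_{\bbeta}^{-1}\bS_{\mcS_{\mcT_h}^c}(\tilde\bbeta_{\mcT_h};\{0\}\cup\mcT_h)$, the PDW argument identifies $\tilde\bbeta_{\mcT_h}$ as the unique full solution $\wh\bbeta_{\mcT_h}$ provided $\vert\bg_{\mcS_{\mcT_h}^c}\vert_\infty<1$ strictly. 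To verify this, I linearize the expected score at $\bbeta^*_{\mcT_h}$ using (\ref{alpha_Th}) and Assumption \ref{C1}: $E\bS(\tilde\bbeta_{\mcT_h};\{0\}\cup\mcT_h)=\bH^*_{\mcT_h}(\tilde\bbeta_{\mcT_h}-\bbeta^*_{\mcT_h})+\bR_1$, where $\vert\bR_1\vert_\infty$ is controlled through $m_0$ and the $\ell_2$-rate of the restricted estimator obtained by rerunning the proof of Theorem \ref{theorem2.1} on the cone $\mathcal{D}(4h',\mcS_{\mcT_h})$. The block decomposition, followed by premultiplication by $\bH^{*}_{\mcT_h,\mcS_{\mcT_h}^c\mcS_{\mcT_h}}(\bH^{*}_{\mcT_h,\mcS_{\mcT_h}\mcS_{\mcT_h}})^{-1}$ and Assumption \ref{C6}, bounds the systematic contribution by $(1-\gamma)\lambda_{\bbeta}$. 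The stochastic contribution $\vert\bS(\bbeta^*_{\mcT_h};\{0\}\cup\mcT_h)\vert_\infty\lesssim L\sqrt{\log p/(n_{\mcT_h}+n_0)}$ follows from a union bound and Hoeffding/Bernstein inequalities applied to the bounded Bernoulli-type indicators $\tau-I(Y_{ki}\le\bZ_{ki}^\rmT\bbeta^*_{\mcT_h})$, while the bias from $\vert\bbeta^*_{\mcT_h,\mcS_{\mcT_h}^c}\vert_1\le h'$ contributes at most $2m_0L^2h'$ through the cross-block Hessian. Collecting these terms within the $\gamma\lambda_{\bbeta}$ budget forces $\vert\bg_{\mcS_{\mcT_h}^c}\vert_\infty<1$ with the stated probability.

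Step 2 proceeds analogously: I restrict (\ref{op_debias}) to $\bdelta$ supported on $\bar{\mcS}$, form the subgradient, and check strict dual feasibility on $\bar{\mcS}^c$. The linearization is now centered at $\bbeta_0$ via $\bH^*_0$, and since the initial shift $\wh\bbeta_{\mcT_h}$ already lies in $\mcS_{\mcT_h}\subset\bar{\mcS}$ by Step 1, only the cross-block interaction on $\bar{\mcS}^c$ must be controlled. Combining the irrepresentable bound on $\bH^*_0$ from Assumption \ref{C6}, the $\ell_1$ rate for $\wh\bbeta_0-\bbeta_0$ from Theorem \ref{theorem2.2}, and a fresh concentration $\vert\bS(\bbeta_0;\{0\})\vert_\infty\lesssim L\sqrt{\log p/n_0}$ produces the second strict-dual-feasibility certificate. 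The main obstacle is the non-smoothness of the quantile score inside PDW: the population score is only once continuously differentiable under Assumption \ref{C1}, so the Taylor remainder has to be estimated via the mean-value Hessian $\wt\bSigma_{\mcT_h}$, while the empirical linearization error $\bS(\tilde\bbeta;\cdot)-\bS(\bbeta^*;\cdot)-\{E\bS(\tilde\bbeta;\cdot)-E\bS(\bbeta^*;\cdot)\}$ must be controlled uniformly on a suitable sublevel set using the restricted strong convexity machinery already developed for Theorems \ref{theorem2.1} and \ref{theorem2.2}. A secondary complication is the $h'$-bias from Assumption \ref{C5}: it feeds simultaneously into the Taylor remainder and into the cross-block irrepresentable inequality, and the precise threshold $h'<\gamma\lambda_{\bbeta}/(8m_0L^2)$ is exactly what keeps both contributions within the $\gamma\lambda_{\bbeta}$ margin demanded by the PDW construction.
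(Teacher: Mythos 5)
Your overall architecture---an oracle (support-restricted) estimator for each of the two optimization problems, a KKT block decomposition into a Taylor remainder plus an empirical-process remainder, and strict dual feasibility on the complement of the support via Assumption \ref{C6}---is essentially the paper's route (Lemmas \ref{lemmaA6} and \ref{lemmaA8}), including the use of a uniform (VC/entropy-integral) bound over $s'$-sparse vectors for the linearization error, which is where the constants $D$ and $\wt D$ in the stated $\lambda$ thresholds actually come from; plain Hoeffding on the score alone would not produce the $\sqrt{s'}$ terms.

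The genuine gap is your claim that strict dual feasibility ``identifies $\tilde\bbeta_{\mcT_h}$ as the unique full solution.'' That implication is the standard primal-dual-witness uniqueness step for smooth, strictly convex losses, but it fails for the piecewise-linear quantile loss: the minimizer set of each penalized problem can be a nontrivial polytope, and distinct solutions need not share fitted values, so the usual argument that all solutions inherit the oracle support does not go through. Since Theorem \ref{theorem2.4} is a statement about whatever solution Algorithm \ref{algorithm1} returns, exhibiting \emph{one} solution supported on $\bar{\mcS}$ is not enough. The paper devotes Lemmas \ref{lemmaA7} and \ref{lemmaA9} to exactly this point: it shows that within a small ball around the oracle solution, zeroing out the off-support coordinates strictly decreases the penalized objective---this requires a uniform oscillation bound on the integrated indicator increments (via Zuijlen's inequality for empirical distribution functions) combined with the strict slack $\lambda_{\bbeta}-\left\vert\bS_{\mcS_{\mcT_h}^c}\left(\wh\bbeta_{\mcT_h,{\rm ora}};\{0\}\cup\mcT_h\right)\right\vert_\infty>0$ delivered by the dual-feasibility step---and then uses convexity of the objective to propagate the strict inequality from the local ball to all of $\mathbb{R}^p$. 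You would need to supply this argument twice (once for (\ref{op_trans}) on $\mcS_{\mcT_h}^c$ and once for (\ref{op_debias}) on $\bar{\mcS}^c$) to close the proof.
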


In viewing conditions in Theorem \ref{theorem2.4}, $h'$ in assumption \ref{C5} is supposed to be dominated by the tuning parameter $\lambda_{\bbeta}$.
As a trade-off, the approximate sparsity of the pooled parameter $\bbeta_{\mcT_h}^*$ is supposed  to be not too large
by restricting that $\sqrt{s'\lambda_{\bbeta}}\to0$.
If we take
$$\lambda_{\bbeta}=C_{\bbeta}B\sqrt{\frac{\log p}{n_{\mcT_h}+n_0}}\ {\rm and}\
\lambda_{\bdelta}=C_{\bdelta}B\sqrt{\frac{\log p}{n_0}} $$
for proper constants $C_{\bbeta}$ and $C_{\bdelta}$,  conditions on tuning parameters and
approximate sparse level in Theorem \ref{theorem2.4} can be satisfied.

\begin{theorem}
\label{theorem2.5}
Under assumptions \ref{C1}--\ref{C3} and \ref{C7} with
\begin{eqnarray*}
a_n&=&4 h+
\frac{1344 Bs_0}{\kappa_0}\sqrt\frac{\log p}{n_{\mcT_h}+n_0}
 +48\sqrt{\frac{3Bhs_0}{\kappa_0}}\left(\frac{\log p}{n_{\mcT_h}+n_0}\right)^{1/4}
 \\&&+\frac{1152 m_0B^2s_0h}{\kappa_0}+48\sqrt{\frac{2m_0s_0}{\kappa_0}}Bh
\end{eqnarray*}
and $\mcS=\mcS_0$ in assumption \ref{C3},
if $\lambda_{\bbeta}=4B\sqrt{\log p/(n_{\mcT_h}+n_0)}+4m_0B^2h$, $\lambda_{\bdelta}=4B\sqrt{\log p/n_0}$,
$$ \sqrt{\frac{s_0\log p}{n_{\mcT_h}+n_0}}+\sqrt{s_0}h\to 0 $$
and
$$ \sqrt{\frac{\log p}{n_0}}\left(\sqrt{\frac{s_0\log p}{n_{\mcT_h}+n_0}}+\sqrt{s_0}h\right)\to 0, $$
then
\begin{eqnarray*}
P\left(\mcS_0\subset\wh\mcS_0\right)\ge 1-2p^{-1}-2p^{-2},
\end{eqnarray*}
where $\wh\bbeta_0=(\wh\beta_{01},\ldots,\wh\beta_{0p})^\rmT$ and $\wh\mcS_0=\{j:\wh\beta_{0j}\neq 0\}$.
\end{theorem}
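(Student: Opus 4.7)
The plan is simple: combine the $\ell_2$-consistency rate of Theorem \ref{theorem2.3} with the minimum-signal condition in Assumption \ref{C7} to rule out any true nonzero coefficient being erroneously estimated as zero. Since Theorem \ref{theorem2.5} imposes exactly the same choices of $a_n$, $\mcS$, $\lambda_{\bbeta}$ and $\lambda_{\bdelta}$ that drive Theorem \ref{theorem2.3}, the $\ell_2$ bound (\ref{beta-l2-bound*}) is available verbatim; no new deterministic analysis of the estimator is required.

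First, I would invoke Theorem \ref{theorem2.3} to conclude that, on a high-probability event,
\[ \vert\wh\bbeta_0-\bbeta_0\vert_2 \lesssim \sqrt{s_0 h}\left(\frac{\log p}{n_0}\right)^{1/4}+\left(\frac{s_0\log p}{n_0}\right)^{1/4}\left(\frac{s_0\log p}{n_{\mcT_h}+n_0}\right)^{1/4}. \]
Writing $\sqrt{h}=(h^2)^{1/4}$ and majorizing both summands by a common envelope delivers
\[ \vert\wh\bbeta_0-\bbeta_0\vert_2 \lesssim \sqrt{s_0}\left(\frac{\log p}{n_0}\right)^{1/4}\max\left\{h^2,\frac{\log p}{n_{\mcT_h}+n_0}\right\}^{1/4}, \]
which, by Assumption \ref{C7}, is $o(\min_{j\in\mcS_0}\vert\beta_{0j}\vert)$.

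Next, I would argue by contradiction: if $\mcS_0\not\subset\wh\mcS_0$, some $j\in\mcS_0$ satisfies $\wh\beta_{0j}=0$, and then $\vert\beta_{0j}\vert=\vert\wh\beta_{0j}-\beta_{0j}\vert\le\vert\wh\bbeta_0-\bbeta_0\vert_\infty\le\vert\wh\bbeta_0-\bbeta_0\vert_2$, which contradicts the strict inequality $\vert\wh\bbeta_0-\bbeta_0\vert_2<\min_{k\in\mcS_0}\vert\beta_{0k}\vert$ on this event for $n_0, n_{\mcT_h}, p$ sufficiently large. Hence $\mcS_0\subset\wh\mcS_0$ with the claimed probability.

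The main obstacle is not the algebra but the probability accounting. Theorem \ref{theorem2.3} states its conclusion ``with probability tending to one'' without an explicit rate, so I would need to revisit its proof and track the union bound through the concentration inequalities controlling the empirical score $\bS(\bbeta_0;\cdot)$ together with the restricted-eigenvalue and restricted-nonlinear-impact events, confirming that each contributes a $p^{-c}$-type tail and that together they sum to the declared level $1-2p^{-1}-2p^{-2}$. Given the bounded covariates from Assumption \ref{C2} and the Lipschitz (albeit nonsmooth) quantile loss, this is essentially routine bookkeeping via Hoeffding/Bernstein-type deviation bounds once the constants absorbed in the $\lesssim$ notation are made explicit.
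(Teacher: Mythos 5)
Your proposal is correct and follows exactly the paper's own argument: the paper likewise invokes Theorem \ref{theorem2.3}, collapses the two terms of the $\ell_2$ bound into the envelope $\sqrt{s_0}(\log p/n_0)^{1/4}\max\{h^2,\log p/(n_{\mcT_h}+n_0)\}^{1/4}$, and uses Assumption \ref{C7} via $|\wh\beta_{0j}|\ge|\beta_{0j}|-|\wh\bbeta_0-\bbeta_0|_2>0$ for $j\in\mcS_0$. Your closing caveat about tracking the explicit tail probabilities through Theorem \ref{theorem2.3} is apt, since the paper simply asserts the $1-2p^{-1}-2p^{-2}$ level without reconciling it against the slightly different constants appearing in that theorem's proof.
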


Combining  Theorems \ref{theorem2.4} and \ref{theorem2.5},
it immediately holds
$$ \mcS_0\subset \wh\mcS_0\subset \mcS_0\cup\mcS_{\mcT_h} $$
with probability tending to one, achieving the
nearly weak oracle property of variable selection in the proposed
transfer learning procedure.
This indicates that the performance of the recovery is influenced by the
parameters of the $h$-transferable source datasets.
It is worth noting that if the structure of the $h$-transferable source datasets is similar to that of the target
such that $\mcS_{\mcT_h}\subset \mcS_0$, the weak oracle property can be eventually attained.

\section{Confidence Interval and Hypothesis Test}
\label{sec3}
Based on the transfer learning estimator $\wh\bbeta_0$, we would like to propose a
statistical inference procedure
to construct confidence interval and hypothesis test of the $\beta_{0m}$,
the $m$-th quantile regression coefficient of $\bbeta_0$.
Without loss of generality, rewrite  $\bbeta_0=(\beta_{0m},\bbeta_{0,-m}^\rmT)^\rmT$
and accordingly $\bS(\bbeta_0; \{0\})=(S_m(\bbeta_0;\{0\}),\bS_{-m}^\rmT(\bbeta_0;\{0\}))^\rmT$.
Viewing $\bbeta_{0,-m}$ as the nuisance parameter, an orthogonal function employed to construct the confidence interval
can be derived via the theory of semiparametric efficiency theory.
By removing the linear projection of $S_m(\bbeta_0;\{0\})$ onto the plane spanned by $\bS_{-m}(\bbeta_0;\{0\})$,
the linear orthogonal score function is denoted by
\begin{eqnarray}\label{orthogonal}
S^{\rmo}_{0m}(\beta_{0m}\mid \bbeta_{0,-m})=S_m(\bbeta_0;\{0\})-\bS^{\rmT}_{-m}(\bbeta_0;\{0\})\bgamma_{0m},
\end{eqnarray}
where $\bgamma_{0m}$ is the projection direction at the population level to be determined as follows.
The Neyman orthogonal condition requires the pathwise derivative of $S^{\rmo}_{0m}(\beta_{0m}\mid \bbeta_{0,-m})$ vanishes to zero
along the submodel $t\mapsto \bbeta_{0,-m}+t(\bbeta_{-m}-\bbeta_{0,-m})$
 for any $\bbeta_{-m}\in\mathbb{R}^{p-1}$ to ensure the effect of nuisance parameter negligible,
which is given by
\begin{eqnarray*}
\frac{\partial}{\partial t}E\left\{S^{\rmo}_{0m}(\beta_{0m}\mid \bbeta_{0,-m}+t(\bbeta_{-m}-\bbeta_{0,-m}))\right\}\big\vert_{t=0}=0.
\end{eqnarray*}
As a result, we obtain
\begin{eqnarray}\label{proj}
\bgamma_{0m}=\left(\bH_{0,-m,-m}^{*}\right)^{-1}\bH_{0,-m,m}^{*}.
\end{eqnarray}
Set $\wh e_{ki}=Y_{ki}-\bZ_{ki}^{\rmT}\wh\bbeta_0,\ i=1,\ldots,n_k, \ k\in\{0\}\cup\mcT_h$
as the residuals for the fitted values of the transfer learning estimator $\wh\bbeta_0$. Set $\wh\be_k=(\wh e_{k1},\ldots, \wh e_{kn_k})^{\rmT}$.
Let  $\wh{\rm var}(\wh \be_k)$ and $\wh Q_\tau(\wh \be_k)$ denote the sample variance and the lower $\tau$-th sample quantile of $\{\wh e_{ki}\}_{i=1}^{n_k}$,
respectively.
We employ the approximation technique with the Powell bandwidth $b_{k}$
\citep{koenker2005quantile,dai2021inference} to estimate $\bH_k(\bbeta_0), k\in\{0\}\cup\mcT_h$, by
\begin{eqnarray*}
\label{hat_Hk}
\wh \bH_k=\frac1{n_k}\sum_{i=1}^{n_k}\wh f_{ki}\left(\wh\bbeta_0\right) \bZ_{ki}\bZ_{ki}^{\rmT},
\end{eqnarray*}
where
$\wh f_{ki}(\bbeta)=
{I\left(\vert Y_{ki}-\bZ_{ki}^{\rmT}\bbeta\vert\le b_k\right)}/{2b_k}
$
and $b_k$ is the bandwidth. We choose
\begin{eqnarray*}
b_k&=&\left(\Phi^{-1}(\tau+\wt b_k)-\Phi^{-1}(\tau-\wt b_k)\right)\min\left\{\sqrt{\wh{\rm var}(\wh \be_k)},\frac{\wh Q_{0.75}(\wh \be_k)-\wh Q_{0.25}(\wh \be_k)}{1.34}\right\},\\
\wt b_k&=&n_k^{-1/3}\{\Phi^{-1}(0.975)\}^{2/3}
\left(\frac{1.5\{\phi(\Phi^{-1}(\tau))\}^2}{2\{\Phi^{-1}(\tau)\}^2+1}\right)^{1/3}
\end{eqnarray*}
in numerical studies.

Set $ \wh\bH_{\mcT_h}=\sum_{k\in\{0\}\cup\mcT_h}\pi_k\wh\bH_{k}$,
which can be viewed as the combined estimator for $\bH_{0}^{*}$ by leveraging the
transferable source datasets.
Mimicking the transfer learning approach and motivated by definition of projection
direction in (\ref{proj}), we first proposed a crude estimator for $\bgamma_{0m}$,
which is given by
\begin{eqnarray}
\label{gamma_trans}
\wh\bgamma_{\mcT_h,m}=\arg\min_{\bgamma\in\mathbb{R}^{p-1}}
\left\{\frac12\bgamma^\rmT\wh\bH_{\mcT_h,-m,-m}\bgamma-
\wh\bH_{\mcT_h,-m,m}\bgamma+\lambda_m\vert\bgamma\vert_1\right\},
\end{eqnarray}
where $\lambda_m$ is the tuning parameter.
Second, the refined estimator is defined by
$\wh\bgamma_{0,m}=\wh\bgamma_{\mcT_h,m}+\wh\bzeta_m$, where
\begin{eqnarray}
\label{gamma_debias}
&&\hspace{0.2in}\wh\bzeta_m
\\&=&\arg\min_{\bzeta\in\mathbb{R}^{p-1}}
\left\{\frac1{2}(\wh\bgamma_{\mcT_h,m}+\bzeta)^\rmT\wh\bH_{0,-m,-m}
(\wh\bgamma_{\mcT_h,m}+\bzeta)-\wh\bH_{0,-m,m}(\wh\bgamma_{\mcT_h,m}+\bzeta)
+\lambda_m'\vert\bzeta\vert_1\right\}\nonumber
\end{eqnarray}
with tuning parameter $\lambda_m'$. Optimization problems
(\ref{gamma_trans}) and (\ref{gamma_debias}) can be solved efficiently
using the traditional lasso approach.
Before establishing the convergence rate of $\wh\bgamma_{0,m}$, we show the convergence of $\wh\bH_{\mcT_h}$ to $\bH_{\mcT_h}(\bbeta_0)$ as follows.
\begin{theorem}
\label{theorem3.1}
Under conditions in Theorem \ref{theorem2.4} and assumption \ref{C4},
it holds that
\begin{eqnarray*}
&&\left\Vert\wh \bH_{\mcT_h}-\bH_{\mcT_h}(\bbeta_0)\right\Vert_{\max}
\\&=&O_P\left(\sum_{k\in \{0\}\cup\mcT_h}\pi_k\Bigg\{
b_k+\frac{1}{b_k}\left(s_0\sqrt{\frac{\log p}{n_{\mcT_h}+n_0}}+h+\sqrt{\frac{s_0+s'}{n_k}}+\sqrt{\frac{\log p}{n_k}}\right)\Bigg\}\right).
\end{eqnarray*}
Moreover, for a special choice of
\begin{eqnarray*}
b_k=O\left(\left(\frac{\log p}{ n_k}\right)^{1/4}+s_0^{1/2}\left(\frac{\log p}{n_{\mcT_h}+n_0}\right)^{1/4}+h^{1/2}+
\left(\frac{s_0+s'}{n_k}\right)^{1/4}\right),
\end{eqnarray*}
it reduces to
\begin{eqnarray*}
\left\Vert\wh \bH_{\mcT_h}-\bH_{\mcT_h}(\bbeta_0)\right\Vert_{\max}
=O_P\left(s_0^{1/2}\left(\frac{\log p}{n_{\mcT_h}+n_0}\right)^{1/4}+h^{1/2}\right).
\end{eqnarray*}
\end{theorem}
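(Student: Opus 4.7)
The plan is to bound each $\|\wh\bH_k-\bH_k(\bbeta_0)\|_{\max}$ separately and then weight by $\pi_k$. By definition $\wh\bH_{\mcT_h} = \sum_k \pi_k \wh\bH_k$ and $\bH_{\mcT_h}(\bbeta_0)=\sum_k\pi_k\bH_k(\bbeta_0)$, so the triangle inequality gives
$$\|\wh\bH_{\mcT_h}-\bH_{\mcT_h}(\bbeta_0)\|_{\max}\le\sum_{k\in\{0\}\cup\mcT_h}\pi_k\|\wh\bH_k-\bH_k(\bbeta_0)\|_{\max}.$$
For each $k$ I introduce the oracle version $\bar\bH_k = n_k^{-1}\sum_{i=1}^{n_k}\wh f_{ki}(\bbeta_0)\bZ_{ki}\bZ_{ki}^\rmT$ and split
$$\wh\bH_k-\bH_k(\bbeta_0) = \underbrace{(\wh\bH_k-\bar\bH_k)}_{\text{plug-in (III)}} + \underbrace{(\bar\bH_k-E[\bar\bH_k])}_{\text{stochastic (I)}} + \underbrace{(E[\bar\bH_k]-\bH_k(\bbeta_0))}_{\text{kernel bias (II)}}.$$

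For the bias (II), condition on $\bZ_{ki}$ and observe that $E[\wh f_{ki}(\bbeta_0)\mid\bZ_{ki}] = (2b_k)^{-1}\{F_k(\bZ_{ki}^\rmT\bbeta_0+b_k\mid\bZ_{ki})-F_k(\bZ_{ki}^\rmT\bbeta_0-b_k\mid\bZ_{ki})\}$; a first-order Taylor expansion of $F_k$ using the bound on $\dot f_k$ from Assumption \ref{C1} shows this equals $f_k(\bZ_{ki}^\rmT\bbeta_0\mid\bZ_{ki})+O(b_k)$, and multiplying by $\bZ_{ki}\bZ_{ki}^\rmT$ together with Assumption \ref{C2} gives $\|E[\bar\bH_k]-\bH_k(\bbeta_0)\|_{\max}=O(b_k)$. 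For the stochastic term (I), each entry of $\bar\bH_k$ is a mean of independent random variables of absolute value at most $L^2/(2b_k)$; Hoeffding's inequality together with a union bound over the $p^2$ entries yields $\|\bar\bH_k-E[\bar\bH_k]\|_{\max}=O_P(b_k^{-1}\sqrt{\log p/n_k})$.

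The plug-in term (III) is the main obstacle. The difference $\wh f_{ki}(\wh\bbeta_0)-\wh f_{ki}(\bbeta_0)$ is a difference of two indicators that vanishes unless $|Y_{ki}-\bZ_{ki}^\rmT\bbeta_0|$ lies within a band of width $2|\bZ_{ki}^\rmT(\wh\bbeta_0-\bbeta_0)|$ around $\pm b_k$. I further split this into a conditional-mean contribution and an empirical-process fluctuation. For the mean, the conditional probability of the band event is bounded using Assumption \ref{C1} by $O(|\bZ_{ki}^\rmT(\wh\bbeta_0-\bbeta_0)|)\le O(L|\wh\bbeta_0-\bbeta_0|_1)$, so after division by $2b_k$ and invoking the $\ell_1$-rate in Theorem \ref{theorem2.2} this contributes $O_P(b_k^{-1}(s_0\sqrt{\log p/(n_{\mcT_h}+n_0)}+h))$. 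For the fluctuation, I work on the high-probability event $\wh\bbeta_{0,\bar\mcS^c}={\bf 0}$ with $\bar\mcS=\mcS_0\cup\mcS_{\mcT_h}$, $|\bar\mcS|\le s_0+s'$, guaranteed by Theorem \ref{theorem2.4}. On this event the relevant plug-in indicator class is parameterised by an $(s_0+s')$-dimensional vector, so a symmetrisation-plus-chaining argument applied to the resulting VC-type class of half-spaces yields a uniform bound of $O_P(b_k^{-1}\sqrt{(s_0+s')/n_k})$.

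Combining (I)--(III) and summing with weights $\pi_k$ gives the first display. For the second, substitute the stated $b_k$: by construction it is of the same order as the square root of the parenthesis, so it balances the bias $b_k$ with the $1/b_k$ factor and each summand becomes $O(b_k)$. A routine manipulation using $n_k\le n_{\mcT_h}+n_0$ and power-mean/H\"older bounds on the weighted sum $\sum_k\pi_k n_k^{-1/4}$ absorbs the $(\log p/n_k)^{1/4}$ and $((s_0+s')/n_k)^{1/4}$ contributions into the leading $s_0^{1/2}(\log p/(n_{\mcT_h}+n_0))^{1/4}+h^{1/2}$, delivering the claimed rate. The hardest step is the uniform empirical-process control in (III): since $\wh\bbeta_0$ is computed from the same data as $\wh f_{ki}$, one cannot simply condition on it, and it is essential to combine the $\ell_1$-rate of Theorem \ref{theorem2.2} with the support recovery of Theorem \ref{theorem2.4} so that the indicator class whose supremum must be controlled has effective dimension $O(s_0+s')$ rather than $p$.
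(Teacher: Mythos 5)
Your proposal is correct and follows essentially the same route as the paper: the same three-way split into a plug-in term, a centred stochastic term, and a kernel-smoothing bias, with the bias handled by a Taylor expansion of $F_k$ under Assumption \ref{C1}, the stochastic term by Hoeffding plus a union bound over the $p^2$ entries, and the plug-in term by combining the conditional-mean bound $m_0Lb_k^{-1}\vert\wh\bbeta_0-\bbeta_0\vert_1$ (fed by Theorem \ref{theorem2.2}) with a VC/maximal-inequality argument on the event $\wh\bbeta_{0,\bar{\mcS}^c}={\bf 0}$ from the weak oracle property, exactly as in the paper's Lemma \ref{lemmaB1}. The only cosmetic difference is that you apply the triangle inequality over $k$ before decomposing, while the paper decomposes the pooled sums first; the content is identical.
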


We analogously define $\bgamma_{\mcT_h,m}=\left(\gamma_{\mcT_h,m,1},\ldots,\gamma_{\mcT_h,m,p-1}\right)^\rmT
=\left(\bH^*_{\mcT_h,-m,-m}\right)^{-1}\bH^*_{\mcT_h,-m,m}$.
Denote $\mcS_{\mcT_h,m}=\{j:\gamma_{\mcT_h,m,j}\neq 0\}$
with $s_{\mcT_h,m}=\left\vert\mcS_{\mcT_h,m}\right\vert$.

\begin{assumption}\label{C8}
There exists a constant $\kappa_{0m}>0$  such that
\begin{eqnarray*}
\inf_{\bdelta\in\mathcal{D}\left(0,\mcS_{\mcT_h,m}\right)} \frac
{\bdelta^\rmT \bH^{*}_{\mcT_h,-m,-m}\bdelta}{\vert\bdelta\vert^2_2}\ge \kappa_{0m}.
\end{eqnarray*}
\end{assumption}

\begin{assumption}\label{C9}
There exists a constant $\kappa_{0m}>0$  such that
\begin{eqnarray*}
\inf_{\bdelta\in\mathcal{D}\left(0,\bar{\mcS}_{\mcT_h,m}\right)} \frac
{\bdelta^\rmT \bH^*_{0,-m,-m}\bdelta}{\vert\bdelta\vert^2_2}\ge \kappa_{0m},
\end{eqnarray*}
 where $\bar{\mcS}_{\mcT_h,m}=\mcS_0\cup \mcS_{\mcT_h,m}$.
\end{assumption}

Assumptions \ref{C8} and \ref{C9} are  mimicking assumption \ref{C3} on the
restricted eigenvalue conditions for (\ref{gamma_trans}) and (\ref{gamma_debias})
to establish the error bounds of $\wh\bgamma_{0,m}-\bgamma_{0,m}$
in the following theorem.

\begin{theorem}
\label{theorem3.2}
Set
\begin{eqnarray*}
b_k=O\left(\left(\frac{\log p}{ n_k}\right)^{1/4}+s_0^{1/2}\left(\frac{\log p}{n_{\mcT_h}+n_0}\right)^{1/4}+h^{1/2}+
\left(\frac{s_0+s'}{n_k}\right)^{1/4}\right),\ k\in\{0\}\cup\mcT_h.
\end{eqnarray*}
Assume that
$$ \left(s_0+s_{\mcT_h,m}\right)\left(s_0^{1/2}\left(\frac{\log p}{n_{\mcT_h}+n_0}\right)^{1/4}+h^{1/2}\right)=o(1). $$
Under conditions in Theorem \ref{theorem2.4} and assumptions \ref{C4} and \ref{C8}--\ref{C9},
if
$$\lambda_m\ge4C_0(1+r_{\mcT_h,m})\left(s_0^{1/2}\left(\frac{\log p}{n_{\mcT_h}+n_0}\right)^{1/4}+(s_0h)^{1/4}\left(\frac{\log p}{n_{\mcT_h}+n_0}\right)^{1/8}+h^{1/2}\right)$$
with $r_{\mcT_h,m}=\left\vert\bgamma_{\mcT_h,m}\right\vert_1$ and
$$ \lambda_m'\ge 4C'\left(r_{0,m}\left(s_0^{1/2}\left(\frac{\log p}{n_{\mcT_h}+n_0}\right)^{1/4}+h^{1/2}\right)+\frac{\lambda_ms_{\mcT_h,m}}{h_0}\right) $$
with $r_{0,m}=\left\vert\bgamma_{0,m}\right\vert_1$ where the positive constants $C_0$ and $C'$ are respectively specified by 
(\ref{determineC})
and 
(\ref{determineC'}) in the Supplementary Material,
then it holds that
$$ \left\vert\wh\bgamma_{0,m}-\bgamma_{0,m}\right\vert_2=
O_P\left(\lambda_m'\sqrt{s_0+s_{\mcT_h,m}}+\lambda_m\sqrt{s_{\mcT_h,m}}\right) $$
and
$$ \left\vert\wh\bgamma_{0,m}-\bgamma_{0,m}\right\vert_1=
O_P\left(\lambda_m'\left(s_0+s_{\mcT_h,m}\right)+\lambda_ms_{\mcT_h,m}\right). $$
\end{theorem}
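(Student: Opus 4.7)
The plan is to transpose the two-step transfer-learning argument used for $\wh\bbeta_0$ in Theorems \ref{theorem2.2}--\ref{theorem2.3} to the nuisance direction $\bgamma_{0,m}$. The key observation is that (\ref{gamma_trans}) and (\ref{gamma_debias}) are $\ell_1$-penalized quadratic programs driven by the estimated Hessians $\wh\bH_{\mcT_h}$ and $\wh\bH_0$, not by the non-smooth quantile loss. Restricted strong convexity is therefore supplied directly by Assumptions \ref{C8} and \ref{C9}, and the analysis reduces to controlling the Lasso dual noise at the population targets $\bgamma_{\mcT_h,m}$ and $\bgamma_{0,m}$, then invoking the standard cone decomposition.

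For the pooling step, I would start from the first-order optimality condition at $\bgamma_{\mcT_h,m}$: subtracting the population identity $\bH^{*}_{\mcT_h,-m,-m}\bgamma_{\mcT_h,m}=\bH^{*}_{\mcT_h,-m,m}$, the residual is bounded in $\ell_\infty$-norm by $\|\wh\bH_{\mcT_h}-\bH^{*}_{\mcT_h}\|_{\max}(1+r_{\mcT_h,m})$. Splitting $\|\wh\bH_{\mcT_h}-\bH^{*}_{\mcT_h}\|_{\max}\le \|\wh\bH_{\mcT_h}-\bH_{\mcT_h}(\bbeta_0)\|_{\max}+\|\bH_{\mcT_h}(\bbeta_0)-\bH^{*}_{\mcT_h}\|_{\max}$, the first term is controlled by Theorem \ref{theorem3.1} at rate $s_0^{1/2}(\log p/(n_{\mcT_h}+n_0))^{1/4}+h^{1/2}$, while the second is handled by a population Taylor expansion in $\bbeta^{*}_{\mcT_h}-\bbeta_0$ together with the $m_0$-Lipschitzness of $f_k$ from Assumption \ref{C1} and the $O(h)$ $\ell_1$-bound on $\bbeta^{*}_{\mcT_h}-\bbeta_0$ supplied by Lemma \ref{lemmaA1} under Assumption \ref{C4}. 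The cross exponent $(s_0h)^{1/4}(\log p/(n_{\mcT_h}+n_0))^{1/8}$ in the prescribed lower bound on $\lambda_m$ is precisely the AM--GM balance between the first and second contributions when they enter multiplicatively through the $\lambda_m\sqrt{s_{\mcT_h,m}}$ factor. The cone condition $|\bu_{\mcS_{\mcT_h,m}^c}|_1\le 3|\bu_{\mcS_{\mcT_h,m}}|_1$ then combined with Assumption \ref{C8} yields $|\wh\bgamma_{\mcT_h,m}-\bgamma_{\mcT_h,m}|_2\lesssim \lambda_m\sqrt{s_{\mcT_h,m}}$ and $|\wh\bgamma_{\mcT_h,m}-\bgamma_{\mcT_h,m}|_1\lesssim \lambda_m s_{\mcT_h,m}$.

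For the debiasing step, the target-only Hessian $\wh\bH_0$ drives the Lasso correction, and the population contrast $\bgamma_{0,m}-\bgamma_{\mcT_h,m}$ plays the role that $h$ played in Theorem \ref{theorem2.2}. The effective sparsity inflates to $s_0+s_{\mcT_h,m}$ and Assumption \ref{C9} supplies the required restricted eigenvalue on $\bH^{*}_{0,-m,-m}$. The noise floor $\lambda_m'$ decomposes, mirroring the two terms in the stated lower bound, into $r_{0,m}$ times the max-norm error of $\wh\bH_0-\bH^{*}_0$ (bounded by the $k=0$ analogue of Theorem \ref{theorem3.1}) plus the carried-over Step 1 error $\lambda_m s_{\mcT_h,m}/h_0$, the latter entering through the debiasing KKT condition and divided by the strong-convexity constant. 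Triangle inequality combining the two steps delivers $|\wh\bgamma_{0,m}-\bgamma_{0,m}|_2\lesssim \lambda_m'\sqrt{s_0+s_{\mcT_h,m}}+\lambda_m\sqrt{s_{\mcT_h,m}}$ and the companion $\ell_1$ bound as stated.

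The main obstacle will be pinning down the exact form of $\lambda_m$: three distinct contributions to $\|\wh\bH_{\mcT_h}-\bH^{*}_{\mcT_h}\|_{\max}$ — the Powell-bandwidth smoothing bias controlled by the prescribed $b_k$, the plug-in estimation variance governed by Theorem \ref{theorem3.1}, and the population pooling bias measured against $\bbeta^{*}_{\mcT_h}-\bbeta_0$ — must be simultaneously balanced against the bandwidth choice in the statement and then propagated through the $\ell_\infty$-to-$\ell_1$ duality that introduces the $1+r_{\mcT_h,m}$ and $r_{0,m}$ factors. The fingerprint of this balance is the middle cross-term exponent $(s_0h)^{1/4}(\log p/(n_{\mcT_h}+n_0))^{1/8}$, and verifying it at the stated rate is where the bulk of the technical bookkeeping will lie.
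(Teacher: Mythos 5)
Your proposal is correct and follows essentially the same route as the paper: the paper's Lemmas \ref{lemmaB2}--\ref{lemmaB4} and the proof of Theorem \ref{theorem3.2} carry out exactly your plan — bound the $\ell_\infty$ score residual at $\bgamma_{\mcT_h,m}$ (resp.\ $\bgamma_{0,m}+\wh\bgamma_{\mcT_h,m}-\bgamma_{\mcT_h,m}$) via the population identity, the $(1+r_{\mcT_h,m})$ and $r_{0,m}$ duality factors, and the max-norm Hessian errors from Theorem \ref{theorem3.1} plus the $O(h)$ pooling bias from Lemma \ref{lemmaA1}; then establish the cone condition, transfer the population restricted eigenvalue of assumptions \ref{C8}--\ref{C9} to the sample Hessians (this is where the $(s_0+s_{\mcT_h,m})(\cdots)=o(1)$ hypothesis is used), and conclude by the triangle inequality. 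The only cosmetic discrepancy is that the $\lambda_m s_{\mcT_h,m}/h_0$ term arises in (\ref{determineC'}) from the bound $\Vert\wh\bH_{0,-m,-m}\Vert_{\max}\lesssim L^2/h_0$ on the kernel-smoothed Hessian (the bandwidth, not a strong-convexity constant), and the cross term $(s_0h)^{1/4}(\log p/(n_{\mcT_h}+n_0))^{1/8}$ in the stated $\lambda_m$ is simply dominated by the sum of the other two terms rather than being a separately required balance.
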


Having obtained the projection direction estimator $\wh\bgamma_{0,m}$ for $\bgamma_{0,m}$ and the transfer learning estimator $\wh\bbeta_{0,-m}$ (excluding the
$m$-th entry of $\wh\bbeta_0$) for the nuisance parameter $\bbeta_{0,-m}$, we define an estimated orthogonal score function of $\beta_{0m}$ for (\ref{orthogonal}) as follows,
\begin{eqnarray}\label{estorthogonal}
\wh{S}^{\rmo}_{0m}(\beta_{0m}\mid \wh\bbeta_{0,-m})=S_m((\beta_{0m}, \wh\bbeta^{\rmT}_{0,-m})^{\rmT};\{0\})-\bS^{\rmT}_{-m}((\beta_{0m}, \wh\bbeta^{\rmT}_{0,-m})^{\rmT};\{0\})\wh\bgamma_{0m}.
\end{eqnarray}
Considering $\wh\beta_{0m}$ as an initial estimator for the parameter $\beta_{0m}$ of interest, the one-step estimator based on
(\ref{estorthogonal}) can be deduced as
\begin{eqnarray*}
\wt\beta_{0m}&=&\wh\beta_{0m}-\left(\frac{\partial
\wh{S}^{\rmo}_{0m}(\beta_{m}\mid \wh\bbeta_{0,-m})}{\partial \beta_{m}}\Bigg\vert_{\beta_m=\wh\beta_{0m}}\right)^{-1}\wh{S}^{\rmo}_{0m}(\wh\beta_{0m}\mid \wh\bbeta_{0,-m})
\\&=&\wh\beta_{0m}-\wh H^{-1}_{0,m\mid -m}\wh{S}^{\rmo}_{0m}(\wh\beta_{0m}\mid \wh\bbeta_{0,-m}),
\end{eqnarray*}
where $\partial(\cdot)/\partial \beta_{m} $ is the subgradient operator and  $\wh H_{0,m\mid -m}=\wh H_{0,m,m}-\wh\bH^{\rmT}_{0,-m,m}\wh\bgamma_{0,m}$.
We advocate the transfer learning technique twice to obtain
$\wh\beta_{0m}$ and $\wh\bgamma_{0m}$, respectively, based on which
the one-step debiased estimator $\wt\beta_{0m}$ is deduced. Thus,
we refer to $\wt\beta_{0m}$
as the double transfer learning estimator.

We are in a position to
derive the asymptotic distribution of $\wt\beta_{0m}$ if inference procedure
for $\beta_{0m}$ is desirable. For simplicity, denote $\wh \bSigma_k=\sum_{i=1}^{n_k}\bZ_{ki}\bZ_{ki}^{\rmT}\tau(1-\tau)/n_k$ and
$\wh\bSigma_{\mcT_h}=\sum_{k\in\{0\}\cup\mcT_h}\pi_k\wh\bSigma_k$, whose
population counterparts are denoted by $\bSigma_k=E(\wh \bSigma_k)$ and $\bSigma_{\mcT_h}=E(\wh \bSigma_{\mcT_h})$, respectively. We pose some similarities
between the target and transferable sources.

\begin{assumption}\label{C10}
There exists $h_1$ such that
\begin{eqnarray*}
\max\left\{\left\vert H^*_{\mcT_h,m,m}-H^*_{0,m,m}\right\vert,
\left\vert\left(\bH^*_{\mcT_h,m,-m}-\bH^{*}_{0,m,-m}\right)
\bgamma_{0,m}\right\vert\right\}\le h_1.
\end{eqnarray*}
\end{assumption}

\begin{assumption}\label{C11}
There exists $h_{\max}$ such that
$\left\Vert \bSigma_{\mcT_h}-\bSigma_0\right\Vert_{\max}\le h_{\max}.$
\end{assumption}

Assumptions \ref{C10} and \ref{C11} are imposed to guarantee the consistency of transfer learning estimator of the asymptotic variance 
if $h_1$ and $h_{\max}$
are assumed to be convergent to $0$ at certain rates.
Note that $h_{\max}$ can be set as $0$ if
the distributions of covariates  across the target and transferable sources
are identical.
Define $\wh\bvarphi_{0,m}=\Gamma_{m,1}(-\wh\bgamma_{0,m})$.

\begin{theorem}
\label{theorem3.3}
Assume
$$\frac{s_0\log p}{\sqrt{n_0}}=o(1),\ h=o\left(n_0^{-1/4}\right)\ {\rm and}\ \max\{h_{\max},h_1\}=o(1).$$
Under conditions in Theorems \ref{theorem2.2} and \ref{theorem3.2} and assumptions \ref{C10}--\ref{C11},
if
$$\sqrt{n_0}\left(\lambda_m'\left(s_0+s_{\mcT_h,m}\right)+\lambda_ms_{\mcT_h,m}\right)\left(s_0\sqrt{\frac{\log p}{n_0}}+h\right)=o(1),$$
then it holds
\begin{eqnarray*}
\frac{\sqrt{n_0}\wh H_{0,m\mid -m}(\wt\beta_{0m}-\beta_{0m})}{\wh \sigma_m}\xrightarrow{\mathcal{L}}N(0,1),
\end{eqnarray*}
where 
$\wh\sigma_m^2=\wh\bvarphi_{0,m}^\rmT \wh\bSigma_{\mcT_h}\wh\bvarphi_{0,m}$.
\end{theorem}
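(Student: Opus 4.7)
The plan is to anchor the proof on a master decomposition of $\wh H_{0,m\mid-m}(\wt\beta_{0m}-\beta_{0m})$ into a linear leading term in the target-only score and a sum of remainders, each of which is forced to $o_P(1/\sqrt{n_0})$ using the rates from Theorems \ref{theorem2.2}, \ref{theorem3.1}, and \ref{theorem3.2}. Reading the one-step construction of Algorithm \ref{algorithm2} as the Newton step $\wt\beta_{0m}=\wh\beta_{0m}-(\wh H_{0,m\mid-m})^{-1}\wh\bvarphi_{0,m}^{\rmT}\bS(\wh\bbeta_0;\{0\})$ and performing a Taylor-style expansion of the quantile score gives
\begin{align*}
\wh H_{0,m\mid-m}(\wt\beta_{0m}-\beta_{0m}) &= -\wh\bvarphi_{0,m}^{\rmT}\bS(\bbeta_0;\{0\})-\wh\bvarphi_{0,m}^{\rmT}\bR_{n_0} \\
&\quad + \bigl\{\wh H_{0,m\mid-m}(\wh\beta_{0m}-\beta_{0m})-\wh\bvarphi_{0,m}^{\rmT}\bH_0^{*}(\wh\bbeta_0-\bbeta_0)\bigr\},
\end{align*}
where $\bR_{n_0}=\bS(\wh\bbeta_0;\{0\})-\bS(\bbeta_0;\{0\})-\bH_0^{*}(\wh\bbeta_0-\bbeta_0)$. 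Label the three summands $T_1,T_2,T_3$.

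For $T_1$, split $\wh\bvarphi_{0,m}=\bvarphi_{0,m}+(\wh\bvarphi_{0,m}-\bvarphi_{0,m})$. Each summand in $\sqrt{n_0}\bvarphi_{0,m}^{\rmT}\bS(\bbeta_0;\{0\})$ is an i.i.d.\ mean-zero variable with variance $\bvarphi_{0,m}^{\rmT}\bSigma_0\bvarphi_{0,m}$, so Lindeberg's CLT applies under Assumption \ref{C2}. The plug-in error is at most $|\wh\bvarphi_{0,m}-\bvarphi_{0,m}|_1\,|\bS(\bbeta_0;\{0\})|_\infty$, which is $O_P\bigl((\lambda_m'(s_0+s_{\mcT_h,m})+\lambda_m s_{\mcT_h,m})\sqrt{\log p/n_0}\bigr)$ by Theorem \ref{theorem3.2} combined with a Hoeffding bound on the bounded score; the stated product condition, multiplied by $\sqrt{n_0}$, forces this to $o_P(1)$.

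The critical work lies in $T_2$ and $T_3$. For $T_3$, swap $\bH_0^{*}$ for $\wh\bH_0$: the approximate KKT conditions for $\wh\bgamma_{0,m}$ in Algorithm \ref{algorithm2} enforce that $\wh\bvarphi_{0,m}^{\rmT}\wh\bH_0$ equals $\wh H_{0,m\mid-m}$ at coordinate $m$ and is $O(\lambda_m')$ in $\ell_\infty$ off-diagonally, so $\wh\bvarphi_{0,m}^{\rmT}\wh\bH_0(\wh\bbeta_0-\bbeta_0)=\wh H_{0,m\mid-m}(\wh\beta_{0m}-\beta_{0m})+O(\lambda_m'|\wh\bbeta_0-\bbeta_0|_1)$, while Theorem \ref{theorem3.1} bounds the $\|\cdot\|_{\max}$-gap between $\wh\bH_0$ and $\bH_0^{*}$ at the rate needed to absorb the remaining contribution through $|\wh\bvarphi_{0,m}|_1\,|\wh\bbeta_0-\bbeta_0|_1$. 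For $T_2$, I would run a peeling/chaining argument for the empirical-process remainder $\bR_{n_0}$ over $\ell_1$-balls around $\bbeta_0$ of radius guaranteed by Theorem \ref{theorem2.2}: Assumption \ref{C1} linearizes $F_0(\cdot\mid\bz)$ on each piece, and a standard $L^2$-maximal inequality for the indicator class bounds $|\bR_{n_0}|_\infty$ by a multiple of $|\wh\bbeta_0-\bbeta_0|_2^{1/2}\sqrt{\log p/n_0}$; contracting with $|\wh\bvarphi_{0,m}|_1$ and using $s_0\log p/\sqrt{n_0}=o(1)$ together with $h=o(n_0^{-1/4})$ drives $\sqrt{n_0}T_2=o_P(1)$.

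The final piece is Studentization. I will verify $\wh\sigma_m\to\sqrt{\bvarphi_{0,m}^{\rmT}\bSigma_0\bvarphi_{0,m}}$ and $\wh H_{0,m\mid-m}\to H_{0,m\mid-m}^{*}$ in probability: the former decomposes via the triangle inequality into a Hoeffding-controlled empirical deviation of $\wh\bSigma_{\mcT_h}-\bSigma_{\mcT_h}$, a transferability bias of order $h_{\max}|\wh\bvarphi_{0,m}|_1^2$ (Assumption \ref{C11}), and a plug-in error driven by $|\wh\bvarphi_{0,m}-\bvarphi_{0,m}|_1\to 0$; the latter combines Theorem \ref{theorem3.1} with Assumption \ref{C10} (bounding $H_{\mcT_h,m\mid-m}^{*}-H_{0,m\mid-m}^{*}$ by $h_1$) to replace $\wh\bH_{\mcT_h}$ by the target analogue. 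Slutsky's theorem then concludes. I expect the chief obstacle to be the empirical-process bound inside $T_2$: the non-smoothness of the quantile indicator, coupled with the mixed $n_0$- and $n_{\mcT_h}$-convergence rates from the transfer structure, means the bracketing-entropy calculation must be balanced simultaneously against $|\wh\bbeta_0-\bbeta_0|_1$ and $|\wh\bbeta_0-\bbeta_0|_2$, and the rate condition $\sqrt{n_0}(\lambda_m'(s_0+s_{\mcT_h,m})+\lambda_m s_{\mcT_h,m})(s_0\sqrt{\log p/n_0}+h)=o(1)$ is precisely what absorbs the resulting cross-term.
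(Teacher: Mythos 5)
Your overall architecture matches the paper's: the same expansion of the one-step estimator around $\bS(\bbeta_0;\{0\})$, a CLT for $-\sqrt{n_0}\bvarphi_{0,m}^{\rmT}\bS(\bbeta_0;\{0\})$, H\"older plus Theorem \ref{theorem3.2} for the plug-in error of the projection direction, a stochastic-equicontinuity bound for the score remainder, and a Slutsky step for $\wh\sigma_m$ and $\wh H_{0,m\mid-m}$ using assumptions \ref{C10}--\ref{C11} exactly as you describe. However, two of your steps would not close under the stated conditions. First, in $T_3$ you route the argument through $\wh\bH_{0}$ and the approximate KKT conditions of (\ref{gamma_debias}); this forces you to absorb a term of order $\sqrt{n_0}\,\Vert\wh\bH_0-\bH_0^{*}\Vert_{\max}\,\vert\wh\bvarphi_{0,m}\vert_1\,\vert\wh\bbeta_0-\bbeta_0\vert_1$, and the $\Vert\cdot\Vert_{\max}$ rate available from (the target analogue of) Theorem \ref{theorem3.1} is only of order $s_0^{1/2}(\log p/n_0)^{1/4}+h^{1/2}$, far too slow for this product to be $o(1)$ under $s_0\log p/\sqrt{n_0}=o(1)$ and $h=o(n_0^{-1/4})$. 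The paper avoids this entirely by exploiting the \emph{exact} population orthogonality: since $\bgamma_{0,m}=(\bH^{*}_{0,-m,-m})^{-1}\bH^{*}_{0,-m,m}$ and $\bH_0^{*}$ is symmetric, one has $\bvarphi_{0,m}^{\rmT}\bH^{*}_0(\wh\bbeta_0-\bbeta_0)=H^{*}_{0,m\mid-m}(\wh\beta_{0m}-\beta_{0m})$ identically, so the only surviving cross term is $(\wh\bvarphi_{0,m}-\bvarphi_{0,m})^{\rmT}\{\bS(\wh\bbeta_0;\{0\})-\bS(\bbeta_0;\{0\})\}$, which is precisely what the displayed rate condition kills; the replacement of $H^{*}_{0,m\mid-m}$ by $\wh H_{0,m\mid-m}$ is then done multiplicatively on the whole statistic via Slutsky (Lemma \ref{lemmaB6}), never inside the decomposition.

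Second, for $T_2$ you propose to bound $\vert\bR_{n_0}\vert_\infty$ coordinatewise by a chaining argument and then contract with $\vert\wh\bvarphi_{0,m}\vert_1$. This pays both a $\sqrt{\log p}$ factor from the union bound over $p$ coordinates and the possibly diverging norm $\vert\wh\bvarphi_{0,m}\vert_1\asymp 1+r_{0,m}$, neither of which is controlled by the hypotheses of the theorem. The paper's Lemma \ref{lemmaB5} instead projects onto the \emph{fixed} direction $\bvarphi_{0,m}$ first and applies Theorem 3 of \citet{chen2003estimation} to the scalar-valued class $\{\bvarphi_{0,m}^{\rmT}\bZ_0(I(Y_0-\bZ_0^{\rmT}(\bbeta_0+\bdelta)\le0)-I(Y_0-\bZ_0^{\rmT}\bbeta_0\le0)):\vert\bdelta\vert_1<t\}$ with $L^2$-continuity modulus $\lesssim(1+r_{0,m})\sqrt{t}$ and $t=O(s_0\sqrt{\log p/(n_{\mcT_h}+n_0)}+h)$, yielding $o_P(n_0^{-1/2})$ with no $\log p$ penalty; the contribution of $\wh\bvarphi_{0,m}-\bvarphi_{0,m}$ is split off separately and absorbed by the rate condition. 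You should restructure $T_2$ and $T_3$ along these lines; the remaining pieces of your proposal (the CLT, the Hoeffding bound on $\vert\bS(\bbeta_0;\{0\})\vert_\infty$, and the consistency of $\wh\sigma_m^2$ and $\wh H_{0,m\mid-m}$ via $h_{\max}$ and $h_1$) are correct as written.
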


Consequently, we can construct valid confidence interval and hypothesis test procedures for
$\beta_{0m}$ of interest.

\section{Transferability Detection}
\label{sec4}
As yet our (double) transfer learning procedure is based on the knowledge of the transferable set $\mcT_h$. In fact, the transferable set is usually unknown in practice.
It is therefore imperative to provide a data-derived method to identify the
transferable set $\mcT_h$.
Motivated by \citet{tian2022transfer}, we propose an approach to detecting the transferable set $\mcT_h$ in the framework of high-dimensional quantile regression models. Without loss of generality, suppose $n_0$ is even and randomly and equally divide the target data into
two sub-datasets:  one for training denoted by ${\cal V}_{\rm tr}$ and the other for test by ${\cal V}_{\rm te}$.
Thus $|{\cal V}_{\rm tr}|=|{\cal V}_{\rm te}|=n_0/2$. Denote the quantile loss function of the target based on observations in $\mathcal{A}\subset \{1,\ldots,n_0\}$ as
$$
L_0(\bbeta;\mathcal{A})=\frac{1}{\vert \mathcal{A}\vert}\sum_{i\in \mathcal{A}}\rho_{\tau}\left(Y_{0i}-\bZ_{0i}^{\rmT}\bbeta\right).
$$

The lasso estimator  based solely on the target training dataset ${\cal V}_{\rm tr}$ is defined by
$$
\wh\bbeta_{\rm lasso}=\arg\min_{\bmb}\left\{L_0(\bbeta;{\cal V}_{\rm tr})+\lambda_0\vert\bbeta\vert_1\right\},
$$
where $\lambda_0$ is the tuning parameter.
For $k=1,\ldots, K,$ viewing the $k$-th source dataset
$\{(\bZ_{ki},Y_{ki}), i=1,\ldots,n_k\}$ as transferable, combining
it with the target training dataset
$\{(\bZ_{0i},Y_{0i}), i\in {\cal V}_{\rm tr}\}$,
and mimicking the transfer learning procedure in (\ref{op_trans})
to obtain $\wh{\bbeta}_{\mcT_h}$,
we can also obtain a crude transfer learning estimator.
Specifically, we define
$$ \wh{\bbeta}_{{\cal V}_{\rm tr},k}=\arg\min_{\bbeta} \left\{ \frac{2n_k}{2n_k+n_0} L(\bbeta;\{k\})+\frac{n_0}{2n_k+n_0}L_0(\bbeta;{\cal V}_{\rm tr})+ \lambda_{\bbeta,k}\vert\bbeta\vert_1\right\}, $$
where $\lambda_{\bbeta,k}$ is the tuning parameter.
If the quantile loss on the target test dataset
at $\wh{\bbeta}_{{\cal V}_{\rm tr},k}$ is not substantially greater than that at $\hb_{\rm lasso}$, the $k$-th
source can be intuitively considered as transferable. In particular,
for a given $\varepsilon_0>0$, we thus define
$$\wh{\mcT}=\left\{1\le k\le K: L_0\left(\wh{\bbeta}_{{\cal V}_{\rm tr},k};{\cal V}_{\rm te}\right) \leq (1+\varepsilon_0)L_0\left(\hb_{\rm lasso};{\cal V}_{\rm te}\right)\right\}$$
and use it to estimate the transferable set $\mcT_h$.
The consistency of detection procedure is established in the Supplementary Material.

\section{Simulation Studies}
\label{sec5}
We conduct simulations to evaluate the finite-sample performances of the
proposed transfer learning method (Trans) under various combinations of $h$ and $\vert\mcT_h\vert$, wherein the transferable source datasets are identified through the detection procedure in
Section \ref{sec4}.
We also fit the $\ell_1$-penalized quantile regression model
solely on the target data (Non-Trans) to investigate the potential loss
if source datasets, transferable or non-transferable, are discarded.
On the contrary, we apply the proposed transfer learning method
on all source and target datasets by viewing all sources as transferable
(Pooled-Trans); thus utilizing non-transferable sources may be detrimental to the
resulting performance. As a benchmark,
we apply the proposed transfer learning method on the
transferable source (without the need of detection procedure) and target datasets (Oracle-Trans).

The $\ell_1$-penalized quantile regression is implemented via R package
{\verb"hqreg"} 
\citep{yi2017semismooth}. We adopt five-fold cross-validation procedure
to select all the tuning parameters and set $\varepsilon_0=0.01$ in the recovery of
transferable source set.
We consider $p=1000$, $n_0=200$, $n_k=300, k=1, \ldots, K$ with $K=20$, and $h=3, 6, 12$
throughout simulations and report the averaged $\ell_2$ estimation errors based on
$1000$ replications.

\begin{figure}
\centerline{
\includegraphics[height=0.75\textwidth,width=\textwidth]{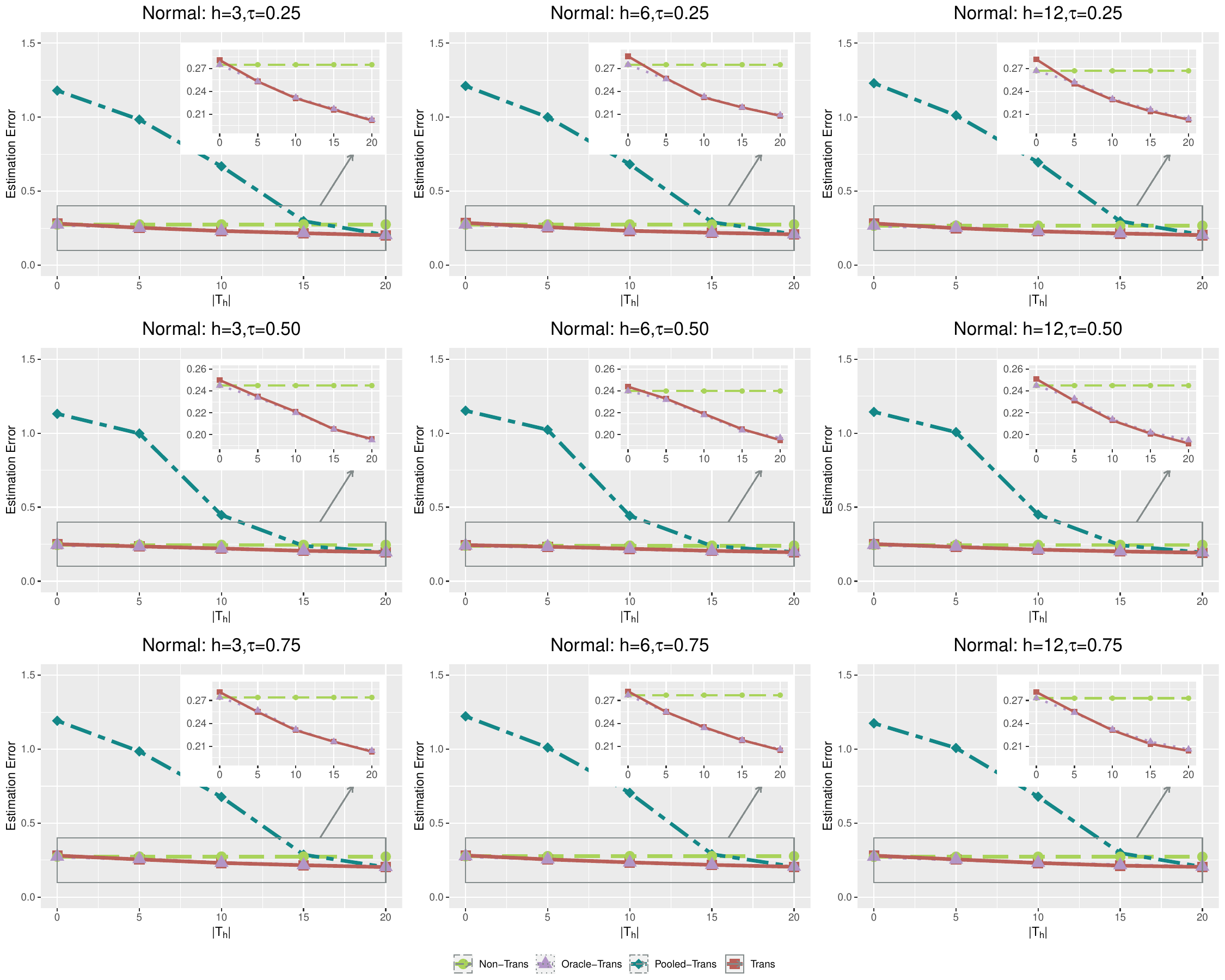}}
\caption{Averaged $\ell_2$ estimation errors with heterogenous normal model errors.}
\label{normal}
\end{figure}

We consider
the following heterogenous high-dimensional quanitle regression models from which
the target and source datasets were independently generated,
\begin{eqnarray}\label{SimModel}
Y_{k}=\bZ_k^\rmT\bbeta_k+Z_{k1}^2\eta_{k\tau}, k=0,\ldots, K,
\end{eqnarray}
where covariate vector $\bZ_0\sim N({\bf 0},(0.5^{|l-j|})_{l,j=1}^{p})$,
and $\bZ_k\sim N({\bf 0},(0.5^{|l-j|})_{l,j=1}^{p}+\bxi_k\bxi_k^\rmT)$ with
$\bxi_k\sim N({\bf 0},0.3^2\bI_p)$, $k=1,\ldots,K$,
  and $\bI_p$ denotes the $p$-dimensional unit matrix.
$Z_{k1}$ is the first component of $\bZ_k$, and $\eta_{k\tau}$ is independent of
$\bZ_k$ with $\tau$-th quantile zero. We set $\bbeta_0=(3,3,3,3,3, {\bf 0}_{p-5}^{\rmT})^{\rmT}$ for the target quantile regression parameter of interest and thus $s_0=5$.
For simplicity, let $\bd_k$ denote a random $p$-vector with
every component independently generated from $-1$ or $1$ with probability $0.5$.
We further construct $h$-transferable source datasets and non-transferable source datasets
under scenarios $|\mcT_h|= 0, 5, 10, 15, 20$.
In particular, for transferable source datasets, we set
$\bbeta_k=\bmb_0+h/p\cdot\bd_k$, implying $\vert\bbeta_k-\bmb_0\vert_1\le h$.
For non-transferable source $k\in\mcT_h^c$, we
first randomly generated index set $\mcI_k$ of size $s_0$ from $\{2s_0 + 1, \ldots, p\}$ and set the $j$-th component of quantile regression coefficient $\bbeta_k$ as
\begin{equation*}
\beta_{kj}=\begin{cases}
1+2h/p\cdot(-1)^{d_{kj}},& j\in\{s_0+1,\ldots,2s_0\}\cup \mcI_k,\\
2h/p\cdot(-1)^{d_{kj}},& \text{otherwise},\\
\end{cases}
\end{equation*}
where $d_{kj}$ is the $j$-th element of $\bd_k$. Basic calculations yield
$\vert\bbeta_k-\bbeta_0\vert_1\ge 2h+3s_0-12s_0h/p$.

We consider heterogenous light-tailed model errors when $\eta_{k\tau}$
were independently from the shifted standard normal distribution with
three different quantile levels $\tau=0.25, 0.5$, and $0.75$ to
more comprehensively examine covariate effects.
Simulation results in Figure \ref{normal}
show that, across the quartiles, Non-Trans approach indeed exhibits higher $\ell_2$ estimation errors if one does not utilize any source dataset especially when it is transferable. On the other hand, Pooled-Trans approach which utilizes all source datasets,
whether transferable or non-transferable, leads to remarkably
downstream performances while mistakenly using more non-transferable sources.
 Trans and Oracle-Trans approaches perform equally well,
implying that the detection procedure can effectively identify the
transferable and non-transferable sources. Meanwhile, the proposed transfer
learning method can greatly reduce estimation errors compared to Non-Trans approach
due to the benefit of properly exploiting the source datasets.
There are considerable decreases in estimation errors of Trans, Oracle-Trans,
and Pooled-Trans approaches when the transferable sources increase among the $K=20$ candidate ones. For each method, the performances are very similar when criterion $h$ is varying from $3, 6,$ and $12$, perhaps due to that $h/p$ is relatively tiny.

We further consider heterogenous heavy-tailed model errors wherein
$\eta_{k\tau}$
were independently from the shifted $t_2$ distribution
and heterogenous skewed model errors from the Gumbel. Simulation results
are respectively displayed in the Supplementary Material, showing that
the proposed transfer learning approach delivers promising performances,
a substantial profit from the ability of quantile regression to address the
heterogeneous heavy-tailed or skewed model errors as well as
the ability of the transfer learning method to
leverage the source datasets properly.
On the other hand, we also demonstrate the usefulness of
the proposed transferability detection procedure which can
recover the truly transferable sources with very high probability. We postpone the simulation
results to the Supplementary Material.

\begin{figure}[tbp]
\includegraphics[height=0.75\textwidth,width=\textwidth]{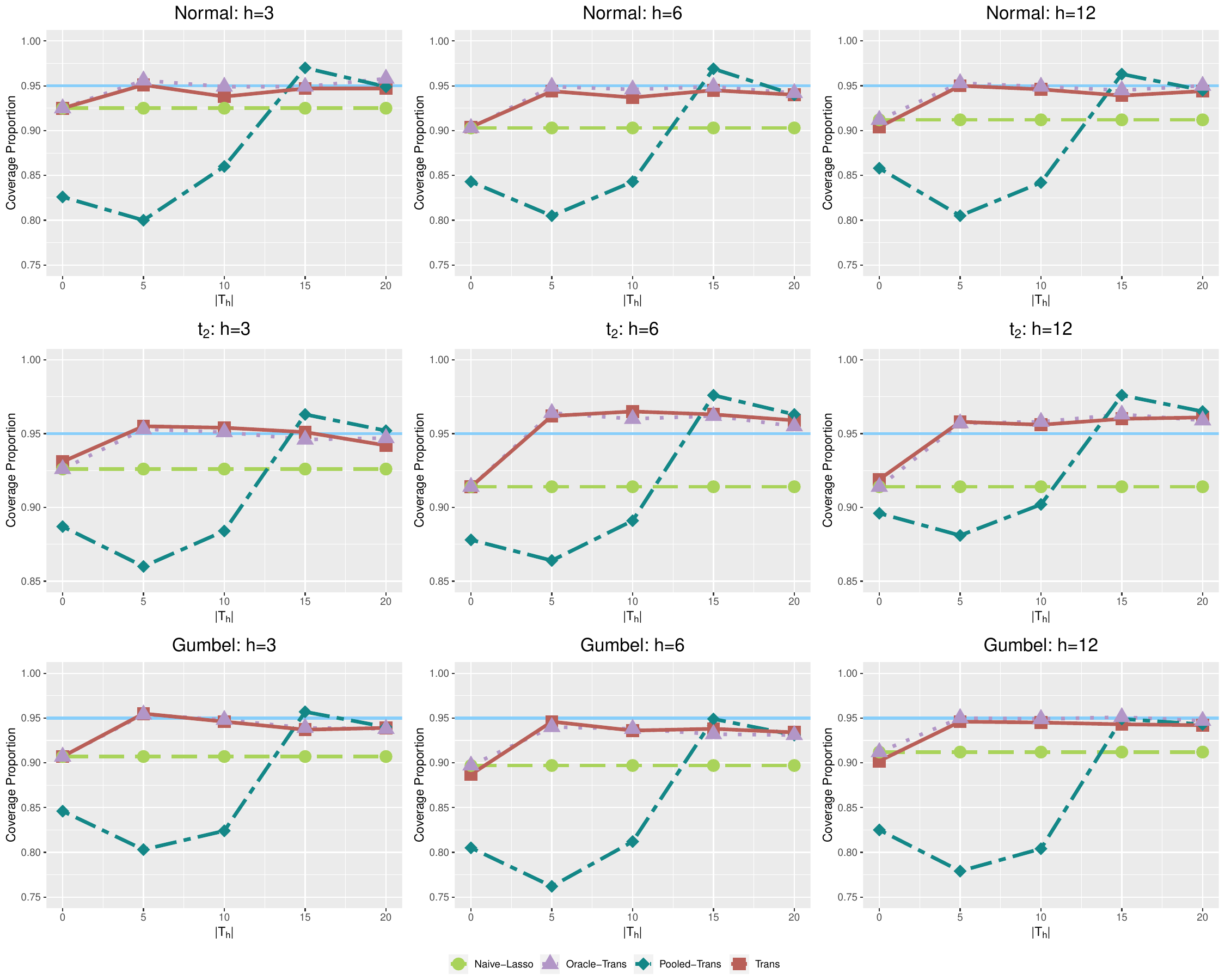}
\caption{Coverage proportion of confidence interval for the $1$-st component
 of quantile regression coefficients with $\tau=0.75$.}
\label{cp1_75}
\end{figure}

We next assess the finite-sample performance of the proposed (double) transfer learning
inference procedure in terms of the coverage proportion
among $1000$ replications for single component of
the high-dimensional quantile regression coefficients.
To be concise, we fix the quantile level of $\tau$ at $0.75$ and
consider regression coefficients of the $1$-st (signal) and $50$-th (non-signal)
covariates, respectively, in different scenarios.
We only report simulation results for signal case
in Figures \ref{cp1_75} and defer ones for non-signal case
to  the Supplementary Material. 
In general, across different heterogenous model errors,
the proposed Trans approach performs as well as Oracle-Trans approach,
and their coverage proportions of the two methods are apparently
closer to the nominal $95\%$ than that of Non-Trans approach.
As for Pooled-Trans approach, when $\vert\mcT_h\vert$ is relatively smaller,
more non-transferable sources are mistakenly exploited in the transfer learning procedure
and thus the coverage proportion is farther away from the nominal $95\%$ than
the other three methods.
As $\vert\mcT_h\vert$ increases to $20$,
negative transfer is alleviated or even avoided and
the coverage proportion becomes closer to the nominal $95\%$.
As for as the size and power analysis, we report the simulation results
in the Supplementary Material, showing that the proposed Trans
approach can achieve higher power while maintaining the size properly. On the contrary,
both
the Non-Trans approach that does not utilize any sources
and the Pooled-Trans approach that utilizes all the sources without discrimination
deliver inferior  performances.

\section{A Real Example}
\label{sec6}
To illustrate the practical utility of the proposed method,
we conduct statistical inference on a survey of China Family Panel Studies (http://www.isss.pku.edu.cn/cfps/),
which aims to investigate the relationship among economy, society, population, education and health in China
by respectively collecting data at the individual, family and community levels.
The panel studies have finished five rounds of follow-up investigation during 2010, 2012, 2014, 2016, and 2018.
The nearest raw dataset of individual level in 2018 contains 37355 individuals and 1346 variables.

\begin{table}
\centering
\caption{Sample sizes of source provinces and the transferable sources detected by the proposed method}
{\setlength{\tabcolsep}{7.8mm}
\begin{tabular}{lccccccccc}
\hline	
\hline
&&\multicolumn{3}{c}{Transferability}\\
\cline{3-5}
Source&Sample size&$\tau=0.25$&$\tau=0.50$&$\tau=0.75$\\
\hline
Yunnan     &576 & \ding{55}  & \ding{51} & \ding{51}    \\
Sichuan    &790 & \ding{51}  & \ding{51}  & \ding{51}   \\
Hubei      &228 & \ding{51}  & \ding{51}  & \ding{51}   \\
Jiangxi    &292 & \ding{51}  & \ding{51}  & \ding{51}    \\
Anhui      &309 & \ding{51}  & \ding{51}  & \ding{51}    \\
Jiangsu    &297 & \ding{55} &  \ding{55} & \ding{51}    \\
Shanghai   &771 & \ding{51} & \ding{51}  & \ding{51}    \\
Guangdong  &1481& \ding{51}  & \ding{51}  & \ding{51}   \\
\hline
\label{trans_source}
\end{tabular}}
\end{table}

As an essentially economic and sociological issue concerned by the public,
the total individual income is used to measure the quality of daily life.
Consequently, we are primarily interested to investigate which factors affect individual income for the adults between 18 and 60 years old in the fifth round of survey in 2018.
The total individual income includes total wages and all types of government subsidies in 2018.
To mitigate the impact arisen by the apparent wealth inequality, we take the logarithm of the total individual income as the response.
We consider the Hunan province as the target, and view the provinces near the Changjiang River
and Zhujiang River as
the sources, which consists of Yunnan, Sichuan, Hubei, Jiangxi, Anhui, Jiangsu,  Shanghai and Guangdong.
We employ the distance correlation learning \citep{li2012feature}
and the fused Kolmogorov filter \citep{mai2015the} to
select $200$ most marginally related covariates, respectively.
By merging these $400$ selected covariates, we obtain
$p=265$ ones.
The sample size of the target Hunan province is $n_0=403$.
Table \ref{trans_source} shows the sample sizes of source provinces and the transferable
sources detected by the proposed method.
It can be seen that transferable sources are quantile-adaptive
and the income groups at the third quartile
are all transferable, showing that there may exist some
similarity among the relatively high income groups.
On the contrary, the Hunan, Yunnan and Jiangsu provinces
possess different geographical conditions such as
arable land, flood disaster and transportation, which however are usually
viewed as agriculture-related factors that primarily constrain
the relatively low income group. Thus, the Yunan and Jiangsu provinces
are not detected as transferable sources while inferring the
income group at the first quartile in the Hunan province.

\begin{table}[tbp]
\centering
\begin{footnotesize}
\caption{
Estimation and hypothesis test for target Hunan province in the China Family Panel Studies}
{\setlength{\tabcolsep}{0.75mm}
\def\arraystretch{1.3}
\begin{tabular}{lcrrrrrrrrrrrrrr}
\hline	
\hline
&&\multicolumn{4}{c}{\hspace{0in}Trans}
&&\multicolumn{4}{c}{\hspace{0in}Pooled-Trans}
&&\multicolumn{4}{c}{\hspace{0in}Non-Trans}\\
\cline{3-6}
\cline{8-11}
\cline{13-16}
 Covariate     &$\tau$     &$\wh\bbeta$ &$\wt\bbeta$ &ESE &$p$-value  &&     $\wh\bbeta$ &$\wt\bbeta$ &ESE &$p$-value  &&    $\wh\bbeta$ &$\wt\bbeta$ &ESE &$p$-value  \\
\hline
Age&0.25&  0.000&  0.458& 0.171& {\bf 0.007}&& $-0.001$& $-0.075$& 0.100& 0.455&& 0.000 &$-0.030$& 0.068& 0.656   \\
   &0.50& $-0.017$& $-0.027$& 0.065& 0.672&& $-0.022$&  0.001& 0.060& 0.990&& 0.000 &$-0.021$& 0.061& 0.728   \\
   &0.75& $-0.034$& $-0.079$& 0.063& 0.211&& $-0.034$& $-0.079$& 0.063& 0.211&& 0.000 &$-0.024$& 0.059& 0.691   \\
Urban&0.25&  0.007&  0.333& 0.116& {\bf 0.004}&&  0.018&  0.305& 0.100& {\bf 0.002}&& 0.000 &$-0.114$& 0.092& 0.216   \\
&0.50&  0.000& $-0.082$& 0.099& 0.406&&  0.001& $-0.076$& 0.103& 0.458&& 0.000 &$-0.040$& 0.072& 0.580   \\
&0.75&  0.004& $-0.086$& 0.123& 0.487&&  0.004& $-0.086$& 0.123& 0.487&& 0.000 & 0.010& 0.092& 0.918   \\
Degree&0.25&  0.017&  0.104& 0.030& $<${\bf 0.001}&&  0.023&  0.104& 0.028& $<${\bf 0.001}&& 0.018 & 0.085& 0.042& {\bf 0.041}   \\
&0.50&  0.019&  0.081& 0.027& {\bf 0.003}&&  0.022&  0.084& 0.027& {\bf 0.002}&& 0.018 & 0.074& 0.031& {\bf 0.018}   \\
&0.75&  0.018&  0.099& 0.028& $<${\bf 0.001}&&  0.018&  0.099& 0.028& $<${\bf 0.001}&& 0.018 & 0.133& 0.058& {\bf 0.022}   \\
Reading&0.25&  0.000&  0.111& 0.039& {\bf 0.004}&&  0.000& $-0.087$& 0.058& 0.129&& 0.007 & 0.014& 0.008& 0.071   \\
&0.50&  0.000&  0.055& 0.008& $<${\bf 0.001}&&  0.000&  0.055& 0.008& $<${\bf 0.001}&& 0.007 & 0.033& 0.011& {\bf 0.002}   \\
&0.75&  0.000&  0.026& 0.008& {\bf 0.002}&&  0.000&  0.026& 0.008& {\bf 0.002}&& 0.007 & 0.048& 0.014& $<${\bf 0.001}   \\
Housing&0.25&  0.000& $-0.087$& 0.043& {\bf 0.045}&&  0.000& $-0.099$& 0.040& {\bf 0.014}&& 0.000 &$-0.037$& 0.048& 0.437   \\
&0.50&  0.000& $-0.010$& 0.041& 0.805&&  0.000&  0.005& 0.039& 0.889&& 0.000 & 0.004& 0.040& 0.918   \\
&0.75&  0.000&  0.022& 0.039& 0.571&&  0.000&  0.022& 0.039& 0.571&& 0.000 & 0.039& 0.047& 0.413   \\
\hline
\label{real_ci}
\end{tabular}}
\end{footnotesize}
\end{table}

To be concise, we report the effects of five covariates of interest
and the associated $p$-values on the total income at different quantiles
in Table \ref{real_ci}, where column ``ESE" stands for
the estimated standard error based on the double transfer learning approach.
For the Trans approach,
covariates Degree and Reading have significantly
positive effects on the total income crossing three quartiles;
higher degree and more reading tend to create more wealth, indicating
education is an important avenue to increase income.
On the other hand, younger, living in the rural residence
and pessimistic attitude to
housing exerts significantly negative
effects only on the first quartile of the total income. It means
that Age, Urban residence and grasping Housing policy make difference
for the relatively low income group.
For the Pooled-Trans and Non-Trans approaches, although
the interpretations are general consistent with that of the Trans approach,
there exists some missed significant covariates at different quantile levels.
As a conclusion, we advocate the government to develop education
to enhance the total income for all individuals and
promote some effective policies to accelerate urbanization and
reduce the housing burden for individuals whose income
lies below the first quartile.

\section{Conclusion}
\label{sec7}
Modern complicated high-dimensional data often display heterogeneity due to heteroscedastic variance or inhomogeneous covariate effects. On the other hand,
heavy-tailed model error or outlier in the response is another major concern in practice.
Heterogeneity and heavy-tailed phenomenon could be even more prevalent in the transfer learning
as the target and multiple source domains are involved.
Yet these concerns are not addressed sufficiently by current transfer learning approaches
and thus may downgrade the resulting performance in practice if they are not properly taken into account.
Numerical studies show that the proposed procedure delivers promising and compelling performances.

We have adopted the $\ell_1$ distance of coefficient difference
between the target and source regression models to measure
their similarity. It has been demonstrated as a straightforward and meaningful
criterion. The cosine similarity \citep{xia2015learning,park2020methodology,gu2022mathematical}, which is free of the magnitude and relatively loose,
between regression coefficients of the target and source models could be viewed as
an alternative measure and certainly deserves further exploration.
Distributionally robust optimization \citep{chen2020distributionally,zhen2021mathematical}
 can be considered as
an effective approach to enhancing the transferability
of  model parametric estimation that is robust to distribution perturbation
in terms of the Wasserstein distance.
Investigations along this direction
that utilizes the Wasserstein distance as a criterion to select the
source domains in the transfer learning may be merit future research.



\section*{Supplement Material}
The online supplementary  material contains
technical proofs of theorems, consistency of the transferability detection procedure, some
supporting lemmas, and  additional simulation results.

\clearpage
\newpage

\begin{center}
{\Large Supplementary  material for
``Estimation and inference for
transfer learning with high-dimensional quantile regression"
}
\end{center}

\tableofcontents

Appendices contain
technical proofs of theorems and some
supporting lemmas as well as additional simulation results. Specifically, we establish the convergence rate for specific $\mcT_h$ in Appendix \ref{appendixA}, confidence interval in Appendix \ref{appendixB},
and detection consistency in Appendix \ref{appendixC}.
Appendix \ref{appendixD} displays additional simulation results.

\appendix
\section{Convergent Rate}
\label{appendixA}
\setcounter{equation}{0}
\def\theequation{A.\arabic{equation}}

\subsection{Proof of Theorem \ref{theorem2.1}}

\begin{lemma}\label{lemmaA1}
Under assumptions \ref{C1} and \ref{C4}, we have
\begin{eqnarray*}
\vert\bbeta^*_{\mcT_h}-\bbeta_0\vert_1\le Ch.
\end{eqnarray*}
\end{lemma}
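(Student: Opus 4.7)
The plan is to exploit the moment equation (\ref{alpha_Th}) defining $\bbeta^*_{\mcT_h}$, linearize the expected score around $\bbeta_0$, and then invert the resulting Hessian-like matrix using assumption \ref{C4} to control the $\ell_1$-norm of the bias.

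First, I would rewrite the expectation of the score using conditioning: for each $k\in\{0\}\cup\mcT_h$,
\begin{eqnarray*}
E\left\{-\bZ_{ki}\left[\tau-I\left(Y_{ki}-\bZ_{ki}^\rmT\bbeta\le 0\right)\right]\right\}
= E\left\{\bZ_{ki}\left[F_k(\bZ_{ki}^\rmT\bbeta\mid \bZ_{ki})-\tau\right]\right\}=:g_k(\bbeta).
\end{eqnarray*}
The moment equation (\ref{alpha_Th}) then reads $\sum_{k\in\{0\}\cup\mcT_h}\pi_k g_k(\bbeta^*_{\mcT_h})=\mathbf{0}$. Because $\bbeta_0$ solves the target quantile equation and $\bbeta_k$ solves the $k$-th source quantile equation, we have $g_0(\bbeta_0)=\mathbf{0}$ and $g_k(\bbeta_k)=\mathbf{0}$ for $k\in\mcT_h$.

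Next, I would apply the fundamental theorem of calculus to each $g_k$, using the density $f_k$ which is well-defined and bounded by assumption \ref{C1}:
\begin{eqnarray*}
g_k(\bbeta^*_{\mcT_h})-g_k(\bbeta_0) = E\left[\int_0^1 f_k\left(\bZ_{ki}^\rmT\bbeta_0+t\bZ_{ki}^\rmT(\bbeta^*_{\mcT_h}-\bbeta_0)\mid \bZ_{ki}\right)\rmd t\, \bZ_{ki}\bZ_{ki}^\rmT\right](\bbeta^*_{\mcT_h}-\bbeta_0),
\end{eqnarray*}
and analogously, using the integration path from $\bbeta_0$ to $\bbeta_k$ for $k\in\mcT_h$,
\begin{eqnarray*}
g_k(\bbeta_0)=g_k(\bbeta_0)-g_k(\bbeta_k)=-\wt\bSigma_k(\bbeta_k-\bbeta_0)=\wt\bSigma_k\bdelta_k.
\end{eqnarray*}
Summing with weights $\pi_k$ and noting that the coefficient of $(\bbeta^*_{\mcT_h}-\bbeta_0)$ collects to exactly $\wt\bSigma_{\mcT_h}$, the moment condition becomes
\begin{eqnarray*}
\wt\bSigma_{\mcT_h}(\bbeta^*_{\mcT_h}-\bbeta_0) = -\sum_{k\in\mcT_h}\pi_k \wt\bSigma_k\bdelta_k.
\end{eqnarray*}

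Finally, I would invert $\wt\bSigma_{\mcT_h}$ (which is non-singular by the implicit content of assumption \ref{C4}) and take the $\ell_1$-norm. Using the sub-multiplicativity $|Ax|_1\le\|A\|_1|x|_1$ together with $\sup_{k\in\mcT_h}\|\wt\bSigma_{\mcT_h}^{-1}\wt\bSigma_k\|_1\le C$ and $|\bdelta_k|_1\le h$ for $k\in\mcT_h$,
\begin{eqnarray*}
\left\vert\bbeta^*_{\mcT_h}-\bbeta_0\right\vert_1 \le \sum_{k\in\mcT_h}\pi_k \left\Vert\wt\bSigma_{\mcT_h}^{-1}\wt\bSigma_k\right\Vert_1 \vert\bdelta_k\vert_1 \le Ch\sum_{k\in\mcT_h}\pi_k \le Ch,
\end{eqnarray*}
since $\sum_{k\in\mcT_h}\pi_k\le 1$.

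The only delicate step is checking that the matrix collected from combining the $k=0$ piece with the $k\in\mcT_h$ pieces really is $\wt\bSigma_{\mcT_h}$ as defined in the paper; this is immediate once one notices that both in the $k=0$ term and in the linearization of $g_k$ the integration path is from $\bbeta_0$ to $\bbeta^*_{\mcT_h}$, matching the paper's definition exactly. Assumption \ref{C1} guarantees the interchange of integration and differentiation implicit in applying the fundamental theorem of calculus.
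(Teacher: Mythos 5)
Your proposal is correct and follows essentially the same route as the paper's proof: linearize the population score via the Newton--Leibniz formula along the paths $\bbeta_0\to\bbeta^*_{\mcT_h}$ and $\bbeta_0\to\bbeta_k$, identify the resulting matrices with $\wt\bSigma_{\mcT_h}$ and $\wt\bSigma_k$, and then apply assumption \ref{C4} together with $\vert\bdelta_k\vert_1\le h$ and $\sum_{k\in\mcT_h}\pi_k\le 1$. The only cosmetic difference is that you phrase the cancellation through $g_0(\bbeta_0)=\mathbf{0}$ and $g_k(\bbeta_k)=\mathbf{0}$ rather than subtracting the source quantile identities inside the expectation, and your reversed integration path for $\wt\bSigma_k$ agrees with the paper's definition after the substitution $t\mapsto 1-t$.
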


\begin{proof}
It follows from the definition of $\bbeta^*_{\mcT_h}$ in (\ref{alpha_Th}) that
$$ \sum_{k\in\{0\}\cup\mcT_h}\pi_kE\left[\bZ_{k}\left\{P
\left(Y_{k}-\bZ_{k}^{\rmT}\bbeta^*_{\mcT_h}\le 0\mid \bZ_{k}\right)-P\left(Y_{k}-\bZ_{k}^{\rmT}\bbeta_k\le 0\mid \bZ_{k}\right)\right\}\right]={\bf 0}, $$
which implies that
\begin{eqnarray*}
&&\sum_{k\in\{0\}\cup\mcT_h}\pi_kE\left[\bZ_{k}\left\{P\left(Y_{k}-\bZ_{k}^{\rmT}\bbeta^*_{\mcT_h}\le 0\mid \bZ_{k}\right)-P\left(Y_{k}-\bZ_{k}^{\rmT}\bbeta_0\le 0\mid \bZ_{k}\right)\right\}\right]\\
&=&\sum_{k\in\mcT_h}\pi_kE\left[\bZ_{k}\left\{P\left(Y_{k}-\bZ_{k}^{\rmT}\bbeta_k\le 0\mid \bZ_{k}\right)-P\left(Y_{k}-\bZ_{k}^{\rmT}\bbeta_0\le 0\mid \bZ_{k}\right)\right\}\right].
\end{eqnarray*}
Therefore, under assumption \ref{C1}, it holds by the Newton--Leibniz formula that
\begin{eqnarray*}
&&\sum_{k\in\{0\}\cup\mcT_h}\pi_kE\left[\int^1_0f_{k}
\left(\bZ_{k}^{\rmT}\bbeta_0+t\bZ_{k}^{\rmT}(\bbeta^*_{\mcT_h}-\bbeta_0)\mid \bZ_{k}\right)\rmd t\bZ_{k}\bZ_{k}^{\rmT}\right](\bbeta^*_{\mcT_h}-\bbeta_0)\\
&=&\sum_{k\in\mcT_h}\pi_kE\left[\int^1_0f_{k}
\left(\bZ_{k}^{\rmT}\bbeta_0+t\bZ_{k}^{\rmT}(\bbeta_k-\bbeta_0)\mid \bZ_{k}\right)\rmd t\bZ_{k}\bZ_{k}^{\rmT}\right](\bbeta_k-\bbeta_0).
\end{eqnarray*}
As a result, following assumption \ref{C4} we have
\begin{eqnarray*}
\vert\bbeta^*_{\mcT_h}-\bbeta_0\vert_1&=&\left\vert\sum_{k\in\mcT_h}\pi_k\wt \bSigma_{\mcT_h}^{-1}
\wt \bSigma_k(\bbeta_k-\bbeta_0)\right\vert_1\\
&\le& \sum_{k\in\mcT_h}\pi_k
\sup_{k\in\mcT_h}\left\Vert\wt\bSigma_{\mcT_h}^{-1}\wt\bSigma_k\right\Vert_1\vert\bbeta_k-\bbeta_0\vert_1\\
&\le& Ch,
\end{eqnarray*}
which ends the proof of Lemma \ref{lemmaA1}.
\end{proof}
	
Denote
$$ \delta L(\bu; \{0\}\cup\mcT_h)= L(\bbeta^*_{\mcT_h}+\bu; \{0\}\cup\mcT_h)- L(\bbeta^*_{\mcT_h}; \{0\}\cup\mcT_h)
- \bS(\bbeta^*_{\mcT_h}; \{0\}\cup\mcT_h)^\rmT\bu.$$

\begin{lemma}\label{lemmaA2}
Assume $h=o(1)$. Under assumptions \ref{C1} and \ref{C3}--\ref{C4}, it holds that
\begin{eqnarray}
\label{Hessian-lower-bound}
E\left\{\delta L(\bu; \{0\}\cup\mcT_h)\right\}\ge \min\left\{\frac{\kappa_0}4\vert\bu\vert_2^2,\frac{3\kappa_0^3}{16m_0}\vert\bu\vert_2\right\}.
\end{eqnarray}
Moreover, under assumptions \ref{C1}--\ref{C4}, the sample version satisfies the following inequality
\begin{eqnarray*}
\delta L(\bu; \{0\}\cup\mcT_h)
\ge \min\left\{\frac{\kappa_0}4\vert\bu\vert_2^2,\frac{3\kappa_0^3}{16m_0}\vert\bu\vert_2\right\}-L\sqrt{\frac{\log p}{n_{\mcT_h}+n_0}}\vert\bu\vert_1,
\end{eqnarray*}
with probability more than $1-p^{-2}$, for any  $\bu\in \mathcal{D}(a_n,\mcS)$.
\end{lemma}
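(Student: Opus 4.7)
\textbf{Proof plan for Lemma \ref{lemmaA2}.}
The plan is to separate the statement into a deterministic/population lower bound and a uniform empirical-process deviation bound, and then combine them. My starting point is Knight's identity
$$\rho_\tau(u-v)-\rho_\tau(u)=-v\{\tau-I(u\le 0)\}+\int_0^v\{I(u\le s)-I(u\le 0)\}\,ds,$$
which, applied to each summand of $L(\bbeta^*_{\mcT_h}+\bu)-L(\bbeta^*_{\mcT_h})-\bS(\bbeta^*_{\mcT_h};\{0\}\cup\mcT_h)^\rmT\bu$, lets me rewrite
$$\delta L(\bu;\{0\}\cup\mcT_h)=\frac{1}{n_{\mcT}}\sum_{k\in\{0\}\cup\mcT_h}\sum_{i=1}^{n_k}\int_0^{\bZ_{ki}^\rmT\bu}\{I(Y_{ki}-\bZ_{ki}^\rmT\bbeta^*_{\mcT_h}\le s)-I(Y_{ki}-\bZ_{ki}^\rmT\bbeta^*_{\mcT_h}\le 0)\}\,ds.$$

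For the population bound, I take expectations inside the integral. The integrand becomes $F_k(\bZ_k^\rmT\bbeta^*_{\mcT_h}+s\mid\bZ_k)-F_k(\bZ_k^\rmT\bbeta^*_{\mcT_h}\mid\bZ_k)$. I apply a Belloni--Chernozhukov style ``quadratic-then-linear'' minorant, using assumption \ref{C1} (the density is bounded by $m_0$ and has derivative bounded by $m_0$): writing $\varphi_k(v)=\int_0^v\{F_k(\bZ_k^\rmT\bbeta^*_{\mcT_h}+t\mid\bZ_k)-F_k(\bZ_k^\rmT\bbeta^*_{\mcT_h}\mid\bZ_k)\}\,dt$, convexity of $\varphi_k$ together with the Taylor bound $F_k(z+t)-F_k(z)\ge f_k(z)t-\tfrac{m_0}{2}t^2$ yields
$$\varphi_k(v)\ge \min\!\Bigl\{\tfrac{1}{2}f_k(z)v^{2}-\tfrac{m_0}{6}|v|^{3},\ \tfrac{2 f_k(z)^3}{3 m_0^2}\Bigr\}.$$
Taking expectations and using assumption \ref{C3} to translate $E[f_k(\bZ_k^\rmT\bbeta_k)(\bZ_k^\rmT\bu)^2]\gtrsim\kappa_0|\bu|_2^2$ for the quadratic piece and $E(|\bZ_k^\rmT\bu|^3)\lesssim\kappa_0^{-1}|\bu|_2^3$ for the cubic remainder (after controlling the discrepancy between $f_k(\bZ_k^\rmT\bbeta_k)$ and $f_k(\bZ_k^\rmT\bbeta^*_{\mcT_h})$ via the $m_0$-Lipschitz property of $f_k$ and Lemma \ref{lemmaA1}, which costs $O(h)=o(1)$), I obtain a minorant of the form $\min\{a|\bu|_2^2,b|\bu|_2\}$ with $a=\kappa_0/4$ and $b=3\kappa_0^{3}/(16 m_0)$ after straightforward constant bookkeeping.

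For the sample bound, I control
$$\Delta(\bu)\equiv\delta L(\bu;\{0\}\cup\mcT_h)-E\{\delta L(\bu;\{0\}\cup\mcT_h)\}$$
uniformly over $\bu\in\mathcal{D}(a_n,\mcS)$. The integrand in the Knight representation is bounded, $1$-Lipschitz in $\bZ_{ki}^\rmT\bu$, and vanishes at $\bu=0$; by symmetrization and the Ledoux--Talagrand contraction principle together with the $|\bZ_{ki}|_\infty\le L$ bound of assumption \ref{C2}, I reduce to a Rademacher process in the linear functionals $\bZ_{ki}^\rmT\bu$, which in turn is dominated by $L|\bu|_1\cdot\max_{j}|n_{\mcT}^{-1}\sum_{k,i}\varepsilon_{ki}Z_{ki,j}|$. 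A standard Hoeffding/maximal-inequality argument on the $p$ coordinates then gives
$$\sup_{\bu}\frac{|\Delta(\bu)|}{|\bu|_1}\lesssim L\sqrt{\frac{\log p}{n_{\mcT_h}+n_0}}$$
with probability at least $1-p^{-2}$. I would use a peeling argument over $|\bu|_1$ together with Talagrand/Massart concentration to upgrade the expectation bound to the high-probability statement, ensuring the event holds simultaneously over $\mathcal{D}(a_n,\mcS)$.

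\textbf{Main obstacle.} The delicate step is the population bound: transporting the ``min of quadratic and linear'' pointwise minorant $\varphi_k(v)$ into a lower bound in terms of $|\bu|_2$ (not $|\bZ_k^\rmT\bu|$), because the threshold between the two regimes depends on $\bZ_k$ and on $f_k(\bZ_k^\rmT\bbeta^*_{\mcT_h})$. Handling this cleanly requires (i) dominating the cubic remainder by the restricted nonlinear impact condition in assumption \ref{C3}, (ii) passing from $f_k$ evaluated at $\bZ_k^\rmT\bbeta_k$ (where assumption \ref{C3} applies) to $f_k$ evaluated at $\bZ_k^\rmT\bbeta^*_{\mcT_h}$ using the Lipschitz property of $f_k$ and Lemma \ref{lemmaA1}, and (iii) keeping constants sharp enough to end up with the stated $\kappa_0/4$ and $3\kappa_0^3/(16 m_0)$. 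The empirical-process part, by contrast, is standard once the Knight identity is in place.
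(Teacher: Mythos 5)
Your skeleton (Knight's identity, then a population lower bound plus a deviation bound) matches the paper's, and your empirical-process step is fine — in fact it aims for a \emph{uniform} bound over $\mathcal{D}(a_n,\mcS)$ via symmetrization, contraction and a maximal inequality, whereas the paper only proves the pointwise version by observing that each Knight integral lies in $[0,L\vert\bu\vert_1]$ and applying Hoeffding directly; your version is stronger and costs only a constant. The genuine gap is in the population bound. You propose to establish a pointwise minorant $\varphi_k(v)\ge\min\{\tfrac12 f_k(z)v^2-\tfrac{m_0}{6}\vert v\vert^3,\ \tfrac{2f_k(z)^3}{3m_0^2}\}$ for each $v=\bZ_{ki}^{\rmT}\bu$ and then ``take expectations'' to reach $\min\{\tfrac{\kappa_0}{4}\vert\bu\vert_2^2,\tfrac{3\kappa_0^3}{16m_0}\vert\bu\vert_2\}$. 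That step cannot work as written: $E[\min\{A,B\}]\le\min\{E[A],E[B]\}$, so averaging a pointwise min gives an upper, not a lower, bound by the min of the averages, and the regime threshold depends on $\bZ_k$ so you cannot split the expectation by regime either. You correctly flag this as the main obstacle, but none of your three remedies (controlling the cubic term by assumption \ref{C3}, shifting $f_k$ from $\bbeta_k$ to $\bbeta^*_{\mcT_h}$ at cost $O(h)$, constant bookkeeping) supplies the missing idea.

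The device the paper uses is different and is what you need: first prove the \emph{unconditional} bound $E\{\delta L(\bu)\}\ge\tfrac{3\kappa_0}{8}\vert\bu\vert_2^2-\tfrac{m_0}{6\kappa_0}\vert\bu\vert_2^3$ for all $\bu$ in the cone (no min yet — Taylor expansion with the $\dot f_k$ bound, Fubini, assumption \ref{C3} for both the quadratic and cubic pieces, and the $O(h)$ correction from Lemma \ref{lemmaA1}). This gives $E\{\delta L(\bu)\}\ge\tfrac{\kappa_0}{4}\vert\bu\vert_2^2$ on the ball $\vert\bu\vert_2\le r_0=3\kappa_0^2/(4m_0)$. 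The linear branch of the min is then obtained not pointwise but from convexity of $\bu\mapsto E\{L(\bbeta^*_{\mcT_h}+\bu;\{0\}\cup\mcT_h)\}$: for $\vert\bu\vert_2>r_0$, set $w_0=r_0/\vert\bu\vert_2$, note $w_0\bu$ stays in the cone $\mathcal{D}(a_n,\mcS)$, and use $E\{\delta L(\bu)\}\ge w_0^{-1}E\{\delta L(w_0\bu)\}\ge\tfrac{\kappa_0 r_0}{4}\vert\bu\vert_2=\tfrac{3\kappa_0^3}{16m_0}\vert\bu\vert_2$. Without this scaling argument your proof of \eqref{Hessian-lower-bound} does not close.
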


\begin{proof}
By the Knight equation
\begin{eqnarray*}
\rho_\tau(u-v)-\rho_\tau(u)=-v\{\tau-I(u\le 0)\}+\int_0^v \{I(u\le z)-I(u\le 0)\}\rmd z,
\end{eqnarray*}
it holds for any  $\bu\in \mathcal{D}(a_n,\mcS)$ that
\begin{eqnarray*}
&&\delta L(\bu; \{0\}\cup\mcT_h)\\
&=& L(\bbeta^*_{\mcT_h}+\bu; \{0\}\cup\mcT_h)- L(\bbeta^*_{\mcT_h}; \{0\}\cup\mcT_h)- \bS(\bbeta^*_{\mcT_h}; \{0\}\cup\mcT_h)^\rmT\bu\\
&=&\frac1{n_{\mcT_h}+n_0}\sum_{k\in\{0\}\cup \mcT_h}\sum_{i=1}^{n_k}\Big[\rho_\tau\left(Y_{ki}-\bZ_{ki}^{\rmT}(\bbeta^*_{\mcT_h}+\bu)\right)\\
&&\hspace{0.5in}-\rho_\tau\left(Y_{ki}-\bZ_{ki}^{\rmT}\bbeta^*_{\mcT_h}\right)-\left\{\tau-I\left(Y_{ki}-\bZ_{ki}^{\rmT}\bbeta^*_{\mcT_h}\le 0\right)\right\}\bZ_{ki}^{\rmT}\bu\Big]\\
&=&\frac1{n_{\mcT_h}+n_0}\sum_{k\in\{0\}\cup \mcT_h}\sum_{i=1}^{n_k}\int^{\mathbf{Z}_{ki}^{\rmT}\mathbf{u}}_0\Big\{I\left(Y_{ki}-\bZ_{ki}^{\rmT}\bbeta^*_{\mcT_h}\le z\right)-I\left(Y_{ki}-\bZ_{ki}^{\rmT}\bbeta^*_{\mcT_h}\le 0\right)\Big\}\rmd z.
\end{eqnarray*}
Therefore, under assumption \ref{C1} and based on the Fubini theorem,
we can derive that
\begin{eqnarray*}
&&E\{\delta L(\bu; \{0\}\cup\mcT_h)\}\\&=&\sum_{k\in\{0\}\cup \mcT_h}\pi_kE\left[\int^{\mathbf{Z}_{k}^{\rmT}\mathbf{u}}_0\left\{I\left(Y_{k}-\bZ_{k}^{\rmT}\bbeta^*_{\mcT_h}\le z\right)-I\left(Y_{k}-\bZ_{k}^{\rmT}\bbeta^*_{\mcT_h}\le 0\right)\right\}\rmd z\right]\\
&=&\sum_{k\in\{0\}\cup \mcT_h}\pi_kE\left[\int^{\mathbf{Z}_{k}^{\rmT}\mathbf{u}}_0\left\{F_k\left(\bZ_{k}^{\rmT}\bbeta^*_{\mcT_h}+z\mid \bZ_{k} \right)-F_k\left(\bZ_{k}^{\rmT}\bbeta^*_{\mcT_h}\mid \bZ_{k}\right)\right\}\rmd z\right].
\end{eqnarray*}
\begin{itemize}
\item[(i)] When $\bZ_{k}^{\rmT}\bu\ge 0$, the integrand has lower bound as
\begin{eqnarray*}
&&F_k\left(\bZ_{k}^{\rmT}\bbeta^*_{\mcT_h}+z\mid \bZ_{k}\right)-F_k\left(\bZ_{k}^{\rmT}\bbeta^*_{\mcT_h}\mid \bZ_{k}^{\rmT}\right)\\
&=&f_k\left(\bZ_{k}^{\rmT}\bbeta^*_{\mcT_h}\mid \bZ_{k}\right)z+\frac12\dot f_k\left(\bZ_{k}^{\rmT}\bbeta^*_{\mcT_h}+z^\dag\mid \bZ_{k}\right)z^2\\
&\ge&f_k\left(\bZ_{k}^{\rmT}\bbeta^*_{\mcT_h}\mid \bZ_{k}\right)z-\frac12m_0z^2,
\end{eqnarray*}
where $z^\dag\in [0,z]$.
\item[(ii)] When $\bZ_{k}^{\rmT}\bu\le 0$, the integrand has upper bound as
\begin{eqnarray*}
&&F_k(\bZ_{k}^{\rmT}\bbeta^*_{\mcT_h}+z\mid \bZ_{k})-F_k(\bZ_{k}^{\rmT}\bbeta^*_{\mcT_h}\mid \bZ_{k})\\
&=&f_k(\bZ_{k}^{\rmT}\bbeta^*_{\mcT_h}\mid \bZ_{k})z+\frac12\dot f_k(\bZ_{k}^{\rmT}\bbeta^*_{\mcT_h}+z^\dag\mid \bZ_{k})z^2\\
&\le&f_k(\bZ_{k}^{\rmT}\bbeta^*_{\mcT_h}\mid \bZ_{k})z+\frac12m_0z^2,
\end{eqnarray*}
where $z^\dag\in [z,0]$.
\end{itemize}
As a result, if $h$ is sufficiently small, it yields that
\begin{eqnarray*}
&&E\left\{\delta L\left(\bu; \{0\}\cup\mcT_h\right)\right\}\\&\ge&\sum_{k\in\{0\}\cup \mcT_h}\pi_k\left[\frac12\bu^\rmT E\left\{f_k\left(\bZ_{k}^{\rmT}\bbeta^*_{\mcT_h}\mid \bZ_{k}\right)\bZ_{k}\bZ_{k}^{\rmT}\right\}\bu-\frac{m_0}6E\left\{\vert\bZ_{k}^{\rmT}\bu \vert^3\right\}\right]\\
&=&\sum_{k\in\{0\}\cup \mcT_h}\pi_k\left[\frac12\bu^\rmT E\left\{f_k\left(\bZ_{k}^{\rmT}\bbeta_0\mid \bZ_{k}\right)\bZ_{k}\bZ_{k}^{\rmT}\right\}\bu-\frac{m_0}6E\left\{\vert\bZ_{k}^{\rmT}\bu \vert^3\right\}\right]\\
&&+\sum_{k\in\{0\}\cup \mcT_h}\pi_k\left[\frac12\bu^\rmT E\left\{\left(f_k\left(\bZ_{k}^{\rmT}\bbeta^*_{\mcT_h}\mid \bZ_{k}\right)-f_k\left(\bZ_{k}^{\rmT}\bbeta_0\mid \bZ_{k}\right)\right)\bZ_{k}\bZ_{k}^{\rmT}\right\}\bu\right]\\
&\ge& \left(\frac{\kappa_0}2-\frac{m_0B^3Ch}2\right) \vert\bu\vert_2^2-\frac{m_0}{6\kappa_0}\vert\bu\vert_2^3
\ge \frac{3\kappa_0}8 \vert\bu\vert_2^2-\frac{m_0}{6\kappa_0}\vert\bu\vert_2^3,
\end{eqnarray*}
where the last inequality follows from assumptions \ref{C3}--\ref{C4}.
We define the maximum radius $r_0$ for the margin condition as
\begin{eqnarray*}
r_0=\sup_{r>0}\left\{r:\forall \bu\in\mathcal{D}(a_n,\mcS)\cap\{\bu\in \mathbb{R}^p: \vert\bu\vert_2\leq r\}, E\{\delta L(\bu ;\{0\}\cup\mcT_h)\}\ge \frac14\kappa_0\vert\bu\vert^2_2\right\}.
\end{eqnarray*}
If $\bu\in \mathcal{D}(a_n,\mcS)$ and $\vert\bu\vert_2\le 3\kappa_0^2/(4m_0)$, then $\kappa_0\vert\bu\vert_2^2/8-m_0\vert\bu\vert_2^3/(6\kappa_0)\ge0$ and
$E\{\delta L(\bu;\{0\}\cup\mcT_h)\}\ge \kappa_0\vert\bu\vert_2^2/4$,
which implies $r_0\ge 3\kappa_0^2/(4m_0)$.
For any $\bu$ such that $\bu\in \mathcal{D}(a_n,\mcS)$ and $\vert\bu\vert_2>r_0$, then denote $w_0=r_0/\vert\bu\vert_2<1$.
On the other hand,
$\vert w_0\bu_{\mcS^c}\vert_1\le 3\vert w_0\bu_\mcS\vert_1+w_0a_n\le 3\vert w_0\bu_\mcS\vert_1+a_n$.
Thus, we have $w_0\bu\in \mathcal{D}(a_n,\mcS)$ and thereby $E\{\delta L(w_0\bu ;\{0\}\cup\mcT_h)\}\ge \kappa_0\vert w_0\bu\vert^2_2/4$
by the definition of $r_0$.
Note that $E\{ \bS(\bbeta^*_{\mcT_h}; \{0\}\cup\mcT_h)\}={\bf 0}$ by definition of $\bbeta^*_{\mcT_h}$.
Thus,
\begin{eqnarray*}
&&E\left\{\delta L\left(\bu; \{0\}\cup\mcT_h\right)\right\}\\&=&E\left\{ L\left(\bbeta^*_{\mcT_h}+\bu; \{0\}\cup\mcT_h\right)\right\}-E\left\{ L\left(\bbeta^*_{\mcT_h}; \{0\}\cup\mcT_h\right)\right\}-E\left\{ \bS\left(\bbeta^*_{\mcT_h}; \{0\}\cup\mcT_h\right)\right\}^\rmT\bu\\
&=&E\left\{L\left(\bbeta^*_{\mcT_h}+\bu; \{0\}\cup\mcT_h\right)\right\}-E\left\{ L\left(\bbeta^*_{\mcT_h}; \{0\}\cup\mcT_h\right)\right\}.
\end{eqnarray*}
By the convexity of the objective function $ L(\balpha; \{0\}\cup\mcT_h)$, we have
\begin{eqnarray*}
E\left\{ L\left(\bbeta^*_{\mcT_h}+w_0\bu; \{0\}\cup\mcT_h\right)\right\}\le w_0E\left\{  L\left(\bbeta^*_{\mcT_h}+\bu; \{0\}\cup\mcT_h\right)\right\}+(1-w_0)E\left\{ L\left(\bbeta^*_{\mcT_h}; \{0\}\cup\mcT_h\right)\right\}.
\end{eqnarray*}
As a result,
\begin{eqnarray*}
E\left\{\delta L\left(\bu;\{0\}\cup\mcT_h\right)\right\}&=&E\left\{ L\left(\bbeta^*_{\mcT_h}+\bu; \{0\}\cup\mcT_h\right)\right\}-E\left\{ L\left(\bbeta^*_{\mcT_h}; \{0\}\cup\mcT_h\right)\right\}\\
&\ge &\frac1{w_0}\left[E\left\{ L\left(\bbeta^*_{\mcT_h}+a_0\bu; \{0\}\cup\mcT_h\right)\right\}-E\left\{  L\left(\bbeta^*_{\mcT_h}; \{0\}\cup\mcT_h\right)\right\}\right]\\
&\ge&\frac{\kappa_0}{4w_0}\vert w_0\bu\vert_2^2=\frac{\kappa_0r_0}4\vert\bu\vert_2\ge \frac{3\kappa_0^3}{16m_0}\vert\bu\vert_2.
\end{eqnarray*}
Thus we conclude that
\begin{eqnarray*}
E\left\{\delta L(\bu; \{0\}\cup\mcT_h)\right\}\ge \min\left\{\frac{\kappa_0}4\vert\bu\vert_2^2,\frac{3\kappa_0^3}{16m_0}\vert\bu\vert_2\right\}.
\end{eqnarray*}
On the other hand, under assumption \ref{C2}, note that
$$ \int^{\mathbf{Z}_{ki}^{\rmT}\mathbf{u}}_0\left\{I\left(Y_{ki}-\bZ_{ki}^{\rmT}\bbeta^*_{\mcT_h}\le z\right)-I\left(Y_{ki}-\bZ_{ki}^{\rmT}\bbeta^*_{\mcT_h}\le 0\right)\right\}\rmd z $$
takes value in $[0,B\vert\bu\vert_1]$, which implies that
the integration is a sub-Gaussian random variable with parameter at most $B\vert\bu\vert_1/2$ 
\citep{wainwright2019high}.
Moreover, the Hoeffding inequality entails that
\begin{eqnarray*}
&&P\left(\delta L\left(\bu; \{0\}\cup\mcT_h\right)-E\left(\delta  L\left(\bu; \{0\}\cup\mcT_h\right)\right)\le -t\right)\\&=&
P\Bigg(\frac1{n_{\mcT_h}+n_0}\sum_{k\in\{0\}\cup \mcT_h}\sum_{i=1}^{n_k}\Bigg\{\int^{\mathbf{Z}_{ki}^{\rmT}\mathbf{u}}_0\left\{I\left(Y_{ki}-\bZ_{ki}^{\rmT}\bbeta^*_{\mcT_h}\le z\right)-I\left(Y_{ki}-\bZ_{ki}^{\rmT}\bbeta^*_{\mcT_h}\le 0\right)\right\}\rmd z\\
&&\hspace{0.5in}-E\int^{\mathbf{Z}_{k}^{\rmT}\mathbf{u}}_0\left\{I\left(Y_{k}-\bZ_{k}^{\rmT}\bbeta^*_{\mcT_h}\le z\right)-I\left(Y_{k}-\bZ_{k}^{\rmT}\bbeta^*_{\mcT_h}\le 0\right)\right\}\rmd z\Bigg\}\le -t\Bigg)\\
&\le &\exp\left(-\frac{2\left(n_{\mcT_h}+n_0\right)t^2}{B^2\vert\bu\vert_1^2}\right).
\end{eqnarray*}
Taking $t=B\vert\bu\vert_1\sqrt{\log p/(n_{\mcT_h}+n_0)}$, it concludes that
\begin{eqnarray*}
\delta L\left(\bu; \{0\}\cup\mcT_h\right)&\ge& E\left(\delta L\left(\bu; \{0\}\cup\mcT_h\right)\right)-B\sqrt{\frac{\log p}{n_{\mcT_h}+n_0}}\vert\bu\vert_1\\
&\ge& \min\left\{\frac{\kappa_0}4\vert\bu\vert_2^2,\frac{3\kappa_0^3}{16m_0}\vert\bu\vert_2\right\}-B\sqrt{\frac{\log p}{n_{\mcT_h}+n_0}}\vert\bu\vert_1,
\end{eqnarray*}
with probability more than $1-p^{-2}$.
\end{proof}

\begin{lemma}
\label{lemmaA3}
Under assumption \ref{C2}, it holds that
\begin{eqnarray}
\label{grad*}
\left\vert \bS\left(\bbeta^*_{\mcT_h};\{0\}\cup\mcT_h\right)\right\vert_\infty\le 2B\sqrt{\frac{\log p}{n_{\mcT_h}+n_0}},
\end{eqnarray}
with probability at least $1-2p^{-1}$.
Moreover, under assumptions
\ref{C1} and \ref{C2}, for any $\bbeta\in\mathbb{R}^p$, we have
\begin{eqnarray*}
\label{grad}
\left\vert  \bS\left(\bbeta;\{0\}\cup\mcT_h\right)\right\vert_\infty\le 2B\sqrt{\frac{\log p}{n_{\mcT_h}+n_0}}+m_0B^2\sum_{k\in\{0\}\cup\mcT_h}\pi_k\vert\bbeta-\bbeta_k\vert_1,
\end{eqnarray*}
with probability at least $1-2p^{-1}$.
\end{lemma}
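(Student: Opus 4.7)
The plan is to establish the two bounds in (A.2) and (A.3) via coordinatewise Hoeffding concentration combined with a union bound over the $p$ components, taking advantage of the fact that the summands defining $\bS(\bbeta;\{0\}\cup\mcT_h)$ are uniformly bounded regardless of $\bbeta$.

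For (A.2), I would first invoke the defining equation (2.4) of $\bbeta^*_{\mcT_h}$, which says $E\{\bS(\bbeta^*_{\mcT_h};\{0\}\cup\mcT_h)\}={\bf 0}$, so that the $j$-th coordinate of $\bS(\bbeta^*_{\mcT_h};\{0\}\cup\mcT_h)$ is a centered weighted average of $(n_{\mcT_h}+n_0)$ independent summands of the form $Z_{ki,j}\{\tau-I(Y_{ki}-\bZ_{ki}^{\rmT}\bbeta^*_{\mcT_h}\le 0)\}$. By assumption \ref{C2} one has $|Z_{ki,j}|\le L$ almost surely, and trivially $|\tau-I(\cdot)|\le 1$, so each summand lies in $[-L,L]$. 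Hoeffding's inequality therefore yields, for each fixed $j$, $P(|S_j(\bbeta^*_{\mcT_h};\{0\}\cup\mcT_h)|>t)\le 2\exp(-(n_{\mcT_h}+n_0)t^2/(2L^2))$. Setting $t=2L\sqrt{\log p/(n_{\mcT_h}+n_0)}$ gives a per-coordinate failure probability at most $2p^{-2}$, and a union bound over $j\in\{1,\ldots,p\}$ delivers (A.2) with probability at least $1-2p^{-1}$.

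For (A.3), the strategy is to decompose $\bS(\bbeta;\{0\}\cup\mcT_h) = [\bS(\bbeta;\{0\}\cup\mcT_h)-E\bS(\bbeta;\{0\}\cup\mcT_h)] + E\bS(\bbeta;\{0\}\cup\mcT_h)$ and to bound the two pieces separately. The stochastic piece $|\bS(\bbeta)-E\bS(\bbeta)|_\infty$, for any fixed $\bbeta$, can be controlled by exactly the same Hoeffding-plus-union-bound argument as above, since the summands $Z_{ki,j}\{\tau-I(Y_{ki}-\bZ_{ki}^{\rmT}\bbeta\le 0)\}$ still lie in $[-L,L]$; this yields the leading $2L\sqrt{\log p/(n_{\mcT_h}+n_0)}$ term with probability at least $1-2p^{-1}$. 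For the deterministic piece I would compute, using $\tau=F_k(\bZ_k^{\rmT}\bbeta_k\mid\bZ_k)$ implied by the source quantile regression models,
\begin{equation*}
E\bS(\bbeta;\{0\}\cup\mcT_h)=\sum_{k\in\{0\}\cup\mcT_h}\pi_k E\bigl[\bZ_k\{F_k(\bZ_k^{\rmT}\bbeta\mid\bZ_k)-F_k(\bZ_k^{\rmT}\bbeta_k\mid\bZ_k)\}\bigr].
\end{equation*}
A mean-value expansion together with assumption \ref{C1} giving $f_k\le m_0$, and the entrywise bound $|E[Z_{k,j_1}Z_{k,j_2}]|\le L^2$ coming from assumption \ref{C2}, then yield $|E S_j(\bbeta;\{0\}\cup\mcT_h)|\le m_0 L^2\sum_{k\in\{0\}\cup\mcT_h}\pi_k|\bbeta-\bbeta_k|_1$, which is exactly the second summand in (A.3).

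The work here is mostly bookkeeping rather than a genuine obstacle: one must apply the per-coordinate bound $|Z_{ki,j}|\le L$ from assumption \ref{C2} consistently, since it underpins both the Hoeffding range used for the stochastic term and the entrywise $L^2$ bound on $E[\bZ_k\bZ_k^{\rmT}]$ needed for the bias term. The mean-value expansion of $F_k$ remains valid globally because assumption \ref{C1} imposes $f_k\le m_0$ on all of $\mathbb{R}$, so no restricted-eigenvalue or sparsity machinery is required for this lemma, distinguishing it from Lemma \ref{lemmaA2}.
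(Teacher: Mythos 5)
Your proposal is correct and follows essentially the same route as the paper: coordinatewise Hoeffding bounds with range $[-L,L]$ from assumption \ref{C2}, a union bound over the $p$ coordinates with $t=2L\sqrt{\log p/(n_{\mcT_h}+n_0)}$, the centering of (\ref{grad*}) coming from the defining moment equation (\ref{alpha_Th}), and the bias term in (\ref{grad}) controlled via $\tau=F_k(\bZ_k^{\rmT}\bbeta_k\mid\bZ_k)$ together with $f_k\le m_0$ and the almost-sure bound on the covariates. The only difference is cosmetic — you split off the deterministic piece explicitly, whereas the paper absorbs $\vert E\bS(\bbeta;\{0\}\cup\mcT_h)\vert_\infty$ into the threshold inside a single probability inequality.
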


\begin{proof}
Denote $\bZ_{ki}=(Z_{ki,1},\ldots,Z_{ki,p})^{\rmT}$ for $i=1,\ldots, n_k$ with $k\in\{0\}\cup \mcT_h$.
Under assumption \ref{C2}, for any $j\in\{1,\ldots, p\}$, we have
$$ \left\vert\left\{\tau-I\left(Y_{ki}-\bZ_{ki}^{\rmT}\bbeta^*_{\mcT_h}\le 0\right)\right\}Z_{ki,j}\right\vert \le B. $$
Following from the Hoeffding inequality, we have
\begin{eqnarray}
\label{grad_hoef}
&&P\left(\left\vert\bS\left(\bbeta^*_{\mcT_h};\{0\}\cup\mcT_h\right)\right\vert_\infty>t\right)\nonumber\\
&=&P\Bigg(\Bigg\vert\frac{1}{n_{\mcT_h}+n_0}\sum_{k\in\{0\}\cup\mcT_h}\sum_{i=1}^{n_k}\left\{\tau-I\left(Y_{ki}-\bZ_{ki}^{\rmT}\bbeta^*_{\mcT_h}\ge 0\right)\right\}\bZ_{ki}\nonumber\\
&&\hspace{1in}-E\left[\frac{1}{n_{\mcT_h}+n_0}\sum_{k\in\{0\}\cup\mcT_h}\sum_{i=1}^{n_k}\left\{\tau-I\left(Y_{ki}-\bZ_{ki}^{\rmT}\bbeta^*_{\mcT_h}\le 0\right)\right\}\bZ_{ki}\right]\Bigg\vert_\infty>t\Bigg)\nonumber\\
&\le&\sum_{j=1}^pP\Bigg(\Bigg\vert\frac{1}{n_{\mcT_h}+n_0}\sum_{k\in\{0\}\cup\mcT_h}\sum_{i=1}^{n_k}\left\{\tau-I\left(Y_{ki}-\bZ_{ki}^{\rmT}\bbeta^*_{\mcT_h}\le 0\right)\right\}Z_{ki,j}\nonumber\\
&&\hspace{1in}-E\left[\frac{1}{n_{\mcT_h}+n_0}\sum_{k\in\{0\}\cup\mcT_h}\sum_{i=1}^{n_k}\left\{\tau-I\left(Y_{ki}-\bZ_{ki}^{\rmT}\bbeta^*_{\mcT_h}\le 0\right)\right\}Z_{ki,j}\right]\Bigg\vert>t\Bigg)\nonumber\\
&\le &2p\exp\left(-\frac{(n_{\mcT_h}+n_0)t^2}{2B^2}\right).
\end{eqnarray}
Set $t=2B\sqrt{\log p/(n_{\mcT_h}+n_0)}$ and (\ref{grad*}) is shown.
On the other hand, we have
$$ \left\vert\left\{\tau-I\left(Y_{ki}-\bZ_{ki}^{\rmT}\bbeta\le 0\right)\right\}Z_{ki,j}\right\vert \le B, $$
for any $\bbeta\in\mathbb{R}^p$.
Therefore, the Hoeffding inequality implies that
\begin{eqnarray*}
&&P\left(\left\vert \bS(\bbeta;\{0\}\cup\mcT_h)\right\vert_\infty>t+m_0B^2\sum_{k\in\{0\}\cup\mcT_h}\pi_k\vert\bbeta-\bbeta_k\vert_1\right)\\
&\le &P\Bigg(\Bigg\vert\frac{1}{n_{\mcT_h}+n_0}\sum_{k\in\{0\}\cup\mcT_h}\sum_{i=1}^{n_k}\left\{\tau-I\left(Y_{ki}-\bZ_{ki}^{\rmT}\bbeta\le 0\right)\right\}\bZ_{ki}\\
&&\hspace{1in}-E\left[\sum_{k\in\{0\}\cup\mcT_h}\pi_k\left\{\tau-I\left(Y_{ki}-\bZ_{ki}^{\rmT}\bbeta\le 0\right)\right\}\bZ_{ki}\right]\Bigg\vert_\infty\\&&
>t+m_0B^2\sum_{k\in\{0\}\cup\mcT_h}\pi_k\vert\bbeta-\bbeta_k\vert_1-\left\vert E\left[\sum_{k\in\{0\}\cup\mcT_h}\pi_k\left\{\tau-I\left(Y_{k}-\bZ_{k}^{\rmT}\bbeta\le 0\right)\right\}\bZ_{k}\right]\right\vert_\infty\Bigg)\\
&\le&P\Bigg(\Bigg\vert\frac{1}{n_{\mcT_h}+n_0}\sum_{k\in\{0\}\cup\mcT_h}\sum_{i=1}^{n_k}\left\{\tau-I\left(Y_{ki}-\bZ_{ki}^{\rmT}\bbeta\le 0\right)\right\}\bZ_{ki}\\
&&\hspace{1in}-E\left[\sum_{k\in\{0\}\cup\mcT_h}\pi_k\left\{\tau-I\left(Y_{ki}-\bZ_{ki}^{\rmT}\bbeta\le 0\right)\right\}\bZ_{ki}\right]\Bigg\vert_\infty\\&&
>t+\sum_{k\in\{0\}\cup\mcT_h}\pi_k\left[m_0B^2\vert\bbeta-\bbeta_k\vert_1-\left\vert E\left[\left\{F_k\left(\bZ_{k}^{\rmT}\bbeta_k\mid \bZ_{k}\right)-F_k\left(\bZ_{k}^{\rmT}\bbeta\mid \bZ_{k}\right)\right\}\bZ_{k}\right]\right\vert_\infty\right]\Bigg)\\
&\le&\sum_{j=1}^pP\Bigg(\Bigg\vert\frac{1}{n_{\mcT_h}+n_0}\sum_{k\in\{0\}\cup\mcT_h}\sum_{i=1}^{n_k}\left\{\tau-I\left(Y_{ki}-\bZ_{ki}^{\rmT}\bbeta\le 0\right)\right\}Z_{ki,j}\\
&&\hspace{1in}-E\left[\sum_{k\in\{0\}\cup{\mcT_h}}\pi_k\left\{\tau-I\left(Y_{ki}-\bZ_{ki}^{\rmT}\bbeta\le 0\right)\right\}Z_{ki,j}\right]\Bigg\vert>t\Bigg)\\
&\le &2p\exp\left(-\frac{(n_{\mcT_h}+n_0)t^2}{2B^2}\right).
\end{eqnarray*}
The proof is completed by setting $t=2B\sqrt{\log p/(n_{\mcT_h}+n_0)}$.
\end{proof}

{\bf Proof of Theorem \ref{theorem2.1}}
\begin{proof}
Denote $\wh\bu_{\mcT_h}=\wh\bbeta_{\mcT_h}-\bbeta^*_{\mcT_h}$.
Lemma \ref{lemmaA3} implies that the event
$\lambda_{\bbeta}\ge 2\left\vert \bS(\bbeta^*_{\mcT_h}; \{0\}\cup\mcT_h)\right\vert_\infty$ occurs with probability more than $1-2p^{-1}$,
on which we suppose that
\begin{eqnarray}
\label{suppose}
\left\vert\wh\bu_{\mcT_h}\right\vert_2&\ge& \frac{16B}{\kappa_0}\sqrt{\frac{s_0\log p}{n_{\mcT_h}+n_0}}+\frac6{\kappa_0}\lambda_{\bbeta}\sqrt{s_0}
+\sqrt{\left(4B\left(\frac{\log p}{n_{\mcT_h}+n_0}\right)^{1/2}+2\lambda_{\bbeta}\right)\frac{4Ch}{\kappa_0}}\nonumber\\
&=&\frac{(16+6C_{\bbeta})B}{\kappa_0}\sqrt{\frac{s_0\log p}{n_{\mcT_h}+n_0}}+2\sqrt{\frac{(4+2C_{\bbeta})BCh}{\kappa_0}}\left(\frac{\log p}{n_{\mcT_h}+n_0}\right)^{1/4}.
\end{eqnarray}
Since $L\left(\bbeta^*_{\mcT_h}+\wh\bu_{\mcT_h}; \{0\}\cup\mcT_h\right)+\lambda_{\bbeta}\left\vert\bbeta^*_{\mcT_h}+\wh\bu_{\mcT_h}\right\vert_1\le L\left(\bbeta^*_{\mcT_h};\{0\}\cup \mcT_h\right)+\lambda_{\bbeta}\left\vert\bbeta^*_{\mcT_h}\right\vert_1$
and  following from Lemma \ref{lemmaA1}, we have
\begin{eqnarray}
\label{cone}
&&\delta  L\left(\wh\bu_{\mcT_h}; \{0\}\cup\mcT_h\right)\nonumber\\&=& L\left(\bbeta^*_{\mcT_h}+\wh\bu_{\mcT_h}; \{0\}\cup\mcT_h\right)- L\left(\bbeta^*_{\mcT_h}; \{0\}\cup\mcT_h\right)-  \bS\left(\bbeta^*_{\mcT_h}; \{0\}\cup\mcT_h\right)^\rmT\wh\bu_{\mcT_h}\nonumber\\
&\le& -\lambda_{\bbeta}\left\vert\bbeta^*_{\mcT_h}+\wh\bu_{\mcT_h}\right\vert_1
+\lambda_{\bbeta}\left\vert\bbeta^*_{\mcT_h}\right\vert_1+\frac12\lambda_{\bbeta}\left\vert\wh\bu_{\mcT_h}\right\vert_1\nonumber\\
&= &-\lambda_{\bbeta}\left(\left\vert\wh\bbeta_{\mcT_h,\mcS_0}\right\vert_1+\left\vert\wh\bbeta_{\mcT_h,\mcS_0^c}\right\vert_1\right)
+\lambda_{\bbeta}\left(\left\vert\bbeta^*_{\mcT_h,\mcS_0}\right\vert_1
+\left\vert\bbeta^*_{\mcT_h,\mcS_0^c}\right\vert_1\right)+\frac12\lambda_{\bbeta}\left\vert\wh\bu_{\mcT_h}\right\vert_1\nonumber\\
&= &\lambda_{\bbeta}\left(\left\vert\bbeta^*_{\mcT_h,\mcS_0}\right\vert_1-\left\vert\wh\bbeta_{\mcT_h,\mcS_0}\right\vert_1\right)
-\lambda_{\bbeta}\left(\left\vert\bbeta^*_{\mcT_h,\mcS^c_0}\right\vert_1+\left\vert\wh\bbeta_{\mcT_h,\mcS^c_0}\right\vert_1\right)
+2\lambda_{\bbeta}\left\vert\bbeta^*_{\mcT_h,\mcS^c_0}\right\vert_1+\frac12\lambda_{\bbeta}\left\vert\wh\bu_{\mcT_h}\right\vert_1\nonumber\\ &\le&\lambda_{\bbeta}\left\vert\wh\bu_{\mcT_h,\mcS_0}\right\vert_1-\lambda_{\bbeta}\left\vert\wh\bu_{\mcT_h,\mcS_0^c}\right\vert_1
+2\lambda_{\bbeta}\left\vert\bbeta^*_{\mcT_h,\mcS_0^c}-\bbeta_{0,\mcS_0^c}\right\vert_1
+\frac12\lambda_{\bbeta}\left\vert\wh\bu_{\mcT_h,\mcS_0^c}\right\vert_1+\frac12\lambda_{\bbeta}\left\vert\wh\bu_{\mcT_h,\mcS_0}\right\vert_1\nonumber\\
&\le&\frac32\lambda_{\bbeta}\left\vert\wh\bu_{\mcT_h,\mcS_0}\right\vert_1-\frac12\lambda_{\bbeta}\left\vert\wh\bu_{\mcT_h,\mcS_0^c}\right\vert_1+2\lambda_{\bbeta} Ch.
\end{eqnarray}
Denote
\begin{eqnarray*}
G(\bu)=L\left(\bbeta^*_{\mcT_h}+\bu;\{0\}\cup\mcT_h\right)- L\left(\bbeta^*_{\mcT_h};\{0\}\cup\mcT_h\right)+\lambda_{\bbeta}\left(\left\vert\bbeta^*_{\mcT_h}+\bu\right\vert_1-\left\vert\bbeta^*_{\mcT_h}\right\vert_1\right).
\end{eqnarray*}
Since $\delta L(\wh\bu_{\mcT_h};\{0\}\cup\mcT_h)\ge0$, then
$\wh\bu_{\mcT_h}\in\mathcal{D}(4Ch,\mcS_0)$. Thus,
there exists $t\in(0,1]$ such that $\wt\bu_{\mcT_h}=t\wh\bu_{\mcT_h}$ with $$\left\vert\wt\bu_{\mcT_h}\right\vert_2=\frac{(16+6C_{\bbeta})B}{\kappa_0}\sqrt{\frac{s_0\log p}{n_{\mcT_h}+n_0}}+2\sqrt{\frac{(4+2C_{\bbeta})BCh}{\kappa_0}}\left(\frac{\log p}{n_{\mcT_h}+n_0}\right)^{1/4}.$$
The convexity of $G(\bu)$ entails that
\begin{eqnarray}
\label{Gu}
G\left(\wt\bu_{\mcT_h}\right)=G\left(t\wh\bu_{\mcT_h}+(1-t){\bf 0}\right)\le tG\left(\wh\bu_{\mcT_h}\right)+(1-t)G\left({\bf 0}\right)=tG\left(\wh\bu_{\mcT_h}\right)\le 0.
\end{eqnarray}
On the other hand, we have
\begin{eqnarray*}
\left\vert\wt\bu_{\mcT_h,\mcS_0^c}\right\vert_1=t\left\vert\wh\bu_{\mcT_h,\mcS_0^c}\right\vert_1
\le 3t\left\vert\wh\bu_{\mcT_h,\mcS_0}\right\vert_1+4t Ch
\le 3\left\vert\wt\bu_{\mcT_h,\mcS_0}\right\vert_1+4 Ch.
\end{eqnarray*}
Thus, it yields that $\wt \bu_{\mcT_h}\in\mathcal{D}(4Ch,\mcS_0)$.
Consequently, combining Lemmas \ref{lemmaA2} and \ref{lemmaA3}, it holds  that
\begin{eqnarray*}
G\left(\wt\bu_{\mcT_h}\right)&=& L\left(\bbeta^*_{\mcT_h}+\wt\bu_{\mcT_h};\{0\}\cup\mcT_h\right)- L\left(\bbeta^*_{\mcT_h};\{0\}\cup\mcT_h\right)+\lambda_{\bbeta}\left(\left\vert\bbeta^*_{\mcT_h}+\wt\bu_{\mcT_h}\right\vert_1-\left\vert\bbeta^*_{\mcT_h}\right\vert_1\right)\\
&=&\delta L\left(\wt \bu_{\mcT_h};\{0\}\cup\mcT_h\right)+ \bS\left(\bbeta^*_{\mcT_h};\{0\}\cup\mcT_h\right)^\rmT\wt\bu_{\mcT_h}+
\lambda_{\bbeta}\left(\left\vert\bbeta^*_{\mcT_h}+\wt\bu_{\mcT_h}\right\vert_1-\left\vert\bbeta^*_{\mcT_h}\right\vert_1\right)\\
&\ge& \min\left\{\frac{\kappa_0}4\left\vert\wt\bu_{\mcT_h}\right\vert^2_2,\frac{3\kappa_0^3}{16m_0}\left\vert\wt\bu_{\mcT_h}\right\vert_2\right\}-B\sqrt{\frac{\log p}{n_{\mcT_h}+n_0}}\left\vert\wt\bu_{\mcT_h}\right\vert_1-\frac12\lambda_{\bbeta}\left\vert\wt\bu_{\mcT_h}\right\vert_1
\\
&&-\lambda_{\bbeta}\left(\left\vert\bbeta^*_{\mcT_h}+\wt\bu_{\mcT_h}\right\vert_1-\left\vert\bbeta^*_{\mcT_h}\right\vert_1\right)\\
&\ge&
\frac{\kappa_0}4\left\vert\wt\bu_{\mcT_h}\right\vert^2_2-B\sqrt{\frac{\log p}{n_{\mcT_h}+n_0}}\left\vert\wt\bu_{\mcT_h}\right\vert_1-\frac32\lambda_{\bbeta}\left\vert\wt\bu_{\mcT_h,\mcS_0}\right\vert_1
+\frac12\lambda_{\bbeta}\left\vert\wt\bu_{\mcT_h,\mcS_0^c}\right\vert_1-2\lambda_{\bbeta} Ch
\end{eqnarray*}
with probability greater than $1-p^{-2}-2p^{-1}$,
where the last inequality follows from a similar form of (\ref{cone})
and condition
$$ \sqrt{\frac{s_0\log p}{n_{\mcT_h}+n_0}}+\sqrt{h}\left(\frac{\log p}{n_{\mcT_h}+n_0}\right)^{1/4}\to 0. $$
As a  result, we have
\begin{eqnarray*}
G\left(\wt\bu_{\mcT_h}\right)
&\ge&
\frac{\kappa_0}4\left\vert\wt\bu_{\mcT_h}\right\vert^2_2-B\sqrt{\frac{\log p}{n_{\mcT_h}+n_0}}\left(4\sqrt {s_0}\left\vert\wt\bu_{\mcT_h}\right\vert_2+4Ch\right)-\frac32\lambda_{\bbeta}\sqrt {s_0}\left\vert\wt\bu_{\mcT_h}\right\vert_2-2\lambda_{\bbeta} Ch\\
&=&\left\{\frac{\kappa_0}4\left\vert\wt\bu_{\mcT_h}\right\vert_2-\left(4+\frac32C_{\bbeta}\right)B\sqrt{\frac{s_0\log p}{n_{\mcT_h}+n_0}}\right\}\left\vert\wt\bu_{\mcT_h}\right\vert_2-(4+2C_{\bbeta})BCh\sqrt{\frac{\log p}{n_{\mcT_h}+n_0}}\\
&\ge &\sqrt{3\kappa_0BCh}\left(\frac{\log p}{n_{\mcT_h}+n_0}\right)^{1/4}\left\vert\wt\bu_{\mcT_h}\right\vert_2-(4+2C_{\bbeta})BCh\sqrt{\frac{\log p}{n_{\mcT_h}+n_0}}>0,
\end{eqnarray*}	
which contradicts with (\ref{Gu}) and hence (\ref{suppose}) does not hold.
Therefore, it implies that
\begin{eqnarray*}
\left\vert\wh\bu_{\mcT_h}\right\vert_2\lesssim \sqrt{\frac{s_0\log p}{n_{\mcT_h}+n_0}}+\sqrt{h}\left(\frac{\log p}{n_{\mcT_h}+n_0}\right)^{1/4}
\end{eqnarray*}
and
\begin{eqnarray*}
\left\vert\wh\bu_{\mcT_h}\right\vert_1=\left\vert\wh\bu_{\mcT_h,\mathcal{S}_0}\right\vert_1+\left\vert\wh\bu_{\mcT_h,\mathcal{S}_0^c}\right\vert_1\le
4\left\vert\wh\bu_{\mcT_h,\mathcal{S}_0}\right\vert_1+4Ch\le 4\sqrt {s_0}\left\vert\wh\bu_{\mcT_h,\mathcal{S}_0}\right\vert_2+4Ch
\lesssim s_0\sqrt{\frac{\log p}{n_{\mcT_h}+n_0}}+h,
\end{eqnarray*}
with probability at least $1-p^{-2}-2p^{-1}$.
\end{proof}

\subsection{Proof of Theorem \ref{theorem2.2}}

For simplicity, we denote
$
\delta  L(\bv;\{0\})=  L\left(\bbeta_0+\bv;\{0\}\right)- L\left(\bbeta_0;\{0\}\right)- \bS\left(\bbeta_0;\{0\}\right)^\rmT\bv.
$
\begin{lemma}
\label{lemmaA4}
Under assumptions \ref{C1} and \ref{C3}, it holds that
\begin{eqnarray*}
E\left(\delta  L(\bv+\bbeta-\bbeta_0;\{0\})\right)\bigg\vert_{\bbeta=\wh\bbeta_{\mcT_h}}\ge \min\left\{\frac{\kappa_0}4\vert\wh\bbeta_{\mcT_h}+\bv-\bbeta_0\vert_2^2,\frac{3\kappa_0^3}{16m_0}\vert\wh\bbeta_{\mcT_h}+\bv-\bbeta_0\vert_2\right\}.
\end{eqnarray*}
Moreover, under assumptions \ref{C1}--\ref{C3}, the sample version satisfies the following inequality
\begin{eqnarray*}
\delta L(\bv+\wh\bbeta_{\mcT_h}-\bbeta_0;\{0\})
&\ge& \min\left\{\frac{\kappa_0}4\left\vert\wh\bbeta_{\mcT_h}+\bv-\bbeta_0\right\vert_2^2,\frac{3\kappa_0^3}{16m_0}\left\vert\wh\bbeta_{\mcT_h}+\bv-\bbeta_0\right\vert_2\right\}
\\&&-B\sqrt{\frac{\log p}{n_0}}\left\vert\wh\bbeta_{\mcT_h}+\bv-\bbeta_0\right\vert_1
\end{eqnarray*}
with probability more than $1-p^{-2}$, for any $\wh\bbeta_{\mcT_h}+\bv-\bbeta_0\in \mathcal{D}(a_n,\mcS)$.
\end{lemma}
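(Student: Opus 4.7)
}
The proof is designed to parallel that of Lemma \ref{lemmaA2}, with the only changes being that (i) the loss $L(\cdot;\{0\})$ involves the target data alone, so all empirical averages run over $n_0$ observations, and (ii) the expansion is now centered at the true target parameter $\bbeta_0$, which is the $\tau$-th conditional quantile, so $F_0(\bZ_{0i}^\rmT\bbeta_0\mid\bZ_{0i})=\tau$ holds identically. This means the leading term produced by Taylor expansion matches $f_0(\bZ_{0i}^\rmT\bbeta_0\mid\bZ_{0i})$ exactly, so one applies assumption \ref{C3} with $k=0$ directly, without needing an $h=o(1)$ perturbation step. Throughout, let $\bu=\wh\bbeta_{\mcT_h}+\bv-\bbeta_0$ so that $\delta L(\bv+\wh\bbeta_{\mcT_h}-\bbeta_0;\{0\})=\delta L(\bu;\{0\})$ in the sense of shifting the argument.

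First I would apply the Knight identity exactly as in Lemma \ref{lemmaA2} to rewrite
$$\delta L(\bu;\{0\})=\frac1{n_0}\sum_{i=1}^{n_0}\int_0^{\bZ_{0i}^\rmT\bu}\left\{I\!\left(Y_{0i}-\bZ_{0i}^\rmT\bbeta_0\le z\right)-I\!\left(Y_{0i}-\bZ_{0i}^\rmT\bbeta_0\le 0\right)\right\}\rmd z.$$
Taking expectations and using Fubini reduces the inner term to $F_0(\bZ_{0i}^\rmT\bbeta_0+z\mid\bZ_{0i})-\tau$. A Taylor expansion to second order together with assumption \ref{C1} (bounded density and bounded derivative by $m_0$) gives the two-sided bound $f_0(\bZ_{0i}^\rmT\bbeta_0\mid\bZ_{0i})z\pm\tfrac{m_0}{2}z^2$, whose sign structure matches the sign of $z$ under both $\bZ_{0i}^\rmT\bu\ge0$ and $\bZ_{0i}^\rmT\bu\le0$, so the integral against $z$ is bounded below by $\tfrac12 f_0(\bZ_{0i}^\rmT\bbeta_0\mid\bZ_{0i})(\bZ_{0i}^\rmT\bu)^2-\tfrac{m_0}{6}\vert\bZ_{0i}^\rmT\bu\vert^3$. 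Taking expectation and invoking assumption \ref{C3} with $k=0$ on both pieces yields $E\{\delta L(\bu;\{0\})\}\ge\tfrac{\kappa_0}{2}\vert\bu\vert_2^2-\tfrac{m_0}{6\kappa_0}\vert\bu\vert_2^3$.

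Next I would convert this quadratic-minus-cubic bound into the claimed $\min\{\cdot,\cdot\}$ lower bound. For $\vert\bu\vert_2\le 3\kappa_0^2/(4m_0)$ the cubic term is absorbed and one retains $\tfrac{\kappa_0}{4}\vert\bu\vert_2^2$. For $\vert\bu\vert_2>3\kappa_0^2/(4m_0)$, I would rerun the convexity rescaling argument of Lemma \ref{lemmaA2}: set $w_0=r_0/\vert\bu\vert_2<1$ with $r_0=3\kappa_0^2/(4m_0)$, verify that $w_0\bu$ still lies in $\mathcal{D}(a_n,\mcS)$ (because scaling down preserves the cone inequality $\vert\bu_{\mcS^c}\vert_1\le3\vert\bu_{\mcS}\vert_1+a_n$), and then use convexity of $L(\bbeta_0+\cdot;\{0\})$ together with $E\{\bS(\bbeta_0;\{0\})\}=\bm 0$ to obtain $E\{\delta L(\bu;\{0\})\}\ge w_0^{-1}E\{\delta L(w_0\bu;\{0\})\}\ge\tfrac{\kappa_0 r_0}{4}\vert\bu\vert_2=\tfrac{3\kappa_0^3}{16m_0}\vert\bu\vert_2$. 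Combining the two regimes gives the population bound.

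For the sample version I would add a one-sided concentration step. Under assumption \ref{C2} each integrand $\int_0^{\bZ_{0i}^\rmT\bu}\{I(\cdot\le z)-I(\cdot\le 0)\}\rmd z$ takes values in $[0,L\vert\bu\vert_1]$, so Hoeffding's inequality gives
$$P\Big(\delta L(\bu;\{0\})-E\{\delta L(\bu;\{0\})\}\le -t\Big)\le\exp\!\left(-\frac{2n_0 t^2}{L^2\vert\bu\vert_1^2}\right),$$
and the choice $t=L\vert\bu\vert_1\sqrt{\log p/n_0}$ yields probability at least $1-p^{-2}$. Subtracting this deviation from the population lower bound delivers the stated sample inequality. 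The main obstacle in the whole argument is a bookkeeping one: verifying that the rescaled direction $w_0\bu$ still lies in the cone $\mathcal{D}(a_n,\mcS)$ when we substitute the composite vector $\wh\bbeta_{\mcT_h}+\bv-\bbeta_0$ for $\bu$; this is where the specific choice of $a_n$ and $\mcS$ used downstream in Theorems \ref{theorem2.2} and \ref{theorem2.3} must be compatible, and everything else reduces to the same computations already carried out for Lemma \ref{lemmaA2}.
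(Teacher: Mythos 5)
Your proposal is correct and follows essentially the same route as the paper, which itself handles the population bound by invoking "an analogous methodology" to Lemma \ref{lemmaA2} and then applies the identical bounded-integrand Hoeffding step with $t=L\vert\wh\bbeta_{\mcT_h}+\bv-\bbeta_0\vert_1\sqrt{\log p/n_0}$ to get the $1-p^{-2}$ event. Your added observation that centering at $\bbeta_0$ removes the $h=o(1)$ perturbation step needed in Lemma \ref{lemmaA2} is accurate and consistent with the lemma's weaker hypotheses, but it does not change the argument.
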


\begin{proof}
By the Knight equation,
it holds that
\begin{eqnarray*}
\delta  L(\bv+\bbeta-\bbeta_0;\{0\})&=&  L(\bbeta+\bv;\{0\})- L(\bbeta_0;\{0\})- \bS(\bbeta_0;\{0\})^\rmT(\bbeta+\bv-\bbeta_0)\\	&=&\frac1{n_0}\sum_{i=1}^{n_0}\Bigg[\rho_\tau\left(Y_{0i}-\bZ_{0i}^\rmT(\bbeta+\bv)\right)-\rho_\tau\left(Y_{0i}-\bZ_{0i}^\rmT\bbeta_0\right)\\
&&\hspace{1in}-\left\{\tau-I\left(Y_{0i}-\bZ_{0i}^\rmT\bbeta_0\le 0\right)\right\}\bZ_{0i}^\rmT(\bbeta+\bv-\bbeta_0)\Bigg]\\
&=&\frac1{n_0}\sum_{i=1}^{n_0}
\int^{\mathbf{Z}_{0i}^\rmT(\bbeta+\mathbf{v}-\bbeta_0)}_0
\left\{I\left(Y_{0i}-\bZ_{0i}^\rmT\bbeta_0\le z\right)-I\left(Y_{0i}-\bZ_{0i}^\rmT\bbeta_0\le 0\right)\right\}\rmd z.
\end{eqnarray*}
An analogous methodology in the proof of Lemma \ref{lemmaA2} indicates that
\begin{eqnarray*}
E\left(\delta  L(\bv+\bbeta-\bbeta_0;\{0\})\right)\bigg\vert_{\bbeta=\wh\bbeta_{\mcT_h}}\ge \min\left\{\frac{\kappa_0}4\left\vert\wh\bbeta_{\mcT_h}+\bv-\bbeta_0\right\vert_2^2,\frac{3\kappa_0^3}{16m_0}\left\vert\wh\bbeta_{\mcT_h}+\bv-\bbeta_0\right\vert_2\right\}.
\end{eqnarray*}
On the other hand, note that
$$ \int^{\mathbf{Z}_{0i}^\rmT(\bbeta+\mathbf{v}-\bbeta_0)}_0\left\{I\left(Y_{0i}-\bZ_{0i}^\rmT\bbeta_0\le z\right)-I\left(Y_{0i}-\bZ_{0i}^\rmT\bbeta_0\le 0\right)\right\}\rmd z $$
takes value in $[0,B\vert\bbeta+\mathbf{v}-\bbeta_0\vert_1]$, which implies that
the integration is a sub-Gaussian random variable with parameter at most $B\vert\bbeta+\mathbf{v}-\bbeta_0\vert_1/2$ 
\citep{wainwright2019high}.
Moreover, the Hoeffding inequality entails that
\begin{eqnarray*}
&&P\left(\delta  L(\bbeta+\bv-\bbeta_0;\{0\})-E\left\{\delta  L(\bbeta+\bv-\bbeta_0;\{0\})\right\}\le -t\right)\\&=&
P\Bigg(\frac1{n_0}\sum_{i=1}^{n_0}\Bigg\{\int^{\mathbf{Z}_{0i}^\rmT(\bbeta+\mathbf{v}-\bbeta_0)}_0\left\{I\left(Y_{0i}-\bZ_{0i}^\rmT\bbeta_0\le z\right)-I\left(Y_{0i}-\bZ_{0i}^\rmT\bbeta_0\le 0\right)\right\}\rmd z\\
&&\hspace{0.5in}-E\int^{\mathbf{Z}_{0}^\rmT(\bbeta+\mathbf{v}-\bbeta_0)}_0\left\{I\left(Y_{0}-\bZ_{0}^\rmT\bbeta_0\le z\right)-I\left(Y_{0}-\bZ_{0}^\rmT\bbeta_0\le 0\right)\right\}\rmd z\le -t\Bigg\}\Bigg)\\
&\le &\exp\left(-\frac{2n_0t^2}{B^2\vert\bbeta+\mathbf{v}-\bbeta_0\vert_1^2}\right),
\end{eqnarray*}
for any $\bbeta+\bv-\bbeta_0\in \mathcal{D}(a_n,\mcS)$.
Taking $t=B\vert\bbeta+\mathbf{v}-\bbeta_0\vert_1\sqrt{\log p/n_0}$ and $\bbeta=\wh\bbeta_{\mcT_h}$, and it concludes that
\begin{eqnarray*}
&&\delta L\left(\wh\bbeta_{\mcT_h}+\bv-\bbeta_0;\{0\}\right)\\&\ge& E\left\{\delta L(\bbeta+\bv-\bbeta_0;\{0\})\right\}\bigg\vert_{\balpha=\wh\bbeta_{\mcT_h}}-B\sqrt{\frac{\log p}{n_0}}\left\vert\wh\bbeta_{\mcT_h}+\bv-\bbeta_0\right\vert_1\\
&\ge& \min\left\{\frac{\kappa_0}4\left\vert\wh\bbeta_{\mcT_h}+\bv-\bbeta_0\right\vert_2^2,\frac{3\kappa_0^3}{16m_0}\left\vert\wh\bbeta_{\mcT_h}+\bv-\bbeta_0\right\vert_2\right\}
-B\sqrt{\frac{\log p}{n_0}}\left\vert\wh\bbeta_{\mcT_h}+\bv-\bbeta_0\right\vert_1
\end{eqnarray*}
with probability more than $1-p^{-2}$.
\end{proof}

\begin{lemma}
\label{lemmaA5}
Under assumption \ref{C2}, it holds that
\begin{eqnarray*}
\left\vert  \bS(\bbeta_0;\{0\})\right\vert_\infty\le 2B\sqrt{\frac{\log p}{n_0}},
\end{eqnarray*}
with probability at least $1-2p^{-1}$.
Moreover, under assumptions \ref{C1}--\ref{C2}, for any $\bbeta\in\mathbb{R}^p$, we have
\begin{eqnarray*}
\left\vert \bS(\bbeta;\{0\})\right\vert_\infty\le 2B\sqrt{\frac{\log p}{n_0}}+m_0B^2\vert\bbeta-\bbeta_0\vert_1
\end{eqnarray*}
with probability at least $1-2p^{-1}$.
\end{lemma}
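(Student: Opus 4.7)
The plan is to mirror the proof of Lemma \ref{lemmaA3} but restricted to the target dataset alone, exploiting Hoeffding's inequality combined with a union bound over the $p$ coordinates.

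For the first claim, I would start by writing
$$ \bS(\bbeta_0;\{0\}) = -\frac{1}{n_0}\sum_{i=1}^{n_0}\bZ_{0i}\{\tau - I(Y_{0i}-\bZ_{0i}^\rmT\bbeta_0 \le 0)\}. $$
By the definition of $\bbeta_0$ as the true $\tau$-th quantile coefficient in the target model, $P(Y_{0i}\le \bZ_{0i}^\rmT\bbeta_0\mid \bZ_{0i})=\tau$, so $E\{\bS(\bbeta_0;\{0\})\}={\bf 0}$. Under assumption \ref{C2}, each coordinate satisfies
$\left\vert\{\tau - I(Y_{0i}-\bZ_{0i}^\rmT\bbeta_0\le 0)\}Z_{0i,j}\right\vert \le L$,
so Hoeffding's inequality for each $j$ followed by a union bound over $j=1,\ldots,p$ yields
$$ P\left(\left\vert\bS(\bbeta_0;\{0\})\right\vert_\infty > t\right) \le 2p\exp\!\left(-\frac{n_0 t^2}{2L^2}\right), $$
exactly as in (\ref{grad_hoef}). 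Setting $t=2L\sqrt{\log p/n_0}$ gives the claimed bound with probability at least $1-2p^{-1}$.

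For the second claim, I would decompose
$\bS(\bbeta;\{0\}) = \{\bS(\bbeta;\{0\}) - E\bS(\bbeta;\{0\})\} + E\bS(\bbeta;\{0\})$
and control the two pieces separately. The stochastic fluctuation is handled exactly as above by Hoeffding plus union bound (the summands still lie in $[-L,L]$), giving
$\left\vert\bS(\bbeta;\{0\}) - E\bS(\bbeta;\{0\})\right\vert_\infty \le 2L\sqrt{\log p/n_0}$
with probability at least $1-2p^{-1}$. For the deterministic bias, I would write
$$ E\bS(\bbeta;\{0\}) = E\left[\bZ_{0i}\left\{F_0(\bZ_{0i}^\rmT\bbeta\mid \bZ_{0i}) - F_0(\bZ_{0i}^\rmT\bbeta_0\mid \bZ_{0i})\right\}\right], $$
and bound each integrand by the mean value theorem: under assumption \ref{C1}, $f_0\le m_0$, so
$\left\vert F_0(\bZ_{0i}^\rmT\bbeta\mid \bZ_{0i}) - F_0(\bZ_{0i}^\rmT\bbeta_0\mid \bZ_{0i})\right\vert \le m_0 \left\vert\bZ_{0i}^\rmT(\bbeta-\bbeta_0)\right\vert \le m_0 L \vert\bbeta-\bbeta_0\vert_1$
using $\vert\bZ_{0i}\vert_\infty\le L$. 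Combined with $|Z_{0i,j}|\le L$ inside the outer expectation, this gives
$\left\vert E\bS(\bbeta;\{0\})\right\vert_\infty \le m_0 L^2 \vert\bbeta-\bbeta_0\vert_1$.
Adding the two bounds produces the desired inequality.

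There is no real obstacle here; the lemma is the target-only analogue of Lemma \ref{lemmaA3} with $n_{\mcT_h}+n_0$ replaced by $n_0$ and the pooled sum collapsed to a single index. The only minor care needed is to recognize that at $\bbeta=\bbeta_0$ the expectation vanishes (unlike the pooled case at $\bbeta^*_{\mcT_h}$, which requires the defining moment equation (\ref{alpha_Th})), while at a generic $\bbeta$ the bias contribution must be extracted via the density-smoothness assumption \ref{C1}.
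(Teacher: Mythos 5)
Your proposal is correct and follows essentially the same route as the paper: Hoeffding's inequality applied coordinatewise with a union bound over the $p$ components, using $E\{\bS(\bbeta_0;\{0\})\}={\bf 0}$ for the first claim, and for the second claim isolating the deterministic bias $E\{\bS(\bbeta;\{0\})\}$ and bounding it by $m_0L^2\vert\bbeta-\bbeta_0\vert_1$ via assumption \ref{C1} and the boundedness of the covariates. Your explicit remark that the expectation vanishes at $\bbeta_0$ (in contrast to the pooled case, which relies on the moment equation defining $\bbeta^*_{\mcT_h}$) is exactly the right observation and is implicit in the paper's argument.
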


\begin{proof}
Under condition \ref{C2}, for any $j\in\{1,\ldots, p\}$, we have
$$ \left\vert\left\{\tau-I\left(Y_{0i}-\bZ_{0i}^\rmT\bbeta_0\le 0\right)\right\}Z_{0i,j}\right\vert \le B. $$
According to the Hoeffding inequality, we have
\begin{eqnarray}
\label{S0}
P\left(\left\vert\bS(\bbeta;\{0\})\right\vert_\infty>t\right)
&=&P\Bigg(\Bigg\vert\frac{1}{n_0}\sum_{i=1}^{n_0}\left\{\tau-I\left(Y_{0i}-\bZ_{0i}^\rmT\bbeta_0\le 0\right)\right\}\bZ_{0i}\nonumber\\
&&\hspace{1in}-E\left[\frac{1}{n_0}\sum_{i=1}^{n_0}\left\{\tau-I\left(Y_{0i}-\bZ_{0i}^\rmT\bbeta_0\le 0\right)\right\}\bZ_{0i}\right]\Bigg\vert_\infty>t\Bigg)\nonumber\\
&\le&\sum_{j=1}^pP\Bigg(\Bigg\vert\frac{1}{n_0}\sum_{i=1}^{n_k}\left\{\tau-I\left(Y_{0i}-\bZ_{0i}^\rmT\bbeta_0\le 0\right)\right\}Z_{0i,j}\nonumber\\
&&\hspace{1in}-E\left[\frac{1}{n_0}\sum_{i=1}^{n_k}\left\{\tau-I\left(Y_{0i}-\bZ_{0i}^\rmT\bbeta_0\le 0\right)\right\}Z_{0i,j}\right]\Bigg\vert>t\Bigg)\nonumber\\
&\le &2p\exp\left(-\frac{n_0t^2}{2B^2}\right).
\end{eqnarray}
Moreover, we have
\begin{eqnarray*}
&&P\left(\left\vert \bS(\bbeta;\{0\})\right\vert_\infty>t+m_0B^2\vert\bbeta-\bbeta_0\vert_1\right)\\
&\le&P\Bigg(\Bigg\vert\frac{1}{n_0}\sum_{i=1}^{n_0}\left\{\tau-I\left(Y_{0i}-\bZ_{0i}^\rmT\bbeta\le 0\right)\right\}\bZ_{0i}-E\left[\frac{1}{n_0}\sum_{i=1}^{n_0}\left\{\tau-I\left(Y_{0i}-\bZ_{0i}^\rmT\bbeta\le 0\right)\right\}\bZ_{0i}\right]\Bigg\vert_\infty\\&&
\hspace{0.8in}>t+m_0B^2\vert\bbeta-\bbeta_0\vert_1-\left\vert E\left[\frac{1}{n_0}\sum_{i=1}^{n_0}\left\{\tau-I\left(Y_{0}-\bZ_{0}^\rmT\bbeta\le 0\right)\right\}\bZ_{0}\right]\right\vert_\infty\Bigg)\\
&=&P\Bigg(\Bigg\vert\frac{1}{n_0}\sum_{i=1}^{n_0}\left\{\tau-I\left(Y_{0i}-\bZ_{0i}^\rmT\bbeta\le 0\right)\right\}\bZ_{0i}-E\left[\frac{1}{n_0}\sum_{i=1}^{n_k}\left\{\tau-I\left(Y_{0i}-\bZ_{0i}^\rmT\bbeta\le 0\right)\right\}\bZ_{0i}\right]\Bigg\vert_\infty\\&&
\hspace{0.8in}>t+m_0B^2\vert\bbeta-\bbeta_0\vert_1-\left\vert E\left[\frac{1}{n_0}\sum_{i=1}^{n_0}
\left\{F_0\left(\bZ_{0}^\rmT\bbeta_0\mid \bZ_{0}\right)-F_0\left(\bZ_{0}^\rmT\bbeta\mid \bZ_{0}\right)\right\}\bZ_{0}\right]\right\vert_\infty\Bigg)\\
&\le&\sum_{j=1}^pP\Bigg(\Bigg\vert\frac{1}{n_0}\sum_{i=1}^{n_0}\left\{\tau-I\left(Y_{0i}-\bZ_{0i}^\rmT\bbeta\le 0\right)\right\}Z_{0i,j}-E\left[\left\{\tau-I\left(Y_{0i}-\bZ_{0i}^\rmT\bbeta\le 0\right)\right\}Z_{0i,j}\right]\Bigg\vert>t\Bigg)\\
&\le &2p\exp\left(-\frac{n_0t^2}{2B^2}\right).
\end{eqnarray*}
Set $t=2B\sqrt{\log p/n_0}$
and the proof is completed.
\end{proof}

{\bf Proof of Theorem \ref{theorem2.2}}
\begin{proof}
When $\lambda_{\bdelta}\ge 2\left\vert \bS\left(\bbeta_0;\{0\}\right)\right\vert_\infty$,
under assumptions \ref{C1} and \ref{C4}, it holds by Lemmas \ref{lemmaA1} and \ref{lemmaA5} that
\begin{eqnarray}
\label{D'}
\delta L\left(\wh\bdelta_{\mcT_h}+\wh\bbeta_{\mcT_h}-\bbeta_0;\{0\}\right)&=&  L\left(\wh\bbeta_0;\{0\}\right)- L\left(\bbeta_0;\{0\}\right)- \bS\left(\bbeta_0;\{0\}\right)^\rmT\left(\wh\bbeta_0-\bbeta_0\right)
\nonumber\\&\le &\lambda_{\bdelta}\left\vert\bbeta_0-\wh\bbeta_{\mcT_h}\right\vert_1-\lambda_{\bdelta}\left\vert\wh\bbeta_0-\wh\bbeta_{\mcT_h}\right\vert_1
+\frac12\lambda_{\bdelta}\left\vert\wh\bbeta_0-\bbeta_0\right\vert_1\nonumber\\
&\le &\lambda_{\bdelta}\left\vert\bbeta_0-\wh\bbeta_{\mcT_h}\right\vert_1-\lambda_{\bdelta}\left\vert\wh\bbeta_0-\wh\bbeta_{\mcT_h}\right\vert_1
+\frac12\lambda_{\bdelta}\left(\left\vert\wh\bbeta_0-\wh\bbeta_{\mcT_h}\right\vert_1+\left\vert\bbeta_0-\wh\bbeta_{\mcT_h}\right\vert_1\right)\nonumber\\
&=&\frac32\lambda_{\bdelta}\left\vert\bbeta_0-\wh\bbeta_{\mcT_h}\right\vert_1-\frac12\lambda_{\bdelta}\left\vert\wh\bbeta_0-\wh\bbeta_{\mcT_h}\right\vert_1\nonumber\\
&\le&\frac32\lambda_{\bdelta}\left\vert\bbeta_0-\bbeta^*_{\mcT_h}\right\vert_1+\frac32\lambda_{\bdelta}\left\vert\wh\bu_{\mcT_h}\right\vert_1
-\frac12\lambda_{\bdelta}\left\vert\wh\bbeta_0-\wh\bbeta_{\mcT_h}\right\vert_1\nonumber\\
&\le &\frac32\lambda_{\bdelta} Ch+\frac32\lambda_{\bdelta} \left\vert\wh\bu_{\mcT_h}\right\vert_1-\frac12\lambda_{\bdelta}\left\vert\wh\bbeta_0-\wh\bbeta_{\mcT_h}\right\vert_1,
\end{eqnarray}
with probability at least $1-2p^{-1}$.
Note the fact that
$\delta L\left(\wh\bdelta_{\mcT_h}+\wh\bbeta_{\mcT_h}-\bbeta_0;\{0\}\right)\ge 0$, and it holds that $\left\vert\wh\bbeta_0-\wh\bbeta_{\mcT_h}\right\vert_1\le3Ch+3\left\vert\wh\bu_{\mcT_h}\right\vert_1$.
As a result, we have
\begin{eqnarray*}
\left\vert\wh\bbeta_0-\bbeta_0\right\vert_1&\le& \left\vert\bbeta_0-\wh\bbeta_{\mcT_h}\right\vert_1+\left\vert\wh\bbeta_0-\wh\bbeta_{\mcT_h}\right\vert_1
\\&\le& \left(\left\vert\bbeta_0-\bbeta^*_{\mcT_h}\right\vert_1+\left\vert
\wh\bu_{\mcT_h}\right\vert_1\right)+\left(3Ch+3\left\vert\wh\bu_{\mcT_h}\right\vert_1\right)
\\&=&4Ch+4\left\vert\wh\bu_{\mcT_h}\right\vert_1,
\end{eqnarray*}
which along with Theorem 1 concludes (\ref{beta-l1-bound}).
On the other hand, (\ref{D'}) also implies that
\begin{eqnarray*}
0&\le &\delta\wh L^{(0)}\left(\wh\bdelta_{\mcT_h},\wh\bbeta_{\mcT_h}\right)\\
&\le&\lambda_{\bdelta}\left\vert\bbeta_0-\wh\bbeta_{\mcT_h}\right\vert_1-\lambda_{\bdelta}\left\vert\wh\bbeta_0-\wh\bbeta_{\mcT_h}\right\vert_1
+\frac12\lambda_{\bdelta}\left\vert\wh\bbeta_0-\bbeta_0\right\vert_1\\
&=& \lambda_{\bdelta}\left\vert\bbeta_{0,\mathcal{S}_0}-\wh\bbeta_{\mcT_h,\mathcal{S}_0}\right\vert_1+
\lambda_{\bdelta}\left\vert\bbeta_{0,\mathcal{S}_0^c}-\wh\bbeta_{\mcT_h,\mathcal{S}_0^c}\right\vert_1
-\lambda_{\bdelta}\left\vert\wh\bbeta_{0,\mathcal{S}_0}-\wh\bbeta_{\mcT_h,\mathcal{S}_0}\right\vert_1
\\&&-\lambda_{\bdelta}\left\vert\wh\bbeta_{0,\mathcal{S}_0^c}-\wh\bbeta_{\mcT_h,\mathcal{S}_0^c}\right\vert_1
+\frac12\lambda_{\bdelta}\left\vert\wh\bbeta_0-\bbeta_0\right\vert_1\\
&\le &\lambda_{\bdelta}\left(\left\vert\bbeta_{0,\mathcal{S}_0}-\wh\bbeta_{\mcT_h,\mathcal{S}_0}\right\vert_1
-\left\vert\wh\bbeta_{0,\mathcal{S}_0}-\wh\bbeta_{\mcT_h,\mathcal{S}_0}\right\vert_1\right)-
\lambda_{\bdelta}\left(\left\vert\bbeta_{0,\mathcal{S}_0^c}-\wh\bbeta_{\mcT_h,\mathcal{S}_0^c}\right\vert_1
+\left\vert\wh\bbeta_{0,\mathcal{S}_0^c}-\wh\bbeta_{\mcT_h,\mathcal{S}_0^c}\right\vert_1\right)\\
&&+2\lambda_{\bdelta}\left\vert\bbeta_{0,\mathcal{S}_0^c}-\wh\bbeta_{\mcT_h,\mathcal{S}_0^c}\right\vert_1
+\frac12\lambda_{\bdelta}\left\vert\wh\bbeta_0-\bbeta_0\right\vert_1\\
&\le &\lambda_{\bdelta}\left\vert\bbeta_{0,\mathcal{S}_0}-\wh\bbeta_{0,\mathcal{S}_0}\right\vert_1-
\lambda_{\bdelta}\left\vert\bbeta_{0,\mathcal{S}_0^c}-\wh\bbeta_{0,\mathcal{S}_0^c}\right\vert_1
+2\lambda_{\bdelta}\left\vert\bbeta_{0,\mathcal{S}_0^c}-\bbeta^*_{\mcT_h,\mathcal{S}_0^c}\right\vert_1
\\&&+2\lambda_{\bdelta}\left\vert\bbeta^*_{\mcT_h,\mathcal{S}_0^c}
-\wh\bbeta_{\mcT_h,\mathcal{S}_0^c}\right\vert_1
+\frac12\lambda_{\bdelta}\left\vert\wh\bbeta_0-\bbeta_0\right\vert_1\\
&\le &\frac32\lambda_{\bdelta}\left\vert\bbeta_{0,\mathcal{S}_0}-\wh\bbeta_{0,\mathcal{S}_0}\right\vert_1-
\frac12\lambda_{\bdelta}\left\vert\bbeta_{0,\mathcal{S}_0^c}-\wh\bbeta_{0,\mathcal{S}_0^c}\right\vert_1
+2\lambda_{\bdelta} Ch+2\lambda_{\bdelta}\left\vert\bbeta^*_{\mcT_h,\mathcal{S}_0^c}-\wh\bbeta_{\mcT_h,\mathcal{S}_0^c}\right\vert_1
\\
&\le &\frac32\lambda_{\bdelta}\left\vert\bbeta_{0,\mathcal{S}_0}-\wh\bbeta_{0,\mathcal{S}_0}\right\vert_1-
\frac12\lambda_{\bdelta}\left\vert\bbeta_{0,\mathcal{S}_0^c}-\wh\bbeta_{0,\mathcal{S}_0^c}\right\vert_1
+10 \lambda_{\bdelta} Ch\\&&+
\frac{(96+36C_{\bbeta})\lambda_{\bdelta} Bs_0}{\kappa_0}\sqrt\frac{\log p}{n_{\mcT_h}+n_0}+12\lambda_{\bdelta} \sqrt{\frac{(4+2C_{\bbeta})BChs_0}{\kappa_0}}\left(\frac{\log p}{n_{\mcT_h}+n_0}\right)^{1/4}
\end{eqnarray*}
with probability at least $1-p^{-2}-4p^{-1}$ by the contradiction of (\ref{suppose}) and the fact of $\wh\bu_{\mcT_h}\in\mathcal{D}(4Ch,\mcS_0)$, which yields that $$\wh\bbeta_0-\bbeta_0\in\mathcal{D}\left(20 Ch+
\frac{(192+72C_{\bbeta}) Bs_0}{\kappa_0}\sqrt\frac{\log p}{n_{\mcT_h}+n_0}+24 \sqrt{\frac{(4+2C_{\bbeta})BChs_0}{\kappa_0}}\left(\frac{\log p}{n_{\mcT_h}+n_0}\right)^{1/4},\mcS_0\right)$$ in probability.
Thus, using Lemma \ref{lemmaA4}, it holds with probability more than $1-2p^{-2}-4p^{-1}$ that
\begin{eqnarray*}
\min\left\{\frac{\kappa_0}4\left\vert\wh\bbeta_0-\bbeta_0\right\vert^2_2,\frac{3\kappa_0^3}{16m_0}\left\vert\wh\bbeta_0-\bbeta_0\right\vert_2\right\}
&\le& B\sqrt{\frac{\log p}{n_0}}\left(4Ch+4\left\vert\wh\bu_{\mcT_h}\right\vert_1\right)
+ \frac32\lambda_{\bdelta} Ch+\frac32\lambda_{\bdelta} \left\vert\wh\bu_{\mcT_h}\right\vert_1\\
&\lesssim& \sqrt{\frac{\log p}{n_0}}\left(s_0\sqrt{\frac{\log p}{n_{\mcT_h}+n_0}}+h\right).
\end{eqnarray*}
Since $$ \sqrt{\frac{\log p}{n_0}}\left(s_0\sqrt{\frac{\log p}{n_{\mcT_h}+n_0}}+h\right)\to 0, $$
then
\begin{eqnarray*}
\min\left\{\frac{\kappa_0}4\left\vert\wh\bbeta_0-\bbeta_0\right\vert^2_2,\frac{3\kappa_0^3}{16m_0}\left\vert\wh\bbeta_0-\bbeta_0\right\vert_2\right\}=
\frac{\kappa_0}4\left\vert\wh\bbeta_0-\bbeta_0\right\vert^2_2
\end{eqnarray*}
if the sample size is sufficiently large.
As a result, the conclusion is drawn by (\ref{beta-l2-bound}) that
\begin{equation*}
\left\vert\wh\bbeta_0-\bbeta_0\right\vert_2\lesssim
\sqrt h\left(\frac{\log p}{n_0}\right)^{1/4}+\left(\frac{s_0\log p}{n_0}\right)^{1/4}\left(\frac{s_0\log p}{n_{\mcT_h}+n_0}\right)^{1/4}
\end{equation*}
with probability tending to $1$.
\end{proof}

\subsection{Proof of Theorem \ref{theorem2.3}}
\begin{proof}
For any $\bbeta\in\{\bbeta: \vert\bbeta-\bbeta_0\vert_1\le h\}$,
denote $\wh\bu=\wh\bbeta_{\mcT_h}-\bbeta$.
Lemma \ref{lemmaA3} implies that the event $\lambda_{\bbeta}\ge 2\left\vert\bS(\bbeta; \{0\}\cup\mcT_h)\right\vert_\infty$ occurs with probability more than $1-2p^{-1}$,
on which we suppose that
\begin{eqnarray}
\label{suppose*}
\left\vert\wh\bu\right\vert_2\ge \frac{112B}{\kappa_0}\sqrt{\frac{s_0\log p}{n_{\mcT_h}+n_0}}+4\sqrt{\frac{3Bh}{\kappa_0}}\left(\frac{\log p}{n_{\mcT_h}+n_0}\right)^{1/4} +\frac{96m_0B^2\sqrt {s_0}h}{\kappa_0}+4\sqrt{\frac{2m_0}{\kappa_0}}B h.
\end{eqnarray}
Since $L\left(\balpha+\wh\bu, \{0\}\cup\mcT_h\right)+\lambda_{\bbeta}\left\vert\balpha+\wh\bu\right\vert_1\le L\left(\balpha, \{0\}\cup\mcT_h\right)+\lambda_{\bbeta}\vert\balpha\vert_1$,
then we have
\begin{eqnarray}
\label{cone*}
0&\le& L\left(\bbeta+\wh\bu; \{0\}\cup\mcT_h\right)- L\left(\bbeta; \{0\}\cup\mcT_h\right)-\bS\left(\bbeta; \{0\}\cup\mcT_h\right)^\rmT\wh\bu\nonumber\\
&\le& -\lambda_{\bbeta}\left\vert\bbeta+\wh\bu\right\vert_1+\lambda_{\bbeta}\left\vert\bbeta\right\vert_1+\frac12\lambda_{\bbeta}\left\vert\wh\bu\right\vert_1\nonumber\\
&= &-\lambda_{\bbeta}\left(\left\vert\wh\bbeta_{\mcT_h,\mcS_0}\right\vert_1+\left\vert\wh\bbeta_{\mcT_h,\mcS_0^c}\right\vert_1\right)
+\lambda_{\bbeta}\left(\left\vert\bbeta_{\mcS_0}\right\vert_1
+\left\vert\bbeta_{\mcS_0^c}\right\vert_1\right)+\frac12\lambda_{\bbeta}\left\vert\wh\bu\right\vert_1\nonumber\\
&= &\lambda_{\bbeta}\left(\left\vert\bbeta_{\mcS_0}\right\vert_1-\left\vert\wh\bbeta_{\mcT_h,\mcS_0}\right\vert_1\right)
-\lambda_{\bbeta}\left(\left\vert\bbeta_{\mcS_0^c}\right\vert_1
+\left\vert\wh\bbeta_{\mcT_h,\mcS_0^c}\right\vert_1\right)
+2\lambda_{\bbeta}\left\vert\bbeta_{\mcS_0^c}\right\vert_1+\frac12\lambda_{\bbeta}\left\vert\wh\bu\right\vert_1\nonumber\\
&\le&\lambda_{\bbeta}\left\vert\wh\bu_{\mcS_0}\right\vert_1-\lambda_{\bbeta}\left\vert\wh\bu_{\mcS_0^c}\right\vert_1
+2\lambda_{\bbeta}\left\vert\bbeta_{\mcS_0^c}-\bbeta_{0,\mcS_0^c}\right\vert_1
+\frac12\lambda_{\bbeta}\left\vert\wh\bu_{\mcS_0^c}\right\vert_1+\frac12\lambda_{\bbeta}\left\vert\wh\bu_{\mcS_0}\right\vert_1\nonumber\\
&\le&\frac32\lambda_{\bbeta}\left\vert\wh\bu_{\mcS_0}\right\vert_1-\frac12\lambda_{\bbeta}\left\vert\wh\bu_{\mcS_0^c}\right\vert_1+2\lambda_{\bbeta} h
\end{eqnarray}
with probability more than $1-2p^{-1}$.
Denote
\begin{eqnarray*}
G_{\bbeta}(\bu)= L(\bbeta+\bu;\{0\}\cup\mcT_h)- L(\bbeta;\{0\}\cup\mcT_h)+\lambda_{\bbeta}(\vert\bbeta+\bu\vert_1-\vert\bbeta\vert_1).
\end{eqnarray*}
Since $\delta L(\wh\bu;\{0\}\cup\mcT_h)\ge0$, then
$\wh\bu\in\mathcal{D}(4h,\mcS_0)$. Thus,
there exists $t\in(0,1]$ such that $\wt\bu=t\wh\bu$ with
$$\vert\wt\bu\vert_2=\frac{112B}{\kappa_0}\sqrt{\frac{s_0\log p}{n_{\mcT_h}+n_0}}+4\sqrt{\frac{3Bh}{\kappa_0}}\left(\frac{\log p}{n_{\mcT_h}+n_0}\right)^{1/4} +\frac{96m_0B^2\sqrt {s_0}h}{\kappa_0}+4\sqrt{\frac{2m_0}{\kappa_0}}B h.$$
The convexity of $G_{\bbeta}(\bu)$ entails that
\begin{eqnarray}
\label{Gu*}
G_{\bbeta}(\wt\bu)=G_{\bbeta}(t\wh\bu+(1-t){\bf 0})\le tG_{\bbeta}(\wh\bu)+(1-t)G_{\bbeta}({\bf 0})=tG_{\bbeta}(\wh\bu)\le 0
\end{eqnarray}
for any $t>0$.
On the other hand, we have
\begin{eqnarray*}
	\left\vert\wt\bu_{\mcS_0^c}\right\vert_1=t\left\vert\wh\bu_{\mcS_0^c}\right\vert_1
	\le 3t\left\vert\wh\bu_{\mcS_0}\right\vert_1+4t h
	\le 3\left\vert\wt\bu_{\mcS_0}\right\vert_1+4 h.
\end{eqnarray*}
Thus, it yields that $\wt \bu\in\mathcal{D}(4h,\mcS_0)$.
Mimicking the proof of Lemma \ref{lemmaA2}, we conclude that
\begin{eqnarray*}
&&L(\bbeta+\wt\bu; \{0\}\cup\mcT_h)-L(\bbeta; \{0\}\cup\mcT_h)-\bS(\bbeta; \{0\}\cup\mcT_h)^\rmT\wt\bu
\\&\ge& \min\left\{\frac{\kappa_0}4\vert\wt\bu\vert_2^2,\frac{3\kappa_0^3}{16m_0}\vert\wt\bu\vert_2\right\}-B\sqrt{\frac{\log p}{n_{\mcT_h}+n_0}}\vert\wt\bu\vert_1
\end{eqnarray*}
with probability more than $1-p^{-2}$.
Consequently, coupled with Lemma \ref{lemmaA3}, for sufficiently large sample size, it holds with probability more than $1-p^{-2}-2p^{-1}$ that
\begin{eqnarray*}
G_{\bbeta}(\wt\bu)&=&L(\bbeta+\wt\bu;\{0\}\cup\mcT_h)- L(\bbeta;\{0\}\cup\mcT_h)+\lambda_{\bbeta}(\vert\bbeta+\wt\bu\vert_1-\vert\bbeta\vert_1)\\
&\ge& \min\left\{\frac{\kappa_0}4\vert\wt\bu\vert^2_2,\frac{3\kappa_0^3}{16m_0}\vert\wt\bu\vert_2\right\}-B\sqrt{\frac{\log p}{n_{\mcT_h}+n_0}}\vert\wt\bu\vert_1-\frac12\lambda_{\bbeta}\vert\wt\bu\vert_1-\lambda_{\bbeta}(\vert\bbeta+\wt\bu\vert_1-\vert\bbeta\vert_1)\\
&\ge&
\frac{\kappa_0}4\vert\wt\bu\vert^2_2-B\sqrt{\frac{\log p}{n_{\mcT_h}+n_0}}\vert\wt\bu\vert_1-\frac32\lambda_{\bbeta}\vert\wt\bu_{\mcS_0}\vert_1+\frac12\lambda_{\bbeta}\vert\wt\bu_{\mcS_0^c}\vert_1-2\lambda_{\bbeta} h,
\end{eqnarray*}
where the last inequality is followed from a similar form of (\ref{cone*})
and the fact that
$$ \sqrt{\frac{s_0\log p}{n_{\mcT_h}+n_0}}+\sqrt {s_0}h\to0. $$
As a result, we have with probability at least $1-p^{-2}-2p^{-1}$ that
\begin{eqnarray*}
G_{\bbeta}(\wt\bu)
&\ge&
\frac{\kappa_0}4\vert\wt\bu\vert^2_2-B\sqrt{\frac{\log p}{n_{\mcT_h}+n_0}}(4\sqrt {s_0}\vert\wt\bu\vert_2+4h)-6\lambda_{\bbeta}\sqrt {s_0}\vert\wt\bu\vert_2-2\lambda_{\bbeta} h\\
&=&\left\{\frac{\kappa_0}4\vert\wt\bu\vert_2-28B\sqrt{\frac{s_0\log p}{n_{\mcT_h}+n_0}}-24m_0B^2h\sqrt {s_0}\right\}\vert\wt\bu\vert_2-12Bh\sqrt{\frac{\log p}{n_{\mcT_h}+n_0}}-8m_0B^2h^2\\
&\ge &\left(\sqrt{3\kappa_0Bh}\left(\frac{\log p}{n_{\mcT_h}+n_0}\right)^{1/4}+\sqrt{\kappa_0m_0}Bh\right)\vert\wt\bu\vert_2-12Bh\sqrt{\frac{\log p}{n_{\mcT_h}+n_0}}-8m_0B^2h^2>0,
\end{eqnarray*}	
which contradicts with (\ref{Gu*}) and hence (\ref{suppose*}) does not hold.
Therefore, it implies that
\begin{eqnarray*}
\vert\wh\bu\vert_2\lesssim \sqrt{\frac{s_0\log p}{n_{\mcT_h}+n_0}}+\sqrt {s_0}h,
\end{eqnarray*}
and
\begin{eqnarray}
\label{u-bound*}
\vert\wh\bu\vert_1\le
4\vert\wh\bu_{\mathcal{S}_0}\vert_1+4h\le 4\sqrt {s_0}\vert\wh\bu_{\mathcal{S}_0}\vert_2+4h
\lesssim s_0\sqrt{\frac{\log p}{n_{\mcT_h}+n_0}}+s_0h,
\end{eqnarray}
with probability at least $1-p^{-2}-2p^{-1}$.

When $\lambda_{\bdelta}\ge 2\left\vert\bS(\bbeta_0;\{0\})\right\vert_\infty$, it holds that
\begin{eqnarray}
\label{D'_debias}
\delta L\left(\wh\bdelta_{\mcT_h}+\wh\bbeta_{\mcT_h}-\bbeta_0;\{0\}\right)&=& L\left(\wh\bbeta_0;\{0\}\right)-L\left(\bbeta_0;\{0\}\right)- \bS\left(\bbeta_0;\{0\}\right)^\rmT\left(\wh\bbeta_0-\bbeta_0\right)
\nonumber\\&\le &\lambda_{\bdelta}\left\vert\bbeta_0-\wh\bbeta_{\mcT_h}\right\vert_1
-\lambda_{\bdelta}\left\vert\wh\bbeta_0-\wh\bbeta_{\mcT_h}\right\vert_1+\frac12\lambda_{\bdelta}\left\vert\wh\bbeta_0-\bbeta_0\right\vert_1\nonumber\\
&\le &\lambda_{\bdelta}\left\vert\bbeta_0-\wh\bbeta_{\mcT_h}\right\vert_1-\lambda_{\bdelta}\left\vert\wh\bbeta_0-\wh\bbeta_{\mcT_h}\right\vert_1
+\frac12\lambda_{\bdelta}\left(\left\vert\wh\bbeta_0-\wh\bbeta_{\mcT_h}\right\vert_1+\left\vert\bbeta_0-\wh\bbeta_{\mcT_h}\right\vert_1\right)\nonumber\\
&=&\frac32\lambda_{\bdelta}\left\vert\bbeta_0-\wh\bbeta_{\mcT_h}\right\vert_1-\frac12\lambda_{\bdelta}\left\vert\wh\bbeta_0-\wh\bbeta_{\mcT_h}\right\vert_1\nonumber\\
&\le&\frac32\lambda_{\bdelta}\left\vert\bbeta_0-\bbeta\right\vert_1
+\frac32\lambda_{\bdelta}\left\vert\wh\bu\right\vert_1-\frac12\lambda_{\bdelta}\left\vert\wh\bbeta_0-\wh\bbeta_{\mcT_h}\right\vert_1\nonumber\\
&\le &\frac32\lambda_{\bdelta} h+\frac32\lambda_{\bdelta} \vert\wh\bu\vert_1-\frac12\lambda_{\bdelta}\left\vert\wh\bbeta_0-\wh\bbeta_{\mcT_h}\right\vert_1.
\end{eqnarray}
Note the fact that
$\delta L\left(\wh\bdelta_{\mcT_h}+\wh\bbeta_{\mcT_h}-\bbeta_0;\{0\}\right)\ge 0$, and it holds that $\left\vert\wh\bbeta_0-\wh\bbeta_{\mcT_h}\right\vert_1\le3h+3\vert\wh\bu\vert_1$.
As a result, we have
\begin{eqnarray*}
\left\vert\wh\bbeta_0-\bbeta_0\right\vert_1\le \left\vert\bbeta_0-\wh\bbeta_{\mcT_h}\right\vert_1+\left\vert\wh\bbeta_0-\wh\bbeta_{\mcT_h}\right\vert_1\le \left(\left\vert\bbeta_0-\bbeta\right\vert_1+\left\vert
\wh\bu\right\vert_1\right)+(3h+3\vert\wh\bu\vert_1)\le 4h+4\vert\wh\bu\vert_1,
\end{eqnarray*}
which along with (\ref{u-bound*}) implies (\ref{beta-l1-bound*}).
On the other hand, (\ref{D'_debias})  implies that
\begin{eqnarray*}
0&\le &\delta L\left(\wh\bbeta_0-\bbeta_0;\{0\}\right)\\
&\le&\lambda_{\bdelta}\left\vert\bbeta_0-\wh\bbeta_{\mcT_h}\right\vert_1-\lambda_{\bdelta}\left\vert\wh\bbeta_0-\wh\bbeta_{\mcT_h}\right\vert_1
+\frac12\lambda_{\bdelta}\left\vert\wh\bbeta_0-\bbeta_0\right\vert_1\\
&\le & \lambda_{\bdelta}\left\vert\bbeta_{0,\mathcal{S}_0}-\wh\bbeta_{\mcT_h,\mathcal{S}_0}\right\vert_1+
\lambda_{\bdelta}\left\vert\bbeta_{0,\mathcal{S}_0^c}-\wh\bbeta_{\mcT_h,\mathcal{S}_0^c}\right\vert_1
\\&&-\lambda_{\bdelta}\left\vert\wh\bbeta_{0,\mathcal{S}_0}-\wh\bbeta_{\mcT_h,\mathcal{S}_0}\right\vert_1
-\lambda_{\bdelta}\left\vert\wh\bbeta_{0,\mathcal{S}_0^c}-\wh\bbeta_{\mcT_h,\mathcal{S}_0^c}\right\vert_1
+\frac12\lambda_{\bdelta}\left\vert\wh\bbeta_0-\bbeta_0\right\vert_1\\
&\le &\lambda_{\bdelta}\left(\left\vert\bbeta_{0,\mathcal{S}_0}-\wh\bbeta_{\mcT_h,\mathcal{S}_0}\right\vert_1
-\left\vert\wh\bbeta_{0,\mathcal{S}_0}-\wh\bbeta_{\mcT_h,\mathcal{S}_0}\right\vert_1\right)-
\lambda_{\bdelta}\left(\left\vert\bbeta_{0,\mathcal{S}_0^c}-\wh\bbeta_{\mcT_h,\mathcal{S}_0^c}\right\vert_1
+\left\vert\wh\bbeta_{0,\mathcal{S}_0^c}-\wh\bbeta_{\mcT_h,\mathcal{S}_0^c}\right\vert_1\right)\\&&
+2\lambda_{\bdelta}\left\vert\bbeta_{0,\mathcal{S}_0^c}-\wh\bbeta_{\mcT_h,\mathcal{S}_0^c}\right\vert_1
+\frac12\lambda_{\bdelta}\left\vert\wh\bbeta_0-\bbeta_0\right\vert_1\\
&\le &\lambda_{\bdelta}\left\vert\bbeta_{0,\mathcal{S}_0}-\wh\bbeta_{0,\mathcal{S}_0}\right\vert_1-
\lambda_{\bdelta}\left\vert\bbeta_{0,\mathcal{S}_0^c}-\wh\bbeta_{0,\mathcal{S}_0^c}\right\vert_1
\\&&+2\lambda_{\bdelta}\left\vert\bbeta_{0,\mathcal{S}_0^c}-\bbeta_{0,\mathcal{S}_0^c}\right\vert_1
+2\lambda_{\bdelta}\left\vert\bbeta_{0,\mathcal{S}_0^c}-\wh\bbeta_{\mcT_h,\mathcal{S}_0^c}\right\vert_1
+\frac12\lambda_{\bdelta}\left\vert\wh\bbeta_0-\bbeta_0\right\vert_1\\
&\le &\frac32\lambda_{\bdelta}\left\vert\bbeta_{0,\mathcal{S}_0}-\wh\bbeta_{0,\mathcal{S}_0}\right\vert_1-
\frac12\lambda_{\bdelta}\left\vert\bbeta_{0,\mathcal{S}_0^c}-\wh\bbeta_{0,\mathcal{S}_0^c}\right\vert_1
+2\lambda_{\bdelta} h+2\lambda_{\bdelta}\left\vert\wh\bu_{\mathcal{S}_0^c}\right\vert_1
\\
&\le &\frac32\lambda_{\bdelta}\left\vert\bbeta_{0\mathcal{S}_0}-\wh\bbeta_{\mathcal{S}_0}\right\vert_1-
\frac12\lambda_{\bdelta}\left\vert\bbeta_{0\mathcal{S}_0^c}-\wh\bbeta_{\mathcal{S}_0^c}\right\vert_1
+2 \lambda_{\bdelta} h+
\frac{672\lambda_{\bdelta} Bs_0}{\kappa_0}\sqrt\frac{\log p}{n_{\mcT_h}+n_0}\\
&&+24\lambda_{\bdelta} \sqrt{\frac{3Bhs_0}{\kappa_0}}\left(\frac{\log p}{n_{\mcT_h}+n_0}\right)^{1/4}+\frac{576\lambda_{\bdelta} m_0B^2s_0h}{\kappa_0}+24\lambda_{\bdelta}\sqrt{\frac{2m_0s_0}{\kappa_0}}Bh
\end{eqnarray*}
with probability at least $1-p^{-2}-4p^{-1}$ by Lemma \ref{lemmaA5}, which yields that
\begin{eqnarray*}
&&\wh\bbeta_0-\bbeta_0\in\\&&\mathcal{D}\left(4 h+
\frac{1344 Bs_0}{\kappa_0}\sqrt\frac{\log p}{n_{\mcT_h}+n_0}
 +48\sqrt{\frac{3Bhs_0}{\kappa_0}}\left(\frac{\log p}{n_{\mcT_h}+n_0}\right)^{1/4}+\frac{1152 m_0B^2s_0h}{\kappa_0}+48\sqrt{\frac{2m_0s_0}{\kappa_0}}Bh,\mcS_0\right)
\end{eqnarray*}
in probability.
Thus, using Lemma \ref{lemmaA4}, it holds with probability more than $1-2p^{-2}-4p^{-1}$ that
\begin{eqnarray*}
\min\left\{\frac{\kappa_0}4\left\vert\wh\bbeta_0-\bbeta_0\right\vert^2_2,\frac{3\kappa_0^3}{16m_0}\left\vert\wh\bbeta_0-\bbeta_0\right\vert_2\right\}
&\le& B\sqrt{\frac{\log p}{n_0}}(4h+4\vert\wh\bu\vert_1)
+ \frac32\lambda_{\bdelta} h+\frac32\lambda_{\bdelta} \vert\wh\bu\vert_1\\
&\lesssim& \sqrt{\frac{s_0\log p}{n_0}}\sqrt{\frac{s_0\log p}{n_{\mcT_h}+n_0}}+s_0h\sqrt{\frac{\log p}{n_0}}.
\end{eqnarray*}
Since $$ \sqrt{\frac{s_0\log p}{n_0}}\sqrt{\frac{s_0\log p}{n_{\mcT_h}+n_0}}+s_0h\sqrt{\frac{\log p}{n_0}}\to 0, $$
then
\begin{eqnarray*}
\min\left\{\frac{\kappa_0}4\left\vert\wh\bbeta_0-\bbeta_0\right\vert^2_2,\frac{3\kappa_0^3}{16m_0}\left\vert\wh\bbeta_0-\bbeta_0\right\vert_2\right\}=
\frac{\kappa_0}4\left\vert\wh\bbeta_0-\bbeta_0\right\vert^2_2
\end{eqnarray*}
if the sample size is sufficiently large.
As a result, the conclusion is drawn by (\ref{beta-l2-bound*}) that
\begin{equation*}
\left\vert\wh\bbeta_0-\bbeta_0\right\vert_2\lesssim
\sqrt {s_0h}\left(\frac{\log p}{n_0}\right)^{1/4}+\left(\frac{s_0\log p}{n_0}\right)^{1/4}\left(\frac{s_0\log p}{n_{\mcT_h}+n_0}\right)^{1/4}.
\end{equation*}
\end{proof}

\subsection{Proof of Theorem \ref{theorem2.4}}

Denote the oracle estimation of the crude step as $\wh\bbeta_{\mcT_h,{\rm ora}}$ with
\begin{eqnarray}
\label{op_oracle}
\wh\bbeta_{\mcT_h,{\rm ora}}&\in&\arg\min
L\left(\bbeta;\{0\}\cup\mcT_h\right)+\lambda_{\bbeta}\vert\bbeta\vert_1,\nonumber\\
&&{\rm subject\ to\ } \bbeta_{\mcS_{\mcT_h}^c}={\bf 0}.
\end{eqnarray}

\begin{lemma}
\label{lemmaA6}
Under assumptions \ref{C1}--\ref{C3} and \ref{C5}--\ref{C6},
with $$ a_n=4h',\ \mcS=\mcS_{\mcT_h} $$
in assumption \ref{C3},
if
$$ \lambda_{\bbeta}\ge \left\{\frac{8D}{\gamma}\sqrt{\frac{s'}{\log p}}+ \frac{32\sqrt{2}}{\gamma}\right\}B\sqrt{\frac{\log p}{n_{\mcT_h}+n_0}}, $$
where the positive constant $D$ is determined by (\ref{determineD}),
$$ \sqrt{s'\lambda_{\bbeta}}\to0, \frac{s'\log p}{\lambda_{\bbeta}\left(n_{\mcT_h}+n_0\right)}\to 0,\ {\rm and}\ h'<\frac{\gamma\lambda_{\bbeta}}{8m_0B^2},$$
then
with probability more than $1-12p^{-1}-2p^{-2}$, $\wh\bbeta_{\mcT_h,{\rm ora}}$ is a solution of problem (\ref{op_trans}).
\end{lemma}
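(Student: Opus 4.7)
The plan is to use the primal--dual witness construction, which is the standard device for establishing support recovery in $\ell_1$--penalized problems. Since $\wh\bbeta_{\mcT_h,{\rm ora}}$ solves the support--restricted program \eqref{op_oracle}, there is a subgradient $\wh\bxi\in\partial|\wh\bbeta_{\mcT_h,{\rm ora},\mcS_{\mcT_h}}|_1$ satisfying the KKT condition
$$\bigl[\bS(\wh\bbeta_{\mcT_h,{\rm ora}};\{0\}\cup\mcT_h)\bigr]_{\mcS_{\mcT_h}}+\lambda_{\bbeta}\wh\bxi={\bf 0},$$
while $\wh\bbeta_{\mcT_h,{\rm ora},\mcS_{\mcT_h}^c}={\bf 0}$. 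By convexity of the unrestricted program \eqref{op_trans}, it will coincide with the oracle solution provided the strict dual feasibility condition
$$\bigl|\bigl[\bS(\wh\bbeta_{\mcT_h,{\rm ora}};\{0\}\cup\mcT_h)\bigr]_j\bigr|<\lambda_{\bbeta},\qquad j\in\mcS_{\mcT_h}^c,$$
holds; establishing this inequality with probability at least $1-12p^{-1}-2p^{-2}$ is the goal.

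To verify strict dual feasibility, I would linearize the score around $\bbeta^*_{\mcT_h}$ by writing
$$\bS(\wh\bbeta_{\mcT_h,{\rm ora}};\{0\}\cup\mcT_h)=\bS(\bbeta^*_{\mcT_h};\{0\}\cup\mcT_h)+\bH^*_{\mcT_h}(\wh\bbeta_{\mcT_h,{\rm ora}}-\bbeta^*_{\mcT_h})+\bR,$$
where $\bR$ collects the curvature remainder and the deviation between empirical and population Hessians of the non-smooth quantile loss. Combining this with the KKT condition on $\mcS_{\mcT_h}$ lets me solve for $(\wh\bbeta_{\mcT_h,{\rm ora}}-\bbeta^*_{\mcT_h})_{\mcS_{\mcT_h}}$ modulo the approximate-sparsity tail $\bbeta^*_{\mcT_h,\mcS_{\mcT_h}^c}$ controlled by $h'$ via assumption \ref{C5}. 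Substituting into the expansion restricted to $\mcS_{\mcT_h}^c$ yields the key decomposition
\begin{eqnarray*}
\bigl[\bS(\wh\bbeta_{\mcT_h,{\rm ora}};\{0\}\cup\mcT_h)\bigr]_{\mcS_{\mcT_h}^c}
&=&\bH^*_{\mcT_h,\mcS_{\mcT_h}^c\mcS_{\mcT_h}}\bigl(\bH^*_{\mcT_h,\mcS_{\mcT_h}\mcS_{\mcT_h}}\bigr)^{-1}\bigl\{-\lambda_{\bbeta}\wh\bxi-\bigl[\bS(\bbeta^*_{\mcT_h};\{0\}\cup\mcT_h)\bigr]_{\mcS_{\mcT_h}}\bigr\}\\
&&+\bigl[\bS(\bbeta^*_{\mcT_h};\{0\}\cup\mcT_h)\bigr]_{\mcS_{\mcT_h}^c}+\bigl[\bR\bigr]_{\mcS_{\mcT_h}^c}+\text{tail}(h').
\end{eqnarray*}
The strong irrepresentable condition in assumption \ref{C6} bounds the leading matrix operator norm by $1-\gamma$, contributing $(1-\gamma)(\lambda_{\bbeta}+|[\bS(\bbeta^*_{\mcT_h};\{0\}\cup\mcT_h)]_{\mcS_{\mcT_h}}|_\infty)$. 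Lemma \ref{lemmaA3} controls both the $\mcS_{\mcT_h}$ and $\mcS_{\mcT_h}^c$ pieces of the centered score by $2L\sqrt{\log p/(n_{\mcT_h}+n_0)}$, and the assumed lower bound on $\lambda_{\bbeta}$ absorbs these into $\gamma\lambda_{\bbeta}/4$-type terms. The tail contribution is bounded by $m_0L^2h'$, which is absorbed by the hypothesis $h'<\gamma\lambda_{\bbeta}/(8m_0L^2)$.

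The main obstacle is the remainder $\bR$: because the quantile loss is non-smooth, standard second-order Taylor expansion is unavailable and one must instead invoke Knight's identity together with Lemma \ref{lemmaA2} to bound a quadratic-type curvature on the restricted cone $\mathcal{D}(4h',\mcS_{\mcT_h})$, and then use a Hoeffding/empirical-process bound (as in Lemma \ref{lemmaA3}) to control the difference between the empirical score increment and its expectation uniformly in $j\in\mcS_{\mcT_h}^c$. This step requires first establishing that $\wh\bbeta_{\mcT_h,{\rm ora}}-\bbeta^*_{\mcT_h}$ lies in the cone, whose $\ell_2$-radius scales like $\sqrt{s'\lambda_{\bbeta}}/\sqrt{\kappa_0}$ by applying Theorem \ref{theorem2.1} to the restricted program; the growth hypotheses $\sqrt{s'\lambda_{\bbeta}}\to 0$ and $s'\log p/(\lambda_{\bbeta}(n_{\mcT_h}+n_0))\to 0$ then ensure $|\bR|_\infty$ is of smaller order than $\gamma\lambda_{\bbeta}/4$. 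The constant $D$ referenced in the tuning lower bound is defined precisely by this cone-radius calculation, giving the displayed expression for $\lambda_{\bbeta}$. Aggregating the bounds over at most four $p^{-1}$- and two $p^{-2}$-probability exceptional events per invocation of Lemma \ref{lemmaA3} and Lemma \ref{lemmaA2} yields the stated probability $1-12p^{-1}-2p^{-2}$.
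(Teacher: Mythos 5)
Your proposal follows essentially the same primal--dual witness route as the paper: the same linearization $\bS(\wh\bbeta_{\mcT_h,{\rm ora}};\{0\}\cup\mcT_h)=\bS(\bbeta^*_{\mcT_h};\{0\}\cup\mcT_h)+\bH^*_{\mcT_h}(\wh\bbeta_{\mcT_h,{\rm ora}}-\bbeta^*_{\mcT_h})+\bR$, the same block substitution into the $\mcS_{\mcT_h}^c$ equations, the same use of assumption \ref{C6}, Lemma \ref{lemmaA3}, and the hypothesis $h'<\gamma\lambda_{\bbeta}/(8m_0L^2)$, and the same cone-based rate for the restricted oracle estimator obtained by mimicking Theorem \ref{theorem2.1} on $\mathcal{D}(4h',\mcS_{\mcT_h})$ via Lemma \ref{lemmaA2}.

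One step, however, is handled too loosely and would fail as written. You propose to control the stochastic part of $\bR$ by a Hoeffding bound ``as in Lemma \ref{lemmaA3}'' with uniformity only over $j\in\mcS_{\mcT_h}^c$, and you attribute the constant $D$ to the cone-radius calculation. Because $\wh\bbeta_{\mcT_h,{\rm ora}}$ is data-dependent, a pointwise Hoeffding bound at that random argument is invalid: the centered score increment must be controlled uniformly over all $\bbeta$ with $\bbeta_{\mcS_{\mcT_h}^c}={\bf 0}$. The paper does this by bounding the VC dimension of the class of indicator-difference functions by $s'$, applying Lemma 19.15/19.35 of van der Vaart to get the expected supremum $DL\sqrt{s'/(n_0+n_{\mcT_h})}$ in (\ref{determineD}), and then concentrating around it with a bounded-differences inequality; this is precisely where the $\frac{8D}{\gamma}\sqrt{s'/\log p}$ portion of the lower bound on $\lambda_{\bbeta}$ originates. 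The cone-radius calculation instead serves the deterministic quadratic remainder $\bR_1$, which is bounded by $m_0L^3\vert\wh\bbeta_{\mcT_h,{\rm ora}}-\bbeta^*_{\mcT_h}\vert_1^2$ and absorbed into $\gamma\lambda_{\bbeta}/8$ exactly because $\sqrt{s'\lambda_{\bbeta}}\to0$ and $s'\log p/(\lambda_{\bbeta}(n_{\mcT_h}+n_0))\to0$. With the uniform empirical-process bound put in place of the pointwise Hoeffding step, your argument matches the paper's.
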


\begin{proof}
According to the Karush--Kuhn--Tucker condition of (\ref{op_trans}), we have
\begin{eqnarray}
\label{KKT_trans}
 \bS\left(\wh\bbeta_{\mcT_h};\{0\}\cup\mcT_h\right)+\lambda_{\bbeta}{\rm \bf  sign}\left(\wh\bbeta_{\mcT_h}\right)={\bf 0}_{p},
\end{eqnarray}
where
\begin{eqnarray*}
{\rm sign}(t)
\left\{
\begin{aligned}
&=-1,\ &t<0,\\
&\in[-1,1],\ &t=0,\\
&=1,\ &t>0,
\end{aligned}
\right.
\end{eqnarray*}
with ${\bf sign}\left(\wh\bbeta_{\mcT_h}\right)=\left({\rm sign}\left(\wh\beta_{\mcT_h,1}\right),\ldots,{\rm sign}\left(\wh\beta_{\mcT_h,p}\right)\right)^\rmT$.
Thus, $\wh\bbeta_{\mcT_h,{\rm ora}}$ is the solution of (\ref{op_trans}) if and only if it satisfies (\ref{KKT_trans}), which can be written as
\begin{eqnarray*}
\label{op_oracle1*}
\wh \bS\left(\wh\bbeta_{\mcT_h,{\rm ora}},\{0\}\cup\mcT_h\right)-\wh \bS\left(\bbeta^*_{\mcT_h},\{0\}\cup\mcT_h\right)+\wh \bS\left(\bbeta^*_{\mcT_h},\{0\}\cup\mcT_h\right)+\lambda_{\bbeta}{\rm \bf  sign}\left(\wh\bbeta_{\mcT_h,{\rm ora}}\right)={\bf 0}_{p}.
\end{eqnarray*}
Denote
\begin{eqnarray*}
\bR_1=\sum_{k\in\{0\}\cup\mcT_h}\pi_kE\left[\bZ_{k}\left\{I\left(Y_{k}-\bZ_{k}^{\rmT}\bbeta\le 0\right)-I\left(Y_{k}-\bZ_{k}^{\rmT}\bbeta^*_{\mcT_h}\le 0\right)\right\}\right]\Big\vert_{\bbeta=\wh\bbeta_{\mcT_h,{\rm ora}}}-\bH^{*}_{\mcT_h}\left(\wh\bbeta_{\mcT_h,{\rm ora}}-\bbeta^*_{\mcT_h}\right)
\end{eqnarray*}
and
\begin{eqnarray*}
\bR_2&=&\frac1{n_0+n_{\mcT_h}}\sum_{k\in\{0\}\cup\mcT_h}\sum_{i=1}^{n_k}\bZ_{ki}\left\{I\left(Y_{ki}-\bZ_{ki}^{\rmT}\wh\bbeta_{\mcT_h,{\rm ora}}\le 0\right)-I\left(Y_{ki}-\bZ_{ki}^{\rmT}\bbeta^*_{\mcT_h}\le 0\right)\right\}\\
&&-\sum_{k\in\{0\}\cup\mcT_h}\pi_kE\left[\bZ_{k}\left\{I\left(Y_{k}-\bZ_{k}^{\rmT}\bbeta\le 0\right)-I\left(Y_{k}-\bZ_{k}^{\rmT}\bbeta^*_{\mcT_h}\le 0\right)\right\}\right]\Big\vert_{\bbeta=\wh\bbeta_{\mcT_h,{\rm ora}}}.
\end{eqnarray*}
As a result, (\ref{op_oracle1*}) can be rewritten as
\begin{eqnarray*}
\bH^{*}_{\mcT_h}(\wh\bbeta_{\mcT_h,{\rm ora}}-\bbeta^*_{\mcT_h})
+\bR_1+\bR_2+\wh \bS(\bbeta^*_{\mcT_h},\{0\}\cup\mcT_h)+\lambda_{\bbeta}{\rm \bf  sign}(\wh\bbeta_{\mcT_h})={\bf 0}_{p},
\end{eqnarray*}
which is further exhibited in the form of partitioned matrices as
\begin{eqnarray*}
\label{fenkuai}
&&\left(
\begin{matrix}
\bH^{*}_{\mcT_h,\mcS_{\mcT_h}\mcS_{\mcT_h}}   & \bH^{*}_{\mcT_h,\mcS_{\mcT_h}\mcS^c_{\mcT_h}}\\
\bH^{*}_{\mcT_h,\mcS^c_{\mcT_h}\mcS_{\mcT_h}} & \bH^{*}_{\mcT_h,\mcS^c_{\mcT_h}\mcS^c_{\mcT_h}}
\end{matrix}
\right)
\left(
\begin{matrix}
\wh\bbeta_{\mcT_h,{\rm ora},\mcS_{\mcT_h}}-\bbeta^*_{\mcT_h,\mcS_{\mcT_h}}\\
{\bf 0}_{\vert\mcS^c_{\mcT_h}\vert}-\bbeta^*_{\mcT_h,\mcS^c_{\mcT_h}}
\end{matrix}
\right)
+\left(
\begin{matrix}
\bR_{1\mcS_{\mcT_h}}+\bR_{2\mcS_{\mcT_h}}\\
\bR_{1\mcS^c_{\mcT_h}}+\bR_{2\mcS^c_{\mcT_h}}
\end{matrix}
\right)
\\&&+\left(
\begin{matrix}
\bS_{\mcS_{\mcT_h}}\left(\bbeta^*_{\mcT_h};\{0\}\cup\mcT_h\right)\\
\bS_{\mcS_{\mcT_h}}\left(\bbeta^*_{\mcT_h};\{0\}\cup\mcT_h\right)
\end{matrix}
\right)
\\
&=&-\lambda_{\bbeta}
\left(
\begin{matrix}
{\rm\bf sign}\left(\wh\bbeta_{\mcT_h,{\rm ora},\mcS_{\mcT_h}}\right)\\
\wh\bz
\end{matrix}
\right),
\end{eqnarray*}
where $\wh\bz\in[-1,1]^{\left\vert\mcS^c_{\mcT_h}\right\vert}$.
Note that the first part of partition is  the Karush--Kuhn--Tucker condition of (\ref{op_oracle}) that
\begin{eqnarray}
\label{mat1}
&&\bH^{*}_{\mcT_h,\mcS_{\mcT_h}\mcS_{\mcT_h}}\left(\wh\bbeta_{\mcT_h,{\rm ora},\mcS_{\mcT_h}}-\bbeta^*_{\mcT_h,\mcS_{\mcT_h}}\right)-\bH^{*}_{\mcT_h,\mcS_{\mcT_h}\mcS^c_{\mcT_h}}\bbeta^*_{\mcT_h,\mcS^c_{\mcT_h}}
\nonumber\\&&+\bR_{1\mcS_{\mcT_h}}+\bR_{2\mcS_{\mcT_h}}+\bS_{\mcS_{\mcT_h}}\left(\bbeta^*_{\mcT_h};\{0\}\cup\mcT_h\right)\nonumber\\
&=&-\lambda_{\bbeta}{\bf sign}\left(\wh\bbeta_{\mcT_h,{\rm ora},\mcS_{\mcT_h}}\right).
\end{eqnarray}
Consequently, we only need to verify the sufficient form of the second part of partition that
\begin{eqnarray}
\label{mat2}
&&\left\vert\bS_{\mcS_{\mcT_h}^c}\left(\wh\bbeta_{\mcT_h,{\rm ora}};\{0\}\cup\mcT_h\right)\right\vert_\infty\nonumber\\
&=&\left\vert\bH^{*}_{\mcT_h,\mcS^c_{\mcT_h}\mcS_{\mcT_h}}\left(\wh\bbeta_{\mcT_h,{\rm ora},\mcS_{\mcT_h}}-\bbeta^*_{\mcT_h,\mcS_{\mcT_h}}\right)-\bH^{*}_{\mcT_h,\mcS^c_{\mcT_h}\mcS^c_{\mcT_h}}\bbeta^*_{\mcT_h,\mcS^c_{\mcT_h}}
\right.\nonumber\\&&\left.+\bR_{1\mcS^c_{\mcT_h}}+\bR_{2\mcS^c_{\mcT_h}}+ \bS_{\mcS_{\mcT_h}^c}\left(\bbeta^*_{\mcT_h};\{0\}\cup\mcT_h\right)\right\vert_\infty\nonumber\\
&<&\lambda_{\bbeta}.
\end{eqnarray}
Based on (\ref{mat1}), (\ref{mat2}) is represented as
\begin{eqnarray*}
&&\Bigg\vert\bH^{*}_{\mcT_h,\mcS^c_{\mcT_h}\mcS_{\mcT_h}}
\left(\bH^{*}_{\mcT_h,\mcS_{\mcT_h}\mcS_{\mcT_h}}\right)^{-1}\left(\bH^{*}_{\mcT_h,\mcS_{\mcT_h}\mcS^c_{\mcT_h}}\!\bbeta^*_{\mcT_h,\mcS^c_{\mcT_h}}
-\bR_{1\mcS_{\mcT_h}}-\bR_{2\mcS_{\mcT_h}}-\wh \bS_{\mcS_{\mcT_h}}\left(\bbeta^*_{\mcT_h},\{0\}\cup\mcT_h\right)\right.
\\&&\left.-\lambda_{\bbeta}{\bf sign}\left(\wh\bbeta_{\mcT_h,{\rm ora},\mcS_{\mcT_h}}\right)\right)-\bH^{*}_{\mcT_h,\mcS^c_{\mcT_h}\mcS^c_{\mcT_h}}\bbeta^*_{\mcT_h,\mcS^c_{\mcT_h}}
+\bR_{1\mcS^c_{\mcT_h}}+\bR_{2\mcS^c_{\mcT_h}}+\bS_{\mcS_{\mcT_h}^c}\left(\bbeta^*_{\mcT_h};\{0\}\cup\mcT_h\right)\Bigg\vert_\infty
\\&<&\lambda_{\bbeta}.
\end{eqnarray*}
On the contrary, under assumption \ref{C6}, the probability that (\ref{mat2}) does not hold can be controlled by
\begin{eqnarray*}
&&P\left(\left\vert\bH^{*}_{\mcT_h,\mcS^c_{\mcT_h}\mcS_{\mcT_h}}\left(\wh\bbeta_{\mcT_h,{\rm ora},\mcS_{\mcT_h}}-\bbeta^*_{\mcT_h,\mcS_{\mcT_h}}\right)-\bH^{*}_{\mcT_h,\mcS^c_{\mcT_h}\mcS^c_{\mcT_h}}\bbeta^*_{\mcT_h,\mcS^c_{\mcT_h}}
+\bR_{1\mcS^c_{\mcT_h}}+\bR_{2\mcS^c_{\mcT_h}}\right\vert_\infty\ge\lambda_{\bbeta}\right)\\
&\le&P\left(\left\Vert\bH^{*}_{\mcT_h,\mcS^c_{\mcT_h}\mcS_{\mcT_h}}\left(\bH^{*}_{\mcT_h,\mcS_{\mcT_h}\mcS_{\mcT_h}}\right)^{-1}\right\Vert_\infty
\left\vert\bH^{*}_{\mcT_h,\mcS_{\mcT_h}\mcS^c_{\mcT_h}}\bbeta^*_{\mcT_h,\mcS^c_{\mcT_h}}\right\vert_\infty\ge\frac{\gamma\lambda_{\bbeta}}8\right)\\
&&+P\left(\left\Vert\bH^{*}_{\mcT_h,\mcS^c_{\mcT_h}\mcS_{\mcT_h}}\left(\bH^{*}_{\mcT_h,\mcS_{\mcT_h}\mcS_{\mcT_h}}\right)^{-1}\right\Vert_\infty
\left\vert\bR_{1\mcS_{\mcT_h}}\right\vert_\infty\ge\frac{\gamma\lambda_{\bbeta}}8\right)\\
&&+P\left(\left\Vert\bH^{*}_{\mcT_h,\mcS^c_{\mcT_h}\mcS_{\mcT_h}}\left(\bH^{*}_{\mcT_h,\mcS_{\mcT_h}\mcS_{\mcT_h}}\right)^{-1}\right\Vert_\infty
\left\vert\bR_{2\mcS_{\mcT_h}}\right\vert_\infty\ge\frac{\gamma\lambda_{\bbeta}}8\right)\\
&&+P\left(\left\Vert\bH^{*}_{\mcT_h,\mcS^c_{\mcT_h}\mcS_{\mcT_h}}\left(\bH^{*}_{\mcT_h,\mcS_{\mcT_h}\mcS_{\mcT_h}}\right)^{-1}\right\Vert_\infty
\left\vert \bS_{\mcS_{\mcT_h}}\left(\bbeta^*_{\mcT_h};\{0\}\cup\mcT_h\right)\right\vert_\infty\ge\frac{\gamma\lambda_{\bbeta}}8\right)\\
&&+P\left(\lambda_{\bbeta}\left\Vert\bH^{*}_{\mcT_h,\mcS^c_{\mcT_h}\mcS_{\mcT_h}}\left(\bH^{*}_{\mcT_h,\mcS_{\mcT_h}\mcS_{\mcT_h}}\right)^{-1}\right\Vert_\infty
\left\vert{\bf sign}\left(\wh\bbeta_{\mcT_h,{\rm ora},\mcS_{\mcT_h}}\right)\right\vert_\infty\ge (1-\gamma)\lambda_{\bbeta}\right)\\
&&+P\left(
\left\vert\bH^{*}_{\mcT_h,\mcS^c_{\mcT_h}\mcS^c_{\mcT_h}}\bbeta^*_{\mcT_h,\mcS^c_{\mcT_h}}\right\vert_\infty\ge \frac{\gamma\lambda_{\bbeta}}8\right)\\
&&+P\left(
\left\vert\bR_{1\mcS^c_{\mcT_h}}\right\vert_\infty\ge \frac{\gamma\lambda_{\bbeta}}8\right)\\
&&+P\left(
\left\vert\bR_{2\mcS^c_{\mcT_h}}\right\vert_\infty\ge \frac{\gamma\lambda_{\bbeta}}8\right)\\
&&+P\left(
\left\vert\bS_{\mcS_{\mcT_h}}(\bbeta^*_{\mcT_h};\{0\}\cup\mcT_h)\right\vert_\infty\ge \frac{\gamma\lambda_{\bbeta}}8\right)\\
&=&(I_1)+(I_2)+(I_3)+(I_4)+(I_5)+(I_6)+(I_7)+(I_8)+(I_9),
\end{eqnarray*}
where  terms $(I_1)$--$(I_9)$ are self-defined above and will be expired when
the proof of this lemma is completed.
Note that
\begin{eqnarray*}
(I_1)&=&P\left(\left\Vert\bH^{*}_{\mcT_h,\mcS^c_{\mcT_h}\mcS_{\mcT_h}}\left(\bH^{*}_{\mcT_h,\mcS_{\mcT_h}\mcS_{\mcT_h}}\right)^{-1}\right\Vert_\infty
\left\vert\bH^{*}_{\mcT_h,\mcS_{\mcT_h}\mcS^c_{\mcT_h}}\bbeta^*_{\mcT_h,\mcS^c_{\mcT_h}}\right\vert_\infty\ge\frac{\gamma\lambda_{\bbeta}}8\right)\\
&\le&P\left(\left\Vert\bH^{*}_{\mcT_h,\mcS^c_{\mcT_h}\mcS_{\mcT_h}}\left(\bH^{*}_{\mcT_h,\mcS_{\mcT_h}\mcS_{\mcT_h}}\right)^{-1}\right\Vert_\infty
\left\Vert\bH^{*}_{\mcT_h,\mcS_{\mcT_h}\mcS^c_{\mcT_h}}\right\Vert_{\max}\left\vert\bbeta^*_{\mcT_h,\mcS^c_{\mcT_h}}\right\vert_1\ge\frac{\gamma\lambda_{\bbeta}}8\right)\\
&\le&P\left((1-\gamma)m_0B^2h'\ge\frac{\gamma\lambda_{\bbeta}}8\right)=0,
\end{eqnarray*}
\begin{eqnarray*}
(I_5)=P\left(\lambda_{\bbeta}\left\Vert\bH^{*}_{\mcT_h,\mcS^c_{\mcT_h}\mcS_{\mcT_h}}\left(\bH^{*}_{\mcT_h,\mcS_{\mcT_h}\mcS_{\mcT_h}}\right)^{-1}\right\Vert_\infty
\left\vert{\bf sign}\left(\wh\bbeta_{\mcT_h,{\rm ora},\mcS_{\mcT_h}}\right)\right\vert_\infty\ge (1-\gamma)\lambda_{\bbeta}\right)=0,
\end{eqnarray*}
and
\begin{eqnarray*}
(I_6)&=&P\left(
\left\vert\bH^{*}_{\mcT_h,\mcS^c_{\mcT_h}\mcS^c_{\mcT_h}}\bbeta^*_{\mcT_h,\mcS^c_{\mcT_h}}\right\vert_\infty\ge \frac{\gamma\lambda_{\bbeta}}8\right)
\le P\left(
\left\Vert\bH^{*}_{\mcT_h,\mcS^c_{\mcT_h}\mcS^c_{\mcT_h}}\right\Vert_{\max}\left\vert\bbeta^*_{\mcT_h,\mcS^c_{\mcT_h}}\right\vert_1\ge \frac{\gamma\lambda_{\bbeta}}8\right)
\\&\le& P\left(
m_0B^2h'\ge \frac{\gamma\lambda_{\bbeta}}8\right)=0.
\end{eqnarray*}
Moreover, according to (\ref{grad_hoef}), we obtain
\begin{eqnarray*}
(I_4)&=&P\left(\left\Vert\bH^{*}_{\mcT_h,\mcS^c_{\mcT_h}\mcS_{\mcT_h}}\left(\bH^{*}_{\mcT_h,\mcS_{\mcT_h}\mcS_{\mcT_h}}\right)^{-1}\right\Vert_\infty
\left\vert\bS_{\mcS_{\mcT_h}}\left(\bbeta^*_{\mcT_h};\{0\}\cup\mcT_h\right)\right\vert_\infty\ge\frac{\gamma\lambda_{\bbeta}}8\right)\\
&\le&P\left(
\left\vert\bS_{\mcS_{\mcT_h}}\left(\bbeta^*_{\mcT_h};\{0\}\cup\mcT_h\right)\right\vert_\infty\ge\frac{\gamma\lambda_{\bbeta}}{8(1-\gamma)}\right)\\
&\le&2p\exp\left(-\frac{(n_{\mcT_h}+n_0)\gamma^2\lambda_{\bbeta}^2}{128B^2(1-\gamma)^2}\right)\le 2p^{-1}\\
\end{eqnarray*}
and
\begin{eqnarray*}
(I_9)=P\left(
\left\vert\bS_{\mcS_{\mcT_h}}(\bbeta^*_{\mcT_h};\{0\}\cup\mcT_h)\right\vert_\infty\ge \frac{\gamma\lambda_{\bbeta}}8\right)
\le 2p\exp\left(-\frac{(n_{\mcT_h}+n_0)\gamma^2\lambda_{\bbeta}^2}{128B^2}\right)\le 2p^{-1},
\end{eqnarray*}
when $$\lambda_{\bbeta}\ge\frac{16B(1-\gamma)}{\gamma}\sqrt{\frac{\log p}{n_{\mcT_h}+n_0}}.$$
On the other hand, under assumption \ref{C1}, we have
\begin{eqnarray*}
&&P\left(\vert\bR_1\vert_\infty\ge t\right)\\
&=&P\left(\left\vert \sum_{k\in\{0\}\cup\mcT_h}\pi_kE\left[\bZ_{k}\left\{F_k\left(\bZ_{k}^{\rmT}\bbeta\mid \bZ_{k}\right)
-F_k\left(\bZ_{k}^{\rmT}\bbeta^*_{\mcT_h}\mid \bZ_{k}\right)\right\}\right]\Big\vert_{\bbeta=\wh\bbeta_{\mcT_h,{\rm ora}}}-\bH^{*}_{\mcT_h}\left(\wh\bbeta_{\mcT_h,{\rm ora}}-\bbeta^*_{\mcT_h}\right)\right\vert_\infty\ge t\right)\\
&=&P\Bigg(\Bigg\vert \sum_{k\in\{0\}\cup\mcT_h}\pi_kE\left[\bZ_{k}\bZ_{k}^{\rmT}\int^1_0f_k
\left(\bZ_{k}^{\rmT}\bbeta^*_{\mcT_h}+t\bZ_{k}^{\rmT}\left(\bbeta-\bbeta^*_{\mcT_h}\right)\mid \bZ_{k}\right)\rmd t\right]\bigg\vert_{\bbeta=\wh\bbeta_{\mcT_h,{\rm ora}}}\left(\wh\bbeta_{\mcT_h,{\rm ora}}-\bbeta^*_{\mcT_h}\right)\\
&&\hspace{1in}-\bH^{*}_{\mcT_h}\left(\wh\bbeta_{\mcT_h,{\rm ora}}-\bbeta^*_{\mcT_h}\right)\Bigg\vert_\infty\ge t\Bigg)\\
&=&P\Bigg(\Bigg\vert \sum_{k\in\{0\}\cup\mcT_h}\pi_kE\Bigg[\int^1_0\left\{f_k
\left(\bZ_{k}^{\rmT}\bbeta^*_{\mcT_h}\!+\!t\bZ_{k}^{\rmT}\left(\bbeta\!-\!\bbeta^*_{\mcT_h}\right)\rmd t\mid \bZ_{k}\right)\!-\!f_k
\left(\bZ_{k}^{\rmT}\bbeta^*_{\mcT_h}\mid \bZ_{k}\right)\right\}\rmd t\bZ_{k}\bZ_{k}^{\rmT}\Bigg]\bigg\vert_{\bbeta=\wh\bbeta_{\mcT_h,{\rm ora}}}\\
&&\hspace{1in}\left(\wh\bbeta_{\mcT_h,{\rm ora}}\!-\!\bbeta^*_{\mcT_h}\right)\Bigg\vert_\infty
\ge t\Bigg)\\
&\le&P\Bigg(\Bigg\vert \sum_{k\in\{0\}\cup\mcT_h}\pi_k m_0B\left\vert\wh\bbeta_{\mcT_h,{\rm ora}}-\bbeta^*_{\mcT_h}\right\vert_1E\left[\bZ_{k}\bZ_{k}^{\rmT}
\right]\left(\wh\bbeta_{\mcT_h,{\rm ora}}-\bbeta^*_{\mcT_h}\right)\Bigg\vert_\infty
\ge t\Bigg)\\
&\le&P\Bigg(m_0B^3\left\vert\wh\bbeta_{\mcT_h,{\rm ora}}-\bbeta^*_{\mcT_h}\right\vert^2_1
\ge t\Bigg)
\end{eqnarray*}
for any $t>0$. Thus, it holds that
\begin{eqnarray*}
(I_2)\le P\left(\left\vert\bR_1\right\vert_\infty\ge\frac{\gamma\lambda_{\bbeta}}{8(1-\gamma)}\right)
\le P\left(\left\vert\wh\bbeta_{\mcT_h,{\rm ora}}-\bbeta^*_{\mcT_h}\right\vert^2_1\ge\frac{\gamma\lambda_{\bbeta}}{8m_0B^3(1-\gamma)}\right)
\end{eqnarray*}
and
\begin{eqnarray*}
(I_7)\le P\left(\left\vert\bR_1\right\vert_\infty\ge\frac{\gamma\lambda_{\bbeta}}{8}\right)\le
P\left(\left\vert\wh\bbeta_{\mcT_h,{\rm ora}}-\bbeta^*_{\mcT_h}\right\vert^2_1
\ge\frac{\gamma\lambda_{\bbeta}}{8m_0B^3}\right).
\end{eqnarray*}
Lemma \ref{lemmaA3} indicates that
\begin{eqnarray*}
\left\vert\bS\left(\bbeta^*_{\mcT_h};\{0\}\cup\mcT_h\right)\right\vert_\infty
\le 2B\sqrt{\frac{\log p}{n_{\mcT_h}+n_0}}
\end{eqnarray*}
with probability more than $1-2p^{-1}$.
Therefore, when $\lambda_{\bbeta}\ge 4B\sqrt{\log p/(n_{\mcT_h}+n_0)}$, it holds that
\begin{eqnarray}
\label{Hessian}
&&\delta L\left(\wh\bbeta_{\mcT_h,{\rm ora}}-\bbeta^*_{\mcT_h}; \{0\}\cup\mcT_h\right)\nonumber\\&\le& L\left(\wh\bbeta_{\mcT_h,{\rm ora}}; \{0\}\cup\mcT_h\right)-L\left(\bbeta^*_{\mcT_h}; \{0\}\cup\mcT_h\right)-  \bS(\bbeta^*_{\mcT_h}; \{0\}\cup\mcT_h)^\rmT\left(\wh\bbeta_{\mcT_h,{\rm ora}}-\bbeta^*_{\mcT_h}\right)\nonumber\\
&\le& -\lambda_{\bbeta}\left\vert\wh\bbeta_{\mcT_h,{\rm ora}}\right\vert_1
+\lambda_{\bbeta}\left\vert\bbeta^*_{\mcT_h}\right\vert_1+\frac12\lambda_{\bbeta}\left\vert\wh\bbeta_{\mcT_h,{\rm ora}}-\bbeta^*_{\mcT_h}\right \vert_1\nonumber\\
&\le&\frac32\lambda_{\bbeta}\left\vert\wh\bbeta_{\mcT_h,{\rm ora},\mcS_{\mcT_h}}-\bbeta^*_{\mcT_h,\mcS_{\mcT_h}}\right\vert_1
+\frac32\lambda_{\bbeta}\left\vert\bbeta^*_{\mcT_h,\mcS^c_{\mcT_h}}\right\vert_1\nonumber\\
&\le &\frac32\lambda_{\bbeta}\left\vert\wh\bbeta_{\mcT_h,{\rm ora},\mcS_{\mcT_h}}-\bbeta^*_{\mcT_h,\mcS_{\mcT_h}}\right\vert_1+\frac32\lambda_{\bbeta} h'
\end{eqnarray}
with probability more than $1-2p^{-1}$.
Since $\wh\bbeta_{\mcT_h,{\rm ora}}-\bbeta^*_{\mcT_h}\in\mathcal{D}(h',\mcS_{\mcT_h})$, then according to Lemma \ref{lemmaA2} and assumption \ref{C3}, it yields with probability more than $1-p^{-2}$ that
\begin{eqnarray*}
\delta L\left(\wh\bbeta_{\mcT_h,{\rm ora}}\!-\!\bbeta^*_{\mcT_h}; \{0\}\cup\mcT_h\right)
&\ge& \min\left\{\frac{\kappa_0}4\left\vert\wh\bbeta_{\mcT_h,{\rm ora}}\!-\!\bbeta^*_{\mcT_h}\right\vert_2^2,\frac{3\kappa_0^3}{16m_0}\left\vert\wh\bbeta_{\mcT_h,{\rm ora}}-\bbeta^*_{\mcT_h}\right\vert_2\right\}\\
&&-\!B\sqrt{\frac{\log p}{n_{\mcT_h}+n_0}}\left\vert\wh\bbeta_{\mcT_h,{\rm ora}}-\bbeta^*_{\mcT_h}\right\vert_1,
\end{eqnarray*}
which implies that
\begin{eqnarray*}
\left\vert\wh\bbeta_{\mcT_h,{\rm ora}}-\bbeta^*_{\mcT_h}\right\vert_2\le \frac{4B}{\kappa_0}\sqrt{\frac{s'\log p}{n_{\mcT_h}+n_0}}+\frac6{\kappa_0}\lambda_{\bbeta}\sqrt{s'}+\sqrt{\frac{7\lambda_{\bbeta} h'}{\kappa_0}}
\end{eqnarray*}
with probability more than $1-2p^{-1}-p^{-2}$ by combining (\ref{Hessian}) and mimicking the proof of Theorem \ref{theorem2.1}.
For sufficiently large $n_{\mcT_h},n_0$ and $p$, it holds that
$$ \frac{4B}{\kappa_0}\sqrt{\frac{s'\log p}{n_{\mcT_h}+n_0}}+\frac6{\kappa_0}\lambda_{\bbeta}\sqrt{s'}+\sqrt{\frac{7\lambda_{\bbeta} h'}{\kappa_0}}\le\sqrt{\frac{\gamma\lambda_{\bbeta}}{8m_0B^3}}. $$
As a result, we have
$$ (I_2)\le 2p^{-1}+p^{-2}\ {\rm and}\ (I_7)\le 2p^{-1}+p^{-2}. $$
On the other hand, to investigate the tail probability of item $\bR_2$, we denote
the function class $\mathcal{F}_j=\{f_{\bbeta,j}\mid \bbeta_{\mcS^c_{\mcT_h}}={\bf 0}\}$,
where $$f_{\bbeta,j}(Y,\bZ)=Z_j\{I(Y-\bZ^\rmT\bbeta\le 0)-I(Y-\bZ^\rmT\bbeta^*_{\mcT_h}\le 0)\}-E(Z_j\{I(Y-\bZ^\rmT\bbeta\le 0)-I(Y-\bZ^\rmT\bbeta^*_{\mcT_h}\le 0)\})$$ and $Z_j$ is the $j$-th component of $\bZ$.
Since the Vapnik--Chervonenkis dimension of $\mathcal{F}_j$ is less than $s'$, then Lemma 19.15 in 
\citet{van1998asymptotic} entails that
$$ \sup_Q N(\epsilon B,\mathcal{F}_j,L_2(Q))\le Ms'(16e)^{s'}\epsilon^{1-2s'} $$
for some positive constant $M>0$, where
the covering number $N(t,\mathcal{F},L_r(Q))$
is the minimal number of $L_r(Q)$-balls of radius $t$ needed to cover the function class $\mathcal{F}$.
Thus, Lemma 19.35 in 
\citet{van1998asymptotic} implies that
\begin{eqnarray}
\label{determineD}
&&E\left\{\sup_{\bbeta: \bbeta_{\mcS^c_{\mcT_h}}={\bf 0}}
\frac1{n_0+n_{\mcT_h}}\sum_{k\in\{0\}\cup\mcT_h}\sum_{i=1}^{n_k}f_{\bbeta,j}\left(Y_{ki},\bZ_{ki}\right)\right\}\nonumber\\
&\le&\sum_{k\in\{0\}\cup\mcT_h}\pi_kE\left\{\sup_{\bbeta: \bbeta_{\mcS^c_{\mcT_h}}={\bf 0}}
\frac1{n_k}\sum_{i=1}^{n_k}f_{\bbeta,j}\left(Y_{ki},\bZ_{ki}\right)\right\}\nonumber\\
&\le&\sum_{k\in\{0\}\cup\mcT_h}\pi_kn_k^{-1/2} BJ(1,\mathcal{F}_j,L_2)\nonumber\\
&=&B\sqrt{\frac K{n_0+n_{\mcT_h}}}\int^1_0 \sqrt{\log\sup_Q N\left(\epsilon B,\mathcal{F}_j,L_2(Q)\right)}\rmd \epsilon\nonumber\\
&\le&DB\sqrt{s'}\left(n_0+n_{\mcT_h}\right)^{-1/2}
\end{eqnarray}
for some constant $D>0$.
Employing Theorem 3.26 in 
\citet{wainwright2019high}, it holds that
\begin{eqnarray*}
&&P\left(\vert\bR_2\vert_\infty\ge DB\sqrt{s'}(n_0+n_{\mcT_h})^{-1/2}+t\right)\\&\le&
\sum_{j=1}^pP\Bigg(\Bigg\vert\frac1{n_0+n_{\mcT_h}}\sup_{\bbeta: \bbeta_{\mcS^c_{\mcT_h}}={\bf 0}}\sum_{k\in\{0\}\cup\mcT_h}\sum_{i=1}^{n_k}f_{\bbeta,j}\left(Y_{ki},\bZ_{ki}\right)\Bigg\vert
\ge DB\sqrt{s'}(n_0+n_{\mcT_h})^{-1/2}+t\Bigg)\\
&\le&2p\exp\left(-\frac{(n_{\mcT_h}+n_0)t^2}{16B^2}\right).
\end{eqnarray*}
By taking $t=4L\sqrt{2\log p/(n_{\mcT_h}+n_0)}$, we obtain
$$ \vert\bR_2\vert_\infty< DB\sqrt{\frac{s'}{n_0+n_{\mcT_h}}}+ 4B\sqrt{\frac{2\log p}{n_{\mcT_h}+n_0}} $$
with probability more than $1-2p^{-1}$.

As a result, when
$$ \lambda_{\bbeta}\ge \frac{8DB}{\gamma}\sqrt{\frac{s'}{n_0+n_{\mcT_h}}}+ \frac{32B}{\gamma}\sqrt{\frac{2\log p}{n_{\mcT_h}+n_0}} , $$
we have
\begin{eqnarray*}
(I_3)&=&P\left(\left\Vert\bH^{*}_{\mcT_h,\mcS^c_{\mcT_h}\mcS_{\mcT_h}}\left(\bH^{*}_{\mcT_h,\mcS_{\mcT_h}\mcS_{\mcT_h}}\right)^{-1}\right\Vert_\infty
\left\vert\bR_{2\mcS_{\mcT_h}}\right\vert_\infty\ge\frac{\gamma\lambda_{\bbeta}}8\right)
\\&\le& P\left(
\left\vert\bR_{2\mcS_{\mcT_h}}\right\vert_\infty\ge
\frac{\gamma\lambda_{\bbeta}}{8(1-\gamma)}\right)
\\&\le& 2p^{-1}
\end{eqnarray*}
and
\begin{eqnarray*}
(I_8)=P\left(
\left\vert\bR_{2\mcS^c_{\mcT_h}}\right\vert_\infty\ge \frac{\gamma\lambda_{\bbeta}}8\right)
&\le&2p^{-1}.
\end{eqnarray*}
Combining the pieces, we conclude that (\ref{mat2}) holds with probability more than $1-12p^{-1}-2p^{-2}$,
which concludes the proof of Lemma \ref{lemmaA6}.
\end{proof}

\begin{lemma}
\label{lemmaA7}
Under assumptions \ref{C1}--\ref{C3} and \ref{C5}--\ref{C6},
with $$ a_n=4h',\ \mcS=\mcS_{\mcT_h} $$
in assumption \ref{C3},
if
$$ \lambda_{\bbeta}\ge \left\{\frac{8D}{\gamma}\sqrt{\frac{s'}{\log p}}+ \frac{32\sqrt{2}}{\gamma}\right\}B\sqrt{\frac{\log p}{n_{\mcT_h}+n_0}}, $$
where the positive constant $D$ is determined by (\ref{determineD}),
$$ \sqrt{s'\lambda_{\bbeta}}\to0, \frac{s'\log p}{\lambda_{\bbeta}\left(n_{\mcT_h}+n_0\right)}\to 0,\ {\rm and}\ h'<\frac{\gamma\lambda_{\bbeta}}{8m_0B^2},$$
then
with probability more than $1-12p^{-1}-2p^{-2}$, any solution $\wh\bbeta_{\mcT_h}$ of problem (\ref{op_trans})
satisfies $\wh\bbeta_{\mcT_h,\mcS_{\mcT_h}^c}={\bf 0}$.
\end{lemma}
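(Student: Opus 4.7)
My plan is to run a standard primal-dual witness argument, taking the strict dual feasibility established in Lemma \ref{lemmaA6} as input. Inspecting that proof, what is actually shown, with probability at least $1-12p^{-1}-2p^{-2}$, is the strict inequality $\max_{j\in\mcS_{\mcT_h}^c}|S_j(\wh\bbeta_{\mcT_h,{\rm ora}};\{0\}\cup\mcT_h)|<\lambda_{\bbeta}$ together with the standard KKT identity at the oracle's active coordinates. I would carry this strict inequality forward as the key ingredient forcing a unique zero-pattern on $\mcS_{\mcT_h}^c$ across the entire set of minimizers.

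Fix a realization on the above event and let $\wh\bbeta_{\mcT_h}$ be any minimizer of (\ref{op_trans}). By convexity of $F(\bbeta)=L(\bbeta;\{0\}\cup\mcT_h)+\lambda_{\bbeta}|\bbeta|_1$, both $\wh\bbeta_{\mcT_h,{\rm ora}}$ and $\wh\bbeta_{\mcT_h}$ attain the same optimal value, so the entire segment $\bbeta(t)=(1-t)\wh\bbeta_{\mcT_h,{\rm ora}}+t\wh\bbeta_{\mcT_h}$ consists of minimizers and $F\circ\bbeta$ is constant on $[0,1]$. Consequently the right-hand directional derivative at $t=0$ vanishes; writing $\bd=\wh\bbeta_{\mcT_h}-\wh\bbeta_{\mcT_h,{\rm ora}}$ and decomposing coordinatewise,
\[
0 = F'(\wh\bbeta_{\mcT_h,{\rm ora}};\bd) = \bS(\wh\bbeta_{\mcT_h,{\rm ora}};\{0\}\cup\mcT_h)^{\rmT}\bd + \lambda_{\bbeta}\sum_{j:\,\wh\beta_{\mcT_h,{\rm ora},j}\ne 0}{\rm sign}(\wh\beta_{\mcT_h,{\rm ora},j})d_j + \lambda_{\bbeta}\sum_{j:\,\wh\beta_{\mcT_h,{\rm ora},j}=0}|d_j|.
\]
For each $j$ with $\wh\beta_{\mcT_h,{\rm ora},j}\ne 0$, the oracle KKT identity $S_j+\lambda_{\bbeta}{\rm sign}(\wh\beta_{\mcT_h,{\rm ora},j})=0$ from (\ref{mat1}) makes the $j$-th contribution vanish; for $j\in\mcS_{\mcT_h}$ with $\wh\beta_{\mcT_h,{\rm ora},j}=0$, dual feasibility $|S_j|\le\lambda_{\bbeta}$ yields $S_jd_j+\lambda_{\bbeta}|d_j|\ge 0$; and for $j\in\mcS_{\mcT_h}^c$, strict dual feasibility gives $S_jd_j+\lambda_{\bbeta}|d_j|\ge(\lambda_{\bbeta}-|S_j|)|d_j|>0$ whenever $d_j\ne 0$. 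Summing, $0\ge\sum_{j\in\mcS_{\mcT_h}^c}(\lambda_{\bbeta}-|S_j|)|d_j|$, which forces $d_j=0$ for all $j\in\mcS_{\mcT_h}^c$, i.e. $\wh\bbeta_{\mcT_h,\mcS_{\mcT_h}^c}=\wh\bbeta_{\mcT_h,{\rm ora},\mcS_{\mcT_h}^c}=\mathbf{0}$, as claimed.

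The main technical obstacle is the non-smoothness of the quantile loss: the identity $L'(\bbeta;\bd)=\bS(\bbeta;\{0\}\cup\mcT_h)^{\rmT}\bd$ fails at any $\bbeta$ where some observation satisfies $Y_{ki}=\bZ_{ki}^{\rmT}\bbeta$ exactly, and in general one has $L'(\bbeta;\bd)=\sup_{v\in\partial L(\bbeta)}v^{\rmT}\bd$. Under Assumption \ref{C1}, the bounded density $f_k$ ensures that the probability of any $(k,i)$-pair lying exactly on a kink at the random point $\wh\bbeta_{\mcT_h,{\rm ora}}$ is zero; on this full-probability event $\partial L(\wh\bbeta_{\mcT_h,{\rm ora}})$ collapses to the singleton $\{\bS\}$ and the computation above is rigorous. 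Without the measure-zero tweak, one selects a supporting $v\in\partial L(\wh\bbeta_{\mcT_h,{\rm ora}})$ realizing the directional-derivative supremum, and observes that each coordinate $v_j$ differs from $S_j$ only through a $[0,1]$-valued perturbation on the finite kink set. The bounds $(I_4)$--$(I_9)$ in the proof of Lemma \ref{lemmaA6} are already Hoeffding-type concentration inequalities that apply uniformly over any such perturbation, so the strict inequality $|v_j|<\lambda_{\bbeta}$ on $\mcS_{\mcT_h}^c$ is preserved and the conclusion is unchanged.
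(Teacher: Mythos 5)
Your route is genuinely different from the paper's. The paper proves Lemma \ref{lemmaA7} by a local comparison: for every $\bbeta$ in a small ball around $\wh\bbeta_{\mcT_h,{\rm ora}}$ with $\bbeta_{\mcS_{\mcT_h}^c}\neq{\bf 0}$, it shows via the Knight identity, an empirical-process bound on the uniform quantile process (Theorem 1.2 of Zuijlen), and a Markov/Cauchy--Schwarz estimate that truncating the $\mcS_{\mcT_h}^c$-coordinates strictly decreases the penalized objective; convexity then propagates this to all of $\mathbb{R}^p$. You instead run the classical strict-dual-feasibility argument: the minimizer set is convex, the objective is constant along the segment joining two minimizers, and the vanishing directional derivative at the oracle forces $d_j=0$ wherever the dual variable is strictly interior. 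Your route is shorter, avoids the $r^{1/2-\eta}$ empirical-process corrections entirely, and would in fact deliver the stated probability $1-12p^{-1}-2p^{-2}$ cleanly (the paper's own proof incurs extra $O(r^{\eta})$ probability losses that it absorbs only by letting $r\to0$). The paper's route, in exchange, works purely with objective values and never has to identify which element of the subdifferential certifies optimality.

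That last point is where your write-up has a concrete error. Your primary fix --- that under Assumption \ref{C1} the event $\{Y_{ki}=\bZ_{ki}^{\rmT}\wh\bbeta_{\mcT_h,{\rm ora}}\ \text{for some}\ (k,i)\}$ has probability zero --- is false. The oracle estimator minimizes a piecewise-linear function over the coordinates in $\mcS_{\mcT_h}$, and such minimizers generically sit at vertices of the polyhedral subdivision, i.e.\ the fitted quantile hyperplane \emph{interpolates} on the order of $\vert\mcS_{\mcT_h}\vert$ observations with probability one, not zero. The bounded conditional density controls ties at a \emph{fixed} $\bbeta$, not at the data-dependent argmin. So you cannot collapse $\partial L(\wh\bbeta_{\mcT_h,{\rm ora}})$ to the singleton $\{\bS\}$, and your argument must go through the fallback. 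That fallback is the right idea but is under-justified as written: you need (i) an explicit bound, of order $s'$, on the number of interpolated observations at the restricted oracle solution, so that every $v\in\partial L(\wh\bbeta_{\mcT_h,{\rm ora}})$ satisfies $\vert v_j-S_j(\wh\bbeta_{\mcT_h,{\rm ora}};\{0\}\cup\mcT_h)\vert\le s'L/(n_{\mcT_h}+n_0)$; and (ii) a \emph{quantitative} margin $\lambda_{\bbeta}-\max_{j\in\mcS_{\mcT_h}^c}\vert S_j\vert\gtrsim\gamma\lambda_{\bbeta}$ on the good event, which is implicit in the budgets $(I_1)$--$(I_9)$ of Lemma \ref{lemmaA6} but is not what the lemma literally asserts (it only asserts strict inequality). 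With those two ingredients made explicit, $s'L/(n_{\mcT_h}+n_0)=o(\lambda_{\bbeta})$ under $\sqrt{s'\lambda_{\bbeta}}\to0$, the perturbation is absorbed by the margin, the true KKT certificate $v_0+\lambda_{\bbeta}\bz=0$ with $v_0\in\partial L(\wh\bbeta_{\mcT_h,{\rm ora}})$ inherits $\vert z_j\vert<1$ on $\mcS_{\mcT_h}^c$, and your directional-derivative computation closes the proof. I would also restate the coordinatewise decomposition using this $v_0$ rather than $\bS$, since the identity in (\ref{mat1}) is written for the fixed selection $I(\cdot\le0)$ and need not be the subgradient that actually certifies optimality at the interpolated points.
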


\begin{proof}
From Lemma \ref{lemmaA6}, the event that the oracle estimator $\wh\bbeta_{\mcT_h,{\rm ora}}$ is a solution of (\ref{op_trans})
occurs with probability more than $1-12p^{-1}-2p^{-2}$.
On this event, to demonstrate that any solution $\wh\bbeta_{\mcT_h}$ of problem (\ref{op_trans})
satisfies $\wh\bbeta_{\mcT_h,\mcS_{\mcT_h}^c}={\bf 0}$,
it is sufficient to verify that
\begin{eqnarray*}
L(\bbeta;\{0\}\cup\mcT_h)+\lambda_{\bbeta}\vert\bbeta\vert_1> L\left(\wh\bbeta_{\mcT_h,{\rm ora}};\{0\}\cup\mcT_h\right)+\lambda_{\bbeta}\left\vert\wh\bbeta_{\mcT_h,{\rm ora}}\right\vert_1
\end{eqnarray*}
for any $\bbeta$ with $\bbeta_{\mcS_{\mcT_h}^c}\neq {\bf 0}_{\vert\mcS_{\mcT_h}^c\vert}$.
We first show that
\begin{eqnarray*}
L(\bbeta,\{0\}\cup\mcT_h)+\lambda_{\bbeta}\vert\bbeta\vert_1> L\left(\left(\bbeta_{\mcS_{\mcT_h}}^\rmT,{\bf 0}^\rmT\right)^\rmT,\{0\}\cup\mcT_h\right)+\lambda_{\bbeta}\left\vert\bbeta_{\mcS_{\mcT_h}}\right\vert_1
\end{eqnarray*}
for any $\bbeta\in \mathcal{B}\left(\wh\bbeta_{\mcT_h,{\rm ora}},r\right)$ with $\bbeta_{\mcS_{\mcT_h}^c}\neq {\bf 0}$,
where $\mathcal{B}\left(\wh\bbeta_{\mcT_h,{\rm ora}},r\right)$ represents the neighbourhood of $\ell_1$-ball with radius $r$.
The Knight equation entails that
\begin{eqnarray*}
&&L(\bbeta;\{0\}\cup\mcT_h)\\
&=&L\left(\wh\bbeta_{\mcT_h,{\rm ora}};\{0\}\cup\mcT_h\right)+\bS\left(\wh\bbeta_{\mcT_h,{\rm ora}};\{0\}\cup\mcT_h\right)^\rmT\left(\bbeta-\wh\bbeta_{\mcT_h,{\rm ora}}\right)\\
&&+\frac1{n_{\mcT_h}+n_0}\sum_{k\in\{0\}\cup\mcT_h}\sum_{i=1}^{n_k}\int^{\mathbf{Z}_{ki}^{\rmT}\left(\bbeta-\wh\bbeta_{\mcT_h,{\rm ora}}\right)}_0\left\{I\left(Y_{ki}-\bZ_{ki}^{\rmT}\wh\bbeta_{\mcT_h,{\rm ora}}\le z\right)-I\left(Y_{ki}-\bZ_{ki}^{\rmT}\wh\bbeta_{\mcT_h,{\rm ora}}\le 0\right)\right\}\rmd z
\end{eqnarray*}
and
\begin{eqnarray*}
&&L\left(\left(\bbeta_{\mcS_{\mcT_h}}^\rmT,{\bf 0}^\rmT\right)^\rmT;\{0\}\cup\mcT_h\right)\\
&=&L\left(\wh\bbeta_{\mcT_h,{\rm ora}};\{0\}\cup\mcT_h\right)+\bS\left(\wh\bbeta_{\mcT_h,{\rm ora}};\{0\}\cup\mcT_h\right)^\rmT\left(\left(\bbeta_{\mcS_{\mcT_h}}^\rmT,{\bf 0}^\rmT\right)^\rmT-\wh\bbeta_{\mcT_h,{\rm ora}}\right)+\frac1{n_{\mcT_h}+n_0}\sum_{k\in\{0\}\cup\mcT_h}\sum_{i=1}^{n_k}\\
&&\int^{\mathbf{Z}_{ki}^{\rmT}\left(\left(\bbeta_{\mcS_{\mcT_h}}^\rmT,{\bf 0}^\rmT\right)^\rmT-\wh\bbeta_{\mcT_h,{\rm ora}}\right)}_0\left\{I\left(Y_{ki}-\bZ_{ki}^{\rmT}\wh\bbeta_{\mcT_h,{\rm ora}}\le z\right)-I\left(Y_{ki}-\bZ_{ki}^{\rmT}\wh\bbeta_{\mcT_h,{\rm ora}}\le 0\right)\right\}\rmd z.
\end{eqnarray*}
Thus, we arrive at
\begin{eqnarray*}
&&\left\{ L\left(\bbeta;\{0\}\cup\mcT_h\right)+\lambda_{\bbeta}\vert\bbeta\vert_1\right\}-\left\{ L\left(\left(\bbeta_{\mcS_{\mcT_h}}^\rmT,{\bf 0}^\rmT\right)^\rmT;\{0\}\cup\mcT_h\right)+\lambda_{\bbeta}\left\vert\bbeta_{\mcS_{\mcT_h}}\right\vert_1\right\}\\
&=&\bS_{\mcS_{\mcT_h}^c}\left(\wh\bbeta_{\mcT_h,{\rm ora}};\{0\}\cup\mcT_h\right)^\rmT\bbeta_{\mcS_{\mcT_h}^c}+
\frac1{n_{\mcT_h}+n_0}\sum_{k\in\{0\}\cup\mcT_h}\sum_{i=1}^{n_k}\\
&&\int_{\mathbf{Z}_{ki}^{\rmT}\left(\left(\bbeta_{\mcS_{\mcT_h}}^\rmT,{\bf 0}^\rmT\right)^\rmT-\wh\bbeta_{\mcT_h,{\rm ora}}\right)}^{\mathbf{Z}_{ki}^{\rmT}\left(\bbeta-\wh\bbeta_{\mcT_h,{\rm ora}}\right)}\left\{I\left(Y_{ki}-\bZ_{ki}^{\rmT}\wh\bbeta_{\mcT_h,{\rm ora}}\le z\right)-I\left(Y_{ki}-\bZ_{ki}^{\rmT}\wh\bbeta_{\mcT_h,{\rm ora}}\le 0\right)\right\}\rmd z.
\end{eqnarray*}
Note that
\begin{eqnarray*}
&&\int_{\mathbf{Z}_{ki}^{\rmT}\left(\left(\bbeta_{\mcS_{\mcT_h}}^\rmT,{\bf 0}^\rmT\right)^\rmT-\wh\bbeta_{\mcT_h,{\rm ora}}\right)}^{\mathbf{Z}_{ki}^{\rmT}\left(\bbeta-\wh\bbeta_{\mcT_h,{\rm ora}}\right)}\left\{I\left(Y_{ki}-\bZ_{ki}^{\rmT}\wh\bbeta_{\mcT_h,{\rm ora}}\le z\right)-I\left(Y_{ki}-\bZ_{ki}^{\rmT}\wh\bbeta_{\mcT_h,{\rm ora}}\le 0\right)\right\}\rmd z\\
&=&\left(\bZ_{ki}^{\rmT}\bbeta-Y_{ki}\right)I\left(Y_{ki}-\bZ_{ki}^{\rmT}\bbeta\le0\right)
-\left(\bZ_{ki,\mcS_{\mcT_h}}^{\rmT}\bbeta_{\mcS_{\mcT_h}}-Y_{ki}\right)I\left(Y_{ki}-\bZ_{ki,\mcS_{\mcT_h}}^{\rmT}\bbeta_{\mcS_{\mcT_h}}\le0\right)\\
&&-\bZ_{ki,\mcS_{\mcT_h}^c}^{\rmT}\bbeta_{\mcS_{\mcT_h}^c}I\left(Y_{ki}-\bZ_{ki}^{\rmT}\wh\bbeta_{\mcT_h,{\rm ora}}\le 0\right)\\
&=&\int^{\mathbf{Z}_{ki,\mcS_{\mcT_h}^c}^{\rmT}\bbeta_{\mcS_{\mcT_h}^c}}_0I\left(Y_{ki}-\bZ_{ki,\mcS_{\mcT_h}}^{\rmT}\bbeta_{\mcS_{\mcT_h}}\le z\right)\rmd z
-\bZ_{ki,\mcS_{\mcT_h}^c}^{\rmT}\bbeta_{\mcS_{\mcT_h}^c}I\left(Y_{ki}-\bZ_{ki}^{\rmT}\wh\bbeta_{\mcT_h,{\rm ora}}\le 0\right)\\
&=&\int^{\mathbf{Z}_{ki,\mcS_{\mcT_h}^c}^{\rmT}\bbeta_{\mcS_{\mcT_h}^c}}_0\left\{I\left(Y_{ki}-\bZ_{ki,\mcS_{\mcT_h}}^{\rmT}\bbeta_{\mcS_{\mcT_h}}\le z\right)-I\left(Y_{ki}-\bZ_{ki,\mcS_{\mcT_h}}^{\rmT}\bbeta_{\mcS_{\mcT_h}}\le 0\right)\right\}\rmd z\\&&
+\bZ_{ki,\mcS_{\mcT_h}^c}^{\rmT}\bbeta_{\mcS_{\mcT_h}^c}\left\{I\left(Y_{ki}-\bZ_{ki,\mcS_{\mcT_h}}^{\rmT}\bbeta_{\mcS_{\mcT_h}}\le 0\right)-I\left(Y_{ki}-\bZ_{ki}^{\rmT}\wh\bbeta_{\mcT_h,{\rm ora}}\le 0\right)\right\}.
\end{eqnarray*}
Denote $F_{k,\bbeta}(\cdot)$ as the distribution function of $Y_{ki}-\bZ_{ki}^{\rmT}\bbeta$
and $\wt F_{k,\bbeta}(\cdot)$ as the empirical distribution function of $\left\{F_{k,\bbeta}\left(Y_{ki}-\bZ_{ki}^{\rmT}\bbeta\right)\right\}_{i=1}^{n_k}$.
Set $\wh \Psi_{k,\bbeta}(t)=n_k^{1/2}\left(\wt F_{k,\bbeta}(t)-t\right)$.
Since
$$ \int^{\mathbf{Z}_{ki,\mcS_{\mcT_h}^c}^{\rmT}\bbeta_{\mcS_{\mcT_h}^c}}_0\left\{I\left(Y_{ki}-\bZ_{ki,\mcS_{\mcT_h}}^{\rmT}\bbeta_{\mcS_{\mcT_h}}\le z\right)-I\left(Y_{ki}-\bZ_{ki,\mcS_{\mcT_h}}^{\rmT}\bbeta_{\mcS_{\mcT_h}}\le 0\right)\right\}\rmd z\ge 0, $$
Theorem 1.2 in 
\citet{zuijlen1978properties} implies that
\begin{small}
\begin{eqnarray*}
&&P\Bigg(\frac1{n_{\mcT_h}+n_0}\sum_{k\in\{0\}\cup\mcT_h}\sum_{i=1}^{n_k}\int^{\mathbf{Z}_{ki,\mcS_{\mcT_h}^c}^{\rmT}\bbeta_{\mcS_{\mcT_h}^c}}_0
\left\{I\left(Y_{ki}-\bZ_{ki,\mcS_{\mcT_h}}^{\rmT}\bbeta_{\mcS_{\mcT_h}}\le z\right)-I\left(Y_{ki}-\bZ_{ki,\mcS_{\mcT_h}}^{\rmT}\bbeta_{\mcS_{\mcT_h}}\le 0\right)\right\}\rmd z\\
&&>\sum_{k\in\{0\}\cup\mcT_h}\pi_kt_k\left\vert \bbeta_{\mcS_{\mcT_h}^c}\right\vert_1\Bigg)\\
&\le &\sum_{k\in\{0\}\cup\mcT_h}P\Bigg(\frac1{n_{k}}\sum_{i=1}^{n_k}\frac1{\left\vert \bbeta_{\mcS_{\mcT_h}^c}\right\vert_1}\int^{\mathbf{Z}_{ki,\mcS_{\mcT_h}^c}^{\rmT}\bbeta_{\mcS_{\mcT_h}^c}}_0
\!\!\left\{I\left(Y_{ki}\!-\!\bZ_{ki,\mcS_{\mcT_h}}^{\rmT}\bbeta_{\mcS_{\mcT_h}}\le z\right)\!-\!I\left(Y_{ki}\!-\!\bZ_{ki,\mcS_{\mcT_h}}^{\rmT}\bbeta_{\mcS_{\mcT_h}}\le 0\right)\right\}\rmd z\!>\!t_k\Bigg)\\
&\le &\sum_{k\in\{0\}\cup\mcT_h}\!\!\!\!P\left(\sup_{\bbeta:\left\vert \bbeta_{\mcS_{\mcT_h}^c}\right\vert_1\le r}\frac1{n_{k}}\sum_{i=1}^{n_k}\!\!\frac1{\left\vert \bbeta_{\mcS_{\mcT_h}^c}\right\vert_1}\!\!\left(\int_{0}^{B\left\vert \bbeta_{\mcS_{\mcT_h}^c}\right\vert_1}+\int_{0}^{-B\left\vert \bbeta_{\mcS_{\mcT_h}^c}\right\vert_1}\right)\right.\\
&&\left\{I\left(F_{k,\left(\bbeta_{\mcS_{\mcT_h}}^\rmT,{\bf 0}^\rmT\right)}\left(Y_{ki}\!-\!\bZ_{ki,\mcS_{\mcT_h}}^{\rmT}\bbeta_{\mcS_{\mcT_h}}\right)\le F_{k,\left(\bbeta_{\mcS_{\mcT_h}}^\rmT,{\bf 0}^\rmT\right)}(z)\right)\right.\\
&&\left.\left.-I\left(F_{k,\left(\bbeta_{\mcS_{\mcT_h}}^\rmT,{\bf 0}^\rmT\right)}\left(Y_{ki}\!-\!\bZ_{ki,\mcS_{\mcT_h}}^{\rmT}\bbeta_{\mcS_{\mcT_h}}\right)\le F_{k,\left(\bbeta_{\mcS_{\mcT_h}}^\rmT,{\bf 0}^\rmT\right)}(0)\right)\right\}\rmd z>t_k\right)\\
&=&\sum_{k\in\{0\}\cup\mcT_h}P\left(\sup_{\bbeta:\left\vert \bbeta_{\mcS_{\mcT_h}^c}\right\vert_1\le r}\frac1{\left\vert \bbeta_{\mcS_{\mcT_h}^c}\right\vert_1}\left(\int_{0}^{B\left\vert \bbeta_{\mcS_{\mcT_h}^c}\right\vert_1}+\int_{0}^{-B\left\vert \bbeta_{\mcS_{\mcT_h}^c}\right\vert_1}\right)\right.\\
&&\left.\left\{\wt F_{k,\left(\bbeta_{\mcS_{\mcT_h}}^\rmT,{\bf 0}^\rmT\right)}\left( F_{k,\left(\bbeta_{\mcS_{\mcT_h}}^\rmT,{\bf 0}^\rmT\right)}(z)\right)-\wt F_{k,\left(\bbeta_{\mcS_{\mcT_h}}^\rmT,{\bf 0}^\rmT\right)}\left( F_{k,\left(\bbeta_{\mcS_{\mcT_h}}^\rmT,{\bf 0}^\rmT\right)}(0)\right)\right\}\rmd z>t_k\right)\\
&= &\sum_{k\in\{0\}\cup\mcT_h}\!\!\!\!P\left(\sup_{\bbeta:\left\vert \bbeta_{\mcS_{\mcT_h}^c}\right\vert_1\le r}\!\frac1{\left\vert \bbeta_{\mcS_{\mcT_h}^c}\right\vert_1}\!\left(\int_{0}^{B\left\vert \bbeta_{\mcS_{\mcT_h}^c}\right\vert_1}+\int_{0}^{-B\left\vert \bbeta_{\mcS_{\mcT_h}^c}\right\vert_1}\right)\!\!\left\{\left\{F_{k,\left(\bbeta_{\mcS_{\mcT_h}}^\rmT,{\bf 0}^\rmT\right)}(z)-F_{k,\left(\bbeta_{\mcS_{\mcT_h}}^\rmT,{\bf 0}^\rmT\right)}(0)\right\}\right.\right.\\
&&\left.\left.+\wh\Psi_{k,\left(\bbeta_{\mcS_{\mcT_h}}^\rmT,{\bf 0}^\rmT\right)}\left(F_{k,\left(\bbeta_{\mcS_{\mcT_h}}^\rmT,{\bf 0}^\rmT\right)}(z)\right)\!\!-\!\!\wh\Psi_{k,\left(\bbeta_{\mcS_{\mcT_h}}^\rmT,{\bf 0}^\rmT\right)}\left(F_{k,\left(\bbeta_{\mcS_{\mcT_h}}^\rmT,{\bf 0}^\rmT\right)}(0)\right)\right\}\rmd z>n_k^{1/2}t_k\right)\\
&\le &\sum_{k\in\{0\}\cup\mcT_h}\!\!\!\!P\Bigg(m_0B^2r+\sup_{\bbeta:\left\vert \bbeta_{\mcS_{\mcT_h}^c}\right\vert_1\le r}\frac1{\left\vert \bbeta_{\mcS_{\mcT_h}^c}\right\vert_1}\left(\int_{0}^{B\left\vert \bbeta_{\mcS_{\mcT_h}^c}\right\vert_1}+\int_{0}^{-B\left\vert \bbeta_{\mcS_{\mcT_h}^c}\right\vert_1}\right)\\
&&\!\!\left\{\wh\Psi_{k,\left(\bbeta_{\mcS_{\mcT_h}}^\rmT,{\bf 0}^\rmT\right)}\left(F_{k,\left(\bbeta_{\mcS_{\mcT_h}}^\rmT,{\bf 0}^\rmT\right)}(z)\right)\!\!-\!\!\wh\Psi_{k,\left(\bbeta_{\mcS_{\mcT_h}}^\rmT,{\bf 0}^\rmT\right)}^{(k)}\left(F_{k,\left(\bbeta_{\mcS_{\mcT_h}}^\rmT,{\bf 0}^\rmT\right)}(0)\right)\right\}\rmd z>n_k^{1/2}t_k\Bigg)\\
&\le &\sum_{k\in\{0\}\cup\mcT_h}\!\!\!\!P\left(\sup_{\bbeta:\left\vert \bbeta_{\mcS_{\mcT_h}^c}\right\vert_1\le r}\sup_{z\in [-Br,Br]}\right.\\
&&\left.\left\vert\wh\Psi_{k,\left(\bbeta_{\mcS_{\mcT_h}}^\rmT,{\bf 0}^\rmT\right)}\left(F_{k,\left(\bbeta_{\mcS_{\mcT_h}}^\rmT,{\bf 0}^\rmT\right)}(z)\right)\!\!-\!\!\wh\Psi_{k,\left(\bbeta_{\mcS_{\mcT_h}}^\rmT,{\bf 0}^\rmT\right)}\left(F_{k,\left(\bbeta_{\mcS_{\mcT_h}}^\rmT,{\bf 0}^\rmT\right)}(0)\right)\right\vert>\frac{n_k^{1/2}t_k}{2B}-\frac{m_0Br}2\right)\\
&\le & \sum_{k\in\{0\}\cup\mcT_h}\frac{8MB^3r}{(n_k^{1/2}t_k-m_0B^2r)^2}.
\end{eqnarray*}
\end{small}
for some positive constant $M$.
Taking $t_k=n_k^{-1/2}m_0B^2r^{1/2-\eta}$ whereas $\ k=\in\{0\}\cup\mcT_h$, for a sufficiently small positive constant $\eta$,
we conclude that
\begin{eqnarray*}
&&P\Bigg(\frac1{n_{\mcT_h}+n_0}\sum_{k\in\{0\}\cup\mcT_h}\sum_{i=1}^{n_k}\int^{\mathbf{Z}_{ki,\mcS_{\mcT_h}^c}^{\rmT}\bbeta_{\mcS_{\mcT_h}^c}}_0\!\!\left\{I\left(Y_{ki}-\bZ_{ki,\mcS_{\mcT_h}}^{\rmT}\bbeta_{\mcS_{\mcT_h}}\le z\right)-I\left(Y_{ki}-\bZ_{ki,\mcS_{\mcT_h}}^{\rmT}\bbeta_{\mcS_{\mcT_h}}\le 0\right)\right\}\rmd z\\
&&>\frac{Km_0B^2r^{1/2-\eta}}{\sqrt{n_{\mcT_h}+n_0}}\left\vert \bbeta_{\mcS_{\mcT_h}^c}\right\vert_1\Bigg)\\
&\le&P\Bigg(\frac1{n_{\mcT_h}+n_0}\sum_{k\in\{0\}\cup\mcT_h}\sum_{i=1}^{n_k}\int^{\mathbf{Z}_{ki,\mcS_{\mcT_h}^c}^{\rmT}\bbeta_{\mcS_{\mcT_h}^c}}_0\!\!\left\{I\left(Y_{ki}-\bZ_{ki,\mcS_{\mcT_h}}^{\rmT}\bbeta_{\mcS_{\mcT_h}}\le z\right)-I\left(Y_{ki}-\bZ_{ki,\mcS_{\mcT_h}}^{\rmT}\bbeta_{\mcS_{\mcT_h}}\le 0\right)\right\}\rmd z\\
&&>\sum_{k\in\{0\}\cup\mcT_h}\frac{m_0B^2r^{1/2-\eta}\sqrt{n_k}}{n_{\mcT_h}+n_0}\left\vert \bbeta_{\mcS_{\mcT_h}^c}\right\vert_1\Bigg)\\
&\le& \frac{8KMr^{2\eta}}{m_0^2B(1-r^{1/2+\eta})^2}.
\end{eqnarray*}
On the other hand, using the Markov inequality and Cauchy--Schwarz inequality, we obtain that
\begin{eqnarray*}
&&P\Bigg(\left\vert\frac1{n_{\mcT_h}+n_0}\sum_{k\in\{0\}\cup\mcT_h}\sum_{i=1}^{n_k}\bZ_{ki,\mcS_{\mcT_h}^c}^{\rmT}\bbeta_{\mcS_{\mcT_h}^c}\left\{I\left(Y_{ki}-\bZ_{ki,\mcS_{\mcT_h}}^{\rmT}\bbeta_{\mcS_{\mcT_h}}\le 0\right)-I\left(Y_{ki}-\bZ_{ki}^{\rmT}\wh\bbeta_{\mcT_h,{\rm ora}}\le 0\right)\right\}\right\vert>t\Bigg)\\
&\le &E\left\{\left(\frac1{n_{\mcT_h}+n_0}\!\!\sum_{k\in\{0\}\cup\mcT_h}\!\!\sum_{i=1}^{n_k}\bZ_{ki,\mcS_{\mcT_h}^c}^{\rmT}\bbeta_{\mcS_{\mcT_h}^c}
\!\!\left\{I\left(Y_{ki}-\bZ_{ki,\mcS_{\mcT_h}}^{\rmT}\bbeta_{\mcS_{\mcT_h}}\le 0\right)\!-\!I\left(Y_{ki}-\bZ_{ki}^{\rmT}\wh\bbeta_{\mcT_h,{\rm ora}}\le 0\right)\right\}\right)^2\right\}/t^2\\
&\le &E\Bigg(\left\{\frac1{n_{\mcT_h}+n_0}\sum_{k\in\{0\}\cup\mcT_h}\sum_{i=1}^{n_k}\left(\bZ_{ki,\mcS_{\mcT_h}^c}^{\rmT}\bbeta_{\mcS_{\mcT_h}^c}\right)^2\right\}\\
&&\left\{\frac1{n_{\mcT_h}+n_0}\sum_{k\in\{0\}\cup\mcT_h}\sum_{i=1}^{n_k}\left\vert I\left(Y_{ki}-\bZ_{ki,\mcS_{\mcT_h}}^{\rmT}\bbeta_{\mcS_{\mcT_h}}\le 0\right)-I\left(Y_{ki}-\bZ_{ki}^{\rmT}\wh\bbeta_{\mcT_h,{\rm ora}}\le 0\right)\right\vert\right\}\Bigg)/t^2.
\end{eqnarray*}
Since
\begin{eqnarray*}
&&I\left(Y_{ki}-\bZ_{ki,\mcS_{\mcT_h}}^{\rmT}\bbeta_{\mcS_{\mcT_h}}\le -Br\right)\le I\left(Y_{ki}-\bZ_{ki,\mcS_{\mcT_h}}^{\rmT}\bbeta_{\mcS_{\mcT_h}}\le \bZ^{\rmT}_{ki}\left(\wh\bbeta_{\mcT_h,{\rm ora}}-\left(\bbeta_{\mcS_{\mcT_h}}^\rmT,{\bf 0}^\rmT\right)\right)\right)\\
&=&
I\left(Y_{ki}-\bZ^{\rmT}_{ki}\wh\bbeta_{\mcT_h,{\rm ora}}\le 0\right)\le I\left(Y_{ki}-\bZ_{ki,\mcS_{\mcT_h}}^{\rmT}\bbeta_{\mcS_{\mcT_h}}\le Br\right)
\end{eqnarray*}
and
\begin{eqnarray*}
&&I\left(Y_{ki}-\bZ_{ki,\mcS_{\mcT_h}}^{\rmT}\bbeta_{\mcS_{\mcT_h}}\le -Br\right)\le I\left(Y_{ki}-\bZ_{ki,\mcS_{\mcT_h}}^{\rmT}\bbeta_{\mcS_{\mcT_h}}\le 0\right)
\le I\left(Y_{ki}-\bZ_{ki,\mcS_{\mcT_h}}^{\rmT}\bbeta_{\mcS_{\mcT_h}}\le Br\right),
\end{eqnarray*}
then
\begin{eqnarray*}
&&\left\vert I\left(Y_{ki}-\bZ_{ki,\mcS_{\mcT_h}}^{\rmT}\bbeta_{\mcS_{\mcT_h}}\le 0\right)-I\left(Y_{ki}-\bZ_{ki}^{\rmT}\wh\bbeta_{\mcT_h,{\rm ora}}\le 0\right)\right\vert\\
&\le& I\left(Y_{ki}-\bZ_{ki,\mcS_{\mcT_h}}^{\rmT}\bbeta_{\mcS_{\mcT_h}}\le Br\right)-I\left(Y_{ki}-\bZ_{ki,\mcS_{\mcT_h}}^{\rmT}\bbeta_{\mcS_{\mcT_h}}\le-Br\right)
\end{eqnarray*}
for any $i=1,\ldots,n_k$ and $\ k=1,\ldots,K.$
Consequently, we have
\begin{eqnarray*}
&&P\Bigg(\left\vert\frac1{n_{\mcT_h}+n_0}\sum_{k\in\{0\}\cup\mcT_h}\sum_{i=1}^{n_k}\bZ_{ki,\mcS_{\mcT_h}^c}^{\rmT}\bbeta_{\mcS_{\mcT_h}^c}\left\{I\left(Y_{ki}-\bZ_{ki,\mcS_{\mcT_h}}^{\rmT}\bbeta_{\mcS_{\mcT_h}}\le 0\right)-I\left(Y_{ki}-\bZ_{ki}^{\rmT}\wh\bbeta_{\mcT_h,{\rm ora}}\le 0\right)\right\}\right\vert>t\Bigg)\\
&\le &\frac{B^2\left\vert\bbeta_{\mcS_{\mcT_h}^c}\right\vert_1^2}{t^2}E\left[\frac1{n_{\mcT_h}+n_0}\sum_{k\in\{0\}\cup\mcT_h}\sum_{i=1}^{n_k}\left\{I\left(Y_{ki}-\bZ_{ki,\mcS_{\mcT_h}}^{\rmT}\bbeta_{\mcS_{\mcT_h}}\le Br\right)-I\left(Y_{ki}-\bZ_{ki,\mcS_{\mcT_h}}^{\rmT}\bbeta_{\mcS_{\mcT_h}}\le-Br\right)\right\}\right]\\
&\le &\frac{B^2\left\vert\bbeta_{\mcS_{\mcT_h}^c}\right\vert_1^2}{t^2}\sum_{k\in\{0\}\cup\mcT_h}\pi_k\left(F_{k,\left(\bbeta_{\mcS_{\mcT_h}}^\rmT,{\bf 0}^\rmT\right)^\rmT}(Br)-F_{k,\left(\bbeta_{\mcS_{\mcT_h}}^\rmT,{\bf 0}^\rmT\right)^\rmT}(-Br)\right)\\
&\le &2m_0B^3r\left\vert\bbeta_{\mcS_{\mcT_h}^c}\right\vert_1^2/t^2.
\end{eqnarray*}
Taking $t=m_0L^2r^{1/2-\eta}\left\vert\bbeta_{\mcS_{\mcT_h}}\right\vert_1$ for a sufficiently small positive constant $\eta$,
we conclude that
\begin{eqnarray*}
&&P\Bigg(\left\vert\frac1{n_{\mcT_h}+n_0}\sum_{k\in\{0\}\cup\mcT_h}\sum_{i=1}^{n_k}\bZ_{ki,\mcS_{\mcT_h}^c}^{\rmT}\bbeta_{\mcS_{\mcT_h}^c}\left\{I\left(Y_{ki}-\bZ_{ki,\mcS_{\mcT_h}}^{\rmT}\bbeta_{\mcS_{\mcT_h}}\le 0\right)-I\left(Y_{ki}-\bZ_{ki}^{\rmT}\wh\bbeta_{\mcT_h,{\rm ora}}\le 0\right)\right\}\right\vert\\
&&>m_0B^2r^{1/2-\eta}\left\vert\bbeta_{\mcS_{\mcT_h}}\right\vert_1\Bigg)\\
&\le &\frac{2r^{2\eta}}{m_0B}.
\end{eqnarray*}
Besides,
Lemma \ref{lemmaA6} implies that (\ref{mat2}) holds with probability at least $1-12p^{-1}-2p^{-2}$.
Denote $S_j(\bbeta;\mcT)$
as the $j$-th component of $\bS(\bbeta;\mcT)$ for some index set $\mcT\subset\{1,\ldots,K\}$.
As a result, we can always find a sufficient small $r>0$ such that
\begin{eqnarray*}
&&\left\{L(\bbeta;\{0\}\cup\mcT_h)+\lambda_{\bbeta}\vert\bbeta\vert_1\right\}-\left\{L\left(\left(\bbeta_{\mcS_{\mcT_h}}^\rmT,{\bf 0}^\rmT\right)^\rmT;\{0\}\cup\mcT_h\right)+\lambda_{\bbeta}\left\vert\bbeta_{\mcS_{\mcT_h}}\right\vert_1\right\}\\
&>&\sum_{j \in\mcS_{\mcT_h}^c}\left( S_j\left(\wh\bbeta_{\mcT_h,{\rm ora}};\{0\}\cup\mcT_h\right)+\left(\lambda_{\bbeta}-m_0B^2r^{1/2-\eta}-\frac{Km_0B^2r^{1/2-\eta}}{\sqrt{n_{\mcT_h}+n_0}}\right){\rm sign}(\beta_j)\right)\beta_j
\\&>&0,
\end{eqnarray*}
which holds with probability at least $1-12p^{-1}-2p^{-2}-8KMr^{\eta}/\left(m_0^2B(1-r^{1/2+\eta})^2\right)-2r^{2\eta}/(m_0B)$.
Furthermore, for any $\bbeta$ with $\bbeta_{\mcS_{\mcT_h}^c}\neq {\bf 0}$, there exists a vector
$\bbeta^\ddag=s\bbeta+(1-s)\wh\bbeta_{\mcT_h,{\rm ora}}$ with $\bbeta^\ddag\in \mathcal{B}\left(\wh\bbeta_{\mcT_h,{\rm ora}},r\right)$.
The convexity of the object function in (\ref{op_trans}) entails that
\begin{eqnarray*}
&&s\left(L(\bbeta;\{0\}\cup\mcT_h)+\lambda_{\bbeta}\vert\bbeta\vert_1\right)+(1-s)\left(L\left(\wh\bbeta_{\mcT_h,{\rm ora}};\{0\}\cup\mcT_h\right)+\lambda_{\bbeta}\left\vert\wh\bbeta_{\mcT_h,{\rm ora}}\right\vert_1\right)
\\&\ge& L\left(\bbeta^\ddag;\{0\}\cup\mcT_h\right)+\lambda_{\bbeta}\left\vert\bbeta^\ddag\right\vert_1.
\end{eqnarray*}
The above inequality implies that $\bbeta$ is not the solution of (\ref{op_trans}) since
\begin{eqnarray*}
&&L(\bbeta;\{0\}\cup\mcT_h)+\lambda_{\bbeta}\vert\bbeta\vert_1\\&\ge&\frac1s\left\{ L\left(\bbeta^\ddag;\{0\}\cup\mcT_h\right)+\lambda_{\bbeta}\left\vert\bbeta^\ddag\right\vert_1-(1-s)\left(L\left(\wh\bbeta_{\mcT_h,{\rm ora}};\{0\}\cup\mcT_h\right)+\lambda_{\bbeta}\left\vert\wh\bbeta_{\mcT_h,{\rm ora}}\right\vert_1\right)\right\}\\
&\ge& L(\bbeta^\ddag;\{0\}\cup\mcT_h)+\lambda_{\bbeta}\vert\bbeta^\ddag\vert_1>
L\left(\wh\bbeta_{\mcT_h,{\rm ora}};\{0\}\cup\mcT_h\right)+\lambda_{\bbeta}\left\vert\wh\bbeta_{\mcT_h,{\rm ora}}\right\vert_1,
\end{eqnarray*}
which concludes the proof,  implying that all the solutions of (\ref{op_trans}) satisfies
$\wh\bbeta_{\mcT_h,\mcS_{\mcT_h}^c}={\bf 0}$ with probability at least $1-12p^{-1}-2p^{-2}$.
\end{proof}

The oracle estimation of the refined  step can be defined as
\begin{eqnarray}
\label{op_oracle_deb}
\wh{\bdelta}_{\mcT_h,{\rm ora}}\in\arg\min_{\bdelta:\bdelta_{\bar{\mcS}^c}={\bf 0}}
L\left(\wh\bbeta_{\mcT_h,{\rm ora}}+\bdelta;\{0\}\right)+\lambda_{\bdelta}\vert\bdelta\vert_1.
\end{eqnarray}

\begin{lemma}
\label{lemmaA8}
Under assumptions \ref{C1}--\ref{C3}, \ref{C5}, and \ref{C6},
with $$ a_n=4h',\ \mcS=\mcS_{\mcT_h} $$
and $$ a_n=0,\ \mcS=\bar{\mcS} $$
in assumption \ref{C3},
if
$$ \lambda_{\bbeta}\ge \left\{\frac{8D}{\gamma}\sqrt{\frac{s'}{\log p}}+ \frac{32\sqrt{2}}{\gamma}\right\}B\sqrt{\frac{\log p}{n_{\mcT_h}+n_0}}$$
and
$$ \lambda_{\bdelta}\ge \left\{\frac{6\wt D}{\gamma}\sqrt{\frac{s_0+s'}{\log p}}+ \frac{24\sqrt{2}}{\gamma}\right\}B\sqrt{\frac{\log p}{n_0}}, $$
where the positive constant $\wt D$ is determined by (\ref{determineD_deb}),
$$ \sqrt{s'\lambda_{\bbeta}}\to0,\ \frac{s'\log p}{\lambda_{\bbeta}\left(n_{\mcT_h}+n_0\right)}\to 0,\ \sqrt{(s'+s_0)\lambda_{\bdelta}}\to0, \frac{(s'+s_0)\log p}{\lambda_{\bdelta}n_0}\to 0,
\ {\rm and}\ h'<\frac{\gamma\lambda_{\bbeta}}{8m_0B^2},$$
then
with probability more than $1-24p^{-1}-4p^{-2}$, $\wh\bbeta_{\mcT_h,{\rm ora}}$ is a solution of problem (\ref{op_debias}).
\end{lemma}

\begin{proof}
According to the Karush--Kuhn--Tucker condition of (\ref{op_debias}), we have
\begin{eqnarray}
\label{KKT_debias}
\bS\left(\wh\bbeta_{\mcT_h,{\rm ora}}+\wh\bdelta_{\mcT_h};\{0\}\right)+\lambda_{\bdelta}{\rm \bf  sign}\left(\wh\bdelta_{\mcT_h}\right)={\bf 0}.
\end{eqnarray}
Thus, $\wh\bdelta_{\mcT_h,{\rm ora}}$ is the solution of (\ref{op_debias}) if and only if it satisfies (\ref{KKT_debias}), which can be written as
\begin{eqnarray}
\label{op_oracle1}
\bS\left(\wh\bbeta_{\mcT_h,{\rm ora}}+\wh\bdelta_{\mcT_h,{\rm ora}};\{0\}\right)-\bS\left(\bbeta_0;\{0\}\right)+\bS\left(\bbeta_0;\{0\}\right)+\lambda_{\bdelta}{\rm \bf  sign}\left(\wh\bdelta_{\mcT_h,{\rm ora}}\right)={\bf 0}.
\end{eqnarray}
In the following, we denote
\begin{eqnarray*}
\bR_3&=&E\left[\bZ_{0}\left\{I\left(Y_{0}-\bZ_{0}^\rmT\bbeta\le 0\right)-I\left(Y_{0}-\bZ_{0}^\rmT\bbeta_0\le 0\right)\right\}\right]\bigg\vert_{\bbeta=\wh\bbeta_{\mcT_h,{\rm ora}}+\wh\bdelta_{\mcT_h,\rm ora}}\\
&&-\bH^{*}_0\left(\wh\bbeta_{\mcT_h,{\rm ora}}+\wh\bdelta_{\mcT_h,\rm ora}-\bbeta_0\right)
\end{eqnarray*}
and
\begin{eqnarray*}
\bR_4&=&\frac1{n_0}\sum_{i=1}^{n_0}\bZ_{0i}\left\{I\left(Y_{0i}-\bZ_{0i}^\rmT\left(\wh\bbeta_{\mcT_h,{\rm ora}}+\wh\bdelta_{\mcT_h,\rm ora}\right)\le 0\right)-I\left(Y_{0i}-\bZ_{0i}^\rmT\bbeta_0\le 0\right)\right\}\\
&&-E\left[\bZ_0\left\{I\left(Y_{0}-\bZ_{0}^\rmT\bbeta\le 0\right)-I\left(Y_{0}-\bZ_{0}^\rmT\bbeta_0\le 0\right)\right\}\right]\bigg\vert_{\bbeta=\wh\bbeta_{\mcT_h,{\rm ora}}+\wh\bdelta_{\mcT_h,\rm ora}}.
\end{eqnarray*}
As a result, (\ref{op_oracle1}) can be rewritten as
\begin{eqnarray*}
\bH^{*}_0\left(\wh\bbeta_{\mcT_h,{\rm ora}}+\wh\bdelta_{\mcT_h,\rm ora}-\bbeta_0\right)
+\bR_3+\bR_4+\bS(\bbeta_0;\{0\})+\lambda_{\bdelta}{\rm \bf  sign}\left(\wh\bdelta_{\mcT_h,\rm ora}\right)={\bf 0},
\end{eqnarray*}
which is further exhibited in the form of partitioned matrices as
\begin{eqnarray*}
\label{fenkuai}
&&\left(
\begin{matrix}
\bH^{*}_{0,\bar\mcS\bar\mcS}   & \bH^{*}_{0,\bar\mcS\bar{\mcS}^c}\\
\bH^{*}_{0,\bar{\mcS}^c\bar\mcS} & \bH^{*}_{0,\bar{\mcS}^c\bar{\mcS}^c}
\end{matrix}
\right)
\left(
\begin{matrix}
\wh\bbeta_{\mcT_h,{\rm ora},\bar\mcS}+\wh\bdelta_{\mcT_h,\rm ora,\bar\mcS}-\bbeta_{0,\bar{\mcS}}\\
{\bf 0}
\end{matrix}
\right)
+\left(
\begin{matrix}
\bR_{3\bar\mcS}+\bR_{4\bar\mcS}\\
\bR_{3\bar{\mcS}^c}+\bR_{4\bar{\mcS}^c}
\end{matrix}
\right)
+\left(
\begin{matrix}
\bS_{\bar{\mcS}}(\bbeta_0;\{0\})\\
\bS_{\bar{\mcS}^c}(\bbeta_0;\{0\})
\end{matrix}
\right)
\\&=&-\lambda_{\bdelta}
\left(
\begin{matrix}
{\rm\bf sign}_{\bar\mcS}\left(\wh\bdelta_{\mcT_h,\rm ora}\right)\\
\wt\bz
\end{matrix}
\right),
\end{eqnarray*}
where $\wt\bz\in[-1,1]^{\left\vert\bar{\mcS}^c\right\vert}$.
Note that the first part of partition is  the Karush--Kuhn--Tucker condition of (\ref{op_oracle_deb}) that
\begin{eqnarray}
\label{mat1_deb}
\bH^{*}_{0,\bar\mcS\bar\mcS}\left(\wh\bbeta_{\mcT_h,{\rm ora},\bar\mcS}+\wh\bdelta_{\mcT_h,\rm ora,\bar\mcS}-\bbeta_{0,\bar{\mcS}}\right)
+\bR_{3\bar\mcS}+\bR_{4\bar\mcS}+\bS_{\bar\mcS}(\bbeta_0;\{0\})=-\lambda_{\bdelta}{\bf sign}_{\bar\mcS}\left(\wh\bdelta_{\mcT_h,\rm ora}\right).
\end{eqnarray}
Consequently, we only need to verify the sufficient form of the second part of partition that
\begin{eqnarray}
\label{mat2_deb}
&&\left\vert\bS_{\bar{\mcS}^c}\left(\wh\bbeta_{\mcT_h,{\rm ora}}+\wh\bdelta_{\mcT_h,\rm ora};\{0\}\right)\right\vert_\infty
\nonumber\\&=&\left\vert\bH^{*}_{0,\bar{\mcS}^c\bar\mcS}\left(\wh\bbeta_{\mcT_h,{\rm ora},\bar\mcS}+\wh\bdelta_{\mcT_h,\rm ora,\bar\mcS}-\bbeta_{0\bar{\mcS}}\right)
+\bR_{3\bar{\mcS}^c}+\bR_{4\bar{\mcS}^c}+\bS_{\bar{\mcS}^c}(\bbeta_0;\{0\})\right\vert_\infty
\nonumber\\&<&\lambda_{\bdelta}.
\end{eqnarray}
Using (\ref{mat1_deb}), (\ref{mat2_deb}) is rewritten as
\begin{eqnarray*}
&&\Bigg\vert\bH^{*}_{0,\bar{\mcS}^c\bar\mcS}\left(\bH^{*}_{0,\bar{\mcS}\bar\mcS}\right)^{-1}\left(
-\bR_{3\bar\mcS}-\bR_{4\bar\mcS}-\bS_{\bar\mcS}(\bbeta_0;\{0\})-\lambda_{\bdelta}{\bf sign}_{\bar\mcS}\left(\wh\bdelta_{\mcT_h,\rm ora}\right)\right)
\\&&+\bR_{3\bar{\mcS}^c}+\bR_{4\bar{\mcS}^c}+\bS_{\bar{\mcS}^c}(\bbeta_0;\{0\})\Bigg\vert_\infty
<\lambda_{\bdelta}.
\end{eqnarray*}
On the contrary, under assumption \ref{C6}, the probability that (\ref{mat2_deb}) does not hold can be controlled by
\begin{eqnarray*}
&&P\left(\left\vert\bH^{*}_{0,\bar{\mcS}^c\bar\mcS}\left(\wh\bbeta_{\mcT_h,{\rm ora},\bar\mcS}+\wh\bdelta_{\mcT_h,\rm ora,\bar\mcS}-\bbeta_{0,\bar{\mcS}}\right)
+\bR_{3\bar{\mcS}^c}+\bR_{4\bar{\mcS}^c}+\bS_{\bar{\mcS}^c}(\bbeta_0;\{0\})\right\vert_\infty\ge\lambda_{\bdelta}\right)\\
&\le&P\left(\left\Vert\bH^{*}_{0,\bar{\mcS}^c\bar\mcS}\left(\bH^{*}_{0,\bar{\mcS}\bar\mcS}\right)^{-1}\right\Vert_\infty
\left\vert\bR_{3\bar{\mcS}}\right\vert_\infty\ge\frac{\gamma\lambda_{\bdelta}}6\right)\\
&&+P\left(\left\Vert\bH^{*}_{0,\bar{\mcS}^c\bar\mcS}\left(\bH^{*}_{0,\bar{\mcS}\bar\mcS}\right)^{-1}\right\Vert_\infty
\left\vert\bR_{4\bar{\mcS}}\right\vert_\infty\ge\frac{\gamma\lambda_{\bdelta}}6\right)\\
&&+P\left(\left\Vert\bH^{*}_{0,\bar{\mcS}^c\bar\mcS}\left(\bH^{*}_{0,\bar{\mcS}\bar\mcS}\right)^{-1}\right\Vert_\infty
\left\vert\bS_{\bar\mcS}(\bbeta_0;\{0\})\right\vert_\infty\ge\frac{\gamma\lambda_{\bdelta}}6\right)\\
&&+P\left(\lambda_{\bdelta}\left\Vert\bH^{*}_{0,\bar{\mcS}^c\bar\mcS}\left(\bH^{*}_{0,\bar{\mcS}\bar\mcS}\right)^{-1}\right\Vert_\infty
\left\vert{\bf sign}_{\bar\mcS}\left(\wh\bdelta_{\mcT_h,\rm ora}\right)\right\vert_\infty\ge (1-\gamma)\lambda_{\bdelta}\right)\\
&&+P\left(
\left\vert\bR_{3\bar{\mcS}^c}\right\vert_\infty\ge \frac{\gamma\lambda_{\bdelta}}6\right)\\
&&+P\left(
\left\vert\bR_{4\bar{\mcS}^c}\right\vert_\infty\ge \frac{\gamma\lambda_{\bdelta}}6\right)\\
&&+P\left(
\left\vert\bS_{\bar{\mcS}^c}(\bbeta_0;\{0\})\right\vert_\infty\ge \frac{\gamma\lambda_{\bdelta}}6\right)\\
&=&(I_{1})+(I_{2})+(I_{3})+(I_{4})+(I_{5})+(I_{6})+(I_{7}),
\end{eqnarray*}
where terms $(I_{1})$--$(I_{7})$ are self-defined above
and will be expired when
the proof of this lemma is completed.
Note that
\begin{eqnarray*}
(I_{4})&=&P\left(\lambda_{\bdelta}\left\Vert\bH^{*}_{0,\bar{\mcS}^c\bar\mcS}\left(\bH^{*}_{0,\bar{\mcS}\bar\mcS}\right)^{-1}\right\Vert_\infty
\left\vert{\bf sign}_{\bar\mcS}\left(\wh\bdelta_{\mcT_h,\rm ora}\right)\right\vert_\infty\ge (1-\gamma)\lambda_{\bdelta}\right)=0.
\end{eqnarray*}
Moreover, according to (\ref{S0}), we obtain
\begin{eqnarray*}
(I_{3})&=&P\left(\left\Vert\bH^{*}_{0,\bar{\mcS}^c\bar\mcS}\left(\bH^{*}_{0,\bar{\mcS}\bar\mcS}\right)^{-1}\right\Vert_\infty
\left\vert\bS_{\bar\mcS}(\bbeta_0;\{0\})\right\vert_\infty\ge\frac{\gamma\lambda_{\bdelta}}6\right)\\
&\le&P\left(
\left\vert\bS_{\bar\mcS}(\bbeta_0;\{0\})\right\vert_\infty\ge\frac{\gamma\lambda_{\bdelta}}{6(1-\gamma)}\right)\\
&\le&2p\exp\left(-\frac{n_0\gamma^2\lambda_{\bdelta}^2}{72B^2(1-\gamma)^2}\right)\le 2p^{-1}
\end{eqnarray*}
and
\begin{eqnarray*}
(I_{7})=P\left(
\left\vert\bS_{\bar{\mcS}^c}(\bbeta_0;\{0\})\right\vert_\infty\ge \frac{\gamma\lambda_{\bdelta}}6\right)
\le 2p\exp\left(-\frac{n_0\gamma^2\lambda_{\bdelta}^2}{72B^2}\right)\le 2p^{-1}.
\end{eqnarray*}
On the other hand, we have
\begin{eqnarray*}
&&P\left(\vert\bR_3\vert_\infty\ge t\right)\\
&=&P\Bigg(\Bigg\vert E\left[\bZ_{0}\left\{I\left(Y_{0}-\bZ_{0}^\rmT\bbeta\le 0\right)-I\left(Y_{0}-\bZ_{0}^\rmT\bbeta_0\le 0\right)\right\}\right]\bigg\vert_{\bbeta=\wh\bbeta_{\mcT_h,{\rm ora}}+\wh\bdelta_{\mcT_h,\rm ora}}\\
&&\hspace{1in}-\bH^{*}_0\left(\wh\bbeta_{\mcT_h,{\rm ora}}+\wh\bdelta_{\mcT_h,\rm ora}-\bbeta_0\right)\Bigg\vert_\infty\ge t\Bigg)\\
&=&P\Bigg(\Bigg\vert E\left[\bZ_{0}\bZ_{0}^\rmT\int^1_0f_0
\left(\bZ_{0}^\rmT\bbeta_0+t\bZ_{0}^\rmT(\bbeta-\bbeta_0)\mid\bZ_{0}\right)\rmd t\right]\bigg\vert_{\bbeta=\wh\bbeta_{\mcT_h,{\rm ora}}+\wh\bdelta_{\mcT_h,\rm ora}}\left(\wh\bbeta_{\mcT_h,{\rm ora}}+\wh\bdelta_{\mcT_h,\rm ora}-\bbeta_0\right)\\
&&\hspace{1in}-\bH^{*}_0\left(\wh\bbeta_{\mcT_h,{\rm ora}}+\wh\bdelta_{\mcT_h,\rm ora}-\bbeta_0\right)\Bigg\vert_\infty\ge t\Bigg)\\
&=&P\Bigg(\Bigg\vert E\Bigg[\bZ_{0}\bZ_{0}^\rmT\int^1_0\left\{f_0
\left(\bZ_{0}^\rmT\bbeta_0\!+\!t\bZ_{0}^\rmT(\bbeta\!-\!\bbeta_0)\mid \bZ_{0}\right)\!-\!f_0
\left(\bZ_{0}^\rmT\bbeta_0\mid \bZ_{0}\right)\right\}\rmd t\Bigg]\bigg\vert_{\bbeta=\wh\bbeta_{\mcT_h,{\rm ora}}+\wh\bdelta_{\mcT_h,\rm ora}}\\
&&\left(\wh\bbeta_{\mcT_h,{\rm ora}}+\wh\bdelta_{\mcT_h,\rm ora}-\bbeta_0\right)\Bigg\vert_\infty
\ge t\Bigg)\\
&\le&P\Bigg(\Bigg\vert m_0B\left\vert\wh\bbeta_{\mcT_h,{\rm ora}}+\wh\bdelta_{\mcT_h,\rm ora}-\bbeta_0\right\vert_1E\left[\bZ_{0}\bZ_{0}^\rmT
\right]\left(\wh\bbeta_{\mcT_h,{\rm ora}}+\wh\bdelta_{\mcT_h,\rm ora}-\bbeta_0\right)\Bigg\vert_\infty
\ge t\Bigg)\\
&\le&P\Bigg(m_0B^3\left\vert\wh\bbeta_{\mcT_h,{\rm ora}}+\wh\bdelta_{\mcT_h,\rm ora}-\bbeta_0\right\vert^2_1
\ge t\Bigg)
\end{eqnarray*}
for any $t>0$. Thus, it holds that
\begin{eqnarray*}
(I_{1})\le P\left(\left\vert\bR_3\right\vert_\infty\ge\frac{\gamma\lambda_{\bdelta}}{6(1-\gamma)}\right)
\le P\left(\left\vert\wh\bbeta_{\mcT_h,{\rm ora}}+\wh\bdelta_{\mcT_h,\rm ora}-\bbeta_0\right\vert^2_1\ge\frac{\gamma\lambda_{\bdelta}}{6m_0B^3(1-\gamma)}\right)
\end{eqnarray*}
and
\begin{eqnarray*}
(I_{5})\le P\left(\left\vert\bR_3\right\vert_\infty\ge\frac{\gamma\lambda_{\bdelta}}{6}\right)\le
P\left(\left\vert\wh\bbeta_{\mcT_h,{\rm ora}}+\wh\bdelta_{\mcT_h,\rm ora}-\bbeta_0\right\vert^2_1
\ge\frac{\gamma\lambda_{\bdelta}}{6m_0B^3}\right).
\end{eqnarray*}
Lemma \ref{lemmaA5} indicates that
\begin{eqnarray*}
\left\vert\bS(\bbeta_0;\{0\})\right\vert_\infty
\le 2B\sqrt{\frac{\log p}{n_0}}
\end{eqnarray*}
with probability more than $1-2p^{-1}$.
Therefore, when $\lambda_{\bdelta}\ge 4B\sqrt{\log p/n_0}$, it holds that
\begin{eqnarray}
\label{Hessian_deb}
&&\delta L\left(\wh\bbeta_{\mcT_h,{\rm ora}}+\wh\bdelta_{\mcT_h,\rm ora}-\bbeta_0;\{0\}\right)\nonumber\\&\le&L\left(\wh\bbeta_{\mcT_h,{\rm ora}}+\wh\bdelta_{\mcT_h,\rm ora};\{0\}\right)- L(\bbeta_0;\{0\})- \bS(\bbeta_0;\{0\})^\rmT\left(\wh\bbeta_{\mcT_h,{\rm ora}}+\wh\bdelta_{\mcT_h,\rm ora}-\bbeta_0\right)\nonumber\\
&\le& -\lambda_{\bdelta}\left\vert\wh\bdelta_{\mcT_h,\rm ora}\right\vert_1
+\lambda_{\bdelta}\left\vert\bbeta_0-\wh\bbeta_{\mcT_h,{\rm ora}}\right\vert_1+\frac12\lambda_{\bdelta}\left\vert\wh\bbeta_{\mcT_h,{\rm ora}}+\wh\bdelta_{\mcT_h,\rm ora}-\bbeta_0\right\vert_1\nonumber\\
&\le&\frac32\lambda_{\bdelta}\left\vert\wh\bbeta_{\mcT_h,{\rm ora}}+\wh\bdelta_{\mcT_h,\rm ora}-\bbeta_0\right\vert_1
\end{eqnarray}
with probability more than $1-2p^{-1}$.
Since $\wh\bbeta_{\mcT_h,{\rm ora}}+\wh\bdelta_{\mcT_h,\rm ora}-\bbeta_0\in\mathcal{D}(0,\bar{\mcS})$, then according to Lemma \ref{lemmaA4}, it yields with probability more than $1-p^{-2}$ that
\begin{eqnarray*}
&&\delta\wh L^{(0)}\left(\wh\bdelta_{\mcT_h,\rm ora}, \wh\bbeta_{\mcT_h,{\rm ora}}\right)
\\&\ge& \min\left\{\frac{\kappa_0}4\left\vert\wh\bbeta_{\mcT_h,{\rm ora}}+\wh\bdelta_{\mcT_h,\rm ora}-\bbeta^{*}\right\vert_2^2,\frac{3\kappa_0^3}{16m_0}\left\vert\wh\bbeta_{\mcT_h,{\rm ora}}+\wh\bdelta_{\mcT_h,\rm ora}-\bbeta^{*}\right\vert_2\right\}
\\&&-B\sqrt{\frac{\log p}{n_0}}\left\vert\wh\bbeta_{\mcT_h,{\rm ora}}+\wh\bdelta_{\mcT_h,\rm ora}-\bbeta^{*}\right\vert_1,
\end{eqnarray*}
which implies that
\begin{eqnarray*}
\left\vert\wh\bbeta_{\mcT_h,{\rm ora}}+\wh\bdelta_{\mcT_h,\rm ora}-\bbeta_0\right\vert_2\le \frac{4B}{\kappa_0}\sqrt{\frac{(s_0+s')\log p}{n_0}}+\frac6{\kappa_0}\lambda_{\bdelta}\sqrt{s'+s_0}
\end{eqnarray*}
with probability more than $1-2p^{-1}-p^{-2}$ by combining (\ref{Hessian_deb}) and mimicking the proof of Theorem \ref{theorem2.1}.
For sufficiently large $n_{\mcT_h},n_0$ and $p$, it holds that
$$ \frac{4B}{\kappa_0}\sqrt{\frac{(s_0+s')\log p}{n_0}}+\frac6{\kappa_0}\lambda_{\bdelta}\sqrt{s'+s_0}\le\sqrt{\frac{\gamma\lambda_{\bdelta}}{6m_0B^3}}. $$
As a result, we have
$$ (I_{1})\le 2p^{-1}+p^{-2}\ {\rm and}\ (I_{5})\le 2p^{-1}+p^{-2}. $$
Denote $\wt{\mathcal{F}}_j=\{\wt f_{\bdelta,j}\mid \bdelta_{\bar{\mcS}^c}={\bf 0}\}$,
where
\begin{eqnarray*}
\wt f_{\bdelta,j}(Y,\bZ)&=&Z_j\left\{I\left(Y-\bZ^\rmT\left(\wh\bbeta_{\mcT_h}+\bdelta\right)\le 0\right)-I\left(Y-\bZ^\rmT\bbeta_0\le 0\right)\right\}\\
&&-E\left(Z_j\left\{I\left(Y-\bZ^\rmT\left(\wh\bbeta_{\mcT_h}+\bdelta\right)\le 0\right)-I(Y-\bZ^\rmT\bbeta_0\le 0)\right\}\right).
\end{eqnarray*}
Since the Vapnik--Chervonenkis dimension of $\wt{\mathcal{F}}_j$ is less than $s_0+s'$, then Lemma 19.15 in 
\citet{van1998asymptotic}
entails that
$$ \sup_Q N\left(\epsilon B,\wt{\mathcal{F}}_j,L_2(Q)\right)\le \wt M(s_0+s')(16e)^{s_0+s'}\epsilon^{1-2s_0-2s'} $$
for some positive constant $\wt M>0$.
Thus, Lemma 19.35 in 
\citet{van1998asymptotic}
implies that
\begin{eqnarray}
\label{determineD_deb}
&&E\left\{\sup_{\bdelta: \bdelta_{\bar{\mcS}^c}={\bf 0}_{\vert\bar{\mcS}^c\vert}}
\frac1{n_0}\sum_{i=1}^{n_0}\wt f_{\bdelta,j}(Y_{0i},\bZ_{0i})\right\}\nonumber\\
&\le&n_0^{-1/2} BJ(1,\wt{\mathcal{F}}_j,L_2)\nonumber
\\&=& \frac{B}{\sqrt{n_0}}\int^1_0 \sqrt{\log\sup_Q N(\epsilon B,\wt{\mathcal{F}}_j,L_2(Q))}\rmd \epsilon\nonumber\\
&\le&\wt DB\sqrt{s_0+s'}n_0^{-1/2}
\end{eqnarray}
for some constant $\wt D>0$.
Employing Theorem 3.26 in 
\citet{wainwright2019high}, it holds that
\begin{eqnarray*}
&&P(\vert\bR_4\vert_\infty\ge \wt DB\sqrt{s_0+s'}n_0^{-1/2}+t)\\&\le&
\sum_{j=1}^pP\Bigg(\Bigg\vert\frac1{n_0}\sup_{\bdelta: \bdelta_{\bar{\mcS}^c}={\bf 0}}\sum_{i=1}^{n_0}\wt f_{\bdelta,j}(Y_{0i},\bZ_{0i})\Bigg\vert
\ge n_0^{-1/2}\wt DB\sqrt{s_0+s'}+t\Bigg)\\
&\le&2p\exp\left(-\frac{n_0t^2}{16B^2}\right).
\end{eqnarray*}
By taking $t=4B\sqrt{2\log p/n_0}$, we obtain
$$ \vert\bR_2\vert_\infty< \wt DB\sqrt{\frac{s_0+s'}{n_0}}+ 4B\sqrt{\frac{2\log p}{n_0}} $$
with probability more than $1-2p^{-1}$.

As a result, when
$$ \lambda_{\bdelta}\ge \frac{6\wt DB}{\gamma}\sqrt{\frac{s_0+s'}{n_0}}+ \frac{24B}{\gamma}\sqrt{\frac{2\log p}{n_0}} , $$
we have
\begin{eqnarray*}
(I_{2})=P\left(\left\Vert\bH^{*}_{0,\bar{\mcS}^c\bar{\mcS}}\left(\bH^{*}_{0,\bar{\mcS}\bar{\mcS}}\right)^{-1}\right\Vert_\infty
\left\vert\bR_{4\bar{\mcS}}\right\vert_\infty\ge\frac{\gamma\lambda_{\bdelta}}6\right)
\le P\left(
\left\vert\bR_{4\bar{\mcS}}\right\vert_\infty\ge\frac{\gamma\lambda_{\bdelta}}{6(1-\gamma)}\right)\le 2p^{-1}
\end{eqnarray*}
and
\begin{eqnarray*}
(I_{6})=P\left(
\left\vert\bR_{4\bar{\mcS}^c}\right\vert_\infty\ge \frac{\gamma\lambda_{\bdelta}}6\right)
&\le&2p^{-1}.
\end{eqnarray*}
Combining the pieces, we conclude that (\ref{mat2_deb}) holds with probability more than $1-24p^{-1}-4p^{-2}$,
which concludes the proof of Lemma \ref{lemmaA8}.
\end{proof}

\begin{lemma}
\label{lemmaA9}
Under assumptions \ref{C1}--\ref{C3}, \ref{C5}, and \ref{C6},
with $$ a_n=4h',\ \mcS=\mcS_{\mcT_h} $$
and $$ a_n=0,\ \mcS=\bar{\mcS} $$
in assumption \ref{C3},
if
$$ \lambda_{\bbeta}\ge \left\{\frac{8D}{\gamma}\sqrt{\frac{s'}{\log p}}+ \frac{32\sqrt{2}}{\gamma}\right\}B\sqrt{\frac{\log p}{n_{\mcT_h}+n_0}}, $$
and
$$ \lambda_{\bdelta}\ge \left\{\frac{6\wt D}{\gamma}\sqrt{\frac{s_0+s'}{\log p}}+ \frac{24\sqrt{2}}{\gamma}\right\}B\sqrt{\frac{\log p}{n_0}}, $$
where the positive constant $\wt D$ is determined by (\ref{determineD_deb}),
$$ \sqrt{s'\lambda_{\bbeta}}\to0,\ \frac{s'\log p}{\lambda_{\bbeta}\left(n_{\mcT_h}+n_0\right)}\to 0,\ \sqrt{(s'+s_0)\lambda_{\bdelta}}\to0, \frac{(s'+s_0)\log p}{\lambda_{\bdelta}n_0}\to 0,
\ {\rm and}\ h'<\frac{\gamma\lambda_{\bbeta}}{8m_0B^2},$$
then
with probability more than $1-24p^{-1}-4p^{-2}$, any solution $\wh\bdelta_{\mcT_h}$ of problem (\ref{op_debias})
satisfies $\wh\bdelta_{\mcT_h,\bar{\mcS}^c}={\bf 0}$.
\end{lemma}

\begin{proof}
From Lemma \ref{lemmaA8}, when $\wh\bbeta_{\mcT_h}$ is an oracle solution, the event that the oracle estimator $\wh\bdelta_{\mcT_h,\rm ora}$ is a solution of (\ref{op_debias})
occurs with probability more than $1-24p^{-1}-4p^{-2}$.
On this event, to demonstrate that any solution $\wh\bdelta$ of problem (\ref{op_debias})
satisfies $\wh\bdelta_{\bar{\mcS}^c}={\bf 0}$,
it is sufficient to verify that
\begin{eqnarray*}
L\left(\wh\bbeta_{\mcT_h}+\bdelta;\{0\}\right)+\lambda_{\bdelta}\vert\bdelta\vert_1> L\left(\wh\bbeta_{\mcT_h}+\wh\bdelta_{\mcT_h,\rm ora};\{0\}\right)+\lambda_{\bdelta}\left\vert\wh\bdelta_{\mcT_h,\rm ora}\right\vert_1
\end{eqnarray*}
for any $\bdelta$ with $\bdelta_{\bar{\mcS}^c}\neq {\bf 0}$.
We first show that
\begin{eqnarray*}
L\left(\wh\bbeta_{\mcT_h}+\bdelta;\{0\}\right)+\lambda_{\bdelta}\vert\bdelta\vert_1> L\left(\wh\bbeta_{\mcT_h}+\left(\bdelta_{\bar{\mcS}}^\rmT,{\bf 0}^\rmT\right)^\rmT;\{0\}\right)+\lambda_{\bdelta}\left\vert\bdelta_{\bar{\mcS}}\right\vert_1
\end{eqnarray*}
for any $\bdelta\in \mathcal{B}\left(\wh\bdelta_{\mcT_h,\rm ora},r\right)$ with $\bdelta_{\bar{\mcS}^c}\neq {\bf 0}$.
The Knight equation entails that
\begin{eqnarray*}
&&L\left(\wh\bbeta_{\mcT_h}+\bdelta\right)\\
&=&L\left(\wh\bbeta_{\mcT_h}+\wh\bdelta_{\mcT_h,\rm ora};\{0\}\right)+\bS\left(\wh\bbeta_{\mcT_h}+\wh\bdelta_{\mcT_h,\rm ora};\{0\}\right)^\rmT\left(\bdelta-\wh\bdelta_{\mcT_h,\rm ora}\right)\\
&&+\frac1{n_0}\sum_{i=1}^{n_0}\int^{\mathbf{Z}_{0i}^\rmT\left(\bdelta-\wh\bdelta_{\mcT_h,\rm ora}\right)}_0\left\{I\left(Y_{0i}-\bZ_{0i}^\rmT\left(\wh\bbeta_{\mcT_h}+\wh\bdelta_{\mcT_h,\rm ora}\right)\le z\right)-I\left(Y_{0i}-\bZ_{0i}^\rmT\left(\wh\bbeta_{\mcT_h}+\wh\bdelta_{\mcT_h,\rm ora}\right)\le 0\right)\right\}\rmd z
\end{eqnarray*}
and
\begin{eqnarray*}
&&L\left(\wh\bbeta_{\mcT_h}+\left(\bdelta_{\bar{\mcS}}^\rmT,{\bf 0}^\rmT\right)^\rmT;\{0\}\right)\\
&=&L\left(\wh\bbeta_{\mcT_h}+\wh\bdelta_{\mcT_h,\rm ora};\{0\}\right)+\bS\left(\wh\bbeta_{\mcT_h}+\wh\bdelta_{\mcT_h,\rm ora};\{0\}\right)^\rmT\left(\left(\bdelta_{\bar{\mcS}}^\rmT,{\bf 0}_{\vert\bar{\mcS}^c\vert}^\rmT\right)^\rmT-\wh\bdelta_{\mcT_h,\rm ora}\right)\\
&&+\frac1{n_0}\sum_{i=1}^{n_0}\!\!\int^{\mathbf{Z}_{0i}^\rmT\left(\left(\bdelta_{\bar{\mcS}}^\rmT,{\bf 0}^\rmT\right)^\rmT-\wh\bdelta_{\mcT_h,\rm ora}\right)}_0\!\!\left\{I\left(Y_{0i}\!-\!\bZ_{0i}^\rmT\left(\wh\bbeta_{\mcT_h}\!+\!\wh\bdelta_{\mcT_h,\rm ora}\right)\le z\right)\!-\!I\left(Y_{0i}\!-\!\bZ_{0i}^\rmT\left(\wh\bbeta_{\mcT_h}\!+\!\wh\bdelta_{\mcT_h,\rm ora}\right)\le 0\right)\right\}\rmd z.
\end{eqnarray*}
Thus, we arrive at
\begin{eqnarray*}
&&\left\{L\left(\wh\bbeta_{\mcT_h}+\bdelta;\{0\}\right)+\lambda_{\bdelta}\vert\bdelta\vert_1\right\}-\left\{ L\left(\wh\bbeta_{\mcT_h}+\left(\bdelta_{\bar{\mcS}}^\rmT,{\bf 0}^\rmT\right)^\rmT;\{0\}\right)+\lambda_{\bdelta}\left\vert\bdelta_{\bar{\mcS}}\right\vert_1\right\}\\
&=&\bS_{\bar\mcS^c}\left(\wh\bbeta_{\mcT_h}+\wh\bdelta_{\mcT_h,\rm ora};\{0\}\right)^\rmT\bdelta_{\bar\mcS^c}+
\frac1{n_0}\sum_{i=1}^{n_0}\\
&&\int_{\mathbf{Z}_{0i}^\rmT\left(\left(\bdelta_{\bar{\mcS}}^\rmT,{\bf 0}^\rmT\right)^\rmT-\wh\bdelta_{\mcT_h,\rm ora}\right)}^{\mathbf{Z}_{0i}^\rmT(\bdelta-\wh\bdelta_{\mcT_h,\rm ora})}\left\{I\left(Y_{0i}-\bZ_{0i}^\rmT\left(\wh\bbeta_{\mcT_h}+\wh\bdelta_{\mcT_h,\rm ora}\right)\le z\right)-I\left(Y_{0i}-\bZ_{0i}^\rmT\left(\wh\bbeta_{\mcT_h}+\wh\bdelta_{\mcT_h,\rm ora}\right)\le 0\right)\right\}\rmd z.
\end{eqnarray*}
Note that
\begin{eqnarray*}
&&\int_{\mathbf{Z}_{0i}^\rmT\left(\left(\bdelta_{\bar{\mcS}}^\rmT,{\bf 0}^\rmT\right)^\rmT-\wh\bdelta_{\mcT_h,\rm ora}\right)}^{\mathbf{Z}_{0i}^\rmT\left(\bdelta-\wh\bdelta_{\mcT_h,\rm ora}\right)}\left\{I\left(Y_{0i}-\bZ_{0i}^\rmT\left(\wh\bbeta_{\mcT_h}+\wh\bdelta_{\mcT_h,\rm ora}\right)\le z\right)-I\left(Y_{0i}-\bZ_{0i}^\rmT\left(\wh\bbeta_{\mcT_h}+\wh\bdelta_{\mcT_h,\rm ora}\right)\le 0\right)\right\}\rmd z\\
&=&\left(\bZ_{0i}^\rmT\left(\wh\bbeta_{\mcT_h}+\bdelta\right)-Y_{0i}\right)I\left(Y_{0i}-\bZ_{0i}^\rmT\left(\wh\bbeta_{\mcT_h}+\bdelta\right)\le0\right)
\\&&-\left(\bZ^{\rmT}_{0i}\wh\bbeta_{\mcT_h}+\bZ^{\rmT}_{0i,\bar{\mcS}}\bdelta_{\bar{\mcS}}-Y_{0i}\right)
I\left(Y_{0i}-\bZ^{\rmT}_{0i}\wh\bbeta_{\mcT_h}-\bZ^{\rmT}_{0i,\bar{\mcS}}\bdelta_{\bar{\mcS}}\le0\right)\\
&&-\bZ^{\rmT}_{0i,\bar{\mcS}^c}\bdelta_{\bar{\mcS}^c}I\left(Y_{0i}-\bZ_{0i}^\rmT\left(\wh\bbeta_{\mcT_h}+\wh\bdelta_{\mcT_h,\rm ora}\right)\le 0\right)\\
&=&\int^{\mathbf{Z}_{0i,\bar{\mcS}^c}^{\rmT}\bdelta_{\bar{\mcS}^c}}_0I\left(Y_{0i}-\bZ^{\rmT}_{0i}\wh\bbeta_{\mcT_h}-\bZ^{\rmT}_{0i,\bar{\mcS}}\bdelta_{\bar{\mcS}}\le z\right)\rmd z
-\bZ^{\rmT}_{0i,\bar{\mcS}^c}\bdelta_{\bar{\mcS}^c}I\left(Y_{0i}-\bZ_{0i}^\rmT\left(\wh\bbeta_{\mcT_h}+\wh\bdelta_{\mcT_h,\rm ora}\right)\le 0\right)\\
&=&\int^{\mathbf{Z}_{0i,\bar{\mcS}^c}^{\rmT}\bdelta_{\bar{\mcS}^c}}_0\left\{I\left(Y_{0i}-\bZ^{\rmT}_{0i}\wh\bbeta_{\mcT_h}-\bZ^{\rmT}_{0i,\bar{\mcS}}\bdelta_{\bar{\mcS}}\le z\right)-I\left(Y_{0i}-\bZ^{\rmT}_{0i}\wh\bbeta_{\mcT_h}-\bZ^{\rmT}_{0i,\bar{\mcS}}\bdelta_{\bar{\mcS}}\le 0\right)\right\}\rmd z\\&&
+\bZ^{\rmT}_{0i,\bar{\mcS}^c}\bdelta_{\bar{\mcS}^c}\left\{I\left(Y_{0i}-\bZ^{\rmT}_{0i}\wh\bbeta_{\mcT_h}-\bZ^{\rmT}_{0i,\bar{\mcS}}\bdelta_{\bar{\mcS}}\le 0\right)-I\left(Y_{0i}-\bZ_{0i}^\rmT\left(\wh\bbeta_{\mcT_h}+\wh\bdelta_{\mcT_h,\rm ora}\right)\le 0\right)\right\}.
\end{eqnarray*}
Since
$$ \int^{\mathbf{Z}_{0i,\bar{\mcS}^c}^{\rmT}\bdelta_{\bar{\mcS}^c}}_0\left\{I\left(Y_{0i}-\bZ^{\rmT}_{0i}\wh\bbeta_{\mcT_h}-\bZ^{\rmT}_{0i,\bar{\mcS}}\bdelta_{\bar{\mcS}}\le z\right)-I\left(Y_{0i}-\bZ^{\rmT}_{0i}\wh\bbeta_{\mcT_h}-\bZ^{\rmT}_{0i,\bar{\mcS}}\bdelta_{\bar{\mcS}}\le 0\right)\right\}\rmd z\ge 0, $$
Theorem 1.2 in 
\citet{zuijlen1978properties} implies that
\begin{eqnarray*}
&&P\Bigg(\frac1{n_0}\!\sum_{i=1}^{n_0}\!\!\frac1{\left\vert \bdelta_{\bar{\mcS}^c}\right\vert_1}\!\!\int^{\mathbf{Z}_{0i,\bar{\mcS}^c}^{\rmT}\bdelta_{\bar{\mcS}^c}}_0\!\!\!\!\left\{I\left(Y_{0i}\!\!-\!\bZ^{\rmT}_{0i}\wh\bbeta_{\mcT_h}\!-\!\bZ^{\rmT}_{0i,\bar{\mcS}}
\bdelta_{\bar{\mcS}}\!\le\! z\right)\!\!-\!I\left(Y_{0i}\!\!-\!\bZ^{\rmT}_{0i}\wh\bbeta_{\mcT_h}\!-\!\bZ^{\rmT}_{0i,\bar{\mcS}}\bdelta_{\bar{\mcS}}\!\le\! 0\right)\right\}\rmd z\!>\!t\Bigg)\\
&\le &P\left(\wh\bbeta_{\mcT_h,\mcS_{\mcT_h}^c}\neq {\bf 0}\right)+P\Bigg(\sup_{\bbeta: \bbeta_{\mcS_{\mcT_h}^c}={\bf 0}}\sup_{\bdelta:\vert \bdelta_{\bar{\mcS}^c}\vert_1\le r}\frac1{n_{0}}\sum_{i=1}^{n_0}\frac1{\vert \bdelta_{\bar{\mcS}^c}\vert_1}\left(\int_{0}^{B\left\vert \bdelta_{\bar{\mcS}^c}\right\vert_1}+\int_{0}^{-B\left\vert \bdelta_{\bar{\mcS}^c}\right\vert_1}\right)\\
&&
\Bigg\{I\left(F_{0,\bbeta+\left(\bdelta_{\bar{\mcS}}^\rmT,{\bf 0}^\rmT\right)}\left(Y_{0i}\!\!-\!\bZ^{\rmT}_{0i}\bbeta\!-\!\bZ^{\rmT}_{0i,\bar{\mcS}}
\bdelta_{\bar{\mcS}}\right)\le F_{0,\bbeta+\left(\bdelta_{\bar{\mcS}}^\rmT,{\bf 0}^\rmT\right)}(z)\right)\\
&&-I\left(F_{0,\bbeta+\left(\bdelta_{\bar{\mcS}}^\rmT,{\bf 0}^\rmT\right)}\left(Y_{0i}\!\!-\!\bZ^{\rmT}_{0i}\bbeta\!-\!\bZ^{\rmT}_{0i,\bar{\mcS}}
\bdelta_{\bar{\mcS}}\right)\le F_{0,\bbeta+\left(\bdelta_{\bar{\mcS}}^\rmT,{\bf 0}^\rmT\right)}(0)\right)\Bigg\}\rmd z>t\Bigg)\\
&=&12p^{-1}+2p^{-2}+P\Bigg(\sup_{\bbeta: \bbeta_{\mcS_{\mcT_h}^c}={\bf 0}}\sup_{\bdelta:\vert \bdelta_{\bar{\mcS}^c}\vert_1\le r}\frac1{n_{0}}\sum_{i=1}^{n_0}\frac1{\vert \bdelta_{\bar{\mcS}^c}\vert_1}\left(\int_{0}^{B\vert \bdelta_{\bar{\mcS}^c}\vert_1}+\int_{0}^{-B\vert \bdelta_{\bar{\mcS}^c}\vert_1}\right)\\
&&\Bigg\{\wt F_{0,\bbeta+\left(\bdelta_{\bar{\mcS}}^\rmT,{\bf 0}^\rmT\right)}\left( F_{0,\bbeta+\left(\bdelta_{\bar{\mcS}}^\rmT,{\bf 0}^\rmT\right)}(z)\right)-\wt F_{0,\bbeta+\left(\bdelta_{\bar{\mcS}}^\rmT,{\bf 0}^\rmT\right)}\left( F_{0,\bbeta+\left(\bdelta_{\bar{\mcS}}^\rmT,{\bf 0}^\rmT\right)}(0)\right)\Bigg\}\rmd z>t\Bigg)\\
&=&P\Bigg(\sup_{\bbeta: \bbeta_{\mcS_{\mcT_h}^c}={\bf 0}}\sup_{\bdelta:\vert \bdelta_{\bar{\mcS}^c}\vert_1\le r}\!\frac1{\vert \bdelta_{\bar{\mcS}^c}\vert_1}\!\left(\int_{0}^{B\vert \bdelta_{\bar{\mcS}^c}\vert_1}+\int_{0}^{-B\vert \bdelta_{\bar{\mcS}^c}\vert_1}\right)\!\!\Bigg\{\left\{F_{0,\bbeta+\left(\bdelta_{\bar{\mcS}}^\rmT,{\bf 0}^\rmT\right)}(z)-F_{0,\bbeta+\left(\bdelta_{\bar{\mcS}}^\rmT,{\bf 0}^\rmT\right)}(0)\right\}\\
&&+\wh\Psi_{0,\bbeta+\left(\bdelta_{\bar{\mcS}}^\rmT,{\bf 0}^\rmT\right)}\left(F_{0,\bbeta+\left(\bdelta_{\bar{\mcS}}^\rmT,{\bf 0}^\rmT\right)}(z)\right)\!\!-\!\!\wh\Psi_{0,\bbeta+\left(\bdelta_{\bar{\mcS}}^\rmT,{\bf 0}^\rmT\right)}\left(F_{0,\bbeta+\left(\bdelta_{\bar{\mcS}}^\rmT,{\bf 0}^\rmT\right)}(0)\right)\Bigg\}\rmd z>n_0^{1/2}t\Bigg)+12p^{-1}+2p^{-2}\\
&\le&P\Bigg(m_0B^2r+\sup_{\bbeta: \bbeta_{\mcS_{\mcT_h}^c}={\bf 0}}\sup_{\bdelta:\vert \bdelta_{\bar{\mcS}^c}\vert_1\le r}\!\frac1{\vert \bdelta_{\bar{\mcS}^c}\vert_1}\!\left(\int_{0}^{B\vert \bdelta_{\bar{\mcS}^c}\vert_1}+\int_{0}^{-B\vert \bdelta_{\bar{\mcS}^c}\vert_1}\right)\\
&&\left\{\wh\Psi_{0,\bbeta+\left(\bdelta_{\bar{\mcS}}^\rmT,{\bf 0}^\rmT\right)}\left(F_{0,\bbeta+\left(\bdelta_{\bar{\mcS}}^\rmT,{\bf 0}^\rmT\right)}(z)\right)\!\!-\!\!\wh\Psi_{0,\bbeta+\left(\bdelta_{\bar{\mcS}}^\rmT,{\bf 0}^\rmT\right)}\left(F_{0,\bbeta+\left(\bdelta_{\bar{\mcS}}^\rmT,{\bf 0}^\rmT\right)}(0)\right)\right\}\rmd z>n_0^{1/2}t\Bigg)+12p^{-1}+2p^{-2}\\
&\le & P\Bigg(\sup_{\bbeta: \bbeta_{\mcS_{\mcT_h}^c}={\bf 0}}\sup_{\bdelta:\vert \bdelta_{\bar{\mcS}^c}\vert_1\le r}\sup_{z\in[-Br,Br]}\\
&&\left\vert\wh\Psi_{0,\bbeta+\left(\bdelta_{\bar{\mcS}}^\rmT,{\bf 0}^\rmT\right)}\left(F_{0,\bbeta+\left(\bdelta_{\bar{\mcS}}^\rmT,{\bf 0}^\rmT\right)}(z)\right)\!\!-\!\!\wh\Psi_{0,\bbeta+\left(\bdelta_{\bar{\mcS}}^\rmT,{\bf 0}^\rmT\right)}\left(F_{0,\bbeta+\left(\bdelta_{\bar{\mcS}}^\rmT,{\bf 0}^\rmT\right)}(0)\right)\right\vert>\frac{n_0^{1/2}t}{2B}-\frac{m_0Br}{2}\Bigg)\\&&+12p^{-1}+2p^{-2}\\
&\le & \frac{8MB^3r}{(n_0^{1/2}t-m_0B^2r)^2}+12p^{-1}+2p^{-2}
\end{eqnarray*}
for some positive constant $M$.
Taking $t=n_0^{-1/2}m_0B^2r^{1/2-\eta}$ for a sufficiently small positive constant $\eta$,
we conclude that
\begin{eqnarray*}
&&P\Bigg(\frac1{n_0}\sum_{i=1}^{n_0}\int^{\mathbf{Z}_{0i,\bar{\mcS}^c}^{\rmT}\bdelta_{\bar{\mcS}^c}}_0
\left\{I\left(Y_{0i}-\bZ^{\rmT}_{0i}\wh\bbeta_{\mcT_h}-\bZ^{\rmT}_{0i,\bar{\mcS}}
\bdelta_{\bar{\mcS}}\le z\right)-I\left(Y_{0i}-\bZ^{\rmT}_{0i}\wh\bbeta_{\mcT_h}-\bZ^{\rmT}_{0i,\bar{\mcS}}
\bdelta_{\bar{\mcS}}\le 0\right)\right\}\rmd z\\
&&>\frac{m_0B^2r^{1/2-\eta}}{\sqrt{n_0}}\vert \bdelta_{\bar{\mcS}^c}\vert_1\Bigg)\\
&\le& \frac{8Mr^{2\eta}}{m_0^2B(1-r^{1/2+\eta})^2}.
\end{eqnarray*}
On the other hand, it follows from the Markov inequality and Cauchy--Schwarz inequality that
\begin{eqnarray*}
&&P\Bigg(\left\vert\frac1{n_0}\sum_{i=1}^{n_0}\bZ^{\rmT}_{0i,\bar{\mcS}^c}\bdelta_{\bar{\mcS}^c}
\left\{I\left(Y_{0i}-\bZ^{\rmT}_{0i}\wh\bbeta_{\mcT_h}-\bZ^{\rmT}_{0i,\bar{\mcS}}\bdelta_{\bar{\mcS}}\le 0\right)-I\left(Y_{0i}-\bZ_{0i}^\rmT\left(\wh\bbeta_{\mcT_h}+\wh\bdelta_{\mcT_h,\rm ora}\right)\le 0\right)\right\}\right\vert>t\Bigg)\\
&\le &E\left\{\left(\frac1{n_0}\sum_{i=1}^{n_k}\bZ^{\rmT}_{0i,\bar{\mcS}^c}\bdelta_{\bar{\mcS}^c}
\left\{I\left(Y_{0i}-\bZ^{\rmT}_{0i}\wh\bbeta_{\mcT_h}-\bZ^{\rmT}_{0i,\bar{\mcS}}\bdelta_{\bar{\mcS}}\le 0\right)-I\left(Y_{0i}-\bZ_{0i}^\rmT\left(\wh\bbeta_{\mcT_h}+\wh\bdelta_{\mcT_h,\rm ora}\right)\le 0\right)\right\}\right)^2\right\}/t^2\\
&\le &E\Bigg(\left\{\frac1{n_0}\sum_{i=1}^{n_0}\left(\bZ^{\rmT}_{0i,\bar{\mcS}^c}\bdelta_{\bar{\mcS}^c}\right)^2\right\}\\
&&\times\frac1{n_0}\sum_{i=1}^{n_0}\left\vert I\left(Y_{0i}-\bZ^{\rmT}_{0i}\wh\bbeta_{\mcT_h}-\bZ^{\rmT}_{0i,\bar{\mcS}}\bdelta_{\bar{\mcS}}\le 0\right)-I\left(Y_{0i}-\bZ_{0i}^\rmT\left(\wh\bbeta_{\mcT_h}+\wh\bdelta_{\mcT_h,\rm ora}\right)\le 0\right)\right\vert\Bigg)/t^2.
\end{eqnarray*}
Since
\begin{eqnarray*}
&&I\left(Y_{0i}\!-\!\bZ^{\rmT}_{0i}\wh\bbeta_{\mcT_h}\!-\!\bZ^{\rmT}_{0i,\bar{\mcS}}\bdelta_{\bar{\mcS}}\le -Br\right)\le I\left(Y_{0i}\!-\!\bZ^{\rmT}_{0i}\wh\bbeta_{\mcT_h}\!-\!\bZ^{\rmT}_{0i,\bar{\mcS}}\bdelta_{\bar{\mcS}}\le \bZ^{\rmT}_{0i}\left(\wh\bdelta_{\mcT_h,\rm ora}-\left(\bdelta_{\bar{\mcS}}^\rmT,{\bf 0}^\rmT\right)\right)\right)\\
&=&
I\left(Y_{0i}\!-\!\bZ_{0i}^\rmT\left(\wh\bbeta_{\mcT_h}\!+\!\wh\bdelta_{\mcT_h,\rm ora}\right)\le 0\right)\le I\left(Y_{0i}\!-\!\bZ^{\rmT}_{0i}\wh\bbeta_{\mcT_h}\!-\!\bZ^{\rmT}_{0i,\bar{\mcS}}\bdelta_{\bar{\mcS}}\le Br\right)
\end{eqnarray*}
and
\begin{eqnarray*}
&&I\left(Y_{0i}\!-\!\bZ^{\rmT}_{0i}\wh\bbeta_{\mcT_h}\!-\!\bZ^{\rmT}_{0i,\bar{\mcS}}\bdelta_{\bar{\mcS}}\le -Br\right)\le I\left(Y_{0i}\!-\!\bZ^{\rmT}_{0i}\wh\bbeta_{\mcT_h}\!-\!\bZ^{\rmT}_{0i,\bar{\mcS}}\bdelta_{\bar{\mcS}}\le 0\right)
\\&\le&I\left(Y_{0i}\!-\!\bZ^{\rmT}_{0i}\wh\bbeta_{\mcT_h}\!-\!\bZ^{\rmT}_{0i,\bar{\mcS}}\bdelta_{\bar{\mcS}}\le Br\right),
\end{eqnarray*}
then
\begin{eqnarray*}
&&\left\vert I\left(Y_{0i}-\bZ^{\rmT}_{0i}\wh\bbeta_{\mcT_h}-\bZ^{\rmT}_{0i,\bar{\mcS}}\bdelta_{\bar{\mcS}}\le 0\right)-I\left(Y_{0i}-\bZ_{0i}^\rmT\left(\wh\bbeta_{\mcT_h}+\wh\bdelta_{\mcT_h,\rm ora}\right)\le 0\right)\right\vert\\
&\le& I\left(Y_{0i}-\bZ^{\rmT}_{0i}\wh\bbeta_{\mcT_h}-\bZ^{\rmT}_{0i,\bar{\mcS}}\bdelta_{\bar{\mcS}}\le Br\right)-I\left(Y_{0i}-\bZ^{\rmT}_{0i}\wh\bbeta_{\mcT_h}-\bZ^{\rmT}_{0i,\bar{\mcS}}\bdelta_{\bar{\mcS}}\le -Br\right)
\end{eqnarray*}
for any $i=1,\ldots,n_k$ and $\ k=1,\ldots,K.$
Consequently, we have
\begin{eqnarray*}
&&P\Bigg(\left\vert\frac1{n_0}\sum_{i=1}^{n_0}\bZ^{\rmT}_{0i,\bar{\mcS}^c}\bdelta_{\bar{\mcS}^c}
\left\{I\left(Y_{0i}\!-\!\bZ^{\rmT}_{0i}\wh\bbeta_{\mcT_h}-\bZ^{\rmT}_{0i,\bar{\mcS}}\bdelta_{\bar{\mcS}}\le 0\right)-I\left(Y_{0i}-\bZ_{0i}^\rmT\left(\wh\bbeta_{\mcT_h}+\wh\bdelta_{\mcT_h,\rm ora}\right)\le 0\right)\right\}\right\vert>t\Bigg)\\
&\le &\frac{B^2\vert\bdelta_{\bar{\mcS}^c}\vert_1^2}{t^2}E\left[\frac1{n_0}
\sum_{i=1}^{n_k}\left\{I\left(Y_{0i}-\bZ^{\rmT}_{0i}\wh\bbeta_{\mcT_h}-\bZ^{\rmT}_{0i,\bar{\mcS}}\bdelta_{\bar{\mcS}}\le Br\right)-I\left(Y_{0i}-\bZ^{\rmT}_{0i}\wh\bbeta_{\mcT_h}-\bZ^{\rmT}_{0i,\bar{\mcS}}\bdelta_{\bar{\mcS}}\le -Br\right)\right\}\right]\\
&\le &\frac{B^2\vert\bdelta_{\bar{\mcS}^c}\vert_1^2}{t^2}\left(F_{0,\wh\bbeta_{\mcT_h}+\left(\bdelta_{\bar{\mcS}}^\rmT,{\bf 0}^\rmT\right)^\rmT}(Br)-F_{0,\wh\bbeta_{\mcT_h}+\left(\bdelta_{\bar{\mcS}}^\rmT,{\bf 0}^\rmT\right)^\rmT}(-Br)\right)\\
&\le &2m_0L^3r\vert\bdelta_{\bar{\mcS}^c}\vert_1^2/t^2.
\end{eqnarray*}
Taking $t=m_0B^2r^{1/2-\eta}\left\vert\bdelta_{\bar{\mcS}^c}\right\vert_1$ for a sufficiently small positive constant $\eta$,
we conclude that
\begin{eqnarray*}
&&P\Bigg(\left\vert\frac1{n_0}\sum_{i=1}^{n_0}\bZ^{\rmT}_{0i,\bar{\mcS}^c}\bdelta_{\bar{\mcS}^c}
\left\{I\left(Y_{0i}-\bZ^{\rmT}_{0i}\wh\bbeta_{\mcT_h}-\bZ^{\rmT}_{0i,\bar{\mcS}}
\bdelta_{\bar{\mcS}}\le 0\right)-I\left(Y_{0i}-\bZ_{0i}^\rmT\left(\wh\bbeta_{\mcT_h}+\wh\bdelta_{\mcT_h,\rm ora}\right)\le 0\right)\right\}\right\vert\\
&&>m_0B^2r^{1/2-\eta}\left\vert\bdelta_{\bar{\mcS}^c}\right\vert_1\Bigg)\\
&\le &\frac{2r^{2\eta}}{m_0B}.
\end{eqnarray*}
Besides,
Lemma \ref{lemmaA8} implies that (\ref{mat2_deb}) holds with probability at least $1-24p^{-1}-4p^{-2}$.
As a result, we can always find a sufficient small $r>0$ such that
\begin{eqnarray*}
&&\left\{L\left(\wh\bbeta_{\mcT_h}+\bdelta;\{0\}\right)+\lambda_{\bdelta}\vert\bdelta\vert_1\right\}-\left\{L\left(\wh\bbeta_{\mcT_h}+\left(\bdelta_{\bar{\mcS}}^\rmT,{\bf 0}^\rmT\right)^\rmT;\{0\}\right)+\lambda_{\bdelta}\vert\bdelta_{\bar{\mcS}}\vert_1\right\}\\
&>&\sum_{j \in\bar{\mcS}^c}\left( S_j\left(\wh\bbeta_{\mcT_h}+\wh \bdelta_{\rm ora};\{0\}\right)+\left(\lambda_{\bdelta}-m_0B^2r^{1/2-\eta}-\frac{m_0B^2r^{1/2-\eta}}{\sqrt{ n_0}}\right){\rm sign}(\delta_j)\right)\delta_j
\\&>&0,
\end{eqnarray*}
which holds with probability at least $1-24p^{-1}-4p^{-2}-8Mr^{\eta}/(m_0^2B(1-r^{1/2+\eta})^2)-2r^{2\eta}/(m_0B)$.
Furthermore, for any $\bdelta$ with $\bdelta_{\bar{\mcS}^c}\neq {\bf 0}$, there exists a vector
$\bdelta^\ddag=s\bdelta+(1-s)\wh\bdelta_{\mcT_h,\rm ora}$ with $\bdelta^\ddag\in \mathcal{B}\left(\wh\bdelta_{\mcT_h,\rm ora},r\right)$.
The convexity of the object function in (\ref{op_debias}) entails that
\begin{eqnarray*}
&&s\left(L\left(\wh\bbeta_{\mcT_h}+\bdelta;\{0\}\right)+\lambda_{\bdelta}\vert\bdelta\vert_1\right)+(1-s)\left(L\left(\wh\bbeta_{\mcT_h}+\wh\bdelta_{\mcT_h,\rm ora};\{0\}\right)+\lambda_{\bdelta}\vert\wh\bdelta_{\mcT_h,\rm ora}\vert_1\right)
\\&\ge& L\left(\wh\bbeta_{\mcT_h}+\bdelta^\ddag;\}0\}\right)+\lambda_{\bdelta}\vert\bdelta^\ddag\vert_1.
\end{eqnarray*}
The above inequality implies that $\bdelta$ is not the solution of (\ref{op_debias}) since
\begin{eqnarray*}
&&L\left(\wh\bbeta_{\mcT_h}+\bdelta;\{0\}\right)+\lambda_{\bdelta}\vert\bdelta\vert_1\\&\ge&\frac1s\left\{ L\left(\wh\bbeta_{\mcT_h}+\bdelta^\ddag;\{0\}\right)+\lambda_{\bdelta}\vert\bdelta^\ddag\vert_1-(1-s)\left(L\left(\wh\bbeta_{\mcT_h}+\wh\bdelta_{\mcT_h,\rm ora};\{0\}\right)+\lambda_{\bdelta}\vert\wh\bdelta_{\mcT_h,\rm ora}\vert_1\right)\right\}
\\&\ge& L\left(\wh\bbeta_{\mcT_h}+\bdelta^\ddag;\{0\}\right)+\lambda_{\bdelta}\vert\bdelta^\ddag\vert_1>
L\left(\wh\bbeta_{\mcT_h}+\wh\bdelta_{\mcT_h,\rm ora};\{0\}\right)+\lambda_{\bdelta}\vert\wh\bdelta_{\mcT_h,\rm ora}\vert_1,
\end{eqnarray*}
which concludes the proof, implying that all the solutions of (\ref{op_debias}) satisfies
$\wh\bdelta_{\mcT_h,\bar{\mcS}^c}= {\bf 0}$ with probability at least $1-24p^{-1}-4p^{-2}$.
\end{proof}

{\bf Proof of Theorem \ref{theorem2.4}}
\begin{proof}
It follows directly from  Lemmas \ref{lemmaA6}--\ref{lemmaA9}
and thus is omitted.
\end{proof}

\subsection{Proof of Theorem \ref{theorem2.5}}
\begin{proof}
Note that
\begin{eqnarray*}
P\left(\mcS_0\subset\wh\mcS\right)=P\left(\wh\beta_{0j}\neq0, j\in \mcS_0\right).
\end{eqnarray*}
Under assumptions \ref{C1}--\ref{C3} and \ref{C7}, and using
 Theorem \ref{theorem2.3}, for any $j\in\mcS_0$, there exists a constant $c>0$ such that
\begin{eqnarray*}
&&\left\vert\wh\beta_{0j}\right\vert=\left\vert\beta_{0j}+\wh\beta_{0j}-\beta_{0j}\right\vert\ge \vert\beta_{0j}\vert-\left\vert\wh\beta_{0j}-\beta_{0j}\right\vert\ge
\vert\beta_{0j}\vert-\left\vert\wh\bbeta_0-\bbeta_0\right\vert_2\\
&\ge& \min_{j\in\mcS_0}\vert\beta_{0j}\vert-c\sqrt{s_0}\left(\frac{\log p}{n_0}\right)^{1/4}\max\left\{h^2,\frac{\log p}{n_{\mcT_h}+n_0}\right\}^{1/4}>0
\end{eqnarray*}
with probability at least $1-2p^{-1}-2p^{-2}$.
Therefore, it yields that
\begin{eqnarray*}
P\left(\mcS_0\subset\wh\mcS\right)&=&P\left(\wh\beta_{0j}\neq0, j\in \mcS_0\right)
=P\left(\min_{j\in\mcS_0}\left\vert\wh\beta_{0j}\right\vert>0\right)\\
&\ge&P\left(\min_{j\in\mcS_0}\vert\beta_{0j}\vert-\left\vert\wh\bbeta_0-\bbeta_0\right\vert_2>0\right)\\
&\ge&P\left(\min_{j\in\mcS_0}\vert\beta_{0j}\vert-c\sqrt{s_0}\max\{h^2,\log p/(n_{\mcT_h}+n_0)\}^{1/4}>0,\right.\\&&\left.\left\vert\wh\bbeta_0-\bbeta_0\right\vert_2\lesssim\sqrt{s_0}\max\{h^2,\log p/(n_{\mcT_h}+n_0)\}^{1/4}\right)\\
&\ge&1-2p^{-1}-2p^{-2}.
\end{eqnarray*}
\end{proof}

\section{Asymptotic Distribution}
\label{appendixB}
\setcounter{equation}{0}
\def\theequation{B.\arabic{equation}}

\subsection{Proof of Theorem \ref{theorem3.1}}

\begin{lemma}
\label{lemmaB1}
Under conditions in Theorem \ref{theorem2.4} and assumption \ref{C4},
it holds that
\begin{eqnarray*}
&&\left\Vert\frac1{n_k}\sum_{i=1}^{n_k}\wh f_{ki}\left(\wh\bbeta_0\right)\bZ_{ki}\bZ_{ki}^{\rmT}-\frac1{n_k}\sum_{i=1}^{n_k}\wh f_{ki}(\bbeta_0)\bZ_{ki}\bZ_{ki}^{\rmT}\right\Vert_{\max}\\
&=&O_P\left(\sum_{k\in\{0\}\cup\mcT_h}\frac{\pi_k}{b_k}\Bigg\{\left(s_0\sqrt{\frac{\log p}{n_{\mcT_h}+n_0}}+h\right)+\sqrt{\frac{s_0+s'}{n_k}}
+\sqrt{\frac{\log p}{n_k}}\Bigg\}\right).
\end{eqnarray*}
\end{lemma}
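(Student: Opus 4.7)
The plan is to decompose the quantity componentwise and, for each fixed entry $(j_1,j_2)$, split the difference into a conditional-bias piece and a centered empirical-process piece, then bound each uniformly over the restricted set of $\bbeta$'s that contains $\wh\bbeta_0$ with high probability.

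First, I would fix $k\in\{0\}\cup\mcT_h$ and indices $j_1,j_2\in\{1,\ldots,p\}$, and write
\begin{eqnarray*}
T_k^{j_1 j_2}(\bbeta)=\frac{1}{n_k}\sum_{i=1}^{n_k}\bigl\{\wh f_{ki}(\bbeta)-\wh f_{ki}(\bbeta_0)\bigr\}Z_{ki,j_1}Z_{ki,j_2}=B_k^{j_1 j_2}(\bbeta)+R_k^{j_1 j_2}(\bbeta),
\end{eqnarray*}
where $B_k^{j_1 j_2}(\bbeta)=n_k^{-1}\sum_i E\{\wh f_{ki}(\bbeta)-\wh f_{ki}(\bbeta_0)\mid\bZ_{ki}\}Z_{ki,j_1}Z_{ki,j_2}$ is the conditional-bias piece and $R_k^{j_1 j_2}(\bbeta)$ the mean-zero remainder. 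The target is $T_k^{j_1 j_2}(\wh\bbeta_0)$, but because $\wh\bbeta_0$ is random I would first localize it to the set $\{\bbeta:\bbeta_{\bar{\mcS}^c}=\mathbf{0}\}$ using Theorem \ref{theorem2.4}, which holds with probability $1-24p^{-1}-4p^{-2}$; on this event $|\wh\bbeta_0-\bbeta_0|_0\le s_0+s'$.

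For the bias, I would apply the fundamental theorem of calculus to
\begin{eqnarray*}
E\{\wh f_{ki}(\bbeta)-\wh f_{ki}(\bbeta_0)\mid\bZ_{ki}\}=\frac{1}{2b_k}\int_0^{\mathbf{Z}_{ki}^\rmT(\bbeta-\bbeta_0)}\bigl\{f_k(\bZ_{ki}^\rmT\bbeta_0+b_k+t\mid\bZ_{ki})-f_k(\bZ_{ki}^\rmT\bbeta_0-b_k+t\mid\bZ_{ki})\bigr\}\rmd t,
\end{eqnarray*}
and use Assumption \ref{C1} ($|f_k|\le m_0$) together with $|Z_{ki,j}|\le L$ to bound it by $(m_0/b_k)|\bZ_{ki}^\rmT(\bbeta-\bbeta_0)|\le (m_0L/b_k)|\bbeta-\bbeta_0|_1$. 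Plugging in the $\ell_1$ rate from Theorem \ref{theorem2.2}, namely $|\wh\bbeta_0-\bbeta_0|_1\lesssim s_0\sqrt{\log p/(n_{\mcT_h}+n_0)}+h$, delivers the first term $b_k^{-1}\{s_0\sqrt{\log p/(n_{\mcT_h}+n_0)}+h\}$ in the rate.

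For the empirical-process remainder, I would treat the two indicators in $2b_k\wh f_{ki}(\bbeta)=I(Y_{ki}-\bZ_{ki}^\rmT\bbeta\le b_k)-I(Y_{ki}-\bZ_{ki}^\rmT\bbeta\le -b_k)$ separately and define, for each sign, the function class
\begin{eqnarray*}
\mathcal{H}_{k,j_1,j_2}^{\pm}=\left\{(y,\bz)\mapsto Z_{j_1}Z_{j_2}\bigl\{I(y-\bz^\rmT\bbeta\le\pm b_k)-I(y-\bz^\rmT\bbeta_0\le\pm b_k)\bigr\}:\bbeta_{\bar{\mcS}^c}=\mathbf{0}\bigr\}\right.
\end{eqnarray*}
whose Vapnik-Chervonenkis dimension is $O(s_0+s')$ by the sparsity of the index set. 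Mimicking the VC-chaining bound used in Lemma \ref{lemmaA6} (see the derivation around (\ref{determineD})) gives a uniform mean for the symmetrized supremum of order $Lb_k^{-1}\sqrt{(s_0+s')/n_k}$; a Talagrand-type concentration inequality for bounded empirical processes (envelope $L^2/b_k$, variance bounded by the same Newton-Leibniz calculation as for the bias term) then gives a deviation contribution of order $Lb_k^{-1}\sqrt{\log p/n_k}$ after a union bound over the $p^2$ entries. Taking max over $j_1,j_2$, weighting by $\pi_k$, and summing over $k\in\{0\}\cup\mcT_h$ yields the claimed bound.

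The main obstacle will be controlling the remainder $R_k^{j_1 j_2}$ uniformly in $\bbeta$ given that the envelope scales like $b_k^{-1}\to\infty$. The two ingredients that make this work are (i) the sparse localization of $\wh\bbeta_0$ to the support $\bar{\mcS}$ of size at most $s_0+s'$, which keeps the relevant function class a VC class of controlled dimension, and (ii) the fact that the variance of each summand is of order $b_k^{-1}|\bZ_{ki}^\rmT(\bbeta-\bbeta_0)|$ rather than $b_k^{-2}$, a gain obtained from Assumption \ref{C1}; this is exactly what turns the potentially diverging envelope into a rate that matches the stated order $b_k^{-1}\{\sqrt{(s_0+s')/n_k}+\sqrt{\log p/n_k}\}$.
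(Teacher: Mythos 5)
Your proposal is correct and follows essentially the same route as the paper: localize $\wh\bbeta_0$ to the support $\bar{\mcS}$ via the weak oracle property, bound the conditional-bias term by $(m_0L/b_k)\vert\bbeta-\bbeta_0\vert_1$ using assumption \ref{C1} together with the $\ell_1$ rate of Theorem \ref{theorem2.2}, and control the centered remainder uniformly over the VC class of sparse $\bbeta$'s by the same entropy bound as in (\ref{determineD}) followed by concentration and a union bound over the $p^2$ entries. The only cosmetic difference is that the paper keeps the two indicators in a single function class and uses a bounded-differences inequality with envelope $L^2/b_k$ directly, so the Bernstein-type variance refinement you highlight as ingredient (ii) is not actually needed to reach the stated rate.
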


\begin{proof}
We conduct our demonstration on the event that $\wh\bbeta_{0,\bar{\mcS}^c}={\bf 0}$,
which occurs with probability at least $1-24p^{-1}-4p^{-2}$ following from Lemma \ref{lemmaA8}.
Note that
\begin{eqnarray*}
2b_k\left(\wh f_{ki}\left(\wh\bbeta_0\right)-\wh f_{ki}(\bbeta_0)\right)
=I\left(-b_k\le Y_{ki}-\bZ_{ki}^{\rmT}\wh\bbeta_0\le b_k\right)-I\left(-b_k\le Y_{ki}-\bZ_{ki}^{\rmT}\bbeta_0\le b_k\right).
\end{eqnarray*}
Denote $\mathcal{G}_{k,jl}=\left\{g_{\bbeta,k,jl}(Y,\bZ)\mid \bbeta_{\bar{\mcS}^c}={\bf 0}\right\}$,
where
$$g_{\bbeta,k,jl}(Y,\bZ)=\frac{Z_jZ_l}{2b_k}\left\{I\left(-b_k\le Y-\bZ^{\rmT}\bbeta\le b_k\right)-I\left(-b_k\le Y-\bZ^{\rmT}\bbeta_0\le b_k\right)\right\}.$$
Since the Vapnik--Chervonenkis dimension of $\mathcal{G}_{k,jl}$ is less than $s_0+s'$, then Lemma 19.15 in 
\citet{van1998asymptotic} entails that
$$ \sup_Q N\left(\epsilon B^2,\mathcal{G}_{k,jl},L_2(Q)\right)\le M_0(s_0+s')(16e)^{s_0+s'}\epsilon^{1-2s_0-2s'} $$
for some positive constant $M_0>0$.
Thus, Lemma 19.35 in 
\citet{van1998asymptotic} implies that
\begin{eqnarray*}
&&E\left\{\sup_{\bbeta: \bbeta_{\bar{\mcS}^c}={\bf 0}}
\frac1{n_0+n_{\mcT_h}}\sum_{k\in\{0\}\cup\mcT_h}\sum_{i=1}^{n_k}
\left[g_{\bbeta,k,jl}\left(Y_{ki},\bZ_{ki}\right)-E\left\{g_{\bbeta,k,jl}\left(Y_{k},\bZ_{k}\right)\right\}\right]\right\}\\
&\le&\sum_{k\in\{0\}\cup\mcT_h}\pi_kE\left\{\sup_{\bbeta: \bbeta_{\bar{\mcS}^c}
={\bf 0}}
\frac1{n_k}\sum_{i=1}^{n_k}\left[g_{\bbeta,k,jl}\left(Y_{ki},\bZ_{ki}\right)-E\left\{g_{\bbeta,k,jl}\left(Y_{k},\bZ_{k}\right)\right\}\right]\right\}\\
&\le&\sum_{k\in\{0\}\cup\mcT_h}\frac{\pi_kB^2}{2b_k\sqrt{n_k}} J(1,\mathcal{G}_{k,jl},L_2)
\\&=& \sum_{k\in\{0\}\cup\mcT_h}\frac{\pi_kB^2}{2b_k\sqrt{n_k}}\int^1_0 \sqrt{\log\sup_Q N\left(\epsilon B^2,\mathcal{G}_{k,jl},L_2(Q)\right)}\rmd \epsilon\\
&\le&\sum_{k\in\{0\}\cup\mcT_h}\frac{\pi_kD_0B^2}{b_k}\sqrt{\frac{s_0+s'}{n_k}}
\end{eqnarray*}
for some constant $D_0>0$.
Employing Theorem 3.26 in 
\citet{wainwright2019high}, it holds that
\begin{eqnarray*}
&&P\Bigg(\Bigg\Vert\frac1{n_0+n_{\mcT_h}}\sum_{k\in\{0\}\cup\mcT_h}\sum_{i=1}^{n_k}\left\{\wh f_{ki}\left(\wh\bbeta_0\right)\bZ_{ki}\bZ_{ki}^{\rmT}-\wh f_{ki}(\bbeta_0)\bZ_{ki}\bZ_{ki}^{\rmT}\right\}\\
&&-\sum_{k\in\{0\}\cup\mcT_h}\!\!\pi_kE\left\{\wh f_{ki}(\bbeta)\bZ_{ki}\bZ_{ki}^{\rmT}-\wh f_{ki}(\bbeta_0)\bZ_{ki}\bZ_{ki}^{\rmT}\right\}\Big\vert_{\bbeta=\wh\bbeta_0}\Bigg\Vert_{\max}\!\ge\! \sum_{k\in\{0\}\cup\mcT_h}\pi_k\left(\frac{D_0B^2}{b_k}\sqrt{\frac{s_0+s'}{n_k}}+t_k\right)\Bigg)\\&\le&
\sum_{j=1}^p\sum_{l=1}^pP\Bigg(\Bigg\vert\frac1{n_0+n_{\mcT_h}}\sup_{\bbeta: \bbeta_{\bar{\mcS}^c}={\bf 0}}\sum_{k\in\{0\}\cup\mcT_h}\sum_{i=1}^{n_k}
\left[g_{\bbeta,k,jl}\left(Y_{ki},\bZ_{ki}\right)-E\left\{g_{\bbeta,k,jl}\left(Y_{k},\bZ_{k}\right)\right\}\right]\Bigg\vert\\
&&\ge \sum_{k\in\{0\}\cup\mcT_h}\pi_k\left(\frac{D_0B^2}{b_k}\sqrt{\frac{s_0+s'}{n_k}}+t_k\right)\Bigg)+P\left(\wh\bbeta_{0,\bar{\mcS}^c}\neq{\bf 0}\right)\\
&\le&\sum_{j=1}^p\!\sum_{l=1}^p\!\sum_{k\in\{0\}\cup\mcT_h}\!\!\!P\Bigg(\Bigg\vert\frac1{n_k}\!\sup_{\bbeta: \bbeta_{\bar{\mcS}^c}={\bf 0}}\!\sum_{i=1}^{n_k}\!
\left[g_{\bbeta,k,jl}\left(Y_{ki},\bZ_{ki}\right)\!-\!E\left\{g_{\bbeta,k,jl}\left(Y_{k},\bZ_{k}\right)\right\}\right]\Bigg\vert\!\ge\! \frac{D_0B^2}{b_k}\sqrt{\frac{s_0+s'}{n_k}}\!+\!t_k\Bigg)\\
&&+P\left(\wh\bbeta_{0,\bar{\mcS}^c}\neq{\bf 0}\right)\\
&\le&2Kp^2\exp\left(-\frac{n_kb_k^2t_k^2}{4B^4}\right)+24p^{-1}+4p^{-2}.
\end{eqnarray*}
By taking $$t_k=\frac{4B^2}{b_k}\sqrt{\frac{\log p}{n_k}},$$
we obtain with probability greater than $1-24p^{-1}-(4+2K)p^{-2}$ that
\begin{eqnarray*}
&&\Bigg\Vert\frac1{n_0+n_{\mcT_h}}\!\!\sum_{k\in\{0\}\cup\mcT_h}\!\!\sum_{i=1}^{n_k}\!\!\left\{\left(\wh f_{ki}\left(\wh\bbeta_0\right)\!-\!\wh f_{ki}(\bbeta_0)\right)\bZ_{ki}\bZ_{ki}^{\rmT}\!-\!E\left\{\left(\wh f_{ki}(\bbeta)\!-\!\wh f_{ki}(\bbeta_0)\right)\bZ_{ki}\bZ_{ki}^{\rmT}\right\}\Big\vert_{\bbeta=\wh\bbeta_0}\right\}\Bigg\Vert_{\max}\\
&<& \sum_{k\in\{0\}\cup\mcT_h}\pi_k\left(\frac{D_0B^2}{b_k}\sqrt{\frac{s_0+s'}{n_k}}+\frac{4B^2}{b_k}\sqrt{\frac{\log p}{n_k}}\right).
\end{eqnarray*}
Since
\begin{eqnarray*}
\left\vert E\left(\wh f_{ki}(\bbeta)\!-\!\wh f_{ki}(\bbeta_0)\right)\right\vert&=&\Bigg\vert\frac1{2b_k}E\left\{F_k\left(\bZ_{k}^{\rmT}\bbeta+b_k\mid \bZ_{k}\right)
-F_k\left(\bZ_{k}^{\rmT}\bbeta-b_k\mid \bZ_{k}\right)\right\}\\
&&-\frac1{2b_k}E\left\{F_k\left(\bZ_{k}^{\rmT}\bbeta_0+b_k\mid \bZ_{k}\right)
-F_k\left(\bZ_{k}^{\rmT}\bbeta_0-b_k\mid \bZ_{k}\right)\right\} \Bigg\vert\\
&\le &\frac{m_0B}{b_k}\vert\bbeta-\bbeta_0\vert_1,
\end{eqnarray*}
by combining Theorem \ref{theorem2.2}, it implies that for come constant $c>0$,
\begin{eqnarray*}
&&\Bigg\Vert\frac1{n_0+n_{\mcT_h}}\sum_{k\in\{0\}\cup\mcT_h}\sum_{i=1}^{n_k}\left\{\left(\wh f_{ki}\left(\wh\bbeta_0\right)-\wh f_{ki}(\bbeta_0)\right)\bZ_{ki}\bZ_{ki}^{\rmT}\right\}\Bigg\Vert_{\max}\\
&<& \sum_{k\in\{0\}\cup\mcT_h}\pi_k\Bigg\{\frac{cm_0B^3}{b_k}\left(s_0\sqrt{\frac{\log p}{n_{\mcT_h}+n_0}}+h\right)+\frac{D_0B^2}{b_k}\sqrt{\frac{s_0+s'}{n_k}}+\frac{4B^2}{b_k}\sqrt{\frac{\log p}{n_k}}\Bigg\}
\end{eqnarray*}
with probability at least $1-26p^{-1}-(6+2K)p^{-2}$.
\end{proof}

{\bf Proof of Theorem \ref{theorem3.1}}
\begin{proof}
We derive an upper bound of $\left\Vert\wh \bH_{\mcT_h}-\bH_{\mcT_h}(\bbeta_0)\right\Vert_{\max}$ as follows. Specifically,
\begin{eqnarray}
\label{H_decom}
&&\left\Vert\wh \bH_{\mcT_h}-\bH_{\mcT_h}(\bbeta_0)\right\Vert_{\max}\nonumber\\
&=&\left\Vert\frac1{n_{\mcT_h}+n_0}\sum_{k\in \{0\}\cup\mcT_h}\sum_{i=1}^{n_k}\left\{\wh f_{ki}\left(\wh\bbeta_0\right)\bZ_{ki}\bZ_{ki}^{\rmT}-E\left(f_k(\bZ^{\rmT}_{ki}\bbeta_0\mid \bZ_{ki})\bZ_{ki}\bZ_{ki}^{\rmT}\right)\right\}\right\Vert_{\max}\nonumber\\
&\le &\left\Vert\frac1{n_{\mcT_h}+n_0}\sum_{k\in \{0\}\cup\mcT_h}\sum_{i=1}^{n_k}\wh f_{ki}\left(\wh\bbeta_0\right)\bZ_{ki}\bZ_{ki}^{\rmT}-\frac1{n_{\mcT_h}+n_0}\sum_{k\in \{0\}\cup\mcT_h}\sum_{i=1}^{n_k}\wh f_{ki}(\bbeta_0)\bZ_{ki}\bZ_{ki}^{\rmT}\right\Vert_{\max}\nonumber\\
&&+\left\Vert\frac1{n_{\mcT_h}+n_0}\sum_{k\in \{0\}\cup\mcT_h}\sum_{i=1}^{n_k}\left\{\wh f_{ki}(\bbeta_0)\bZ_{ki}\bZ_{ki}^{\rmT}-E\left(\wh f_{ki}(\bbeta_0)\bZ_{ki}\bZ_{ki}^{\rmT}\right)\right\}\right\Vert_{\max}\nonumber\\
&&+\left\Vert \frac1{n_{\mcT_h}+n_0}\sum_{k\in \{0\}\cup\mcT_h}\sum_{i=1}^{n_k}\left\{E\left(\wh f_{ki}(\bbeta_0)\bZ_{ki}\bZ_{ki}^{\rmT}\right)-E\left(f_k(\bZ^{\rmT}_{ki}\bbeta_0\mid \bZ_{ki})\bZ_{ki}\bZ_{ki}^{\rmT}\right)\right\}\right\Vert_{\max}.
\end{eqnarray}
Combining Lemma \ref{lemmaB1}, it is sufficient to dominate the last two terms of (\ref{H_decom}) to show Theorem \ref{theorem3.1}.
It follows from the Hoeffding inequality that
\begin{eqnarray*}
&&P\left(\left\Vert\frac1{n_{\mcT_h}+n_0}\sum_{k\in \{0\}\cup\mcT_h}\sum_{i=1}^{n_k}\left\{\wh f_{ki}(\bbeta_0)\bZ_{ki}\bZ_{ki}^{\rmT}-E\left(\wh f_{ki}(\bbeta_0)\bZ_{ki}\bZ_{ki}^{\rmT}\right)\right\}\right\Vert_{\max}>\sum_{k\in \{0\}\cup\mcT_h}\pi_kt_k\right)\\
&\le &\sum_{k\in \{0\}\cup\mcT_h}P\left(\left\Vert\frac1{n_k}\sum_{i=1}^{n_k}\left\{\wh f_{ki}(\bbeta_0)\bZ_{ki}\bZ_{ki}^{\rmT}-E\left(\wh f_{ki}(\bbeta_0)\bZ_{ki}\bZ_{ki}^{\rmT}\right)\right\}\right\Vert_{\max}>t_k\right)\\
&\le &\sum_{m=1}^p\sum_{j=1}^p\sum_{k\in \{0\}\cup\mcT_h}P\left(\left\Vert\frac1{n_k}\sum_{i=1}^{n_k}\left\{\wh f_{ki}(\bbeta_0)Z_{ki,m}Z_{ki,j}-E\left(\wh f_{ki}(\bbeta_0)Z_{ki,m}Z_{ki,j}\right)\right\}\right\Vert_{\max}>t_k\right)\\
&\le &2p^2\sum_{k\in \{0\}\cup\mcT_h}\exp\left(-\frac{8n_kb_k^2t_k^2}{B^4}\right).
\end{eqnarray*}
By taking
$$ t_k=\frac{B^2}{b_k}\sqrt{\frac{\log p}{2 n_k}}, $$
we arrive at
$$ \left\Vert\frac1{n_{\mcT_h}+n_0}\sum_{k\in \{0\}\cup\mcT_h}\sum_{i=1}^{n_k}\left\{\wh f_{ki}(\bbeta_0)\bZ_{ki}\bZ_{ki}^{\rmT}-E\left(\wh f_{ki}(\bbeta_0)\bZ_{ki}\bZ_{ki}^{\rmT}\right)\right\}\right\Vert_{\max}\le \sum_{k\in \{0\}\cup\mcT_h}\pi_k\frac{B^2}{b_k}\sqrt{\frac{\log p}{2 n_k}} $$
with probability more than $1-2Kp^{-2}$.

On the other hand,
we have
\begin{eqnarray*}
&&\left\Vert \frac1{n_{\mcT_h}+n_0}\sum_{k\in \{0\}\cup\mcT_h}\sum_{i=1}^{n_k}\left\{E\left(\wh f_{ki}(\bbeta_0)\bZ_{ki}\bZ_{ki}^{\rmT}\right)-E\left(f_k(\bZ^{\rmT}_{k}\bbeta_0\mid \bZ_{k})\bZ_{k}\bZ_{k}^{\rmT}\right)\right\}\right\Vert_{\max}\\
&=&\Bigg\Vert \sum_{k\in \{0\}\cup\mcT_h}\pi_kE\left\{\left(\frac{F_k\left(\bZ_{k}^{\rmT}\bbeta_0+b_k\mid \bZ_{k}\right)
\!-\!F_k\left(\bZ_{k}^{\rmT}\bbeta_0-b_k\mid \bZ_{k}\right)}{2b_k}\!-\!f_k(\bZ^{\rmT}_{k}\bbeta_0\mid \bZ_{k})\right)\bZ_{ki}\bZ_{k}^{\rmT}\right\}\Bigg\Vert_{\max}\\
&\le&m_0B^2\sum_{k\in \{0\}\cup\mcT_h}\pi_kb_k.
\end{eqnarray*}
As a consequence, it holds that
\begin{eqnarray*}
&&\left\Vert\wh \bH_{\mcT_h}-\bH_{\mcT_h}(\bbeta_0)\right\Vert_{\max}\\
&\le&
\sum_{k\in \{0\}\cup\mcT_h}\pi_k\Bigg\{\frac{B^2}{b_k}\sqrt{\frac{\log p}{2 n_k}}+
m_0B^2b_k+\frac{cm_0L^3}{b_k}\left(s_0\sqrt{\frac{\log p}{n_{\mcT_h}+n_0}}+h\right)+\frac{D_0B^2}{b_k}\sqrt{\frac{s_0+s'}{n_k}}+\frac{4B^2}{b_k}\sqrt{\frac{\log p}{n_k}}\Bigg\}
\end{eqnarray*}
with probability more than $1-26p^{-1}-(6+4K)p^{-2}$.
Further, when
\begin{eqnarray*}
b_k&=&O\left(\sqrt{\sqrt{\frac{\log p}{ n_k}}+\left(s_0\sqrt{\frac{\log p}{n_{\mcT_h}+n_0}}+h\right)+
\sqrt{\frac{s_0+s'}{n_k}}}\right)\\
&=&O\left(\left(\frac{\log p}{ n_k}\right)^{1/4}+s_0^{1/2}\left(\frac{\log p}{n_{\mcT_h}+n_0}\right)^{1/4}+h^{1/2}+
\left(\frac{s_0+s'}{n_k}\right)^{1/4}\right),
\end{eqnarray*}
the error bound of $\left\Vert\wh \bH_{\mcT_h}-\bH_{\mcT_h}(\bbeta_0)\right\Vert_{\max}$ achieves a faster convergence rate that
\begin{eqnarray*}
\left\Vert\wh \bH_{\mcT_h}-\bH_{\mcT_h}(\bbeta_0)\right\Vert_{\max}
\le
O_P\left(s_0^{1/2}\left(\frac{\log p}{n_{\mcT_h}+n_0}\right)^{1/4}+h^{1/2}\right).
\end{eqnarray*}
\end{proof}

\subsection{Proof of Theorem \ref{theorem3.2}}

\begin{lemma}
\label{lemmaB2}
Set
\begin{eqnarray*}
b_k=O\left(\left(\frac{\log p}{ n_k}\right)^{1/4}+s_0^{1/2}\left(\frac{\log p}{n_{\mcT_h}+n_0}\right)^{1/4}+h^{1/2}+
\left(\frac{s_0+s'}{n_k}\right)^{1/4}\right),\ k\in\{0\}\cup\mcT_h.
\end{eqnarray*}
Assume that
$$ s_{\mcT_h,m}\left(s_0^{1/2}\left(\frac{\log p}{n_{\mcT_h}+n_0}\right)^{1/4}+h^{1/2}\right)=o(1). $$
Then under conditions in Theorem \ref{theorem2.4} and assumptions \ref{C4} and \ref{C8},
there exists a positive constant $\kappa_m$ such that
\begin{equation*}
\min_{\mathbf{\bv}\in\mathcal{D}\left(0,\mcS_{\mcT_h,m}\right)} \frac{\mathbf{v}^{\rmT}
\wh\bH_{\mcT_h,-m,-m}\mathbf{v}}{\vert\mathbf{v}\vert_2^2}\ge \kappa_m
\end{equation*}
holds with probability approaching to one.
\end{lemma}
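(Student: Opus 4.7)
\textbf{Proof plan for Lemma \ref{lemmaB2}.}
The plan is to transfer the population restricted eigenvalue bound in assumption \ref{C8} to the sample matrix $\wh\bH_{\mcT_h,-m,-m}$ by controlling a quadratic-form perturbation. Specifically, for any $\bv\in\mathcal{D}(0,\mcS_{\mcT_h,m})$, I would write the decomposition
\begin{equation*}
\bv^\rmT \wh\bH_{\mcT_h,-m,-m}\bv
=\bv^\rmT \bH^{*}_{\mcT_h,-m,-m}\bv
+\bv^\rmT\bigl(\wh\bH_{\mcT_h,-m,-m}-\bH^{*}_{\mcT_h,-m,-m}\bigr)\bv,
\end{equation*}
apply assumption \ref{C8} to lower bound the first term by $\kappa_{0m}\vert\bv\vert_2^2$, and use the elementary inequality
$\bigl\vert\bv^\rmT M\bv\bigr\vert\le \Vert M\Vert_{\max}\vert\bv\vert_1^2$
to control the second term.

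Next, I would split the max-norm deviation into an estimation part and a linearization part,
\begin{equation*}
\bigl\Vert\wh\bH_{\mcT_h}-\bH^{*}_{\mcT_h}\bigr\Vert_{\max}
\le\bigl\Vert\wh\bH_{\mcT_h}-\bH_{\mcT_h}(\bbeta_0)\bigr\Vert_{\max}
+\bigl\Vert\bH_{\mcT_h}(\bbeta_0)-\bH_{\mcT_h}(\bbeta^*_{\mcT_h})\bigr\Vert_{\max}.
\end{equation*}
The first piece is bounded by Theorem \ref{theorem3.1} under the prescribed choice of $b_k$, yielding $O_P\bigl(s_0^{1/2}(\log p/(n_{\mcT_h}+n_0))^{1/4}+h^{1/2}\bigr)$. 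The second piece is handled deterministically using assumption \ref{C1} (Lipschitz continuity of $f_k$) together with Lemma \ref{lemmaA1}, which gives $\vert\bbeta^*_{\mcT_h}-\bbeta_0\vert_1\le Ch$ under assumption \ref{C4}; combined with assumption \ref{C2} this yields an $O(h)$ entrywise bound. A max-norm bound on a submatrix is dominated by that on the full matrix, so the same rate applies to the $-m,-m$ block.

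Finally, on the cone $\mathcal{D}(0,\mcS_{\mcT_h,m})$ the usual inequality $\vert\bv\vert_1\le 4\vert\bv_{\mcS_{\mcT_h,m}}\vert_1\le 4\sqrt{s_{\mcT_h,m}}\,\vert\bv\vert_2$ gives $\vert\bv\vert_1^2\le 16\,s_{\mcT_h,m}\vert\bv\vert_2^2$. Putting everything together,
\begin{equation*}
\bv^\rmT \wh\bH_{\mcT_h,-m,-m}\bv
\ge \Bigl\{\kappa_{0m}-16\,s_{\mcT_h,m}\cdot O_P\bigl(s_0^{1/2}(\log p/(n_{\mcT_h}+n_0))^{1/4}+h^{1/2}\bigr)\Bigr\}\vert\bv\vert_2^2.
\end{equation*}
The assumed rate $s_{\mcT_h,m}\bigl(s_0^{1/2}(\log p/(n_{\mcT_h}+n_0))^{1/4}+h^{1/2}\bigr)=o(1)$ (which is implied by the hypothesis of the lemma with $s_0+s_{\mcT_h,m}$) ensures that the perturbation is eventually below $\kappa_{0m}/2$ with probability tending to one, so the conclusion holds with $\kappa_m=\kappa_{0m}/2$.

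The main obstacle is ensuring that the only available deviation control, namely the entrywise max-norm bound from Theorem \ref{theorem3.1}, is sharp enough after paying the $s_{\mcT_h,m}$ factor arising from the $\ell_1$-to-$\ell_2$ conversion on the cone; this is precisely where the hypothesis $\bigl(s_0+s_{\mcT_h,m}\bigr)\bigl(s_0^{1/2}(\log p/(n_{\mcT_h}+n_0))^{1/4}+h^{1/2}\bigr)=o(1)$ is used, and where the delicate choice of Powell bandwidth $b_k$ in Theorem \ref{theorem3.1} is crucial—any looser bandwidth would enlarge the max-norm rate and destroy the argument.
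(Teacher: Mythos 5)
Your proposal is correct and follows essentially the same route as the paper's proof: decompose the quadratic form into the population term (bounded below via assumption \ref{C8}) plus a perturbation controlled by $\Vert\cdot\Vert_{\max}\cdot\vert\bv\vert_1^2$, split the max-norm deviation into the empirical error from Theorem \ref{theorem3.1} and the $O(h)$ bias between $\bH^*_{\mcT_h}$ and $\bH_{\mcT_h}(\bbeta_0)$ via Lemma \ref{lemmaA1}, and absorb the $s_{\mcT_h,m}$ factor from the cone inequality using the assumed rate condition. The only cosmetic difference is the final constant ($\kappa_{0m}/2$ versus the paper's $3\kappa_{0m}/4$), and note that the rate hypothesis actually needed here is the $s_{\mcT_h,m}$ version stated in the lemma, not the $(s_0+s_{\mcT_h,m})$ version you cite in your closing remark.
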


\begin{proof}
For any $\bv\in\mathcal{D}(0,\mcS_{\mcT_h,m})$, we have
$$ \vert\bv\vert_1=\left\vert\bv_{\mcS_{\mcT_h,m}^c}\right\vert_1+\left\vert\bv_{\mcS_{\mcT_h,m}}\right\vert_1\le 4\left\vert\bv_{\mcS_{\mcT_h,m}}\right\vert_1
\le 4\sqrt{s_{\mcT_h,m}}\left\vert\bv_{\mcS_{\mcT_h,m}}\right\vert_2\le 4\sqrt{s_{\mcT_h,m}}\vert\bv\vert_2. $$
Note that under assumption \ref{C4}, Lemma \ref{lemmaA1} entails that
\begin{eqnarray*}
&&\left\Vert\bH^*_{\mcT_h,-m,-m}-\bH_{\mcT_h,-m,-m}\left(\bbeta_0\right)\right\Vert_{\max}
\\&\le& \left\Vert\bH^*_{\mcT_h}-\bH_{\mcT_h}\left(\bbeta_0\right)\right\Vert_{\max}\\
&=&\left\Vert\sum_{k\in\{0\}\cup\mcT_h}\pi_kE\left[\left\{f_k\left(\bZ_{k}^\rmT\bbeta^*_{\mcT_h}\mid \bZ_{k}\right)-f_k\left(\bZ_{k}^\rmT\bbeta_0\mid \bZ_{k}\right)\right\}
\bZ_{k}\bZ_{k}^{\rmT}\right]\right\Vert_{\max}\\
&\le &m_0B^3Ch.
\end{eqnarray*}
Hence, based on Theorem \ref{theorem3.1}, for some positive constant $c$, it holds that
\begin{eqnarray*}
&&P\left(\frac{\vert\mathbf{v}^{\rmT}
\left(\wh\bH_{\mcT_h,-m,-m}-\bH^*_{\mcT_h,-m,-m}\right)\mathbf{v}\vert}{\vert\mathbf{v}\vert_2^2}> 16cs_{\mcT_h,m}\left(s_0^{1/2}\left(\frac{\log p}{n_{\mcT_h}+n_0}\right)^{1/4}+h^{1/2}\right)\right)\\
&\leq& P\left(\frac{\vert\mathbf{v}\vert_1^2
\left\Vert\wh\bH_{\mcT_h,-m,-m}-\bH^*_{\mcT_h,-m,-m}\right\Vert_{\max}}{\vert\mathbf{v}\vert_2^2}> 16cs_{\mcT_h,m}\left(s_0^{1/2}\left(\frac{\log p}{n_{\mcT_h}+n_0}\right)^{1/4}+h^{1/2}\right)\right)\\
&\leq&P\left(
\left\Vert\wh\bH_{\mcT_h,-m,-m}-\bH_{\mcT_h,-m,-m}\left(\bbeta_0\right)\right\Vert_{\max}> \frac{c}{2}\left(s_0^{1/2}\left(\frac{\log p}{n_{\mcT_h}+n_0}\right)^{1/4}+h^{1/2}\right)\right)\\
&&+P\left(
\left\Vert\bH^*_{\mcT_h,-m,-m}-\bH_{\mcT_h,-m,-m}\left(\bbeta_0\right)\right\Vert_{\max}> \frac{c}{2}\left(s_0^{1/2}\left(\frac{\log p}{n_{\mcT_h}+n_0}\right)^{1/4}+h^{1/2}\right)\right)\\
&\leq& 26p^{-1}+(6+4K)p^{-2},
\end{eqnarray*}
which implies that
\begin{eqnarray*}
\frac{\left\vert\mathbf{v}^{\rmT}
\left(\wh\bH_{\mcT_h,-m,-m}-\bH^*_{\mcT_h,-m,-m}\mathbf{v}\right)\right\vert}{\vert\mathbf{v}\vert_2^2}=O_P\left(s_{\mcT_h,m}\left(s_0^{1/2}\left(\frac{\log p}{n_{\mcT_h}+n_0}\right)^{1/4}+h^{1/2}\right)\right).
\end{eqnarray*}
As
$$ s_{\mcT_h,m}\left(s_0^{1/2}\left(\frac{\log p}{n_{\mcT_h}+n_0}\right)^{1/4}+h^{1/2}\right)=o(1), $$
when $n$ and $p$ are large enough, it holds that
\begin{eqnarray*}
\frac{\left\vert\mathbf{v}^{\rmT}
\left(\wh\bH_{\mcT_h,-m,-m}-\bH^*_{\mcT_h,-m,-m}\right)\mathbf{v}\right\vert}{\vert\mathbf{v}\vert_2^2}\leq \frac{\kappa_{0m}}4
\end{eqnarray*}
with probability tending to 1.
Hence, following the triangle inequality, we obtain
\begin{eqnarray*}
\frac{\bv^{\rmT} \wh\bH_{\mcT_h,-m,-m}\bv}{\vert\bv\vert_2^2}
\geq\frac{\bv^{\rmT} \bH^*_{\mcT_h,-m,-m}\bv}{\vert\bv\vert_2^2}
-\frac{\left\vert\bv^{\rmT} \left(\wh\bH_{\mcT_h,-m,-m}-\bH^*_{\mcT_h,-m,-m}\right)\bv\right\vert}{\vert\bv\vert_2^2}\geq \frac34\kappa_{0m},
\end{eqnarray*}
which completes the proof by taking $\kappa_m=3\kappa_{0m}/4$.
\end{proof}

\begin{lemma}
\label{lemmaB3}
Set \begin{eqnarray*}
b_k=O\left(\left(\frac{\log p}{ n_k}\right)^{1/4}+s_0^{1/2}\left(\frac{\log p}{n_{\mcT_h}+n_0}\right)^{1/4}+h^{1/2}+
\left(\frac{s_0+s'}{n_k}\right)^{1/4}\right),\ k\in\{0\}\cup\mcT_h.
\end{eqnarray*}
Assume that
$$ s_{\mcT_h,m}\left(s_0^{1/2}\left(\frac{\log p}{n_{\mcT_h}+n_0}\right)^{1/4}+h^{1/2}\right)=o(1). $$
Under conditions in Theorem \ref{theorem2.4} and assumptions \ref{C4} and \ref{C8},
if
$$\lambda_m\ge4C_0(1+r_{\mcT_h,m})\left(s_0^{1/2}\left(\frac{\log p}{n_{\mcT_h}+n_0}\right)^{1/4}+h^{1/2}\right),$$
where the positive constant $C_0$ is specified by (\ref{determineC}),
then it holds that
\begin{eqnarray*}
\left\vert\wh\bgamma_{\mcT_h,m}-\bgamma_{\mcT_h,m}\right\vert_2=O_P\left(\lambda_m\sqrt{s_{\mcT_h,m}}\right)
\end{eqnarray*}
and
\begin{eqnarray*}
\left\vert\wh\bgamma_{\mcT_h,m}-\bgamma_{\mcT_h,m}\right\vert_1=O_P\left(\lambda_ms_{\mcT_h,m}\right).
\end{eqnarray*}
\end{lemma}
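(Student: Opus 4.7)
The plan is to follow the standard Lasso basic-inequality argument, but with $\wh\bH_{\mcT_h,-m,-m}$ playing the role of the Gram matrix and the ``noise'' coming from the deviation of $\wh\bH_{\mcT_h}$ from its population counterpart $\bH^*_{\mcT_h}$. Denote $\bgamma := \bgamma_{\mcT_h,m}$, $\wh\bgamma:=\wh\bgamma_{\mcT_h,m}$ and $\wh\bdelta := \wh\bgamma-\bgamma$. By the definition of $\bgamma$ we have the population identity $\bH^*_{\mcT_h,-m,-m}\bgamma = \bH^*_{\mcT_h,-m,m}$, so that the relevant score at the true parameter reduces to the ``empirical-minus-population'' error
\begin{eqnarray*}
\bw \;:=\; \wh\bH_{\mcT_h,-m,-m}\bgamma-\wh\bH_{\mcT_h,-m,m}\;=\;\left(\wh\bH_{\mcT_h,-m,-m}-\bH^*_{\mcT_h,-m,-m}\right)\bgamma+\left(\bH^*_{\mcT_h,-m,m}-\wh\bH_{\mcT_h,-m,m}\right).
\end{eqnarray*}
Applying Theorem \ref{theorem3.1} (with the chosen bandwidth $b_k$) and H\"older's inequality, I will bound
\begin{eqnarray*}
\vert\bw\vert_\infty\;\le\;\left\Vert\wh\bH_{\mcT_h}-\bH^*_{\mcT_h}\right\Vert_{\max}(1+r_{\mcT_h,m})\;=\;O_P\!\left((1+r_{\mcT_h,m})\left(s_0^{1/2}\left(\tfrac{\log p}{n_{\mcT_h}+n_0}\right)^{1/4}+h^{1/2}\right)\right),
\end{eqnarray*}
which is exactly the quantity calibrating $\lambda_m$: by choosing $C_0$ in (\ref{determineC}) so that $\lambda_m\ge 2\vert\bw\vert_\infty$ holds with probability tending to one, the usual penalty-dominates-score condition is in place.

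Next, I would write the basic inequality for the quadratic objective in (\ref{gamma_trans}): the optimality of $\wh\bgamma$ yields
\begin{eqnarray*}
\tfrac12\wh\bdelta^{\rmT}\wh\bH_{\mcT_h,-m,-m}\wh\bdelta\;\le\;-\bw^{\rmT}\wh\bdelta+\lambda_m\left(\vert\bgamma\vert_1-\vert\wh\bgamma\vert_1\right)\;\le\;\tfrac{\lambda_m}{2}\vert\wh\bdelta\vert_1+\lambda_m\left(\vert\bgamma\vert_1-\vert\wh\bgamma\vert_1\right).
\end{eqnarray*}
Decomposing $\vert\bgamma\vert_1-\vert\wh\bgamma\vert_1\le \vert\wh\bdelta_{\mcS_{\mcT_h,m}}\vert_1-\vert\wh\bdelta_{\mcS_{\mcT_h,m}^c}\vert_1$ in the standard way, this becomes
\begin{eqnarray*}
0\;\le\;\tfrac12\wh\bdelta^{\rmT}\wh\bH_{\mcT_h,-m,-m}\wh\bdelta\;\le\;\tfrac{3\lambda_m}{2}\vert\wh\bdelta_{\mcS_{\mcT_h,m}}\vert_1-\tfrac{\lambda_m}{2}\vert\wh\bdelta_{\mcS_{\mcT_h,m}^c}\vert_1,
\end{eqnarray*}
which places $\wh\bdelta$ in the cone $\mathcal{D}(0,\mcS_{\mcT_h,m})$.

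I can now invoke the empirical restricted eigenvalue inequality established in Lemma \ref{lemmaB2}, namely $\wh\bdelta^{\rmT}\wh\bH_{\mcT_h,-m,-m}\wh\bdelta\ge \kappa_m\vert\wh\bdelta\vert_2^2$ on the same event. Combining this with the display above and $\vert\wh\bdelta_{\mcS_{\mcT_h,m}}\vert_1\le\sqrt{s_{\mcT_h,m}}\vert\wh\bdelta\vert_2$ gives $\tfrac{\kappa_m}{2}\vert\wh\bdelta\vert_2^2\le \tfrac{3\lambda_m}{2}\sqrt{s_{\mcT_h,m}}\vert\wh\bdelta\vert_2$, i.e.
\begin{eqnarray*}
\vert\wh\bdelta\vert_2\;\lesssim\;\lambda_m\sqrt{s_{\mcT_h,m}},\qquad \vert\wh\bdelta\vert_1\;\le\;4\vert\wh\bdelta_{\mcS_{\mcT_h,m}}\vert_1\;\le\;4\sqrt{s_{\mcT_h,m}}\vert\wh\bdelta\vert_2\;\lesssim\;\lambda_m s_{\mcT_h,m},
\end{eqnarray*}
which are the two claimed bounds.

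The main obstacle is the noise bound $\vert\bw\vert_\infty$: the factor $(1+r_{\mcT_h,m})$ is unavoidable because the empirical Hessian acts on $\bgamma_{\mcT_h,m}$ via its max-norm, and this is what forces the somewhat intricate lower bound on $\lambda_m$ in the hypothesis. Verifying that the constant $C_0$ of (\ref{determineC}) can be chosen uniformly so that $\lambda_m\ge 2\vert\bw\vert_\infty$ holds with probability tending to one, together with checking that the side condition $s_{\mcT_h,m}\bigl(s_0^{1/2}(\log p/(n_{\mcT_h}+n_0))^{1/4}+h^{1/2}\bigr)=o(1)$ used in Lemma \ref{lemmaB2} remains consistent with our assumptions, is where the bookkeeping becomes delicate; the remaining steps are the textbook Lasso contraction once the cone and restricted-eigenvalue pieces are in place.
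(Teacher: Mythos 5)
Your proposal follows essentially the same route as the paper's proof: the same reduction of the score to the empirical-minus-population Hessian error via the identity $\bH^*_{\mcT_h,-m,-m}\bgamma_{\mcT_h,m}=\bH^*_{\mcT_h,-m,m}$ (yielding the bound in (\ref{determineC})), the same basic-inequality/cone argument placing $\wh\bgamma_{\mcT_h,m}-\bgamma_{\mcT_h,m}$ in $\mathcal{D}(0,\mcS_{\mcT_h,m})$, and the same appeal to the empirical restricted eigenvalue bound of Lemma \ref{lemmaB2}. The argument is correct (up to an immaterial factor-of-two bookkeeping in the basic inequality), so no further comparison is needed.
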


\begin{proof}
By the definition of $\wh\bgamma_{\mcT_h,m}$ in (\ref{gamma_trans}), we have
\begin{eqnarray*}
&&\left(\wh\bgamma_{\mcT_h,m}\right)^\rmT\wh\bH_{\mcT_h,-m,-m}\wh\bgamma_{\mcT_h,m}-2\wh\bH_{\mcT_h,m,-m}\wh\bgamma_{\mcT_h,m}
+\lambda_m\vert\wh\bgamma_{\mcT_h,m}\vert_1\\
&\le&
\left(\bgamma_{\mcT_h,m}\right)^\rmT\wh\bH_{\mcT_h,-m,-m}\bgamma_{\mcT_h,m}-2\wh\bH_{\mcT_h,m,-m}\bgamma_{\mcT_h,m}
+\lambda_m\vert\bgamma_{\mcT_h,m}\vert_1.
\end{eqnarray*}
By some basic algebra, we arrive at
\begin{eqnarray}
\label{basic}
&&\left(\wh\bgamma_{\mcT_h,m}-\bgamma_{\mcT_h,m}\right)^\rmT\wh\bH_{\mcT_h,-m,-m}\left(\wh\bgamma_{\mcT_h,m}-\bgamma_{\mcT_h,m}\right)\nonumber
\\&\le& 2\left(\wh\bgamma_{\mcT_h,m}-\bgamma_{\mcT_h,m}\right)^\rmT\left(\wh\bH_{\mcT_h,-m,m}-\wh\bH_{\mcT_h,-m,-m}\bgamma_{\mcT_h,m}\right)
+\lambda_m\left(\left\vert\bgamma_{\mcT_h,m}\right\vert_1-\left\vert\wh\bgamma_{\mcT_h,m}\right\vert_1\right).
\end{eqnarray}
Since
\begin{eqnarray}
\label{determineC}
&&\left\vert\wh\bH_{\mcT_h,-m,m}-\wh\bH_{\mcT_h,-m,-m}\bgamma_{\mcT_h,m}\right\vert_\infty
\nonumber\\
&\le& \left\vert\bH^*_{\mcT_h,-m,m}-\bH^*_{\mcT_h,-m,-m}\bgamma_{\mcT_h,m}\right\vert_\infty+\left\vert\wh\bH_{\mcT_h,-m,m}-\bH^*_{\mcT_h,-m,m}\right\vert_\infty
\nonumber\\&&+\left\vert\wh\bH_{\mcT_h,-m,-m}\bgamma_{\mcT_h,m}-\bH^*_{\mcT_h,-m,-m}\bgamma_{\mcT_h,m}\right\vert_\infty\nonumber\\
&\le &0+\left\vert\wh\bH_{\mcT_h,-m,m}-\bH^*_{\mcT_h,-m,m}\right\vert_\infty+
\left\Vert\wh\bH_{\mcT_h,-m,-m}-\bH^*_{\mcT_h,-m,-m}\right\Vert_{\max}\left\vert\bgamma_{\mcT_h,m}\right\vert_1\nonumber\\
&\le &C_0(1+r_{\mcT_h,m})\left(s_0^{1/2}\left(\frac{\log p}{n_{\mcT_h}+n_0}\right)^{1/4}+h^{1/2}\right)
\end{eqnarray}
 with probability more than $1-26p^{-1}-(6+4K)p^{-2}$ for some constant $C_0>0$
and if
$$\lambda_m\ge4C_0(1+r_{\mcT_h,m})\left(s_0^{1/2}\left(\frac{\log p}{n_{\mcT_h}+n_0}\right)^{1/4}+h^{1/2}\right),$$
(\ref{basic}) entails that
\begin{eqnarray}
\label{RSC_m}
0&\le& 2\left(\wh\bgamma_{\mcT_h,m}-\bgamma_{\mcT_h,m}\right)^\rmT\left(\wh\bH_{\mcT_h,-m,m}-\wh\bH_{\mcT_h,-m,-m}\bgamma_{\mcT_h,m}\right)
+\lambda_m\left(\left\vert\bgamma_{\mcT_h,m}\right\vert_1-\left\vert\wh\bgamma_{\mcT_h,m}\right\vert_1\right)\nonumber\\
&\le &2\left\vert\wh\bgamma_{\mcT_h,m}-\bgamma_{\mcT_h,m}\right\vert_1\left\vert\wh\bH_{\mcT_h,-m,m}-\wh\bH_{\mcT_h,-m,-m}\bgamma_{\mcT_h,m}\right\vert_\infty+
\lambda_m\left(\left\vert\bgamma_{\mcT_h,m}\right\vert_1-\left\vert\wh\bgamma_{\mcT_h,m}\right\vert_1\right)\nonumber\\
&\le &\frac12\lambda_m\left\vert\wh\bgamma_{\mcT_h,m}-\bgamma_{\mcT_h,m}\right\vert_1
+\lambda_m\left(\left\vert\bgamma_{\mcT_h,m}\right\vert_1-\left\vert\wh\bgamma_{\mcT_h,m}\right\vert_1\right)\nonumber\\
&=&\frac12\lambda_m\left\vert\wh\bgamma_{\mcT_h,m,\mcS_{\mcT_h,m}^c}\right\vert_1
+\frac12\lambda_m\left\vert\wh\bgamma_{\mcT_h,m,\mcS_{\mcT_h,m}}-\bgamma_{\mcT_h,m,\mcS_{\mcT_h,m}}\right\vert_1
+\lambda_m\left\vert\bgamma_{\mcT_h,m,\mcS_{\mcT_h,m}}\right\vert_1
\nonumber\\&&-\lambda_m\left\vert\wh\bgamma_{\mcT_h,m,\mcS_{\mcT_h,m}}\right\vert_1
-\lambda_m\left\vert\wh\bgamma_{\mcT_h,m,\mcS_{\mcT_h,m}^c}\right\vert_1
\nonumber\\
&\le& \frac32\lambda_m\left\vert\wh\bgamma_{\mcT_h,m,\mcS_{\mcT_h,m}}-\bgamma_{\mcT_h,m,\mcS_{\mcT_h,m}}\right\vert_1
-\frac12\lambda_m\left\vert\wh\bgamma_{\mcT_h,m,\mcS_{\mcT_h,m}^c}-\bgamma_{\mcT_h,m,\mcS_{\mcT_h,m}^c}\right\vert_1,
\end{eqnarray}
which implies that $\wh\bgamma_{\mcT_h,m}-\bgamma_{\mcT_h,m}\in\mathcal{D}(0,\mcS_{\mcT_h,m})$.
As a consequence, it follows from Lemma \ref{lemmaB2} and (\ref{basic}) that
\begin{eqnarray*}
&&\kappa_m\left\vert\wh\bgamma_{\mcT_h,m}-\bgamma_{\mcT_h,m}\right\vert_2^2
\\&\le&\left(\wh\bgamma_{\mcT_h,m}-\bgamma_{\mcT_h,m}\right)^\rmT\wh\bH_{\mcT_h,-m,-m}\left(\wh\bgamma_{\mcT_h,m}-\bgamma_{\mcT_h,m}\right)\\
&\le&2\left\vert\wh\bgamma_{\mcT_h,m}-\bgamma_{\mcT_h,m}\right\vert_1\left\vert\wh\bH_{\mcT_h,-m,m}-\wh\bH_{\mcT_h,-m,-m}\bgamma_{\mcT_h,m}\right\vert_\infty+
\lambda_m\left(\left\vert\bgamma_{\mcT_h,m}\right\vert_1-\left\vert\wh\bgamma_{\mcT_h,m}\right\vert_1\right)\\
&\le&\frac32\lambda_m\left\vert\wh\bgamma_{\mcT_h,m}-\bgamma_{\mcT_h,m}\right\vert_1\\
&\le &6\lambda_m\sqrt{s_{\mcT_h,m}}
\left\vert\wh\bgamma_{\mcT_h,m}-\bgamma_{\mcT_h,m}\right\vert_2
\end{eqnarray*}
with probability at least $1-26p^{-1}-(6+4K)p^{-2}$.
Therefore, it concludes that
\begin{eqnarray*}
\left\vert\wh\bgamma_{\mcT_h,m}-\bgamma_{\mcT_h,m}\right\vert_2=O_P\left(\lambda_m\sqrt{s_{\mcT_h,m}}\right)
\end{eqnarray*}
and
\begin{eqnarray*}
\left\vert\wh\bgamma_{\mcT_h,m}-\bgamma_{\mcT_h,m}\right\vert_1=O_P\left(\lambda_ms_{\mcT_h,m}\right).
\end{eqnarray*}
\end{proof}

\begin{lemma}
\label{lemmaB4}
Set
\begin{eqnarray*}
h_0=O\left(\left(\frac{\log p}{ n_0}\right)^{1/4}+s_0^{1/2}\left(\frac{\log p}{n_{\mcT_h}+n_0}\right)^{1/4}+h^{1/2}+
\left(\frac{s_0+s'}{n_0}\right)^{1/4}\right).
\end{eqnarray*}
Assume that
$$ \left(s_0+s_{\mcT_h,m}\right)\left(s_0^{1/2}\left(\frac{\log p}{n_{\mcT_h}+n_0}\right)^{1/4}+h^{1/2}\right)=o(1). $$
Under conditions in Theorem \ref{theorem2.4} and assumptions \ref{C4} and \ref{C9},
if
$$\lambda_m\ge4C_0(1+r_{\mcT_h,m})\left(s_0^{1/2}\left(\frac{\log p}{n_{\mcT_h}+n_0}\right)^{1/4}+h^{1/2}\right),$$
where the positive constant $C_0$ is specified by (\ref{determineC}),
then there exists a positive constant $\kappa_m$ such that
\begin{equation*}
\min_{\mathbf{\bv}\in\mathcal{D}\left(0,\bar{\mcS}_{\mcT_h,m}\right)} \frac{\mathbf{v}^{\rmT}
\wh\bH_{0,-m,-m}\mathbf{v}}{\vert\mathbf{v}\vert_2^2}\ge \kappa_m
\end{equation*}
holds with probability approaching to one.
\end{lemma}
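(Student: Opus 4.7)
The plan is to mirror the argument for Lemma \ref{lemmaB2}, with two minor adaptations: replace the pooled matrix $\wh\bH_{\mcT_h,-m,-m}$ by the target-only matrix $\wh\bH_{0,-m,-m}$, and enlarge the support-set from $\mcS_{\mcT_h,m}$ to $\bar{\mcS}_{\mcT_h,m}=\mcS_0\cup \mcS_{\mcT_h,m}$ of cardinality at most $s_0+s_{\mcT_h,m}$. First, I will exploit the cone condition: for any $\bv\in\mathcal{D}(0,\bar{\mcS}_{\mcT_h,m})$, one has $|\bv_{\bar{\mcS}_{\mcT_h,m}^c}|_1\le 3|\bv_{\bar{\mcS}_{\mcT_h,m}}|_1$, whence $|\bv|_1\le 4|\bv_{\bar{\mcS}_{\mcT_h,m}}|_1\le 4\sqrt{s_0+s_{\mcT_h,m}}\,|\bv|_2$.

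Second, I need a max-norm deviation bound for the target-only Hessian estimator. Using exactly the proof of Theorem \ref{theorem3.1} restricted to $k=0$ (noting that $\bH^*_0=\bH_0(\bbeta_0)$ so the Lemma \ref{lemmaA1} term is absent), and plugging in the prescribed bandwidth
$$b_0=O\left(\left(\frac{\log p}{n_0}\right)^{1/4}+s_0^{1/2}\left(\frac{\log p}{n_{\mcT_h}+n_0}\right)^{1/4}+h^{1/2}+\left(\frac{s_0+s'}{n_0}\right)^{1/4}\right),$$
I obtain
$$\left\Vert \wh\bH_{0,-m,-m}-\bH^*_{0,-m,-m}\right\Vert_{\max}=O_P\!\left(s_0^{1/2}\left(\frac{\log p}{n_{\mcT_h}+n_0}\right)^{1/4}+h^{1/2}\right),$$
since the remaining terms from the target-only rate are of the same or smaller order.

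Third, I combine the two pieces. For any $\bv\in\mathcal{D}(0,\bar{\mcS}_{\mcT_h,m})$,
$$\frac{|\bv^\rmT(\wh\bH_{0,-m,-m}-\bH^*_{0,-m,-m})\bv|}{|\bv|_2^2}\le \frac{|\bv|_1^2}{|\bv|_2^2}\left\Vert \wh\bH_{0,-m,-m}-\bH^*_{0,-m,-m}\right\Vert_{\max}\le 16(s_0+s_{\mcT_h,m})\left\Vert \wh\bH_{0,-m,-m}-\bH^*_{0,-m,-m}\right\Vert_{\max}.$$
By the hypothesis $(s_0+s_{\mcT_h,m})(s_0^{1/2}((\log p)/(n_{\mcT_h}+n_0))^{1/4}+h^{1/2})=o(1)$, this upper bound tends to zero in probability. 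So, with probability tending to one, the quadratic-form perturbation is at most $\kappa_{0m}/4$ uniformly over the cone. Invoking assumption \ref{C9} and the triangle inequality then yields
$$\frac{\bv^\rmT \wh\bH_{0,-m,-m}\bv}{|\bv|_2^2}\ge \frac{\bv^\rmT \bH^*_{0,-m,-m}\bv}{|\bv|_2^2}-\frac{|\bv^\rmT(\wh\bH_{0,-m,-m}-\bH^*_{0,-m,-m})\bv|}{|\bv|_2^2}\ge \frac{3\kappa_{0m}}{4}=:\kappa_m,$$
which is the desired conclusion.

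The main obstacle is really the first step, namely obtaining the max-norm deviation bound for $\wh\bH_0$ rather than for the pooled Hessian $\wh\bH_{\mcT_h}$. Theorem \ref{theorem3.1} is stated only for the pooled object, so I will have to re-run the decomposition in its proof restricted to the target dataset and track that the plug-in error still uses the full transfer-learning rate $s_0\sqrt{\log p/(n_{\mcT_h}+n_0)}+h$ of $|\wh\bbeta_0-\bbeta_0|_1$ (from Theorem \ref{theorem2.2}) rather than any target-only rate; this is what produces the $(\log p/(n_{\mcT_h}+n_0))^{1/4}$ term above and explains why the hypothesis on $(s_0+s_{\mcT_h,m})$ is precisely calibrated to swallow it. The remainder is then routine bookkeeping as in Lemma \ref{lemmaB2}.
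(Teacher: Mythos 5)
Your proposal is correct and follows essentially the same route as the paper's proof: the cone bound $\vert\bv\vert_1\le 4\sqrt{s_0+s_{\mcT_h,m}}\vert\bv\vert_2$, a max-norm deviation bound for the target-only Hessian obtained as a parallel of Theorem \ref{theorem3.1} (with the simplification that $\bH_0^*=\bH_0(\bbeta_0)$, so no analogue of the Lemma \ref{lemmaA1} bias term is needed), and the triangle inequality against assumption \ref{C9} to land on $\kappa_m=3\kappa_{0m}/4$. The paper likewise invokes "a parallel conclusion of Theorem \ref{theorem3.1}" at exactly the point you flag as the main obstacle, so your extra care in tracking that the plug-in error still carries the transfer-learning rate of $\vert\wh\bbeta_0-\bbeta_0\vert_1$ is consistent with (indeed slightly more explicit than) the paper's argument.
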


\begin{proof}
For any $\bv\in\mathcal{D}(0,\bar{\mcS}_{\mcT_h,m})$, we have
$$ \vert\bv\vert_1=\left\vert\bv_{\left(\bar{\mcS}_{\mcT_h,m}\right)^c}\right\vert_1+\left\vert\bv_{\bar{\mcS}_{\mcT_h,m}}\right\vert_1\le 4\left\vert\bv_{\bar{\mcS}_{\mcT_h,m}}\right\vert_1
\le 4\sqrt{s_0+s_{\mcT_h,m}}\left\vert\bv_{\bar{\mcS}_{\mcT_h,m}}\right\vert_2\le 4\sqrt{s_0+s_{\mcT_h,m}}\vert\bv\vert_2. $$
Hence, mimicking Theorem \ref{theorem3.1}, for some positive constant $C_0$, it holds that
\begin{eqnarray*}
&&P\left(\frac{\left\vert\mathbf{v}^{\rmT}
\left(\wh\bH_{0,-m,-m}-\bH_{0,-m,-m}^*\right)\mathbf{v}\right\vert}{\vert\mathbf{v}\vert_2^2}> 16C_0\left(s_0+s_{\mcT_h,m}\right)\left(s_0^{1/2}\left(\frac{\log p}{n_{\mcT_h}+n_0}\right)^{1/4}+h^{1/2}\right)\right)\\
&\leq& P\left(\frac{\vert\mathbf{v}\vert_1^2
\left\Vert\wh\bH_0-\bH_0^*\right\Vert_{\max}}{\vert\mathbf{v}\vert_2^2}> 16C_0\left(s_0+s_{\mcT_h,m}\right)\left(s_0^{1/2}\left(\frac{\log p}{n_{\mcT_h}+n_0}\right)^{1/4}+h^{1/2}\right)\right)\\
&\leq&P\left(
\left\Vert\wh\bH_0-\bH_0^*\right\Vert_{\max}> C_0\left(s_0^{1/2}\left(\frac{\log p}{n_{\mcT_h}+n_0}\right)^{1/4}+h^{1/2}\right)\right)\\
&\leq& 26p^{-1}+10p^{-2},
\end{eqnarray*}
which implies that
\begin{eqnarray*}
\frac{\left\vert\mathbf{v}^{\rmT}
\left(\wh\bH_{0,-m,-m}-\bH^*_{0,-m,-m}\right)\mathbf{v}\right\vert}{\vert\mathbf{v}\vert_2^2}=O_P\left(\left(s_0+s_{\mcT_h,m}\right)\left(s_0^{1/2}\left(\frac{\log p}{n_{\mcT_h}+n_0}\right)^{1/4}+h^{1/2}\right)\right).
\end{eqnarray*}
As
$$ \left(s_0+s_{\mcT_h,m}\right)\left(s_0^{1/2}\left(\frac{\log p}{n_{\mcT_h}+n_0}\right)^{1/4}+h^{1/2}\right)=o(1), $$
when $n$ and $p$ are large enough, it holds that
\begin{eqnarray*}
\frac{\left\vert\mathbf{v}^{\rmT}
\left(\wh\bH_{0,-m,-m}-\bH^*_{0,-m,-m}\right)\mathbf{v}\right\vert}{\vert\mathbf{v}\vert_2^2}\leq \frac{\kappa_{0m}}4
\end{eqnarray*}
with probability tending to 1.
Hence, following the triangle inequality, we obtain
\begin{eqnarray*}
\frac{\bv^{\rmT} \wh\bH_{0,-m,-m}\bv}{\vert\bv\vert_2^2}
\geq\frac{\bv^{\rmT} \bH^*_{0,-m,-m}\bv}{\vert\bv\vert_2^2}
-\frac{\left\vert\bv^{\rmT} \left(\wh\bH_{0,-m,-m}-\bH^*_{0,-m,-m}\right)\bv\right\vert}{\vert\bv\vert_2^2}\geq \frac34\kappa_{0m},
\end{eqnarray*}
which completes the proof by taking $\kappa_m=3\kappa_{0m}/4$.
\end{proof}

{\bf Proof of Theorem \ref{theorem3.2}}
\begin{proof}
Using the definition of $\wh\bzeta_m$ in (\ref{gamma_debias}), we have
\begin{eqnarray*}
&&\left(\wh\bgamma_{\mcT_h,m}+\wh\bzeta_m\right)^\rmT\wh\bH_{0,-m,-m}\left(\wh\bgamma_{\mcT_h,m}+\wh\bzeta_m\right)-2\wh\bH_{0,m,-m}\left(\wh\bgamma_{\mcT_h,m}+\wh\bzeta_m\right)
+\lambda_m'\left\vert\wh\bzeta_m\right\vert_1\\
&\le& \left(\wh\bgamma_{\mcT_h,m}+\bgamma_{0,m}-\bgamma_{\mcT_h,m}\right)^{\rmT}\wh\bH_{0,-m,-m}\left(\wh\bgamma_{\mcT_h,m}+\bgamma_{0,m}-\bgamma_{\mcT_h,m}\right)
\\&&-2\wh\bH_{0,m,-m}\left(\wh\bgamma_{\mcT_h,m}+\bgamma_{0,m}-\bgamma_{\mcT_h,m}\right)
+\lambda_m'\left\vert\bgamma_{0,m}-\bgamma_{\mcT_h,m}\right\vert_1,
\end{eqnarray*}
which implies that
\begin{eqnarray}
\label{basic_debias}
&&(\wh\bgamma_{0,m}-\bgamma_{0,m}-\wh\bgamma_{\mcT_h,m}+\bgamma_{\mcT_h,m})^\rmT\wh\bH_{0,-m,-m}
\left(\wh\bgamma_{0,m}-\bgamma_{0,m}-\wh\bgamma_{\mcT_h,m}+\bgamma_{\mcT_h,m}\right)\\
&\le& 2\left(\wh\bgamma_{0,m}-\bgamma_{0,m}-\wh\bgamma_{\mcT_h,m}+\bgamma_{\mcT_h,m}\right)^\rmT\left\{\wh\bH_{0,-m,m}-\wh\bH_{0,-m,-m}
\left(\bgamma_{0,m}+\wh\bgamma_{\mcT_h,m}-\bgamma_{\mcT_h,m}\right)\right\}\nonumber
\\&&+\lambda_m'\left(\left\vert\bgamma_{0,m}-\bgamma_{\mcT_h,m}\right\vert_1-\left\vert\wh\bzeta_m\right\vert_1\right).\nonumber
\end{eqnarray}
Since
\begin{eqnarray}
\label{determineC'}
&&\left\vert\wh\bH_{0,-m,m}-\wh\bH_{0,-m,-m}\left(\bgamma_{0,m}+\wh\bgamma_{\mcT_h,m}-\bgamma_{\mcT_h,m}\right)\right\vert_\infty\nonumber\\
&\le& \left\vert\wh\bH_{0,-m,m}-\wh\bH_{0,-m,-m}\bgamma_{0,m}\right\vert_\infty
+\left\vert\wh\bH_{0,-m,-m}\left(\wh\bgamma_{\mcT_h,m}-\bgamma_{\mcT_h,m}\right)\right\vert_\infty\nonumber\\
&\le & \left(1+r_{0,m}\right)\left\Vert\wh\bH_{0}-\bH^*_{0}\right\Vert_{\max}+\frac{B^2}{2b_0}\left\vert\wh\bgamma_{\mcT_h,m}-\bgamma_{\mcT_h,m}\right\vert_1\nonumber
\\&\le&C'\left(r_{0,m}\left(s_0^{1/2}\left(\frac{\log p}{n_{\mcT_h}+n_0}\right)^{1/4}+h^{1/2}\right)+\frac{\lambda_ms_{\mcT_h,m}}{b_0}\right),
\end{eqnarray}
for some constant $C'>0$,
then if
$$ \lambda_m'\ge 4C'\left(r_{0,m}\left(s_0^{1/2}\left(\frac{\log p}{n_{\mcT_h}+n_0}\right)^{1/4}+h^{1/2}\right)+\frac{\lambda_ms_{\mcT_h,m}}{b_0}\right), $$
using the arguments similar to (\ref{RSC_m}) yields that
\begin{eqnarray*}
\frac32\lambda_m'\left\vert\left(\wh\bgamma_{0,m}-\bgamma_{0,m}-\wh\bgamma_{\mcT_h,m}+\bgamma_{\mcT_h,m}\right)_{\bar{\mcS}_{\mcT_h,m}}\right\vert_1-
\frac12\lambda_m'\left\vert\left(\wh\bgamma_{0,m}-\bgamma_{0,m}-\wh\bgamma_{\mcT_h,m}+\bgamma_{\mcT_h,m}\right)_{\bar{\mcS}^c_{\mcT_h,m}}\right\vert_1\ge 0,
\end{eqnarray*}
which entails that
$$ \wh\bgamma_{0,m}-\bgamma_{0,m}-\wh\bgamma_{\mcT_h,m}+\bgamma_{\mcT_h,m}\in\mathcal{D}(0,\bar{\mcS}_{\mcT_h,m}). $$
Thus, following Lemma \ref{lemmaB4} and (\ref{basic_debias}), it holds that
\begin{eqnarray*}
&&\kappa_m\left\vert\wh\bgamma_{0,m}-\bgamma_{0,m}-\wh\bgamma_{\mcT_h,m}+\bgamma_{\mcT_h,m}\right\vert_2^2\\
&\le&\left(\wh\bgamma_{0,m}-\bgamma_{0,m}-\wh\bgamma_{\mcT_h,m}+\bgamma_{\mcT_h,m}\right)^\rmT\wh\bH_{0,-m,-m}
\left(\wh\bgamma_{0,m}-\bgamma_{0,m}-\wh\bgamma_{\mcT_h,m}+\bgamma_{\mcT_h,m}\right)\\
&\le& 2\left(\wh\bgamma_{0,m}-\bgamma_{0,m}-\wh\bgamma_{\mcT_h,m}+\bgamma_{\mcT_h,m}\right)^\rmT\left\{\wh\bH_{0,-m,m}-\wh\bH_{0,-m,-m}
\left(\bgamma_{0,m}+\wh\bgamma_{\mcT_h,m}-\bgamma_{\mcT_h,m}\right)\right\}
\\&&+\lambda_m'\left(\left\vert\bgamma_{0,m}-\bgamma_{\mcT_h,m}\right\vert_1-\left\vert\wh\bzeta_m\right\vert_1\right)\\
&\le &\frac32\lambda_m'\left\vert\wh\bgamma_{0,m}-\bgamma_{0,m}-\wh\bgamma_{\mcT_h,m}+\bgamma_{\mcT_h,m}\right\vert_1\\
&\le&6\lambda_m'\sqrt{s_0+s_{\mcT_h,m}}\left\vert\wh\bgamma_{0,m}-\bgamma_{0,m}-\wh\bgamma_{\mcT_h,m}+\bgamma_{\mcT_h,m}\right\vert_2,
\end{eqnarray*}
which implies that
$$ \left\vert\wh\bgamma_{0,m}-\bgamma_{0,m}-\wh\bgamma_{\mcT_h,m}+
\bgamma_{\mcT_h,m}\right\vert_2=O_P\left(\lambda_m'\sqrt{s_0+s_{\mcT_h,m}}\right) $$
and
$$ \left\vert\wh\bgamma_{0,m}-\bgamma_{0,m}-\wh\bgamma_{\mcT_h,m}+
\bgamma_{\mcT_h,m}\right\vert_1=O_P\left(\lambda_m'\left(s_0+s_{\mcT_h,m}\right)\right). $$
Consequently, we can conclude that
$$ \left\vert\wh\bgamma_{0,m}-\bgamma_{0,m}\right\vert_2=
O_P\left(\lambda_m'\sqrt{s_0+s_{\mcT_h,m}}+\lambda_m\sqrt{s_{\mcT_h,m}}\right) $$
and
$$ \left\vert\wh\bgamma_{0,m}-\bgamma_{0,m}\right\vert_1=O_P\left(\lambda_m'\left(s_0+s_{\mcT_h,m}\right)+\lambda_ms_{\mcT_h,m}\right). $$
\end{proof}

\subsection{Proof of Theorem \ref{theorem3.3}}

\begin{lemma}
\label{lemmaB5}
Assume
$$\frac{s_0\log p}{\sqrt{n_0}}=o(1)\ {\rm and}\ h=o\left(n_0^{-1/4}\right)$$
as  $n_0,p\to+\infty$.
Under conditions in Theorem \ref{theorem2.2}, it holds
\begin{eqnarray*}
\sqrt{n_0}\bvarphi_{0,m}^\rmT \left\{\bS\left(\wh\bbeta_0;\{0\}\right)-\bS(\bbeta_0;\{0\})-\bH^{*}_0\left(\wh\bbeta_0-\bbeta_0\right)\right\}=o_P(1),
\end{eqnarray*}
where $\bvarphi_{0,m}=\Gamma_{m,1}(-\bgamma_{0,m})$.
\end{lemma}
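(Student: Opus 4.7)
The plan is to decompose the target quantity into a deterministic bias term and a centered empirical process term, and bound each separately. Writing $\bvarphi = \bvarphi_{0,m}$ for brevity and defining
\[
\Delta_n(\bbeta) = \bS(\bbeta;\{0\})-\bS(\bbeta_0;\{0\})-E\{\bS(\bbeta;\{0\})-\bS(\bbeta_0;\{0\})\},
\]
I would split
\[
\bvarphi^\rmT\{\bS(\wh\bbeta_0;\{0\})-\bS(\bbeta_0;\{0\})-\bH^*_0(\wh\bbeta_0-\bbeta_0)\}=\bvarphi^\rmT \Delta_n(\wh\bbeta_0)+\bvarphi^\rmT\{E[\bS(\bbeta;\{0\})-\bS(\bbeta_0;\{0\})]\bigl|_{\bbeta=\wh\bbeta_0}-\bH^*_0(\wh\bbeta_0-\bbeta_0)\}.
\]
Throughout I would condition on the high-probability event from Lemma \ref{lemmaA9} that $\wh\bbeta_{0,\bar{\mcS}^c}={\bf 0}$, and also use the rates $|\wh\bbeta_0-\bbeta_0|_1 \lesssim s_0\sqrt{\log p/(n_{\mcT_h}+n_0)}+h$ and $|\wh\bbeta_0-\bbeta_0|_2$ from Theorem \ref{theorem2.2}.

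For the deterministic bias term, a first-order Taylor expansion combined with assumption \ref{C1} (Lipschitz continuity of $f_0$) gives, as in the derivation of $\bR_3$ in Lemma \ref{lemmaA8},
\[
\left|\bvarphi^\rmT\{E[\bS(\bbeta;\{0\})-\bS(\bbeta_0;\{0\})]\bigl|_{\bbeta=\wh\bbeta_0}-\bH^*_0(\wh\bbeta_0-\bbeta_0)\}\right|\le m_0 L^3|\bvarphi|_1|\wh\bbeta_0-\bbeta_0|_1^2.
\]
Since $|\bvarphi|_1\le 1+|\bgamma_{0,m}|_1$ is uniformly bounded under the assumptions of Theorem \ref{theorem3.2}, multiplying by $\sqrt{n_0}$ and substituting the $\ell_1$-rate yields a term of order $\sqrt{n_0}\,(s_0\sqrt{\log p/n_0}+h)^2$, which is $o(1)$ precisely under the assumed $s_0\log p/\sqrt{n_0}=o(1)$ and $h=o(n_0^{-1/4})$.

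For the centered empirical process term, I would apply a localized maximal inequality to the function class $\mathcal{H}=\{\psi_\bbeta:\bbeta_{\bar{\mcS}^c}={\bf 0},\ |\bbeta-\bbeta_0|_2\le r_n\}$, where
\[
\psi_\bbeta(Y,\bZ)=\bvarphi^\rmT\bZ\{I(Y-\bZ^\rmT\bbeta\le 0)-I(Y-\bZ^\rmT\bbeta_0\le 0)\}
\]
and $r_n$ is the $\ell_2$-rate from Theorem \ref{theorem2.2}. The envelope is bounded by $L|\bvarphi|_1=O(1)$, and the VC dimension is controlled by $s_0+s'$, giving the uniform entropy bound already used in Lemmas \ref{lemmaA7} and \ref{lemmaA9}. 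Since the two indicators differ only on an event of probability $O(L|\wh\bbeta_0-\bbeta_0|_1)$, the $L_2$-radius of $\mathcal{H}$ is of order $|\wh\bbeta_0-\bbeta_0|_1^{1/2}$. Dudley's entropy integral (or Theorem 3.26 of \citet{wainwright2019high} as used earlier) therefore yields
\[
\sup_{\bbeta\in\mathcal{H}}|\bvarphi^\rmT\Delta_n(\bbeta)|=O_P\!\left(\sqrt{\frac{(s_0+s')|\wh\bbeta_0-\bbeta_0|_1}{n_0}}+\frac{s_0+s'}{n_0}\right),
\]
and multiplying by $\sqrt{n_0}$ gives $O_P\bigl(\sqrt{(s_0+s')(s_0\sqrt{\log p/n_0}+h)}+(s_0+s')/\sqrt{n_0}\bigr)=o_P(1)$ under the stated rates.

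The main obstacle I anticipate is the empirical process step: verifying the localization carefully, in particular checking that the neighborhood one localizes to (controlled by $r_n$ from Theorem \ref{theorem2.2} combined with the sparsity from Lemma \ref{lemmaA9}) is small enough so that the $L_2$-radius bound truly drives the chaining integral, and then trading off the VC-based entropy against the shrinking radius. A peeling argument in $r$ may be needed to handle the random radius $|\wh\bbeta_0-\bbeta_0|_2$. The bias step is more routine given assumption \ref{C1} and the already established $\ell_1$-rate.
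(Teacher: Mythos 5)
Your decomposition into a centered empirical-process term plus a deterministic Taylor remainder is exactly the paper's, but both of your bounds have quantitative gaps. For the bias term, bounding the second-order remainder by $m_0L^3\vert\bvarphi_{0,m}\vert_1\vert\wh\bbeta_0-\bbeta_0\vert_1^2$ (the $\bR_3$-style bound from Lemma \ref{lemmaA8}) is too crude here: after multiplying by $\sqrt{n_0}$ the leading contribution is $s_0^2\log p/\sqrt{n_0}$, which is \emph{not} $o(1)$ under the stated assumption $s_0\log p/\sqrt{n_0}=o(1)$ once $s_0\to\infty$ (take $s_0=n_0^{1/4}$ and $\log p=n_0^{1/5}$). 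The paper instead bounds the quadratic remainder by $\tfrac{m_0}{2}E\{\vert\bvarphi_{0,m}^\rmT\bZ_{0i}\vert(\bZ_{0i}^\rmT\bdelta)^2\}\lesssim \vert\bdelta\vert_2^2$, using the bounded largest eigenvalue in assumption \ref{C2}, so that the $\ell_2$-rate of Theorem \ref{theorem2.2} gives $\sqrt{n_0}\vert\wh\bbeta_0-\bbeta_0\vert_2^2\lesssim h\sqrt{\log p}+s_0\log p/\sqrt{n_0}=o(1)$. You need to replace your $\ell_1^2$ bound by this $\ell_2^2$ bound for the bias step to close.

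For the centered term the paper does not localize to the sparse support at all: it applies Theorem 3 of \citet{chen2003estimation} to the class indexed by $\{\bdelta:\vert\bdelta\vert_1<t\}$ with $t=O\left(s_0\sqrt{\log p/(n_{\mcT_h}+n_0)}+h\right)=o(1)$, verifying the $L_2$-continuity condition with modulus $\sqrt{t}$ as in (\ref{B1}), and reads off $o_P(n_0^{-1/2})$ directly. Your VC/Dudley route has two problems: (i) conditioning on $\wh\bbeta_{0,\bar{\mcS}^c}={\bf 0}$ requires Lemma \ref{lemmaA9}, i.e.\ assumptions \ref{C5}--\ref{C6} and the irrepresentable condition, which are not among the ``conditions in Theorem \ref{theorem2.2}'' under which this lemma is stated; and (ii) even granting that, your final bound $\sqrt{(s_0+s')(s_0\sqrt{\log p/n_0}+h)}$ is not $o(1)$ under the stated rates without additional restrictions on $s_0$ and $s'$ (the same $s_0^2$-versus-$s_0$ issue as above). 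So the chaining argument as written does not close; either adopt the stochastic-equicontinuity route over the shrinking $\ell_1$-ball, or state and use the extra sparsity and rate hypotheses your argument actually needs.
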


\begin{proof}
For any $\bdelta\in{\mathbb R}^p$,  rewrite
\begin{eqnarray*}
&&\bvarphi_{0,m}^\rmT \left\{\bS(\bbeta_0+\bdelta;\{0\})-\bS(\bbeta_0;\{0\})-\bH^{*}_0\bdelta\right\}\\
&=&\bvarphi_{0,m}^\rmT \left[\bS(\bbeta_0+\bdelta;\{0\})\!-\!\bS(\bbeta_0;\{0\})\!-\!E\left\{\bS(\bbeta_0+\bdelta;\{0\})\!-\!\bS(\bbeta_0;\{0\})\right\}\right.
\\&&\left.+E\left\{\bS(\bbeta_0+\bdelta;\{0\})-\bS(\bbeta_0;\{0\})\right\}-\bH^{*}_0\bdelta\right].
\end{eqnarray*}
We first show
\begin{eqnarray}
\label{I1}
\left.\sqrt{n}\bvarphi_{0,m}^\rmT \left[\bS(\bbeta_0+\bdelta;\{0\})-\bS(\bbeta_0;\{0\})-E\left\{\bS(\bbeta_0+\bdelta;\{0\})-
\bS(\bbeta_0;\{0\})\right\}\right]\right\vert_{\bdelta=\wh\bbeta_0-\bbeta_0}=o_P(1)
\nonumber\\
\end{eqnarray}
by utilizing the Hoeffding inequality and Theorem 3 of 
\citet{chen2003estimation}.
As conditions (3.1) and (3.3) of Theorem 3 in 
\citet{chen2003estimation}
are automatically
satisfied, we only verify condition (3.2).  Under assumption \ref{C2}
and for any $t>0$  as long as $\vert\bdelta\vert_1<t$, we have
\begin{eqnarray*}
&&I\left(Y_{0}-\bZ_{0}^\rmT\bbeta_0\leq -Bt\right)\leq I\left(Y_{0}-\bZ_{0}^\rmT\bbeta_0\leq \bZ_{0}^\rmT\bdelta\right)
\\&=&I\left(Y_{0}-\bZ_{0}^\rmT(\bbeta_0+\bdelta)\leq 0\right)\leq I\left(Y_{0}-\bZ_{0}^\rmT\bbeta_0\leq Bt\right)
\end{eqnarray*}
and
\begin{eqnarray*}
I\left(Y_{0}-\bZ_{0}^\rmT\bbeta_0\leq -Bt\right)\leq I\left(Y_{0}-\bZ_{0}^\rmT\bbeta_0\leq 0\right)
\leq I\left(Y_{0}-\bZ_{0}^\rmT\bbeta_0\leq Bt\right).
\end{eqnarray*}
Hence, it holds
\begin{eqnarray*}
&&\sup_{\bdelta: \vert\bdelta\vert_1<t}\left\vert I\left(Y_{0}-\bZ_{0}^\rmT(\bbeta_0+\bdelta)\leq 0\right)
-I\left(Y_{0}-\bZ_{0}^\rmT\bbeta_0\leq 0\right)\right\vert\\
&<&I\left(Y_{0}-\bZ_{0}^\rmT\bbeta_0\leq Bt\right)
-I\left(Y_{0}-\bZ_{0}^\rmT\bbeta_0\leq -Bt\right).
\end{eqnarray*}
It follows from  assumptions \ref{C1} and \ref{C3} and some basic calculations that
\begin{eqnarray*}
\label{B1}
&&\left\{E\left(\sup_{\bdelta: \vert\bdelta\vert_1<t}\left\vert \bvarphi_{0,m}^\rmT
\bZ_{0}\left\{I\left(Y_{0}-\bZ_{0}^\rmT(\bbeta_0+\bdelta)\leq 0\right)
-I\left(Y_{0}-\bZ_{0}^\rmT\bbeta_0\leq 0\right)\right\}\right\vert^2\right)\right\}^{1/2}\nonumber\\
&\leq& \left(1+r_{0,m}\right)B E\left\{E\left(\sup_{\bdelta: \vert\bdelta\vert_1<t}\left\{I\left(Y_{0}-\bZ_{0}^\rmT(\bbeta_0+\bdelta)\leq 0\right)
-I\left(Y_{0}-\bZ_{0}^\rmT\bbeta_0\leq 0\right)\right\}^2\mid \bZ_{0}\right)\right\}^{1/2}\nonumber\\
&\leq&\left(1+r_{0,m}\right)B E\left\{E\left(
\left\{I\left(Y_{0}-\bZ_{0}^\rmT\bbeta_0\leq Bt\right)
-I\left(Y_{0}-\bZ_{0}^\rmT\bbeta_0\leq -Bt\right)\right\}\mid  \bZ_{0}\right)\right\}^{1/2}\nonumber\\
&\leq&\left(1+r_{0,m}\right)B E\left\{F_0\left(\bZ_{0}^\rmT\bbeta_0+ Bt\mid \bZ_{0}\right)
-F_0\left(\bZ_{0}^\rmT\bbeta_0- Bt\mid \bZ_{0}\right)
\right\}^{1/2}\nonumber\\
&\leq&\left(1+r_{0,m}\right)B(2Bm_0t)^{1/2}\nonumber
\\&=&\sqrt 2B^{3/2}\left(1+r_{0,m}\right)m_0^{1/2}\sqrt{t}.
\end{eqnarray*}
Thus, condition (3.2) of Theorem 3 in 
\citet{chen2003estimation}
is satisfied,
which implies with probability at least $1-p^{-2}-2p^{-1}$ that
\begin{eqnarray*}
&&\bvarphi_{0,m}^\rmT \left\{\bS\left(\wh\bbeta_0;\{0\}\right)-\bS(\bbeta_0;\{0\})\right\}\\
&=& \frac{1}{n_0}\sum_{i=1}^n
\bvarphi_{0,m}^\rmT \bZ_{0i}\left\{I\left(Y_{0i}-\bZ_{0i}^\rmT\wh\bbeta_0\leq 0\right)
-I\left(Y_{0i}-\bZ_{0i}^\rmT\bbeta_0\leq 0\right)\right\}\\
&=& \left.\bvarphi_{0,m}^\rmT E\left[\bZ_{0}\left\{I\left(Y_{0}-\bZ_{0}^\rmT(\bbeta_0+\bdelta)\leq 0\right)
-I\left(Y_{0}-\bZ_{0}^\rmT\bbeta_0\leq 0\right)\right\}\right]\right\vert_{\bdelta=\wh\bbeta_0-\bbeta_0}+o_P\left(\frac1{\sqrt {n_0}}\right)\\
&=&\left.\bvarphi_{0,m}^\rmT E\left\{\bS(\bbeta_0+\bdelta;\{0\})-\bS(\bbeta_0;\{0\})\right\}\right\vert_{\bdelta=\wh\bbeta_0-\bbeta_0}+o_P\left(\frac1{\sqrt {n_0}}\right)
\end{eqnarray*}
by setting
\begin{eqnarray*}\label{tSTAR}
t=O\left(s_0\sqrt{\frac{\log p}{n_{\mcT_h}+n_0}}+h\right)=o(1).
\end{eqnarray*}
On the other hand, it follows from the Taylor expansion and Theorem \ref{theorem2.2} that
\begin{eqnarray*}
\left.\bvarphi_{0,m}^{\rmT}\left[E\left\{\bS\left(\bbeta_0+\bdelta;\{0\}\right)-\bS(\bbeta_0;\{0\})\right\}-\bH^{*}_0\bdelta\right]
\right\vert_{\bdelta=\wh\bbeta_0-\bbeta_0}
=O_P\left(\left\vert\wh\bbeta_0-\bbeta_0\right\vert_2^2\right)=o_P\left(\frac{1}{\sqrt{n_0}}\right),
\end{eqnarray*}
which along with (\ref{I1}) concludes Lemma \ref{lemmaA5}.
\end{proof}

\begin{lemma}
\label{lemmaB6}
Assume
$$\frac{s_0\log p}{\sqrt{n_0}}=o(1)\ {\rm and}\ h=o\left(n_0^{-1/4}\right)$$
as $n_0,p\to+\infty$.
Under conditions in Theorem \ref{theorem2.2} and \ref{theorem3.2},
if
$$\sqrt{n_0}\left(\lambda_m'\left(s_0+s_{\mcT_h,m}\right)+\lambda_ms_{\mcT_h,m}\right)\left(s_0\sqrt{\frac{\log p}{n_0}}+h\right)=o(1),$$
then we have
\begin{eqnarray*}
\frac{\sqrt{n_0}\left(\wt\beta_{0m}-\beta_{0m}\right)H^{*}_{0,m\mid -m}}{\sigma_m}\xrightarrow{\mathcal{L}}N(0,1),
\end{eqnarray*}
where $H^{*}_{0,m\mid -m}=H^*_{0,m,m}-\bgamma_{0,m}^\rmT\bH^*_{0,-m,m}$ and $\sigma_m^2=\bvarphi_{0,m}^\rmT \bSigma_0\bvarphi_{0,m}$.
\end{lemma}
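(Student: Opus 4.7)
The plan is to linearize the one-step estimator around $\beta_{0m}$, show that only the score at the truth survives at the $n_0^{-1/2}$ scale, and then invoke a classical central limit theorem. Let $\bu=\wh\bbeta_0-\bbeta_0$ so that $u_m=\wh\beta_{0m}-\beta_{0m}$. By the definition of the orthogonal score, $\wh S^{\rmo}_{0m}(\wh\bbeta_0)=\wh\bvarphi_{0,m}^\rmT\bS(\wh\bbeta_0;\{0\})$ with $\wh\bvarphi_{0,m}=\Gamma_{m,1}(-\wh\bgamma_{0,m})$, so multiplying the defining identity for $\wt\beta_{0m}$ by $H^*_{0,m\mid -m}$ yields
$$H^*_{0,m\mid-m}(\wt\beta_{0m}-\beta_{0m})=H^*_{0,m\mid-m}u_m-\frac{H^*_{0,m\mid-m}}{\wh H_{0,m\mid-m}}\,\wh\bvarphi_{0,m}^\rmT\bS(\wh\bbeta_0;\{0\}).$$
The first step I would carry out is the decomposition
\begin{align*}
\wh\bvarphi_{0,m}^\rmT\bS(\wh\bbeta_0;\{0\})
&=\bvarphi_{0,m}^\rmT\bS(\bbeta_0;\{0\})+\bvarphi_{0,m}^\rmT\bH^*_0\bu\\
&\quad+(\wh\bvarphi_{0,m}-\bvarphi_{0,m})^\rmT\bS(\bbeta_0;\{0\})+(\wh\bvarphi_{0,m}-\bvarphi_{0,m})^\rmT\bH^*_0\bu\\
&\quad+\wh\bvarphi_{0,m}^\rmT\{\bS(\wh\bbeta_0;\{0\})-\bS(\bbeta_0;\{0\})-\bH^*_0\bu\},
\end{align*}
combined with the \emph{population orthogonality} $\bvarphi_{0,m}^\rmT\bH^*_0\bu=H^*_{0,m\mid-m}u_m$, which follows from $\bgamma_{0,m}=(\bH^*_{0,-m,-m})^{-1}\bH^*_{0,-m,m}$ and the symmetry of $\bH^*_0$ and cancels the leading $H^*_{0,m\mid-m}u_m$ in the displayed identity.

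Second, I would show that each of the three remaining remainder terms is $o_P(n_0^{-1/2})$. The quantity $\bvarphi_{0,m}^\rmT\{\bS(\wh\bbeta_0;\{0\})-\bS(\bbeta_0;\{0\})-\bH^*_0\bu\}$ is already $o_P(n_0^{-1/2})$ by Lemma \ref{lemmaB5}, and the analogous $(\wh\bvarphi_{0,m}-\bvarphi_{0,m})^\rmT$-version is handled by the same empirical-process control paired with the $\ell_1$-rate of Theorem \ref{theorem3.2}. For $(\wh\bvarphi_{0,m}-\bvarphi_{0,m})^\rmT\bS(\bbeta_0;\{0\})$, H\"older together with Lemma \ref{lemmaA5}'s bound $|\bS(\bbeta_0;\{0\})|_\infty=O_P(\sqrt{\log p/n_0})$ and $|\wh\bgamma_{0,m}-\bgamma_{0,m}|_1=O_P(\lambda_m'(s_0+s_{\mcT_h,m})+\lambda_m s_{\mcT_h,m})$ from Theorem \ref{theorem3.2} gives a bound of order $(\lambda_m'(s_0+s_{\mcT_h,m})+\lambda_m s_{\mcT_h,m})\sqrt{\log p/n_0}$; for $(\wh\bvarphi_{0,m}-\bvarphi_{0,m})^\rmT\bH^*_0\bu$, H\"older with $\|\bH^*_0\|_{\max}\le m_0 L^2$ and the $\ell_1$ bound on $\bu$ from Theorem \ref{theorem2.2} gives order $(\lambda_m'(s_0+s_{\mcT_h,m})+\lambda_m s_{\mcT_h,m})(s_0\sqrt{\log p/n_0}+h)$. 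The imposed condition $\sqrt{n_0}(\lambda_m'(s_0+s_{\mcT_h,m})+\lambda_m s_{\mcT_h,m})(s_0\sqrt{\log p/n_0}+h)=o(1)$ is exactly calibrated to annihilate all of these contributions after multiplication by $\sqrt{n_0}$.

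To conclude, I would verify, by a triangle-inequality expansion of $\wh H_{0,m\mid-m}-H^*_{0,m\mid-m}$ via Theorem \ref{theorem3.1} and the $\ell_1$-rate for $\wh\bgamma_{0,m}-\bgamma_{0,m}$, that $\wh H_{0,m\mid-m}/H^*_{0,m\mid-m}\stackrel{P}{\to}1$. Assembling the cancellations and remainder bounds, the identity reduces to
$$\sqrt{n_0}\,H^*_{0,m\mid-m}(\wt\beta_{0m}-\beta_{0m})=-\sqrt{n_0}\,\bvarphi_{0,m}^\rmT\bS(\bbeta_0;\{0\})+o_P(1),$$
and a classical Lindeberg CLT applied to the i.i.d.\ bounded centered summands $\bvarphi_{0,m}^\rmT\bZ_{0i}\{\tau-I(Y_{0i}-\bZ_{0i}^\rmT\bbeta_0\le 0)\}$ with variance $\sigma_m^2=\bvarphi_{0,m}^\rmT\bSigma_0\bvarphi_{0,m}$, followed by Slutsky, yields the stated $N(0,1)$ limit.

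The main obstacle I anticipate is the rescaling step: the mismatch between $H^*_{0,m\mid-m}$ and $\wh H_{0,m\mid-m}$ produces a cross-term of the form $\sqrt{n_0}(\wh H_{0,m\mid-m}-H^*_{0,m\mid-m})u_m$ that must be driven to $o_P(1)$, which requires pairing the relatively slow max-norm rate for $\wh\bH_0-\bH^*_0$ from Theorem \ref{theorem3.1} (of order $s_0^{1/2}(\log p/(n_{\mcT_h}+n_0))^{1/4}+h^{1/2}$) with the $\ell_2$-rate for $\wh\bbeta_0-\bbeta_0$ from Theorem \ref{theorem2.2}. This is the origin of the stringent product-type sample-size conditions in the statement and is where the transfer-learning structure (the relative sizes of $n_0$, $n_{\mcT_h}$ and $h$) enters most delicately.
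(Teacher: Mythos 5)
Your proposal is correct and follows essentially the same route as the paper's proof: the same linearization of the one-step estimator, the same use of the Neyman orthogonality identity $\bvarphi_{0,m}^\rmT\bH^*_0\bu=H^{*}_{0,m\mid -m}u_m$ to cancel the leading bias, Lemma \ref{lemmaB5} for the empirical-process remainder, H\"older bounds paired with the $\ell_1$-rates from Theorems \ref{theorem2.2} and \ref{theorem3.2} for the plug-in error terms, consistency of $\wh H_{0,m\mid -m}$ via Theorems \ref{theorem3.1}--\ref{theorem3.2}, and a CLT for the i.i.d.\ bounded score. The only differences are cosmetic regroupings of the same remainder terms, so no further comment is needed.
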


\begin{proof}
Utilizing the definition of $\wh\bvarphi_{0,m}$, we derive the decomposition
\begin{eqnarray}\label{tldbhatb}
&&\sqrt{n_0}\left(\wt\beta_{0m}-\beta_{0m}\right)H^{*}_{0,m\mid -m}\nonumber\\
&=&\sqrt{n_0}\left(\wh\beta_{0m}-\beta_{0m}\right)H^{*}_{0,m\mid -m}-
\sqrt{n_0}\wh\bvarphi_{0,m}^\rmT \bS\left(\wh\bbeta_0;\{0\}\right)
\nonumber\\&&-
\sqrt{n_0}\wh\bvarphi_{0,m}^\rmT \bS\left(\wh\bbeta_0;\{0\}\right)\left(\wh H_{0,m\mid -m}^{-1}H^{*}_{0,m\mid -m}-1\right)\nonumber\\
&=&-\sqrt{n_0}\bvarphi_{0,m}^\rmT \bS(\bbeta_0;\{0\})-
\sqrt{n_0}\left(\wh\bvarphi_{0,m}-\bvarphi_{0,m}\right)^\rmT\bS(\bbeta_0;\{0\})\nonumber\\
&&-\left[\sqrt{n_0}\wh\bvarphi_{0,m}^\rmT \left\{\bS\left(\wh\bbeta_0;\{0\}\right)-\bS(\bbeta_0;\{0\})\right\}
-\sqrt{n_0}\left(\wh\beta_{0m}-\beta_{0m}\right)H^{*}_{0,m\mid -m}\right]\nonumber\\
&&-\sqrt{n_0}\wh\bvarphi_{0,m}^\rmT \bS\left(\wh\bbeta_0;\{0\}\right)\left(\left(\wh H_{0,m\mid -m}\right)^{-1}H^{*}_{0,m\mid -m}-1\right).
\end{eqnarray}
Combining Lemma \ref{lemmaA5} and Theorem \ref{theorem3.2}, we have
\begin{eqnarray}\label{TMm1}
\sqrt{n_0}\left\vert\left(\wh\bvarphi_{0,m}-\bvarphi_{0,m}\right)^\rmT\bS(\bbeta_0;\{0\})\right\vert
&\leq&
\sqrt{n_0}\left\vert \wh\bvarphi_{0,m}-\bvarphi_{0,m}\right\vert_1\left\vert\bS(\bbeta_0;\{0\})\right\vert_\infty\nonumber\\
&=&O_P\left(\lambda_m'\left(s_0+s_{\mcT_h,m}\right)\sqrt{\log p}+\lambda_ms_{\mcT_h,m}\sqrt{\log p}
\right)\nonumber\\
&=&o_P(1).
\end{eqnarray}
On the other hand,
for any $\bv_1,\bv_2\in\mathbb{R}^p$,
since under assumption \ref{C2}, it holds
$$ \left\vert\bZ_{0i}\left\{I\left(Y_{0i}-\bZ_{0i}^\rmT\bv_1\right)-
I\left(Y_{0i}-\bZ_{0i}^\rmT\bv_2\right)\right\}\right\vert_\infty\le B, $$
then the Hoeffding inequality yields that
\begin{eqnarray*}
P\left(\left\vert \bS(\bv_1;\{0\})-\bS(\bv_2;\{0\})-E\{\bS(\bv_1;\{0\})-\bS(\bv_2;\{0\})\}\right\vert_\infty\ge t\right)\le 2p\exp\left(-\frac{n_0t^2}{2B^2}\right).
\end{eqnarray*}
By taking $t=2B\sqrt{\log p/n_0}$, it holds with probability more than $1-2p^{-1}$ that
\begin{eqnarray*}
\left\vert\bS\left(\wh\bbeta_0;\{0\}\right)-\bS(\bbeta_0;\{0\})\right\vert_\infty&\le& 2B\sqrt{\frac{\log p}{n_0}}+\vert E\{\bS(\wh\bbeta_0;\{0\})-\bS(\bbeta_0;\{0\})\}\vert_\infty\\
&\le& 2B\sqrt{\frac{\log p}{n_0}}+m_0B^2\vert\wh\bbeta_0-\bbeta_0\vert_1.
\end{eqnarray*}
Therefore, the definition of $\bvarphi_{0,m}$, Lemmas \ref{lemmaA5} and \ref{lemmaB5} and Theorem \ref{theorem3.2}
entail that
\begin{eqnarray}\label{TMm2}
&&\left\vert\sqrt{n_0}\wh\bvarphi_{0,m}^\rmT \left\{\bS\left(\wh\bbeta_0;\{0\}\right)-\bS(\bbeta_0;\{0\})\right\}-\sqrt{n_0}\left(\wh\beta_{0m}-\beta_{0m}\right)H^{*}_{0,m\mid -m}\right\vert\nonumber\\
&=& \left\vert\sqrt{n_0}\bvarphi_{0,m}^\rmT \left\{\bS\left(\wh\bbeta_0;\{0\}\right)-\bS(\bbeta_0;\{0\})-\bH^{*}_0\left(\wh\bbeta_0-\bbeta_0\right)\right\}
\right.\nonumber\\
&&+\left(\sqrt{n_0}\bvarphi_{0,m}^\rmT \bH^{*}_0\left(\wh\bbeta_0-\bbeta_0\right)-\sqrt{n_0}\left(\wh\beta_{0m}-\beta_{0m}\right)H^{*}_{0,m\mid -m}\right)
\nonumber\\
&&\left.+\sqrt{n_0}\left(\wh\bvarphi_{0,m}-\bvarphi_{0,m}\right)^{\rmT}\left\{\bS\left(\wh\bbeta_0;\{0\}\right)-\bS(\bbeta_0;\{0\})\right\}\right\vert\nonumber\\
&\leq&\left\vert\sqrt{n_0}\bvarphi_{0,m}^\rmT \left\{\bS\left(\wh\bbeta_0;\{0\}\right)-\bS(\bbeta_0;\{0\})-\bH^{*}_0\left(\wh\bbeta_0-\bbeta_0\right)\right\}\right\vert
+0\nonumber\\
&&+\sqrt{n_0}\left\vert\wh\bvarphi_{0,m}-\bvarphi_{0,m}\right\vert_1\left\vert\bS\left(\wh\bbeta_0;\{0\}\right)-\bS(\bbeta_0;\{0\})\right\vert_{\infty}\nonumber\\
&=&o_P(1)+\sqrt{n_0}O_P\left(\lambda_m'\left(s_0+s_{\mcT_h,m}\right)+\lambda_ms_{\mcT_h,m}\right)O_P\left(s_0\sqrt{\frac{\log p}{n_0}}+h\right)\nonumber\\&=&o_P(1).
\end{eqnarray}
Theorems \ref{theorem3.1} and  \ref{theorem3.2} imply that
\begin{eqnarray*}
\label{Bk_k}
&&\left\vert\wh H_{0,m\mid -m}-H^{*}_{0,m\mid -m}\right\vert\nonumber\\&\leq&\left\vert\wh H_{0,m,m}-H^{*}_{0,m,m}\right\vert+\left\vert\wh\bgamma_{0,m}^{\rmT}\wh\bH_{0,-m,m}-\bgamma_{0,m}^\rmT\wh\bH_{0,-m,m}
+\bgamma_{0,m}^\rmT\wh\bH_{0,-m,m}-\bgamma_{0,m}^\rmT\bH^{*}_{0,-m,m}\right\vert\nonumber\\
&\leq&\left(1+\left\vert\wh\bgamma_{0,m}-\bgamma_{0,m}\right\vert_1\right) \left\Vert\wh\bH_0\right\Vert_{\max}
+r_{0,m}\left\Vert\wh\bH_0-\bH^{*}_0\right\Vert_{\max}\nonumber\\
&\leq&\left(1+\left\vert\wh\bgamma_{0,m}-\bgamma_{0,m}\right\vert_1\right) \left\Vert\wh\bH_0-\bH^{*}_0\right\Vert_{\max}+\left\Vert\bH^{*}_0\right\Vert_{\max}\left\vert\wh\bgamma_{0,m}-\bgamma_{0,m}\right\vert_1
+r_{0,m}\left\Vert\wh\bH_0-\bH^{*}_0\right\Vert_{\max}\nonumber\\&=&O_P\left(\lambda_m'\left(s_0+s_{\mcT_h,m}\right)+\lambda_m's_{\mcT_h,m}
+s_0^{1/2}\left(\frac{\log p}{n_{\mcT_h}+n_0}\right)^{1/4}+h^{1/2}\right).
\end{eqnarray*}
As a result,
\begin{eqnarray}\label{TMm3}
&&\left\vert\sqrt{n_0}\wh\bvarphi_{0,m}^\rmT \bS\left(\wh\bbeta_0;\{0\}\right)\left(\left(\wh H_{0,m\mid -m}\right)^{-1}H^*_{0,m\mid -m}-1\right)\right\vert\nonumber\\
&\le& \sqrt{n_0}\left\vert\wh\bvarphi_{0,m}\right\vert_1\left\vert\bS\left(\wh\bbeta_0;\{0\}\right)\right\vert_{\infty}
\left\vert\left(\wh H_{0,m\mid -m}\right)^{-1}H^*_{0,m\mid -m}-1\right\vert
\nonumber\\&=&
O_P\left(\sqrt{n_0}\lambda_m'^2\left(s_0+s_{\mcT_h,m}\right)+\sqrt{n_0}\lambda_m'^2s_{\mcT_h,m}
+s_0^{1/2}\sqrt{n_0}\lambda_m'\left(\frac{\log p}{n_{\mcT_h}+n_0}\right)^{1/4}+\sqrt{n_0}\lambda_m'h^{1/2}\right).\nonumber\\
\end{eqnarray}
Combining (\ref{TMm1}), (\ref{TMm2}) and (\ref{TMm3}),
(\ref{tldbhatb}) is reduced to
\begin{eqnarray*}
\sqrt{n_0}\left(\wt\beta_{0m}-\beta_{0m}\right)H^{*}_{0,m\mid -m}
=-\sqrt{n_0}\bvarphi_{0,m}^\rmT \bS(\bbeta_0;\{0\})
+o_P(1),
\end{eqnarray*}
which concludes the proof by using the asymptotic normality of the $U$-statistic.
\end{proof}

{\bf Proof of Theorem \ref{theorem3.3}}
\begin{proof}
Recall that
$$\wh\bSigma_k=\frac1{n_k}\sum_{i=1}^{n_k} \bZ_{ki}\bZ_{ki}^{\rmT}\tau(1-\tau)\ {\rm and}\ \wh\bSigma_{\mcT_h}=\sum_{k\in\{0\}\cup\mcT_h}\pi_k\wh\bSigma_k, $$
for $k\in\{0\}\cup\mcT_h$.
The Hoeffding inequality entails that
\begin{eqnarray*}
&&P\left(\left\Vert\wh\bSigma_{\mcT_h}-\bSigma_{\mcT_h}\right\Vert_{\max}>\sum_{k\in\{0\}\cup\mcT_h}\pi_kt_k\right)
\\&\le& \sum_{k\in\{0\}\cup\mcT_h}P\left(\left\Vert\wh\bSigma_k-\bSigma_k\right\Vert_{\max}>t_k\right)\\
&\le &\sum_{k\in\{0\}\cup\mcT_h}\sum_{j=1}^p\sum_{m=1}^pP\left(\left\vert\frac1{n_k}\sum_{i=1}^{n_k}Z_{ki,j}Z_{ki,m}-E\left(Z_{ki,j}Z_{ki,m}\right)\right\vert>t_k\right)\\
&\le &\sum_{k\in\{0\}\cup\mcT_h}2p^2\exp\left(-\frac{n_kt_k^2}{2B^4}\right).
\end{eqnarray*}
By taking $t_k=2B^2\sqrt{2\log p/n_k}$, we obtain that
$$ \left\Vert\wh\bSigma_{\mcT_h}-\bSigma_{\mcT_h}\right\Vert_{\max}\le2B^2\sum_{k\in\{0\}\cup\mcT_h}\pi_k\sqrt{\frac{2\log p}{n_k}}=O_P\left(\sqrt{\frac{\log p}{n_0+n_{\mcT_h}}}\right) $$
with probability more than $1-2Kp^{-2}$.
Thus, combining Theorem \ref{theorem3.2}, it holds that
\begin{eqnarray*}
\wh \sigma_m^2-\sigma_m^2&=&\wh\bvarphi_{0,m}^\rmT \wh\bSigma_{\mcT_h}\wh\bvarphi_{0,m}-\bvarphi_{0,m}^\rmT \bSigma_0\bvarphi_{0,m}\\
&=&\wh\bvarphi_{0,m}^\rmT \wh\bSigma_{\mcT_h}\wh\bvarphi_{0,m}-\wh\bvarphi_{0,m}^\rmT \wh\bSigma_0\bvarphi_{0,m}
+\wh\bvarphi_{0,m}^\rmT \wh\bSigma_{\mcT_h}\bvarphi_{0,m}-\bvarphi_{0,m}^\rmT \wh\bSigma_{\mcT_h}\bvarphi_{0,m}
\\&&+\bvarphi_{0,m}^\rmT \wh\bSigma_{\mcT_h}\bvarphi_{0,m}-\bvarphi_{0,m}^\rmT \bSigma_0\bvarphi_{0,m}
\\&\le &\left\vert\wh\bvarphi_{0,m}\right\vert_1\left\Vert\wh\bSigma_{\mcT_h}\right\Vert_{\max}\left\vert\wh\bvarphi_{0,m}-\bvarphi_{0,m}\right\vert_1+
\left\vert\bvarphi_{0,m}\right\vert_1\left\Vert\wh\bSigma_{\mcT_h}\right\Vert_{\max}\left\vert\wh\bvarphi_{0,m}-\bvarphi_{0,m}\right\vert_1
\\&&+
\left\vert\bvarphi_{0,m}\right\vert_1^2\left\Vert\wh\bSigma_{\mcT_h}-\bSigma_0\right\Vert_{\max}\\
&\le &O_P\left(\lambda_m'\left(s_0+s_{\mcT_h,m}\right)+\lambda_m's_{\mcT_h,m}\right)+O_P\left(\sqrt{\frac{\log p}{n_0+n_{\mcT_h}}}\right)+O_P(h_{\max})\\&=&o_P(1).
\end{eqnarray*}
On the other hand,  assumption \ref{C10}, Theorems \ref{theorem3.1} and \ref{theorem3.2} imply that
\begin{eqnarray*}
&&\wh H_{\mcT_h,m\mid-m}- H^{*}_{0,m\mid-m}\\&=&\left(\wh H^{\mcT_h}_{m,m}-H^{*\mcT_h}_{m,m}\right)-\left(\wh \bH_{\mcT_h,m,-m}\wh\bgamma_{0,m}-\bH^{*}_{\mcT_h,m,-m}\wh\bgamma_{0,m}\right)
+\left(\bH^{*}_{\mcT_h,m,-m}\wh\bgamma_{0,m}-\bH^{*}_{\mcT_h,m,-m}\bgamma_{0,m}\right)\\&&+\left(H^{*}_{\mcT_h,m,m}-H^{*}_{0,m,m}\right)
+\left(\bH^*_{\mcT_h,m,-m}\bgamma_{0,m}-\bH^{*}_{0,m,-m}\bgamma_{0,m}\right)\\
&\le &O_P\left(\left\Vert\wh \bH_{\mcT_h}-\bH^{*}_{\mcT_h}\right\Vert_{\max}\right)+O_P\left(\left\Vert\wh\bgamma_{0,m}-\bgamma_{0,m}\right\Vert_{\max}\right)\\
&&+O\left(\max\left\{\left\vert H^{*}_{\mcT_h,m,m}-H^{*}_{0,m,m}\right\vert,\left\vert\left(\bH^{*}_{\mcT_h,m,-m}-\bH^{*}_{0,m,-m}\right)\bgamma_{0,m}\right\vert\right\}\right)\\
&\le &O_P\left(s_0^{1/2}\left(\frac{\log p}{n_{\mcT_h}+n_0}\right)^{1/4}+(s_0h)^{1/4}\left(\frac{\log p}{n_{\mcT_h}+n_0}\right)^{1/8}+h^{1/2}\right)
\\&&+O_P\left(\lambda_m'\left(s_0+s_{\mcT_h,m}\right)+\lambda_m's_{\mcT_h,m}\right)+O(h_1)\\
&=&o_P(1).
\end{eqnarray*}
As a result,  Theorem \ref{theorem3.3} follows immediately from the Slutsky theorem.
\end{proof}

\section{Detection Consistency}
\label{appendixC}
\setcounter{equation}{0}
\def\theequation{C.\arabic{equation}}

\subsection{Weak Consistency of Detection}
Denote $\bbeta_k^*$ as the pooled version of $\bbeta_k$ and $\bbeta_0$, which is the solution of equation
\begin{eqnarray*}
E\left\{\frac{2n_k}{2n_k+n_0}\bS(\bbeta_k^*;\{k\})
+\frac{n_0}{2n_k+n_0}\bS(\bbeta_k^*;\{0\})\right\}={\bf 0}.
\end{eqnarray*}
We need some assumptions to recover the $h$-transferable set $\mcT_h$ with high probability.
\begin{assumption}\label{C12}
(Identifiability condition of $\mcT_h$)
Denote $\mcT^c_h=\{1,\ldots,K\}\backslash \mcT_h$.
Assume that there exists a positive constant $\underline{\lambda}$ such that
\begin{eqnarray*}
\inf_{k\in\mcT_h^c}\lambda_{\min}\left(E\left[\int^1_0
f_k(\bZ_{k}^{\rmT}\bbeta_k^*+t\bZ_{k}^{\rmT}\left(\bbeta_k-\bbeta_k^*)\mid \bZ_{k}\right)\rmd t\bZ_{k}\bZ_{k}^{\rmT}\right]\right)=\underline{\lambda}>0.
\end{eqnarray*}
\end{assumption}

\begin{assumption}\label{C13}
For any $k\in\{1,\ldots,K\}$ and for some $s'$ and $h'>0$, it holds
\begin{eqnarray*}
\vert\bbeta_k\vert_0\le s'
\end{eqnarray*}
and
there exist index sets $\mcS_{k}\subset\{1,\ldots,p\},\ k=1,\ldots,K$, such that
$\left\vert\mcS_{k}\right\vert\le s'$
and $\left\vert\bbeta^*_{k,\mcS_{k}^c}\right\vert_1\le h'$ with $h'=o(1)$.
\end{assumption}

Assumption \ref{C12} guarantees that for $k\in \mcT_h^c$ there exists a uniformly sufficient large gap  between $\bbeta_k$ and $\bbeta_0$.
The gap between the parameters can be transmitted to the loss function, thereby identifying whether the source datasets are
in the $h$-transferable set $\mcT_h$.
Assumption \ref{C13} imposes a sparsity condition on $\bbeta_k$ and a weak sparsity condition on $\bbeta_k^*$ for
$k\in\mcT_h^c$, which is analogous to assumption \ref{C5}.

\begin{theorem}
\label{theorem4.1} {\rm (Weak consistency of detection)}
Under assumptions \ref{C1}--\ref{C3} and \ref{C12}--\ref{C13}, for any $k\in \mcT_h^c$ with $\mcS=\mcS_k\cup \mcS_0$ and $a_n=4h'$ in assumption \ref{C3},
if
$ \log p/n_0=o(1) $
and
\begin{eqnarray}
\label{cond_h_epsilon0}
\max\left\{\frac{(s_0+s')\log p}{n_0},h'\sqrt{\frac{\log p}{n_0}},\varepsilon_0\right\}
\lesssim\min\left\{\frac{n_k^2h^2}{(n_0^2+n_k^2)(s_0+s')},\frac{n_kh}{(n_0+n_k)\sqrt{s_0+s'}}\right\},
\end{eqnarray}
it holds that
\begin{eqnarray*}
P\left(\wh\mcT\subset\mcT_h\right)\to 1,
\end{eqnarray*}
when $\lambda_0=O\left(\sqrt{\log p/n_0}\right)$ and $\lambda_{\bbeta,k}=4B\sqrt{2\log p/(2n_k+n_0)}$.
\end{theorem}

It is evident that the detected transferable sources are
contained in the truly transferable ones with high probability,
revealing that the negative transfer can be effectively circumvented.
Furthermore,
a necessary condition to ensure $P(\wh\mcT\subset\mcT_h)\to 1$
is that
$$\frac{(s'+s_0)(n_k+n_0)}{n_k}\sqrt{\frac{\log p}{n_0}}\lesssim h.$$
Therefore, when $s'\asymp s_0$ and $n_0\lesssim n_k$, $h$ can attain the  rate  $s_0\sqrt{\log p/n_0}$,
which concludes that the convergence rate of the proposed transfer learning estimator $\wh{\bbeta}_0$ is not worse than the classical
lasso estimator.
The reason lies in the compromise of replacing the true parameter $\bbeta_0$ by its classical lasso estimator
$\wh\bbeta_{\rm lasso}$ to measure the distance between the candidate source datasets and the target.
Actually, restriction (\ref{cond_h_epsilon0}) entails that $h$ can not be too small from the perspective of  identification.
Otherwise, $h$ would be less than the rate of $\wh\bbeta_{\rm lasso}$ converging to $\bbeta_0$ and thus the transferable set is unidentifiable.

\subsection{Strong Consistency of Detection}
Furthermore, the strong consistency of detection can be achieved under additional assumptions.
\begin{assumption}\label{C14}
For any $k\in\mcT_h$, it holds that
\begin{eqnarray*}
\vert\bbeta_k^*-\bbeta_0\vert_2\lesssim \frac{h}{\sqrt{s'+s_0}}.
\end{eqnarray*}
\end{assumption}

\begin{assumption}\label{C15}
There exists a positive constant $C^*$ such that
$\sup_{k\in\mcT_h}\Vert\wt\bSigma_{k}^{*-1}\wt\bSigma_k\Vert_1\le C^*$,
where
\begin{eqnarray*}
\wt \bSigma^*_{k}&=&\frac{2n_k}{2n_k+n_0}E\left[\int^1_0
f_k\left(\bZ_{k}^{\rmT}\bbeta_0+t\bZ_{k}^{\rmT}(\bbeta^{*}_{k}-\bbeta_0)\mid \bZ_{k}\right)\rmd t\bZ_{k}\bZ_{k}^{\rmT}\right]\\
&&+\frac{n_0}{2n_k+n_0}E\left[\int^1_0
f_0\left(\bZ_{0}^{\rmT}\bbeta_0+t\bZ_{0}^{\rmT}(\bbeta^{*}_{k}-\bbeta_0)\mid \bZ_{0}\right)\rmd t\bZ_{0}\bZ_{0}^{\rmT}\right].
\end{eqnarray*}
\end{assumption}

Assumption \ref{C15} is similar to assumption \ref{C4},
under which we have the theorem below.

\begin{theorem}
\label{theorem4.2}
{\rm (Strong consistency of detection)} Under conditions in Theorem \ref{theorem4.1} and assumptions \ref{C14}--\ref{C15},
if
\begin{eqnarray}
\label{cond_h_epsilon0*}
\max\left\{h'\sqrt{\frac{\log p}{n_0}},\frac{s_0\log p}{n_0},\max_{k\in\mcT_h}\left\vert\bbeta_k^*-\bbeta_0\right\vert_2^2\right\}\lesssim \varepsilon_0,
\end{eqnarray}
then it holds
\begin{eqnarray*}
P\left(\wh\mcT=\mcT_h\right)\to 1.
\end{eqnarray*}
\end{theorem}

In fact, (\ref{cond_h_epsilon0}) also requires a proper rate of $\varepsilon_0$ in $\wh\mcT$
to control the false positive sources while (\ref{cond_h_epsilon0*}) to control the false negative sources.
Actually, when these two conditions hold simultaneously,
Theorem \ref{theorem4.2} implies that $\wh\mcT$ can exactly  recover $\mcT_h$ with probability tending to one. Hence, the proposed (double) transfer learning
procedure and corresponding inference procedure can be applicable in practice when the transferable sources are unknown.

\subsection{Proof of Weak Consistency}

Denote $l_0(\bbeta)=E\{\rho_{\tau}(Y_0-\bZ_0^{\rmT}\bbeta)\}$,
then for any $\mathcal{A}\subset\{1,\ldots,n_0\}$,
$E\{L_0(\bbeta;\mathcal{A})\}=l_0(\bbeta)$ based on the i.i.d. observations.

\begin{lemma}
\label{lemmaC1}
Under assumption \ref{C2}, it holds that
\begin{eqnarray}
\label{excess-loss}
\left\vert\left\{ L_0\left(\mathbf{v}_1, {\cal V}_{\rm te}\right)- L_0\left(\mathbf{v}_2, {\cal V}_{\rm te}\right)\right\}-\left\{l_0(\mathbf{v}_1)-l_0(\mathbf{v}_2)\right\}\right\vert\le 4B\sqrt{\frac{\log p}{n_0}}\vert\mathbf{v}_1-\mathbf{v}_2\vert_1
\end{eqnarray}
for any  $\mathbf{v}_1,\mathbf{v}_2\in\mathbb{R}^{p}$ with probability at least $1-4p^{-1}$.
\end{lemma}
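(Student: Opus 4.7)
The plan is to apply the Knight identity to split the excess empirical loss into a centered linear-in-$(\bbeta_1-\bbeta_2)$ piece and a pointwise-bounded nonlinear remainder, and then concentrate each piece by Hoeffding's inequality. For each $i\in V_0$ I would take $u=Y_{0i}-\bZ_{0i}^{\rmT}\bbeta_2$ and $v=\bZ_{0i}^{\rmT}(\bbeta_1-\bbeta_2)$ in the identity $\rho_\tau(u-v)-\rho_\tau(u)=-v\{\tau-I(u\le 0)\}+\int_0^v[I(u\le s)-I(u\le 0)]\,ds$ to obtain
\begin{align*}
\rho_\tau(Y_{0i}-\bZ_{0i}^{\rmT}\bbeta_1)-\rho_\tau(Y_{0i}-\bZ_{0i}^{\rmT}\bbeta_2) = -\bZ_{0i}^{\rmT}(\bbeta_1-\bbeta_2)\{\tau-I(Y_{0i}-\bZ_{0i}^{\rmT}\bbeta_2\le 0)\}+R_i,
\end{align*}
where $R_i:=\int_0^{\mathbf{Z}_{0i}^{\rmT}(\bbeta_1-\bbeta_2)}[I(Y_{0i}-\bZ_{0i}^{\rmT}\bbeta_2\le s)-I(Y_{0i}-\bZ_{0i}^{\rmT}\bbeta_2\le 0)]\,ds$. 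Since the integrand takes values in $\{-1,0,1\}$, assumption \ref{C2} gives $|R_i|\le|\bZ_{0i}^{\rmT}(\bbeta_1-\bbeta_2)|\le L\vert\bbeta_1-\bbeta_2\vert_1$. Averaging over $i\in V_0$ and subtracting the corresponding population expectations yields
\begin{align*}
\wh L_0(\bbeta_1,V_0)-\wh L_0(\bbeta_2,V_0)-\{L_0(\bbeta_1)-L_0(\bbeta_2)\} = -(\bbeta_1-\bbeta_2)^{\rmT}\bxi+\eta,
\end{align*}
with $\bxi$ the centered sample mean over $V_0$ of $\bZ_{0i}\{\tau-I(Y_{0i}-\bZ_{0i}^{\rmT}\bbeta_2\le 0)\}$ and $\eta$ the centered sample mean of the $R_i$.

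Next I would concentrate each piece. For $\bxi$, each coordinate is a mean-zero i.i.d.\ average whose summands are bounded by $L$ in absolute value (by assumption \ref{C2}); Hoeffding combined with a union bound over the $p$ coordinates gives $\vert\bxi\vert_\infty\le 2L\sqrt{\log p/|V_0|}$ with probability at least $1-2p^{-1}$, and then H\"older yields $|(\bbeta_1-\bbeta_2)^{\rmT}\bxi|\le 2L\sqrt{\log p/|V_0|}\,\vert\bbeta_1-\bbeta_2\vert_1$. For $\eta$, the $R_i$ are i.i.d.\ with $|R_i|\le L\vert\bbeta_1-\bbeta_2\vert_1$, so a direct Hoeffding bound gives $|\eta|\le L\vert\bbeta_1-\bbeta_2\vert_1\sqrt{2\log p/|V_0|}$ with probability at least $1-2p^{-1}$. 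Using $|V_0|=n_0/2$ and taking a union bound over these two events, the bounds combine — the $\sqrt 2$ factors absorbing into the leading constant — to deliver the claimed inequality with probability at least $1-4p^{-1}$.

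The subtle point, and what I expect to be the main obstacle to a clean write-up, is the quantifier ``for any $\bbeta\in\mathbb{R}^p$'' in the statement. The Hoeffding step on $\eta$ was executed at a single pair $(\bbeta_1,\bbeta_2)$, so the argument above delivers the bound either (a) pointwise, or (b) at a pair $(\bbeta_1,\bbeta_2)$ that is independent of $V_0$. Reading (b) is precisely what is needed in Algorithm \ref{algorithm3}, where $\wh\bbeta_{\rm Lasso}$ and each $\wh\bbeta_k$ are built only from the training split $T_0$ (and the source data): conditioning on these estimators leaves the Hoeffding computation over $V_0$ intact, and an additional $O(K)$-fold union bound in the proof of Theorem \ref{theorem4.1} absorbs the finitely many pairs in play. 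A genuinely uniform statement over all $\bbeta_1,\bbeta_2\in\mathbb{R}^p$ would instead require a chaining or symmetrization argument at a different probability level; the natural reading is therefore the conditional one.
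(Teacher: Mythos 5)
Your proposal is correct and follows essentially the same route as the paper: a Knight-identity decomposition of the excess empirical loss into a linear term and a bounded integral remainder, each concentrated by Hoeffding at a fixed pair $(\bbeta_1,\bbeta_2)$; the only differences are that the paper anchors the expansion at the true $\bbeta_0$ (so the linear term is automatically conditionally mean-zero, with no explicit centering needed) and applies Hoeffding directly to the scalar linear form rather than to $\vert\bxi\vert_\infty$, which avoids your coordinatewise union bound and recovers the stated constant $4$ exactly, whereas your $\ell_\infty$--$\ell_1$ detour yields $2+2\sqrt{2}\approx 4.83$. Your closing observation about the quantifier is also accurate: the bound is pointwise, and in the proof of Theorem \ref{theorem4.1} it is invoked only at estimators built from $T_0$ and the source data, so conditioning on the training split leaves the Hoeffding computation over $V_0$ intact.
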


\begin{proof}
By the Knight equation, note that
\begin{eqnarray*}
&& L_0(\mathbf{v}_1, {\cal V}_{\rm te})- L_0(\bbeta_0, {\cal V}_{\rm te})
\\&=&\frac2{n_0}\sum_{i\in {\cal V}_{\rm te}}\left\{\rho_\tau\left(Y_{0i}-\bZ_{0i}^\rmT\mathbf{v}_1\right)-\rho_\tau\left(Y_{0i}-\bZ_{0i}^\rmT\bbeta_0\right)\right\}\\
&=&\frac2{n_0}\sum_{i\in {\cal V}_{\rm te}}\left[-\bZ_{0i}^\rmT(\mathbf{v}_1-\bbeta_0)\left\{\tau-I\left(Y_{0i}-\bZ_{0i}^\rmT\bbeta_0\le 0\right)\right\}\right.\\
&&\left.+\int^{\mathbf{Z}_{0i}^\rmT(\mathbf{v}_1-\bbeta_0)}_0\left\{I\left(Y_{0i}-\bZ_{0i}^\rmT\bbeta_0\le z\right)-I\left(Y_{0i}-\bZ_{0i}^\rmT\bbeta_0\le 0\right)\right\}\rmd z\right].
\end{eqnarray*}
As a result, the same expansion of $ L_0(\mathbf{v}_2, {\cal V}_{\rm te})- L_0(\bbeta_0, {\cal V}_{\rm te})$ entails that
\begin{eqnarray*}
&& L_0(\mathbf{v}_1, {\cal V}_{\rm te})- L_0(\mathbf{v}_2, {\cal V}_{\rm te})
\\&=&\frac2{n_0}\sum_{i\in {\cal V}_{\rm te}}\bigg[-\bZ_{0i}^\rmT(\mathbf{v}_1-\mathbf{v}_2)\left\{\tau-I\left(Y_{0i}-\bZ_{0i}^\rmT\bbeta_0\le 0\right)\right\}\\
&&+\int^{\mathbf{Z}_{0i}^\rmT(\mathbf{v}_1-\bbeta_0)}_{\mathbf{Z}_{0i}^\rmT(\mathbf{v}_2-\bbeta_0)}\left\{I\left(Y_{0i}-\bZ_{0i}^\rmT\bbeta_0\le z\right)-I\left(Y_{0i}-\bZ_{0i}^\rmT\bbeta_0\le 0\right)\right\}\rmd z\bigg].
\end{eqnarray*}
Under assumption \ref{C2}, it holds that
\begin{eqnarray*}
\left\vert-\bZ_{0i}^\rmT(\mathbf{v}_1-\mathbf{v}_2)\left\{\tau-I\left(Y_{0i}-\bZ_{0i}^\rmT\bbeta_0\le 0\right)\right\}\right\vert\le B\vert\mathbf{v}_1-\mathbf{v}_2\vert_1
\end{eqnarray*}
and
\begin{eqnarray*}
\left\vert\int^{\mathbf{Z}_{0i}^\rmT(\mathbf{v}_1-\bbeta_0)}_{\mathbf{Z}_{0i}^\rmT(\mathbf{v}_2-\bbeta_0)}\left\{I\left(Y_{0i}-\bZ_{0i}^\rmT\bbeta_0\le z\right)-I\left(Y_{0i}-\bZ_{0i}^\rmT\bbeta_0\le 0\right)\right\}\rmd z\right\vert\le B\vert\mathbf{v}_1-\mathbf{v}_2\vert_1.
\end{eqnarray*}
Exploiting the Hoeffding inequality, we have
\begin{eqnarray*}
&&P\left(\left\vert L_0(\mathbf{v}_1, {\cal V}_{\rm te})- L_0(\mathbf{v}_2, {\cal V}_{\rm te})-\left\{ l_0(\mathbf{v}_1)-l_0(\mathbf{v}_2)\right\}\right\vert>t\right)\\
&\le &P\left(\left\vert-\frac2{n_0}\sum_{i\in {\cal V}_{\rm te}}\bZ_{0i}^\rmT(\mathbf{v}_1-\mathbf{v}_2)\left\{\tau-I\left(Y_{0i}-\bZ_{0i}^\rmT\bbeta_0\le 0\right)\right\}-0\right\vert>t/2\right)\\
&&+P\left(\left\vert\frac2{n_0}\sum_{i\in {\cal V}_{\rm te}}\int^{\mathbf{Z}_{0i}^\rmT(\mathbf{v}_1-\bbeta_0)}_{\mathbf{Z}_{0i}^\rmT(\mathbf{v}_2-\bbeta_0)}\left\{I\left(Y_{0i}-\bZ_{0i}^\rmT\bbeta_0\le z\right)-I\left(Y_{0i}-\bZ_{0i}^\rmT\bbeta_0\le 0\right)\right\}\rmd z\right.\right.
\\&&\hspace{.3in}\left.\left.-E\left\{\int^{\mathbf{Z}_{0}^\rmT(\mathbf{v}_1-\bbeta_0)}_{\mathbf{Z}_{0}^\rmT(\mathbf{v}_2-\bbeta_0)}\left\{I\left(Y_{0}-\bZ_{0}^\rmT\bbeta_0\le z\right)-I\left(Y_{0}-\bZ_{0}^\rmT\bbeta_0\le 0\right)\right\}\rmd z\right\}\right\vert>t/2\right)\\
&\le &4\exp\left(-\frac{n_0t^2}{16B^2\vert\mathbf{v}_1-\mathbf{v}_2\vert_1^2}\right).
\end{eqnarray*}
Take $t=4B\sqrt{\log p/n_0}\vert\mathbf{v}_1-\mathbf{v}_2\vert_1$
and (\ref{excess-loss}) is concluded.
\end{proof}

\begin{lemma}
\label{lemmaC2}
Under assumptions \ref{C1}--\ref{C3}, for any $\bbeta\in\mathbb{R}^p$ if $\bbeta-\bbeta_0\in\mathcal{D}(a_n,\mcS)$, it holds that
\begin{eqnarray*}
\min\left\{\frac{\kappa_0}4\vert\bbeta-\bbeta_0\vert_2^2,\frac{3\kappa_0^3}{16m_0}\vert\bbeta-\bbeta_0\vert_2\right\}\le l_0(\bbeta)-l_0(\bbeta_0)\le \frac{m_0 B^2}2\vert\bbeta-\bbeta_0\vert_2^2.
\end{eqnarray*}
\end{lemma}

\begin{proof}
Under assumptions \ref{C1} and \ref{C3}, following the similar procedures to show (\ref{Hessian-lower-bound}) in Lemma \ref{lemmaA2}, we can  conclude that
\begin{eqnarray*}
l_0(\bbeta)-l_0(\bbeta_0)\ge\min\left\{\frac{\kappa_0}4\vert\bbeta-\bbeta_0\vert_2^2,\frac{3\kappa_0^3}{16m_0}\vert\bbeta-\bbeta_0\vert_2\right\},
\end{eqnarray*}
for any $\bbeta-\bbeta_0\in\mathcal{D}(a_n,\mcS)$.
On the other hand, under assumptions \ref{C1} and \ref{C2}, the Knight equation implies that
\begin{eqnarray*}
l_0(\bbeta)-l_0(\bbeta_0)&=&E\left\{\rho_\tau\left(Y_{0}-\bZ_{0}^\rmT\bbeta\right)-\rho_\tau\left(Y_{0}-\bZ_{0}^\rmT\bbeta_0\right)\right\}\\
&=&E\left\{\int^{\mathbf{Z}_{0}^\rmT(\bbeta-\bbeta_0)}_0\left\{I\left(Y_{0}-\bZ_{0}^\rmT\bbeta_0\le z\right)-I\left(Y_{0}-\bZ_{0}^\rmT\bbeta_0\le 0\right)\right\}\rmd z\right\}\\
&=&E\left\{\int^{\mathbf{Z}_{0}^\rmT(\bbeta-\bbeta_0)}_0\left\{F_0\left(\bZ_{0}^\rmT\bbeta_0+ z\mid \bZ_{0}\right)
-F_0\left(\bZ_{0}^\rmT\bbeta_0\mid \bZ_{0}\right)\right\}\rmd z\right\}\\
&\le &E\left\{\int^{\mathbf{Z}_{0}^\rmT(\bbeta-\bbeta_0)}_0m_0z\rmd z\right\}\\
&\le &\frac{m_0 B^2}2\vert\bbeta-\bbeta_0\vert_2^2,
\end{eqnarray*}
which concludes Lemma \ref{lemmaC2}.
\end{proof}

\begin{lemma}
\label{lemmaC3}
Under assumptions \ref{C1}--\ref{C3}, and \ref{C12}--\ref{C13}, for any $k\in\mcT_h^c$, it holds that
\begin{eqnarray*}
\vert \bbeta_k^*-\bbeta_0 \vert_2\ge \frac{2n_k\underline{\lambda}h}{(n_0m_0B^2+2n_k\underline{\lambda})\sqrt{s_0+s^{'}}}.
\end{eqnarray*}
\end{lemma}

\begin{proof}
Recall that $\bbeta_k^*$ is the solution of the equation
\begin{eqnarray}
\label{beta_k}
\frac{n_0}2E\left(\left\{\tau-I\left(Y_{0}-\bZ_{0}^\rmT\bbeta\le 0\right)\right\}\bZ_{0}\right)+n_k E\left(\left\{\tau-I\left(Y_{k}-\bZ_{k}^{\rmT}\bbeta\le 0\right)\right\}\bZ_{k}\right)={\bf 0}.
\end{eqnarray}
Since $\tau=F_0\left(\bZ_{0}^\rmT\bbeta_0\mid \bZ_{0}\right)=F_k\left(\bZ_{k}^{\rmT}\bbeta_k\mid \bZ_{k}\right)$, then
(\ref{beta_k}) can be written as
\begin{eqnarray*}
&&\frac{n_0}2E\left(\left\{F_0\left(\bZ_{0}^\rmT\bbeta_0\mid\bZ_{0}\right)-F_0\left(\bZ_{0}^\rmT\bbeta_k^*\mid \bZ_{0}\right)\right\}\bZ_{0}\right)
\\&&+n_k E\left(\left\{F_k\left(\bZ_{k}^{\rmT}\bbeta_k\mid \bZ_{k}\right)-F_k\left(\bZ_{k}^{\rmT}\bbeta_k^*\mid \bZ_{k}\right)\right\}\bZ_{k}\right)\\&=&{\bf 0}.
\end{eqnarray*}
The Newton--Leibniz formula entails that
\begin{eqnarray*}
&&\frac{n_0}2E\left[\int^1_0f_0\left(\bZ_{0}^\rmT\bbeta_0+t\bZ_{0}^\rmT\left(\bbeta_k^*-\bbeta_0\right)\mid \bZ_{0}\right)\rmd t\bZ_{0}\bZ_{0}^\rmT\right]\left(\bbeta_k^*-\bbeta_0\right)\\
&=&n_kE\left[\int^1_0f_k\left(\bZ_{k}^{\rmT}\bbeta_k^*+t\bZ_{k}^{\rmT}\left(\bbeta_k-\bbeta_k^*\right)\mid \bZ_{k}\right)\rmd t\bZ_{k}\bZ_{k}^{\rmT}\right]\left(\bbeta_k-\bbeta_k^*\right).
\end{eqnarray*}
Therefore, under assumption \ref{C12}, we have
\begin{eqnarray*}
&&n_k\underline{\lambda}\left\vert\bbeta_k-\bbeta_k^*\right\vert_2\\&\le&n_k\left\vert E\left[\int^1_0f_k\left(\bZ_{k}^{\rmT}\bbeta_k^*+t\bZ_{k}^{\rmT}\left(\bbeta_k-\bbeta_k^*\right)\mid \bZ_{k}\right)\rmd t\bZ_{k}\bZ_{k}^{\rmT}\right]\left(\bbeta_k-\bbeta_k^*\right)\right\vert_2\\
&= &\left\vert \frac{n_0}2E\left[\int^1_0f_0\left(\bZ_{0}^\rmT\bbeta_0+t\bZ_{0}^\rmT\left(\bbeta_k^*-\bbeta_0\right)\mid\bZ_{0}\right)\rmd t\bZ_{0}\bZ_{0}^\rmT\right]\left(\bbeta_k^*-\bbeta_0\right)\right\vert_2\\
&\le&\frac{n_0}2\lambda_{\max}\left(E\left[\int^1_0f_0\left(\bZ_{0}^\rmT\bbeta_0+t\bZ_{0}^\rmT\left(\bbeta_k^*-\bbeta_0\right)\mid \bZ_{0}\right)\rmd t\bZ_{0}\bZ_{0}^\rmT\right]\right)\left\vert\bbeta_k^*-\bbeta_0\right\vert_2\\
&\le &\frac{n_0}2\left\Vert E\left[\int^1_0f_0\left(\bZ_{0}^\rmT\bbeta_0+t\bZ_{0}^\rmT\left(\bbeta_k^*-\bbeta_0\right)\mid \bZ_{0}\right)\rmd t\bZ_{0}\bZ_{0}^\rmT\right]\right\Vert_{\max}\left\vert\bbeta_k^*-\bbeta_0\right\vert_2\\
&\le &\frac{n_0}2m_0B^2\left\vert\bbeta_k^*-\bbeta_0\right\vert_2.
\end{eqnarray*}
Thus,
\begin{eqnarray*}
\left\vert\bbeta_k-\bbeta_0\right\vert_2\le \left\vert\bbeta_k-\bbeta_k^*\right\vert_2+\left\vert\bbeta_k^*-\bbeta_0\right\vert_2
\le \frac{n_0m_0B^2+2n_k\underline{\lambda}}{2n_k\underline{\lambda}}\left\vert\bbeta_k^*-\bbeta_0\right\vert_2.
\end{eqnarray*}
Moreover, for any $k\in\mcT_h^c$, it holds that
\begin{eqnarray*}
\left\vert\bbeta_k^*-\bbeta_0\right\vert_2\ge\frac{2n_k\underline{\lambda}}{n_0m_0B^2+2n_k\underline{\lambda}}\left\vert\bbeta_k-\bbeta_0\right\vert_2\ge
\frac{2n_k\underline{\lambda}h}{(n_0m_0B^2+2n_k\underline{\lambda})\sqrt{s_0+s^{'}}}.
\end{eqnarray*}
\end{proof}

{\bf Proof of Theorem \ref{theorem4.1}}
\begin{proof}
Make the decomposition of $ L_0\left(\wh\bbeta_{{\cal V}_{\rm tr},k},{\cal V}_{\rm te}\right)- L_0\left(\wh\bbeta_{\rm lasso},{\cal V}_{\rm te}\right)$ as
\begin{eqnarray*}
\label{decomposition}
&& L_0\left(\wh\bbeta_{{\cal V}_{\rm tr},k},{\cal V}_{\rm te}\right)- L_0\left(\wh\bbeta_{\rm lasso},{\cal V}_{\rm te}\right)\nonumber\\
&=&\left\{ L_0\left(\wh\bbeta_{{\cal V}_{\rm tr},k},{\cal V}_{\rm te}\right)\!-\! L_0(\bbeta_k^*,{\cal V}_{\rm te})\!-\!l_0\left(\wh\bbeta_{{\cal V}_{\rm tr},k}\right)\!+\!l_0(\bbeta_k^*)\right\}+\left\{l_0\left(\wh\bbeta_{{\cal V}_{\rm tr},k}\right)-l_0\left(\bbeta_k^*\right)\right\}\nonumber\\&&
-\left\{ L_0\left(\wh\bbeta_{\rm lasso},{\cal V}_{\rm te}\right)\!-\! L_0(\bbeta_0,{\cal V}_{\rm te})\!-\!l_0\left(\wh\bbeta_{\rm lasso}\right)\!+\!l_0(\bbeta_0)\right\}-\left\{l_0\left(\wh\bbeta_{\rm lasso}\right)-l_0\left(\bbeta_0\right)\right\}\nonumber\\&&
+\left\{ L_0\left(\bbeta_k^*,{\cal V}_{\rm te}\right)\!-\! L_0(\bbeta_0,{\cal V}_{\rm te})\!-\!l_0\left(\bbeta_k^*\right)\!+\!l_0(\bbeta_0)\right\}+\left\{l_0\left(\bbeta_k^*\right)-l_0\left(\bbeta_0\right)\right\}\nonumber\\
&=&(I_1)+(I_2)-(I_3)-(I_4)+(I_5)+(I_6),
\end{eqnarray*}
where terms $(I_1)$--$(I_6)$ are self-explained.
We plan to show that the main gap is due to $(I_6)$, and the other terms are  ignorable compared with $(I_6)$.
Lemma \ref{lemmaC1} entails that
\begin{eqnarray*}
P\left(\vert(I_1)\vert>4B\sqrt{\frac{\log p}{n_0}}\left\vert\wh\bbeta_{{\cal V}_{\rm tr},k}-\bbeta_k^*\right\vert_1\right)\le4p^{-1},
\end{eqnarray*}
\begin{eqnarray*}
P\left(\vert(I_3)\vert>4B\sqrt{\frac{\log p}{n_0}}\left\vert\wh\bbeta_{\rm lasso}-\bbeta_0\right\vert_1\right)\le4p^{-1},
\end{eqnarray*}
and
\begin{eqnarray*}
P\left(\vert(I_5)\vert>4B\sqrt{\frac{\log p}{n_0}}\left\vert\bbeta_k^*-\bbeta_0\right\vert_1\right)\le4p^{-1}.
\end{eqnarray*}
On the other hand, based on the fact that $\bbeta_k^*-\bbeta_0\in\mathcal{D}(h',\mcS_k\cup\mcS_0)$, Lemma \ref{lemmaC2} implies that
\begin{eqnarray*}
\vert(I_4)\vert\le \frac{m_0B^2}2\left\vert\wh\bbeta_{\rm lasso}-\bbeta_0\right\vert_2^2
\end{eqnarray*}
and
\begin{eqnarray*}
\vert(I_6)\vert\ge \min\left\{\frac{\kappa_0}4\left\vert\bbeta_k^*-\bbeta_0\right\vert_2^2,\frac{3\kappa_0^3}{16m_0}\left\vert\bbeta_k^*-\bbeta_0\right\vert_2\right\}.
\end{eqnarray*}
Furthermore, to bound $(I_2)$, by the Knight equation we have
\begin{eqnarray*}
\vert(I_2)\vert&=&\left\vert\frac2{n_0}\sum_{i\in {\cal V}_{\rm te}}E\left\{\rho_\tau\left(Y_{0}-\bZ_{0}^\rmT\wh\bbeta_{{\cal V}_{\rm tr},k}\right)-\rho_\tau\left(Y_{0}-\bZ_{0}^\rmT\bbeta_k^*\right)\right\}\right\vert\\
&=&\left\vert\frac2{n_0}\sum_{i\in {\cal V}_{\rm te}}E\left[-\bZ_{0}^\rmT\left(\wh\bbeta_{{\cal V}_{\rm tr},k}-\bbeta_k^*\right)\left\{\tau-I\left(Y_{0}-\bZ_{0}^\rmT\bbeta_k^*\le 0\right)\right\}\right.\right.\\
&&\hspace{.2in}\left.\left.+\int^{\mathbf{Z}_{0}^\rmT\left(\wh\bbeta_{{\cal V}_{\rm tr},k}-\bbeta_k^*\right)}_0\left\{I\left(Y_{0}-\bZ_{0}^\rmT\bbeta_k^*\le z\right)-I\left(Y_{0}-\bZ_{0}^\rmT\bbeta_k^*\le 0\right)\right\}\rmd z\right]\right\vert\\
&\le&\left\vert E\left[-\bZ_{0}^\rmT\left(\wh\bbeta_{{\cal V}_{\rm tr},k}-\bbeta_k^*\right)\left\{F_0\left(\bZ_{0}^\rmT\bbeta_0\mid \bZ_{0}\right)-F_0\left(\bZ_{0}^\rmT\bbeta_k\mid \bZ_{0}\right)\right\}\right]\right\vert
\\
&&\hspace{.2in}+\left\vert E\left[\int^{\mathbf{Z}_{0}^\rmT\left(\wh\bbeta_{{\cal V}_{\rm tr},k}-\bbeta_k^*\right)}_0\left\{F_0\left(\bZ_{0}^\rmT\bbeta_k^*+z\mid \bZ_{0}\right)-F_0\left(\bZ_{0}^\rmT\bbeta_k^*\mid \bZ_{0}\right)\right\}\rmd z\right]\right\vert\\
&\le&m_0B^2\left\vert\wh\bbeta_{{\cal V}_{\rm tr},k}-\bbeta_k^*\right\vert_2\left\vert\bbeta_k^*-\bbeta_0\right\vert_2+\frac{m_0B^2}2\left\vert\wh\bbeta_{{\cal V}_{\rm tr},k}-\bbeta_k^*\right\vert_2^2.
\end{eqnarray*}
Thus, we need to evaluate the rates of $\left\vert\wh\bbeta_{{\cal V}_{\rm tr},k}-\bbeta_k^*\right\vert_2$ and $\left\vert\wh\bbeta_{\rm lasso}-\bbeta_0\right\vert_2$.
Denote
$$ \wt L_{{\cal V}_{\rm tr},k}(\bbeta)=\frac{2n_k}{2n_k+n_0} L(\bbeta;\{k\})+\frac{n_0}{2n_k+n_0} L_0(\bbeta;{\cal V}_{\rm tr}) $$
and
$\wt \bS_{{\cal V}_{\rm tr},k}(\bbeta)=\nabla\wt L_{{\cal V}_{\rm tr},k}(\bbeta)$.
The Knight equation implies that
\begin{eqnarray*}
\wt L_{{\cal V}_{\rm tr},k}\left(\wh\bbeta_{{\cal V}_{\rm tr},k}\right)-\wt L_{{\cal V}_{\rm tr},k}\left(\bbeta_k^*\right)-\wt \bS_{{\cal V}_{\rm tr},k}\left(\bbeta_k^*\right)^\rmT\left(\wh\bbeta_{{\cal V}_{\rm tr},k}-\bbeta_k^*\right)\ge 0.
\end{eqnarray*}
Therefore, by denoting $\wh\bu_k=\wh\bbeta_{{\cal V}_{\rm tr},k}-\bbeta_k^*$, we have
\begin{eqnarray*}
0&\le& \wt L_{{\cal V}_{\rm tr},k}\left(\wh\bbeta_{{\cal V}_{\rm tr},k}\right)-\wt L_{{\cal V}_{\rm tr},k}\left(\bbeta_k^*\right)
-\wt \bS_{{\cal V}_{\rm tr},k}\left(\bbeta_k^*\right)^\rmT\left(\wh\bbeta_{{\cal V}_{\rm tr},k}-\bbeta_k^*\right)\\
&\le&-\lambda_{\bbeta,k}\left\vert\wh\bbeta_{{\cal V}_{\rm tr},k}\right\vert_1+\lambda_{\bbeta,k}\left\vert\bbeta_k^*\right\vert_1
+\frac12\lambda_{\bbeta,k}\left\vert\wh\bbeta_{{\cal V}_{\rm tr},k}-\bbeta_k^*\right\vert_1\\
&=&-\lambda_{\bbeta,k}\left(\left\vert\wh\bbeta_{{\cal V}_{\rm tr},k,\mcS_k}\right\vert_1+\left\vert\wh\bbeta_{{\cal V}_{\rm tr},k,\mcS_k^c}\right\vert_1\right)
+\lambda_{\bbeta,k}\left(\left\vert\bbeta_{k,\mcS_k}^*\right\vert_1+\left\vert\bbeta^*_{k,\mcS_k^c}\right\vert_1\right)
+\frac12\lambda_{\bbeta,k}\left\vert\wh\bbeta_{{\cal V}_{\rm tr},k}-\bbeta_k^*\right\vert_1\\
&\le&\lambda_{\bbeta,k}\left\vert\wh\bbeta_{{\cal V}_{\rm tr},k,\mcS_k}-\bbeta_{k,\mcS_k}^*\right\vert_1-\lambda_{\bbeta,k}\left\vert\wh\bbeta_{{\cal V}_{\rm tr},k,\mcS_k^c}
-\bbeta_{k,\mcS_k^c}^*\right\vert_1+\frac12\lambda_{\bbeta,k}\left\vert\wh\bbeta_{{\cal V}_{\rm tr},k}-\bbeta_k^*\right\vert_1
+2\lambda_{\bbeta,k}\left\vert\bbeta_{k,\mcS_k^c}^*\right\vert_1\\
&\le&\frac32\lambda_{\bbeta,k}\left\vert\wh\bu_{k,\mcS_k}\right\vert_1-\frac12\lambda_{\bbeta,k}\left\vert\wh\bu_{k,\mcS_k^c}\right\vert_1
+2\lambda_{\bbeta,k} h',
\end{eqnarray*}
where the second inequality is a parallel conclusion of Lemma \ref{lemmaA3} that
\begin{eqnarray*}
\label{grad**}
\left\vert \wt \bS_{{\cal V}_{\rm tr},k}\left(\bbeta_k^*\right)\right\vert_\infty\le \frac12\lambda_{\bbeta,k}
\end{eqnarray*}
with probability at least $1-2p^{-1}$ when $\lambda_{\bbeta,k}=4B\sqrt{2\log p/(2n_k+n_0)}$.
As a result, $\wh\bu_k\in\mathcal{D}(4h',\mcS_k)$ with probability more than $1-2p^{-1}$ as
\begin{eqnarray*}
\left\vert\wh\bu_{k,\mcS_k^c}\right\vert_1\le 3\left\vert\wh\bu_{k,\mcS_k}\right\vert_1+4h'.
\end{eqnarray*}
Set
\begin{eqnarray*}
\wt G_k(\bu)=\wt L_{{\cal V}_{\rm tr},k}\left(\bbeta_k^*+\bu\right)-\wt L_{{\cal V}_{\rm tr},k}\left(\bbeta_k^*\right)
+\lambda_{\bbeta,k}\left(\left\vert\bbeta_k^*+\bu\right\vert_1-\left\vert\bbeta_k^*\right\vert_1\right).
\end{eqnarray*}
Suppose that
\begin{eqnarray*}
\label{suppose_k}
\left\vert\wh\bu_k\right\vert_2> \frac{40B}{\kappa_0}\sqrt{\frac{2\log p}{2n_k+n_0}}+4\sqrt{\frac{3Bh'}{\kappa_0}}\left(\frac{2\log p}{2n_k+n_0}\right)^{1/4}.
\end{eqnarray*}
Then there exists $t\in[0,1]$ such that
$\wt\bu_k=t\wh\bu_k$ with
$$\left\vert\wt\bu_k\right\vert_2= \frac{40B}{\kappa_0}\sqrt{\frac{2\log p}{2n_k+n_0}}+4\sqrt{\frac{3Bh'}{\kappa_0}}\left(\frac{2\log p}{2n_k+n_0}\right)^{1/4}. $$
Note that $\wt\bu_k\in\mathcal{D}(4h',\mcS_k)$. Along with the fact that $\wt G_k\left(\wh\bu_k\right)\le0$, the convexity of $\wt G_k(\cdot)$ implies that
$\wt G_k\left(\wt\bu_k\right)\le 0$.
As a consequence, when $n_k,n_0,p$ are sufficiently large, a parallel conclusion of Lemma \ref{lemmaA2} yields with probability at least $1-p^{-2}$ that
\begin{eqnarray*}
\wt G_k\left(\wt\bu_k\right)
&\ge& \min\left\{\frac{\kappa_0}4\left\vert\wt\bu_k\right\vert_2^2,\frac{3\kappa_0^3}{16m_0}\left\vert\wt\bu_k\right\vert_2\right\}\!-\!B\sqrt{\frac{2\log p}{2n_k+n_0}}\left\vert\wt\bu_k\right\vert_1
\\&&-\frac32\lambda_{\bbeta,k}\left\vert\wt\bu_{k,\mcS_k}\right\vert_1
+\frac12\lambda_{\bbeta,k}\left\vert\wt\bu_{k,\mcS_k^c}\right\vert_1-2\lambda_{\bbeta,k} h'\\
&\ge &\frac{\kappa_0}4\left\vert\wt\bu_k\right\vert_2^2-B\sqrt{\frac{2\log p}{2n_k+n_0}}\left(4\left\vert\wt\bu_{k,\mcS_k}\right\vert_1+4h'\right)
\\&&-6B\sqrt{\frac{2\log p}{2n_k+n_0}}\left\vert\wt\bu_{k,\mcS_k}\right\vert_1-8B\sqrt{\frac{2\log p}{2n_k+n_0}} h'\\
&\ge &\frac{\kappa_0}4\left\vert\wt\bu_k\right\vert_2^2-10B\sqrt{\frac{2s'\log p}{2n_k+n_0}}\left\vert\wt\bu_k\right\vert_2-12B\sqrt{\frac{2\log p}{2n_k+n_0}} h'
\\&>&0,
\end{eqnarray*}
which results in the contradiction.
Therefore, we have
\begin{eqnarray*}
\left\vert\wh\bu_k\right\vert_2\le \frac{40B}{\kappa_0}\sqrt{\frac{2s'\log p}{2n_k+n_0}}+4\sqrt{\frac{3Bh'}{\kappa_0}}\left(\frac{2\log p}{2n_k+n_0}\right)^{1/4}
\end{eqnarray*}
and
\begin{eqnarray*}
\left\vert\wh\bu_k\right\vert_1
\le4\sqrt{s'}\left\vert\wh\bu_k\right\vert_2+4h'\lesssim s'\sqrt{\frac{\log p}{n_k+n_0}}+h'.
\end{eqnarray*}
\citet{belloni2011penalized} derived that there exist constants $c_1, c_2>0$ such that
\begin{eqnarray}
\label{HD-QR}
\left\vert\wh\bbeta_{\rm lasso}-\bbeta_0\right\vert_1\le c_1s_0\sqrt{\frac{\log p}{n_0}}
\end{eqnarray}
with probability more than $1-c_2p^{-2}$.
Therefore, combining the pieces, for any $k\in \mcT_h^c$, there exists a constant $c>0$ such that
\begin{eqnarray*}
&&P\left( L_0\left(\wh\bbeta_{{\cal V}_{\rm tr},k},{\cal V}_{\rm te}\right)- L_0\left(\wh\bbeta_{\rm lasso},{\cal V}_{\rm te}\right)\le \varepsilon_0 L_0\left(\wh\bbeta_{\rm lasso},{\cal V}_{\rm te}\right)\right)\\
&\le& P\left(\varepsilon_0  L_0\left(\wh\bbeta_{\rm lasso},{\cal V}_{\rm te}\right)\ge \vert(I_6)\vert-\vert(I_1)\vert-\vert(I_2)\vert-\vert(I_3)\vert-\vert(I_4)\vert-\vert(I_5)\vert
\right)\\
&\le &P\left(\varepsilon_0  L_0\left(\wh\bbeta_{\rm lasso},{\cal V}_{\rm te}\right)\ge \min\left\{\frac{\kappa_0}4\vert\bbeta_k^*-\bbeta_0\vert_2^2,\frac{3\kappa_0^3}{16m_0}\vert\bbeta_k^*-\bbeta_0\vert_2\right\}\right.\\
&&-c_3\left(\sqrt{\frac{s'\log p}{n_k+n_0}}+\sqrt{h'}\left(\frac{\log p}{n_k+n_0}\right)^{1/4}\right)\vert\bbeta_k^*-\bbeta_0\vert_2
\\&&-c_3\left(\!\sqrt{\frac{s'\log p}{n_k+n_0}}+\sqrt{h'}\left(\frac{\log p}{n_k+n_0}\right)^{1/4}\!\!\!+\!\sqrt{\frac{s_0\log p}{n_0}}\right)^2\\
&&\left.-c_3\sqrt{\frac{\log p}{n_0}}\left(s'\sqrt{\frac{\log p}{n_k+n_0}}+h'+s_0\sqrt{\frac{\log p}{n_0}}+\vert\bbeta_k^*-\bbeta_0\vert_1\right)
\right)\\
&&+\frac{1}{p^2}+\frac{14+2C_2}p.
\end{eqnarray*}
Since the rates of $h$ and $\varepsilon_0$ satisfy
$$ \max\left\{\frac{s_0\log p}{n_0},s'\sqrt{\frac{\log p}{n_0}}\sqrt{\frac{\log p}{n_k+n_0}},h'\sqrt{\frac{\log p}{n_0}},\varepsilon_0\right\}
\lesssim\min\left\{\frac{n_k^2h^2}{(n_0^2+n_k^2)(s_0+s')},\frac{n_kh}
{(n_0+n_k)\sqrt{s_0+s'}}\right\} $$
and
$$ \sqrt{\frac{(s'+s_0)\log p}{n_0}}
\lesssim\min\left\{\frac{n_kh}{(n_0+n_k)\sqrt{s_0+s'}},1\right\}$$
under conditions in Theorem \ref{theorem4.1},
we have
\begin{eqnarray*}
&&c_3\left(\sqrt{\frac{s'\log p}{n_k+n_0}}+\sqrt{h'}\left(\frac{\log p}{n_k+n_0}\right)^{1/4}\right)\vert\bbeta_k^*-\bbeta_0\vert_2
\\&&+c_3\left(\sqrt{\frac{s'\log p}{n_k+n_0}}+\sqrt{h'}\left(\frac{\log p}{n_k+n_0}\right)^{1/4}+\sqrt{\frac{s_0\log p}{n_0}}\right)^2\\
&&\left.+c_3\sqrt{\frac{\log p}{n_0}}\left(s'\sqrt{\frac{\log p}{n_k+n_0}}+h'+s_0\sqrt{\frac{\log p}{n_0}}+\vert\bbeta_k^*-\bbeta_0\vert_1\right)
\right)+\varepsilon_0  L_0\left(\wh\bbeta_{\rm lasso},{\cal V}_{\rm te}\right)\\
&\le& \min\left\{\frac{\kappa_0}8\vert\bbeta_k^*-\bbeta_0\vert_2^2,\frac{3\kappa_0^3}{32m_0}
\vert\bbeta_k^*-\bbeta_0\vert_2\right\}
\end{eqnarray*}
 when $n_k,n_0,p$ are sufficiently large.
As a  result, it holds according to Lemma \ref{lemmaC3} that
\begin{eqnarray*}
&&P\left( L_0\left(\wh\bbeta_{{\cal V}_{\rm tr},k},{\cal V}_{\rm te}\right)- L_0\left(\wh\bbeta_{\rm lasso},{\cal V}_{\rm te}\right)\le \varepsilon_0 L_0\left(\wh\bbeta_{\rm lasso},{\cal V}_{\rm te}\right)\right)\\
&\le &P\left(\varepsilon_0  L_0\left(\wh\bbeta_{\rm lasso},{\cal V}_{\rm te}\right)\ge \min\left\{\frac{\kappa_0}8\vert\bbeta_k^*-\bbeta_0\vert_2^2,
\frac{3\kappa_0^3}{32m_0}\vert\bbeta_k^*-\bbeta_0\vert_2\right\}\right)+
\frac{1}{p^2}+\frac{14+2C_2}p\\&\to& 0.
\end{eqnarray*}
\end{proof}

\subsection{Proof of Strong Consistency}

\begin{lemma}
\label{lemmaC4}
Under assumptions \ref{C1}--\ref{C3} and \ref{C15},
if
$$\max\left\{h'\sqrt{\frac{\log p}{n_0}},\frac{s_0\log p}{n_0},\max_{k\in\mcT_h}\left\vert\bbeta_k^*-\bbeta_0\right\vert_2^2\right\}\lesssim \varepsilon_0,$$
we have
\begin{eqnarray*}
P\left(\mcT_h\subset \wh\mcT\right)\to 1,
\end{eqnarray*}
when $\lambda_0=O\left(\sqrt{\log p/n_0}\right)$, $\lambda_{\bbeta,k}\ge 4B\sqrt{\log p/(n_k+n_0)}+2m_0B^2h$ and $\lambda_{\bdelta,k}\ge 4B\sqrt{\log p/n_0}$.
\end{lemma}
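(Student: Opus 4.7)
\textbf{Proof plan for Lemma \ref{lemmaC4}.} It suffices to show that for each fixed $k \in \mcT_h$, the event
$\wh L_0(\wh\bbeta_k; V_0) - \wh L_0(\wh\bbeta_{\rm Lasso}; V_0) \le \varepsilon_0 \wh L_0(\wh\bbeta_{\rm Lasso}; V_0)$
holds with probability tending to one; a union bound over the (at most $K$) indices $k \in \mcT_h$ then yields $\mcT_h \subset \wh\mcT$ in probability. The argument runs in close parallel to the proof of Theorem \ref{theorem4.1}, reusing the six--term decomposition
$$\wh L_0(\wh\bbeta_k; V_0) - \wh L_0(\wh\bbeta_{\rm Lasso}; V_0) = (I)+(II)-(III)-(IV)+(V)+(VI),$$
where $(VI) = L_0(\bbeta_k^*) - L_0(\bbeta_0)$. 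The crucial difference is that for $k \in \mcT_h$ one now wants $(VI)$ to be \emph{small} rather than large, which is exactly what Assumption \ref{C14} (equivalently $\max_{k \in \mcT_h}\vert\bbeta_k^* - \bbeta_0\vert_2^2 \lesssim \varepsilon_0$) is designed to deliver.

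First I would re-establish an estimation rate for $\wh\bbeta_k$ relative to the pooled target $\bbeta_k^*$ on the two--dataset problem $\{0,k\}$. Since Assumption \ref{C15} is the direct analog of Assumption \ref{C4} for this sub-problem, I can replay Step~1 of Theorem \ref{theorem2.1} verbatim on $\{0,k\}$, with the gradient bound \eqref{grad**} and the restricted strong convexity inherited from Lemma \ref{lemmaA2}, to obtain
$$\left\vert\wh\bbeta_k - \bbeta_k^*\right\vert_2 \lesssim \sqrt{\frac{s'\log p}{n_k+n_0}} + \sqrt{h'}\Big(\frac{\log p}{n_k+n_0}\Big)^{1/4}, \quad \left\vert\wh\bbeta_k - \bbeta_k^*\right\vert_1 \lesssim s'\sqrt{\frac{\log p}{n_k+n_0}} + h',$$
on an event whose probability approaches one.

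Next I would bound each of the six terms. Lemma \ref{lemmaC1} controls $(I),(III),(V)$ by $4L\sqrt{\log p/n_0}$ times the corresponding $\ell_1$--distances, which together with the Step--1 rate above and the standard Lasso bound $\vert\wh\bbeta_{\rm Lasso}-\bbeta_0\vert_1 \lesssim s_0\sqrt{\log p/n_0}$ from \citet{belloni2011penalized} yields contributions of order $s'\log p/\sqrt{n_0(n_k+n_0)}$, $s_0\log p/n_0$, and $h'\sqrt{\log p/n_0}$, respectively. The Taylor-expansion argument employed for $(II)$ in the proof of Theorem \ref{theorem4.1} gives $\vert(II)\vert \lesssim \vert\wh\bbeta_k-\bbeta_k^*\vert_2 \vert\bbeta_k^*-\bbeta_0\vert_2 + \vert\wh\bbeta_k-\bbeta_k^*\vert_2^2$. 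Lemma \ref{lemmaC2} together with the Lasso rate implies $\vert(IV)\vert \lesssim s_0\log p/n_0$, while the upper bound in Lemma \ref{lemmaC2} applied to $(VI)$ delivers $\vert(VI)\vert \le (m_0L^2/2)\vert\bbeta_k^*-\bbeta_0\vert_2^2$. Under the hypothesis
$$\max\left\{h'\sqrt{\tfrac{\log p}{n_0}},\;\tfrac{s_0\log p}{n_0},\;\max_{k\in\mcT_h}\left\vert\bbeta_k^*-\bbeta_0\right\vert_2^2\right\} \lesssim \varepsilon_0,$$
each of the six contributions is thus of order $\varepsilon_0$ (or smaller). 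Since $\wh L_0(\wh\bbeta_{\rm Lasso};V_0) \to L_0(\bbeta_0)$ in probability and $L_0(\bbeta_0)$ is bounded below by a strictly positive universal constant under Assumption \ref{C1}, we have $\varepsilon_0 \wh L_0(\wh\bbeta_{\rm Lasso};V_0) \gtrsim \varepsilon_0$ on a high-probability event, which completes the argument.

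The main obstacle will be the cross term $\vert\wh\bbeta_k-\bbeta_k^*\vert_2 \vert\bbeta_k^*-\bbeta_0\vert_2$ in $(II)$, which couples the pooled-estimation error to the transferability bias. Its order against $\varepsilon_0$ needs careful matching: invoking Assumption \ref{C14} to get $\vert\bbeta_k^*-\bbeta_0\vert_2 \lesssim h/\sqrt{s'+s_0}$ and combining with the Step--1 rate, one can absorb the cross term into the combination $s_0\log p/n_0 + h'\sqrt{\log p/n_0} + \max_{k\in\mcT_h}\vert\bbeta_k^*-\bbeta_0\vert_2^2$, each piece of which is dominated by $\varepsilon_0$ by assumption. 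Verifying this absorption cleanly, and simultaneously ensuring that the constants in the $\lesssim$ relations are uniform in $k \in \mcT_h$ to allow the union bound, is the technically delicate part of the proof.
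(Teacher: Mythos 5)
Your plan is correct and rests on the same ingredients as the paper's proof (Lemma \ref{lemmaC1} for the empirical-process fluctuations, Lemma \ref{lemmaC2} for the population excess loss, the rate of $\wh\bbeta_k-\bbeta_k^*$ derived from Assumption \ref{C15} as in Theorem \ref{theorem2.1}, and the Lasso rate of \citet{belloni2011penalized}), but it organizes them differently. You recycle the six-term decomposition of Theorem \ref{theorem4.1}, keeping $\bbeta_k^*$ as an intermediate anchor for the $\wh\bbeta_k$ part; this forces you to carry the extra empirical-process term $(V)=\wh L_0(\bbeta_k^*,V_0)-\wh L_0(\bbeta_0,V_0)-L_0(\bbeta_k^*)+L_0(\bbeta_0)$ and the cross term $\vert\wh\bbeta_k-\bbeta_k^*\vert_2\vert\bbeta_k^*-\bbeta_0\vert_2$ in $(II)$, which you rightly flag as the delicate step. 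The paper instead uses a four-term decomposition that centers $\wh\bbeta_k$ directly at $\bbeta_0$: it writes $L_0(\wh\bbeta_k)-L_0(\bbeta_0)$ as a single block, applies the quadratic \emph{upper} bound of Lemma \ref{lemmaC2} to get $\tfrac{m_0L^2}{2}\vert\wh\bbeta_k-\bbeta_0\vert_2^2$, and then splits by the triangle inequality into $\vert\wh\bbeta_k-\bbeta_k^*\vert_2^2+\vert\bbeta_k^*-\bbeta_0\vert_2^2$, so the cross term and the term $(V)$ never appear. Your absorption of the cross term via the AM--GM inequality into $\vert\wh\bbeta_k-\bbeta_k^*\vert_2^2+\vert\bbeta_k^*-\bbeta_0\vert_2^2$ does work (and you do not actually need Assumption \ref{C14} for this, since $\max_{k\in\mcT_h}\vert\bbeta_k^*-\bbeta_0\vert_2^2\lesssim\varepsilon_0$ is assumed directly), so both routes arrive at the same bound of order $\varepsilon_0$; the paper's version is just the leaner bookkeeping. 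One caveat shared by both arguments: the final step needs $\wh L_0(\wh\bbeta_{\rm Lasso},V_0)$ bounded away from zero, which you attribute to Assumption \ref{C1} but which that assumption (an upper bound on the density) does not literally deliver; the paper leaves the same point implicit.
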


\begin{proof}
Make the decomposition of $ L_0\left(\wh\bbeta_{{\cal V}_{\rm tr},k},{\cal V}_{\rm te}\right)- L_0\left(\wh\bbeta_{\rm lasso},{\cal V}_{\rm te}\right)$ as
\begin{eqnarray*}
\label{decomposition}
&& L_0\left(\wh\bbeta_{{\cal V}_{\rm tr},k},{\cal V}_{\rm te}\right)- L_0\left(\wh\bbeta_{\rm lasso},{\cal V}_{\rm te}\right)\nonumber\\
&=&\left\{ L_0\left(\wh\bbeta_{{\cal V}_{\rm tr},k},{\cal V}_{\rm te}\right)- L_0\left(\bbeta_0,{\cal V}_{\rm te}\right)-l_0\left(\wh\bbeta_{{\cal V}_{\rm tr},k}\right)+l_0(\bbeta_0)\right\}
+\left\{l_0\left(\wh\bbeta_{{\cal V}_{\rm tr},k}\right)-l_0(\bbeta_0)\right\}\nonumber\\
&&+\left\{l_0(\bbeta_0)-l_0\left(\wh\bbeta_{\rm lasso}\right)\right\}+\left\{l_0\left(\wh\bbeta_{\rm lasso}\right)-l_0(\bbeta_0)+ L_0(\bbeta_0,{\cal V}_{\rm te})- L_0\left(\wh\bbeta_{\rm lasso},{\cal V}_{\rm te}\right)\right\}\nonumber\\
&=&(I_1)+(I_2)+(I_3)+(I_4),
\end{eqnarray*}
where terms $(I_1)$--$(I_4)$ are self-explained  and will be expired when
the proof of this lemma is completed.
Lemma \ref{lemmaC1} entails that
\begin{eqnarray*}
P\left(\vert(I_1)\vert>8Bt\left\vert\wh\bbeta_{{\cal V}_{\rm tr},k}-
\bbeta_0\right\vert_1\right)\le4\exp\left(-2n_0t^2\right)
\end{eqnarray*}
and
\begin{eqnarray*}
P\left(\vert(I_4)\vert>8Bt\left\vert\wh\bbeta_{\rm lasso}-\bbeta_0\right\vert_1\right)\le4\exp\left(-2n_0t^2\right).
\end{eqnarray*}
Besides, $(I_2)$ and $(I_3)$ can be bounded based on Lemma \ref{lemmaC2} as
\begin{eqnarray*}
\vert(I_2)\vert\le \frac{m_0B^2}2\left\vert\wh\bbeta_{{\cal V}_{\rm tr},k}-\bbeta_0\right\vert_2^2\ {\rm and}\
\vert(I_3)\vert\le \frac{m_0B^2}2\left\vert\wh\bbeta_{\rm lasso}-\bbeta_0\right\vert_2^2.
\end{eqnarray*}
Under assumption \ref{C15}, a parallel result of Theorem \ref{theorem2.1} entails that
\begin{eqnarray*}
\vert(I_2)\vert\le \frac{m_0B^2}2\left\vert\wh\bbeta_{{\cal V}_{\rm tr},k}-\bbeta_0\right\vert_2^2&\le&m_0B^2\left(\left\vert\wh\bbeta_{{\cal V}_{\rm tr},k}-\bbeta_k^*\right\vert_2^2+\left\vert\bbeta_k^*-\bbeta_0\right\vert_2^2\right)\\
&\lesssim& \frac{s_0\log p}{n_k+n_0}+h\sqrt{\frac{\log p}{n_k+n_0}}+\left\vert\bbeta_k^*-\bbeta_0\right\vert_2^2
\end{eqnarray*}
with probability at least $1-2p^{-2}-2p^{-1}$
and (\ref{HD-QR}) yields that
\begin{eqnarray*}
\vert(I_3)\vert\lesssim \frac{s_0\log p}{n_0}
\end{eqnarray*}
with probability greater than $1-c_2p^{-2}$.
Combining the pieces, under  assumption \ref{C14}, there exists a constant $c_4$ such that
\begin{eqnarray*}
&&P\left( L_0\left(\wh\bbeta_{{\cal V}_{\rm tr},k},{\cal V}_{\rm te}\right)- L_0\left(\wh\bbeta_{\rm lasso},{\cal V}_{\rm te}\right)\ge\varepsilon_0 L_0\left(\wh\bbeta_{\rm lasso},{\cal V}_{\rm te}\right)\right)\\
&\le &P\left(\varepsilon_0 L_0\left(\wh\bbeta_{\rm lasso},{\cal V}_{\rm te}\right)\le c_4\left(\sqrt{\frac{\log p}{n_0}}\left\vert\bbeta_k^*-\bbeta_0\right\vert_1+\frac{s_0\log p}{n_k+n_0}+h\sqrt{\frac{\log p}{n_k+n_0}}+\left\vert\bbeta_k^*-\bbeta_0\right\vert_2^2+\frac{s_0\log p}{n_0}\right)\right)\\
&&+\frac{2}{p^2}+\frac{10+2c_2}p.
\end{eqnarray*}
It follows from
$$ \max\left\{h'\sqrt{\frac{\log p}{n_0}},\frac{s_0\log p}{n_0},\max_{k\in\mcT_h}\left\vert\bbeta_k^*-\bbeta_0\right\vert_2^2\right\}\lesssim \varepsilon_0 $$
that
$$P\left( L_0\left(\wh\bbeta_{{\cal V}_{\rm tr},k},{\cal V}_{\rm te}\right)- L_0\left(\wh\bbeta_{\rm lasso},{\cal V}_{\rm te}\right)\ge\varepsilon_0 L_0\left(\wh\bbeta_{\rm lasso},{\cal V}_{\rm te}\right)\right)\to 0.$$
\end{proof}

{\bf Proof of Theorem \ref{theorem4.2}}
\begin{proof}
Following from Theorem \ref{theorem4.1} and Lemma \ref{lemmaC4}, we have
\begin{eqnarray*}
P(\wh\mcT\neq \mcT_h)
&=&P\Bigg(\bigcup_{k\in \mcT_h}\left\{ L_0\left(\wh\bbeta_{{\cal V}_{\rm tr},k},{\cal V}_{\rm te}\right)- L_0\left(\wh\bbeta_{\rm lasso},{\cal V}_{\rm te}\right)\ge\varepsilon_0 L_0\left(\wh\bbeta_{\rm lasso},{\cal V}_{\rm te}\right)\right\}\\
&&\hspace{.3in}\bigcup\bigcup_{k\in\mcT_h^c}\left\{ L_0\left(\wh\bbeta_{{\cal V}_{\rm tr},k},{\cal V}_{\rm te}\right)- L_0\left(\wh\bbeta_{\rm lasso},{\cal V}_{\rm te}\right)\le\varepsilon_0 L_0\left(\wh\bbeta_{\rm lasso},{\cal V}_{\rm te}\right)\right\}\Bigg)\\
&\le&\sum_{k\in \mcT_h}P\left( L_0\left(\wh\bbeta_{{\cal V}_{\rm tr},k},{\cal V}_{\rm te}\right)- L_0\left(\wh\bbeta_{\rm lasso},{\cal V}_{\rm te}\right)\ge\varepsilon_0 L_0\left(\wh\bbeta_{\rm lasso},{\cal V}_{\rm te}\right)\right)\\
&&+\sum_{k\in \mcT_h^c}P\left( L_0(\wh\bbeta_{{\cal V}_{\rm tr},k},{\cal V}_{\rm te})- L_0\left(\wh\bbeta_{\rm lasso},{\cal V}_{\rm te}\right)\le\varepsilon_0 L_0\left(\wh\bbeta_{\rm lasso},{\cal V}_{\rm te}\right)\right)\to0,
\end{eqnarray*}
which immediately completes the proof.
\end{proof}

\section{Additional Simulations}
\label{appendixD}
\setcounter{equation}{0}
\def\theequation{D.\arabic{equation}}
\setcounter{figure}{0}
\def\thefigure{D.\arabic{figure}}
\setcounter{table}{0}
\def\thetable{D.\arabic{table}}

\subsection{Simulations for Estimation Errors}

The averaged $\ell_2$ estimation errors with heterogenous $t_2$
and Gumbel model errors are depicted in Figures \ref{t2} and \ref{gumbel},
respectively. Simulation results show that the proposed transfer learning approach also delivers favorable performances.

\subsection{Simulations for Transferability Detection}
We further report the frequency of the detected transferable sources
that exactly match the truly transferable ones
in Table \ref{correct_fit}.
It can be seen that the proposed transferability detection procedure
can recover the truly transferable sources with very high probability.
Admittedly, a larger $h$ tends to result in a lower accuracy of detection
as a looser criterion is adopted to measure
the similarity between the target and source regression parameters.
Nevertheless, we observe that the frequency of event
$\{\wh\mcT\subset\mcT_h\}$ that happened among $1000$ replications is one in each configuration considered with simulation results omitted for brevity.
As a result, the proposed transferability detection procedure
kicks all the non-transferable sources out
by adopting a conservative strategy to avoid potential effect of the negative transfer.
Meanwhile, as show in Table \ref{true_select},
the averaged size of detected transferable sources is very close
to that of $\mcT_h$, showing that the proposed transferability detection procedure
substantially utilizes the truly transferable sources.

\subsection{Simulations for Confidence Interval}
We conduct additional simulations to evaluate the finite-sample performance
of the proposed Trans approach in terms of confidence interval
for the non-signal coefficient. It is evident from Figure \ref{cp50_75} that
the coverage proportions  of the proposed Trans approach are closer to
the nominal $95\%$ than that of Non-Trans approach for different heterogenous model errors.

\subsection{Simulations for Hypothesis Test}
We further examine the size and power of hypothesis test for individual regression parameter based on Theorem \ref{theorem3.3}.
Setting the significance level as $5\%$ and quantile level as $\tau=0.75$,
we specifically consider  hypothesis test
\begin{equation}
\label{trans_hp1}
H_0:\beta_{0,1}=3 \quad {\rm versus} \quad H_1:\beta_{0,1}=a_1\  (a_1\neq 3)
\end{equation}
for the signal covariate.
It can be seen from Tables \ref{norm_power1}, \ref{t2_power1} and \ref{gumbel_power1}
that when $\mcT_h = \varnothing$, i.e., all sources are non-transferable,
Oracle-Trans (reduced to Non-Trans) and Trans approaches
deliver slightly inflated type I errors.
On the contrary, Oracle-Trans and Trans approaches perform equally well and
both can control the type I errors with test sizes
being close to $5\%$ in all scenarios considered
when $\mcT_h\neq \varnothing$, i.e., some sources are transferable.
Accordingly, the powers are substantially enhanced and
eventually arrive at one when the gaps between null and alternative
hypothesises are gradually enlarged.
The test size of Pooled-Trans approach is larger than $5\%$
when $\vert\mcT_h\vert$ is smaller, i.e., more non-transferable sources
are utilized.

We also consider hypothesis test
\begin{equation}
\label{trans_hp2}
H_0:\beta_{0,50}=0 \quad {\rm versus} \quad  H_1:\beta_{0,50}=a_{50} \ (a_{50}\neq 0)
\end{equation}
for the non-signal covariate. Analogous conclusions can be drawn from
Tables \ref{norm_power50}, \ref{t2_power50} and \ref{gumbel_power50}.

\clearpage
\newpage

\begin{table}[tbp]
\centering
\caption{The frequency of event $\{\wh\mcT=\mcT_h\}$ that happened among $1000$ replications}
{\setlength{\tabcolsep}{0.7mm}
\begin{tabular}{rrrrrrrrrrrrr}
\hline	
\hline
 &  &\multicolumn{3}{c}{Normal} && \multicolumn{3}{c}{$t_2$} && \multicolumn{3}{c}{Gumbel} \\
\cline{3-5}
\cline{7-9}
\cline{11-13}
$h$& $\vert\mcT_h\vert$ & $\tau=0.25$
&$\tau=0.50$ &$\tau=0.75$& &$\tau=0.25$ &$\tau=0.50$
&$\tau=0.75$& &$\tau=0.25$&$\tau=0.50$ &$\tau=0.75$\\
\hline
3 &   0      &    1.000   & 1.000  &  1.000       & & 1.000    & 1.000    & 1.000    & & 1.000     & 1.000    & 1.000                                                  \\
  &   5      &    0.974   & 0.978  &  0.973       & & 0.980    & 0.984    & 0.981    & & 0.975     & 0.975    & 0.986                     \\
  &   10     &    0.946   & 0.959  &  0.947       & & 0.965    & 0.969    & 0.970    & & 0.966     & 0.970    & 0.966                 \\
  &   15     &    0.953   & 0.945  &  0.950       & & 0.964    & 0.966    & 0.967    & & 0.950     & 0.942    & 0.967                     \\
  &   20     &    0.936   & 0.943  &  0.939       & & 0.955    & 0.957    & 0.957    & & 0.945     & 0.946    & 0.948                       \\
6 &   0      &    1.000   & 1.000  &  1.000       & & 1.000    & 1.000    & 1.000    & & 1.000     & 1.000    & 1.000               \\
  &   5      &    0.933   & 0.930  &  0.934       & & 0.966    & 0.965    & 0.982    & & 0.927     & 0.949    & 0.963         \\
  &   10     &    0.911   & 0.902  &  0.896       & & 0.937    & 0.949    & 0.954    & & 0.895     & 0.918    & 0.947            \\
  &   15     &    0.886   & 0.874  &  0.875       & & 0.930    & 0.925    & 0.932    & & 0.873     & 0.900    & 0.924                    \\
  &   20     &    0.869   & 0.859  &  0.869       & & 0.937    & 0.916    & 0.918    & & 0.849     & 0.893    & 0.915                       \\
12&   0      &    1.000   & 1.000  &  1.000       & & 1.000    & 1.000    & 1.000    & & 1.000     & 1.000    & 1.000              \\
  &   5      &    0.780   & 0.773  &  0.765       & & 0.902    & 0.875    & 0.906    & & 0.739     & 0.816    & 0.868       \\
  &   10     &    0.700   & 0.685  &  0.678       & & 0.872    & 0.818    & 0.852    & & 0.670     & 0.734    & 0.838    \\
  &   15     &    0.648   & 0.622  &  0.640       & & 0.844    & 0.778    & 0.826    & & 0.623     & 0.708    & 0.821              \\
  &   20     &    0.597   & 0.582  &  0.589       & & 0.827    & 0.766    & 0.808    & & 0.571     & 0.648    & 0.786                  \\
\hline
\label{correct_fit}
\end{tabular}}
\end{table}

\clearpage
\newpage

\begin{table}[tbp]
\centering
\caption{Averaged size of sources identified as transferable  based on $1000$ replications}
{\setlength{\tabcolsep}{0.7mm}
\begin{tabular}{rrrrrrrrrrrrr}
\hline	
\hline
 &  &\multicolumn{3}{c}{Normal} && \multicolumn{3}{c}{$t_2$} && \multicolumn{3}{c}{Gumbel} \\
\cline{3-5}
\cline{7-9}
\cline{11-13}
$h$& $\vert\mcT_h\vert$ & $\tau=0.25$
&$\tau=0.50$ &$\tau=0.75$& &$\tau=0.25$ &$\tau=0.50$
&$\tau=0.75$& &$\tau=0.25$&$\tau=0.50$ &$\tau=0.75$\\
\hline
3 &   0      &    0.000   & 0.000  &  0.000     & & 0.000    & 0.000    & 0.000     & & 0.000     & 0.000    & 0.000                                                   \\
  &   5      &    4.964   & 4.967  &  4.967     & & 4.975    & 4.981    & 4.977     & & 4.968     & 4.967    & 4.982                      \\
  &   10     &    9.901   & 9.937  &  9.914     & & 9.941    & 9.941    & 9.951     & & 9.941     & 9.955    & 9.957                  \\
  &   15     &    14.902  & 14.888 &  14.902    & & 14.921   & 14.923   & 14.948    & & 14.896    & 14.894   & 14.935                     \\
  &   20     &    19.840  & 19.871 &  19.866    & & 19.908   & 19.924   & 19.924    & & 19.881    & 19.895   & 19.903                       \\
6 &   0      &    0.000   & 0.000  &  0.000     & & 0.000    & 0.000    & 0.000     & & 0.000     & 0.000    & 0.000                \\
  &   5      &    4.887   & 4.898  &  4.904     & & 4.954    & 4.956    & 4.978     & & 4.890     & 4.929    & 4.945          \\
  &   10     &    9.816   & 9.789  &  9.829     & & 9.912    & 9.916    & 9.930     & & 9.782     & 9.857    & 9.881             \\
  &   15     &    14.718  & 14.692 &  14.705    & & 14.852   & 14.820   & 14.867    & & 14.727    & 14.779   & 14.837                    \\
  &   20     &    19.596  & 19.634 &  19.616    & & 19.859   & 19.780   & 19.842    & & 19.568    & 19.700   & 19.755                       \\
12&   0      &    0.000   & 0.000  &  0.000     & & 0.000    & 0.000    & 0.000     & & 0.000     & 0.000    & 0.000               \\
  &   5      &    4.524   & 4.605  &  4.550     & & 4.834    & 4.808    & 4.829     & & 4.488     & 4.702    & 4.773        \\
  &   10     &    9.115   & 9.144  &  9.095     & & 9.699    & 9.594    & 9.632     & & 9.088     & 9.386    & 9.586     \\
  &   15     &    13.585  & 13.657 &  13.585    & & 14.570   & 14.427   & 14.468    & & 13.560    & 14.074   & 14.447              \\
  &   20     &    17.979  & 18.197 &  18.147    & & 19.374   & 19.264   & 19.315    & & 17.943    & 18.778   & 19.183                  \\
\hline
\label{true_select}
\end{tabular}}
\end{table}

\clearpage
\newpage
\begin{table}[tbp]
\begin{footnotesize}
\centering
\caption{The size and power under the significance level of $5\%$ with heterogenous normal model errors for hypothesis test (\ref{trans_hp1}) at the quantile level of $\tau=0.75$
}
{\setlength{\tabcolsep}{1.1mm}
\begin{tabular}{cccccccccccc}
\hline	
\hline
&&&& \multicolumn{8}{c}{\hspace{0in}Power}\\
\cline{5-12}
$h$& Method &$\vert\mcT_h\vert$   &Size
&$a_1=1.7$  &$a_1=2.0$
&$a_1=2.3$    &$a_1=2.7$  &$a_1=3.3$
&$a_1=3.7$  &$a_1=4.0$  &$a_1=4.3$   \\
\hline
3                &Oracle-Trans   &0        &0.075          &0.998 	&0.996 	&0.940 	&0.314       &0.666 	&0.987 	&0.999 	&1.000    \\
                 &               &5        &0.044          &1.000 	&0.996 	&0.906 	&0.219       &0.609 	&0.989 	&0.998 	&1.000    \\
                 &               &10       &0.051          &0.999 	&0.991 	&0.893 	&0.215       &0.595 	&0.983 	&0.997 	&1.000    \\
                 &               &15       &0.051          &1.000 	&0.993 	&0.897 	&0.241       &0.578 	&0.975 	&0.999 	&1.000    \\
                 &               &20       &0.042          &1.000 	&0.990 	&0.888 	&0.231       &0.558 	&0.973 	&0.997 	&1.000    \\
                 &Trans          &0        &0.075          &0.998 	&0.998 	&0.929 	&0.280       &0.676 	&0.991 	&0.999 	&1.000    \\
                 &               &5        &0.049          &1.000 	&0.995 	&0.899 	&0.202       &0.617 	&0.988 	&0.998 	&1.000    \\
                 &               &10       &0.062          &1.000 	&0.986 	&0.898 	&0.223       &0.603 	&0.977 	&0.997 	&1.000    \\
                 &               &15       &0.053          &0.999 	&0.993 	&0.888 	&0.235       &0.585 	&0.977 	&0.998 	&1.000    \\
                 &               &20       &0.053          &1.000 	&0.990 	&0.896 	&0.246       &0.556 	&0.971 	&0.998 	&0.999    \\
                 &Pooled-Trans   &0        &0.174          &0.999 	&0.987 	&0.897 	&0.505       &0.123 	&0.550 	&0.889 	&0.976    \\
                 &               &5        &0.200          &1.000 	&0.989 	&0.951 	&0.569       &0.170 	&0.685 	&0.946 	&0.993    \\
                 &               &10       &0.140          &1.000 	&0.995 	&0.958 	&0.550       &0.238 	&0.820 	&0.980 	&1.000    \\
                 &               &15       &0.030          &0.999 	&0.994 	&0.915 	&0.257       &0.447 	&0.962 	&1.000 	&1.000    \\
                 &               &20       &0.051          &0.999 	&0.990 	&0.891 	&0.240       &0.568 	&0.973 	&0.997 	&1.000    \\
\hline 	
6                &Oracle-Trans   &0        &0.097           &1.000 	&0.995 	&0.938 	&0.318        &0.677 	&0.994 	&0.999 	&1.000     \\
                 &               &5        &0.051           &0.993 	&0.990 	&0.893 	&0.227        &0.631 	&0.985 	&0.999 	&1.000     \\
                 &               &10       &0.054           &0.999 	&0.991 	&0.897 	&0.246        &0.594 	&0.981 	&0.996 	&0.999     \\
                 &               &15       &0.051           &0.999 	&0.988 	&0.911 	&0.249        &0.588 	&0.973 	&0.995 	&1.000     \\
                 &               &20       &0.058           &0.997 	&0.992 	&0.897 	&0.259        &0.569 	&0.971 	&0.994 	&1.000     \\
                 &Trans          &0        &0.096           &0.996 	&0.994 	&0.938 	&0.302        &0.673 	&0.993 	&0.998 	&1.000     \\
                 &               &5        &0.056           &0.998 	&0.993 	&0.899 	&0.225        &0.617 	&0.984 	&0.997 	&1.000     \\
                 &               &10       &0.063           &0.997 	&0.990 	&0.897 	&0.246        &0.601 	&0.981 	&0.996 	&1.000     \\
                 &               &15       &0.055           &0.998 	&0.992 	&0.905 	&0.251        &0.586 	&0.975 	&0.995 	&1.000     \\
                 &               &20       &0.060           &0.996 	&0.996 	&0.900 	&0.261        &0.569 	&0.973 	&0.996 	&0.999     \\
                 &Pooled-Trans   &0        &0.157           &0.994 	&0.988 	&0.922 	&0.510        &0.117 	&0.569 	&0.881 	&0.970     \\
                 &               &5        &0.195           &0.998 	&0.995 	&0.956 	&0.609        &0.165 	&0.695 	&0.935 	&0.985     \\
                 &               &10       &0.157           &0.998 	&0.996 	&0.970 	&0.589        &0.226 	&0.816 	&0.976 	&0.996     \\
                 &               &15       &0.031           &0.996 	&0.995 	&0.901 	&0.258        &0.471 	&0.974 	&1.000 	&1.000     \\
                 &               &20       &0.060           &0.996 	&0.989 	&0.899 	&0.269        &0.564 	&0.970 	&0.997 	&1.000     \\
\hline
12               &Oracle-Trans   &0        &0.088            &0.998 	&0.996 	&0.931 	&0.318         &0.676 	&0.986 	&1.000 	&1.000      \\
                 &               &5        &0.050            &1.000 	&0.990 	&0.905 	&0.240         &0.603 	&0.984 	&0.997 	&0.999      \\
                 &               &10       &0.051            &1.000 	&0.991 	&0.888 	&0.262         &0.560 	&0.966 	&0.999 	&1.000      \\
                 &               &15       &0.049            &0.998 	&0.994 	&0.892 	&0.262         &0.530 	&0.966 	&0.997 	&1.000      \\
                 &               &20       &0.053            &0.996 	&0.989 	&0.899 	&0.275         &0.528 	&0.959 	&0.999 	&0.999      \\
                 &Trans          &0        &0.098            &1.000 	&0.994 	&0.929 	&0.312         &0.686 	&0.994 	&0.999 	&1.000      \\
                 &               &5        &0.054            &0.999 	&0.994 	&0.906 	&0.248         &0.584 	&0.976 	&0.998 	&1.000      \\
                 &               &10       &0.055            &0.998 	&0.990 	&0.891 	&0.263         &0.573 	&0.973 	&0.995 	&0.999      \\
                 &               &15       &0.057            &0.998 	&0.991 	&0.896 	&0.276         &0.547 	&0.966 	&1.000 	&0.999      \\
                 &               &20       &0.058            &0.998 	&0.992 	&0.909 	&0.266         &0.520 	&0.964 	&0.993 	&1.000      \\
                 &Pooled-Trans   &0        &0.175            &0.998 	&0.984 	&0.914 	&0.512         &0.123 	&0.556 	&0.883 	&0.977      \\
                 &               &5        &0.221            &1.000 	&0.995 	&0.953 	&0.596         &0.188 	&0.675 	&0.925 	&0.992      \\
                 &               &10       &0.196            &0.998 	&0.997 	&0.963 	&0.572         &0.215 	&0.809 	&0.973 	&0.994      \\
                 &               &15       &0.051            &1.000 	&0.993 	&0.915 	&0.272         &0.446 	&0.970 	&0.998 	&1.000      \\
                 &               &20       &0.057            &0.998 	&0.991 	&0.912 	&0.267         &0.526 	&0.962 	&0.996 	&0.999      \\
\hline
\label{norm_power1}
\end{tabular}}
\end{footnotesize}
\end{table}

\clearpage
\newpage
\begin{table}[tbp]
\begin{footnotesize}
\centering
\caption{The size and power under the significance level of $5\%$ with
heterogenous $t_2$ model errors for hypothesis test (\ref{trans_hp1})
at the quantile level of $\tau=0.75$}
{\setlength{\tabcolsep}{1.1mm}
\begin{tabular}{cccccccccccc}
\hline	
\hline
&&&& \multicolumn{8}{c}{\hspace{0in}Power}\\
\cline{5-12}
$h$& Method&$\vert\mcT_h\vert$   &Size
&$a_1=1.7$  &$a_1=2.0$
&$a_1=2.3$    &$a_1=2.7$  &$a_1=3.3$
&$a_1=3.7$  &$a_1=4.0$  &$a_1=4.3$   \\
\hline
3&Oracle-Trans   &0        &0.074             &0.927 	&0.800 	&0.530 	&0.114          &0.403 	&0.866 	&0.972 	&0.996       \\
                 &               &5        &0.047             &0.896 	&0.748 	&0.410 	&0.058          &0.314 	&0.852 	&0.972 	&0.995       \\
                 &               &10       &0.049             &0.903 	&0.744 	&0.427 	&0.069          &0.313 	&0.839 	&0.961 	&0.990       \\
                 &               &15       &0.054             &0.889 	&0.741 	&0.412 	&0.064          &0.311 	&0.830 	&0.963 	&0.991       \\
                 &               &20       &0.053             &0.895 	&0.726 	&0.410 	&0.068          &0.328 	&0.828 	&0.965 	&0.989       \\
                 &Trans          &0        &0.069             &0.932 	&0.819 	&0.531 	&0.114          &0.400 	&0.860 	&0.979 	&0.995       \\
                 &               &5        &0.045             &0.909 	&0.742 	&0.433 	&0.066          &0.333 	&0.838 	&0.972 	&0.999       \\
                 &               &10       &0.046             &0.903 	&0.758 	&0.421 	&0.064          &0.324 	&0.850 	&0.960 	&0.991       \\
                 &               &15       &0.049             &0.896 	&0.739 	&0.412 	&0.068          &0.322 	&0.831 	&0.964 	&0.995       \\
                 &               &20       &0.058             &0.897 	&0.739 	&0.406 	&0.076          &0.324 	&0.829 	&0.967 	&0.991       \\
                 &Pooled-Trans   &0        &0.113             &0.920 	&0.791 	&0.608 	&0.279          &0.085 	&0.302 	&0.573 	&0.779       \\
                 &               &5        &0.140             &0.933 	&0.841 	&0.660 	&0.303          &0.098 	&0.358 	&0.623 	&0.831       \\
                 &               &10       &0.116             &0.936 	&0.841 	&0.662 	&0.289          &0.112 	&0.426 	&0.719 	&0.896       \\
                 &               &15       &0.037             &0.900 	&0.759 	&0.455 	&0.080          &0.176 	&0.723 	&0.934 	&0.988       \\
                 &               &20       &0.048             &0.901 	&0.745 	&0.409 	&0.077          &0.331 	&0.825 	&0.970 	&0.985       \\
\hline 	
6&Oracle-Trans   &0        &0.086              &0.926 	&0.815 	&0.501 	&0.132           &0.394 	&0.872 	&0.973 	&0.991        \\
                 &               &5        &0.036              &0.902 	&0.751 	&0.421 	&0.078           &0.303 	&0.854 	&0.968 	&0.992        \\
                 &               &10       &0.040              &0.905 	&0.759 	&0.410 	&0.082           &0.311 	&0.828 	&0.963 	&0.988        \\
                 &               &15       &0.038              &0.898 	&0.754 	&0.409 	&0.085           &0.321 	&0.832 	&0.957 	&0.987        \\
                 &               &20       &0.045              &0.904 	&0.771 	&0.409 	&0.090           &0.303 	&0.828 	&0.957 	&0.989        \\
                 &Trans          &0        &0.086              &0.931 	&0.811 	&0.485 	&0.129           &0.388 	&0.883 	&0.970 	&0.995        \\
                 &               &5        &0.038              &0.909 	&0.761 	&0.412 	&0.088           &0.298 	&0.841 	&0.969 	&0.989        \\
                 &               &10       &0.035              &0.906 	&0.759 	&0.410 	&0.076           &0.308 	&0.850 	&0.965 	&0.991        \\
                 &               &15       &0.037              &0.905 	&0.756 	&0.402 	&0.079           &0.312 	&0.833 	&0.957 	&0.988        \\
                 &               &20       &0.041              &0.891 	&0.756 	&0.395 	&0.084           &0.304 	&0.837 	&0.955 	&0.987        \\
                 &Pooled-Trans   &0        &0.122              &0.915 	&0.819 	&0.602 	&0.288           &0.084 	&0.296 	&0.577 	&0.801        \\
                 &               &5        &0.136              &0.924 	&0.841 	&0.665 	&0.304           &0.106 	&0.349 	&0.638 	&0.829        \\
                 &               &10       &0.109              &0.948 	&0.851 	&0.648 	&0.314           &0.116 	&0.437 	&0.725 	&0.879        \\
                 &               &15       &0.024              &0.910 	&0.766 	&0.422 	&0.097           &0.179 	&0.721 	&0.940 	&0.984        \\
                 &               &20       &0.037              &0.911 	&0.753 	&0.410 	&0.090           &0.305 	&0.830 	&0.955 	&0.988        \\
\hline
12&Oracle-Trans   &0        &0.086               &0.923 	&0.783 	&0.522 	&0.120            &0.676 	&0.890 	&0.961 	&0.996         \\
                 &               &5        &0.043               &0.897 	&0.742 	&0.428 	&0.078            &0.603 	&0.847 	&0.953 	&0.992         \\
                 &               &10       &0.042               &0.889 	&0.741 	&0.443 	&0.079            &0.560 	&0.833 	&0.937 	&0.987         \\
                 &               &15       &0.037               &0.886 	&0.741 	&0.433 	&0.080            &0.530 	&0.827 	&0.937 	&0.995         \\
                 &               &20       &0.041               &0.892 	&0.742 	&0.424 	&0.076            &0.528 	&0.823 	&0.924 	&0.987         \\
                 &Trans          &0        &0.081               &0.920 	&0.779 	&0.505 	&0.114            &0.686 	&0.883 	&0.963 	&0.994         \\
                 &               &5        &0.042               &0.891 	&0.746 	&0.439 	&0.085            &0.584 	&0.851 	&0.952 	&0.992         \\
                 &               &10       &0.044               &0.898 	&0.746 	&0.425 	&0.076            &0.573 	&0.825 	&0.937 	&0.989         \\
                 &               &15       &0.040               &0.888 	&0.727 	&0.432 	&0.079            &0.547 	&0.827 	&0.934 	&0.991         \\
                 &               &20       &0.039               &0.888 	&0.738 	&0.426 	&0.083            &0.520 	&0.821 	&0.927 	&0.986         \\
                 &Pooled-Trans   &0        &0.104               &0.899 	&0.801 	&0.618 	&0.280            &0.123 	&0.285 	&0.550 	&0.774         \\
                 &               &5        &0.119               &0.928 	&0.829 	&0.642 	&0.330            &0.188 	&0.352 	&0.624 	&0.829         \\
                 &               &10       &0.098               &0.925 	&0.833 	&0.647 	&0.287            &0.215 	&0.446 	&0.699 	&0.900         \\
                 &               &15       &0.024               &0.882 	&0.760 	&0.443 	&0.091            &0.446 	&0.745 	&0.917 	&0.986         \\
                 &               &20       &0.035               &0.887 	&0.738 	&0.431 	&0.092            &0.526 	&0.826 	&0.929 	&0.989         \\
\hline
\label{t2_power1}
\end{tabular}}
\end{footnotesize}
\end{table}

\clearpage
\newpage
\begin{table}[tbp]
\begin{footnotesize}
\centering
\caption{The size and power under the significance level of $5\%$ with
heterogenous Gumbel model errors for hypothesis test (\ref{trans_hp1})
at the quantile level of $\tau=0.75$}
{\setlength{\tabcolsep}{1.1mm}
\begin{tabular}{cccccccccccc}
\hline	
\hline
&&&& \multicolumn{8}{c}{\hspace{0in}Power}\\
\cline{5-12}
$h$& Method&$\vert\mcT_h\vert$   &Size
&$a_1=1.7$  &$a_1=2.0$
&$a_1=2.3$    &$a_1=2.7$  &$a_1=3.3$
&$a_1=3.7$  &$a_1=4.0$  &$a_1=4.3$   \\
\hline
3                &Oracle-Trans	&0	&0.083 &0.996  &0.967  &0.811  &0.272  	&0.411 	&0.891 	&0.989 	&1.000  \\
                 &	            &5	&0.046 &0.995  &0.950  &0.713  &0.163  	&0.386 	&0.902 	&0.989 	&1.000  \\
                 &	            &10 &0.052 &0.987  &0.945  &0.730  &0.170   &0.379  &0.880  &0.985  &0.999  \\
                 &	            &15 &0.061 &0.993  &0.940  &0.734  &0.175   &0.384  &0.865  &0.981  &0.999  \\
                 &	            &20 &0.062 &0.990  &0.950  &0.724  &0.181   &0.378  &0.868  &0.984  &1.000  \\
                 &Trans	        &0  &0.033 &0.990  &0.962  &0.805  &0.247   &0.423  &0.901  &0.988  &1.000  \\
                 &	            &5	&0.045 &0.993  &0.951  &0.735  &0.163 	&0.383 	&0.903 	&0.992 	&1.000  \\
                 &	            &10 &0.054 &0.991  &0.951  &0.718  &0.176   &0.394  &0.884  &0.988  &1.000  \\
                 &	            &15 &0.063 &0.990  &0.940  &0.715  &0.178   &0.395  &0.858  &0.979  &0.997  \\
                 &	            &20 &0.061 &0.989  &0.949  &0.711  &0.178   &0.379  &0.872  &0.984  &1.000  \\
                 &Pooled-Trans	&0	&0.154 &0.987  &0.950  &0.825  &0.443 	&0.105 	&0.369 	&0.709 	&0.899  \\
                 &	            &5	&0.197 &0.993  &0.973  &0.867  &0.514 	&0.134 	&0.457 	&0.794 	&0.954  \\
                 &	            &10 &0.176 &0.994  &0.975  &0.876  &0.481   &0.158  &0.576  &0.872  &0.983  \\
                 &	            &15 &0.043 &0.991  &0.963  &0.774  &0.207   &0.242  &0.815  &0.981  &0.999  \\
                 &	            &20 &0.060 &0.988  &0.944  &0.723  &0.182   &0.375  &0.867  &0.982  &0.999  \\
\hline 	
6                &Oracle-Trans   &0        &0.103        &0.995 	&0.972 	&0.792 	&0.286     &0.456 	&0.910 	&0.992 	&0.997  \\
                 &               &5        &0.060        &0.993 	&0.952 	&0.722 	&0.151     &0.418 	&0.912 	&0.991 	&0.998  \\
                 &               &10       &0.062        &0.991 	&0.964 	&0.720 	&0.172     &0.398 	&0.884 	&0.988 	&0.998  \\
                 &               &15       &0.068        &0.991 	&0.959 	&0.716 	&0.192     &0.407 	&0.892 	&0.985 	&0.998  \\
                 &               &20       &0.069        &0.991 	&0.958 	&0.701 	&0.187     &0.399 	&0.882 	&0.987 	&0.998  \\
                 &Trans          &0        &0.113        &0.989 	&0.967 	&0.778 	&0.252     &0.443 	&0.915 	&0.991 	&1.000  \\
                 &               &5        &0.054        &0.992 	&0.961 	&0.714 	&0.151     &0.420 	&0.911 	&0.991 	&0.997  \\
                 &               &10       &0.064        &0.993 	&0.957 	&0.711 	&0.170     &0.397 	&0.892 	&0.988 	&0.998  \\
                 &               &15       &0.062        &0.989 	&0.955 	&0.711 	&0.180     &0.396 	&0.892 	&0.984 	&0.999  \\
                 &               &20       &0.066        &0.993 	&0.956 	&0.704 	&0.195     &0.388 	&0.881 	&0.985 	&0.998  \\
                 &Pooled-Trans   &0        &0.195        &0.987 	&0.953 	&0.806 	&0.454     &0.113 	&0.401 	&0.689 	&0.873  \\
                 &               &5        &0.238        &0.992 	&0.967 	&0.879 	&0.518     &0.149 	&0.476 	&0.797 	&0.929  \\
                 &               &10       &0.188        &0.997 	&0.980 	&0.884 	&0.463     &0.180 	&0.582 	&0.876 	&0.975  \\
                 &               &15       &0.051        &0.995 	&0.957 	&0.743 	&0.214     &0.267 	&0.839 	&0.982 	&0.998  \\
                 &               &20       &0.069        &0.993 	&0.956 	&0.710 	&0.193     &0.403 	&0.882 	&0.983 	&0.998  \\
\hline
12               &Oracle-Trans   &0        &0.088         &0.996 	&0.961 	&0.806 	&0.268      &0.431 	&0.896 	&0.990 	&0.997   \\
                 &               &5        &0.050         &0.993 	&0.958 	&0.741 	&0.154      &0.383 	&0.896 	&0.994 	&0.990   \\
                 &               &10       &0.051         &0.997 	&0.952 	&0.730 	&0.171      &0.365 	&0.885 	&0.990 	&0.997   \\
                 &               &15       &0.049         &0.992 	&0.948 	&0.727 	&0.184      &0.363 	&0.872 	&0.991 	&0.997   \\
                 &               &20       &0.053         &0.995 	&0.953 	&0.735 	&0.201      &0.345 	&0.865 	&0.983 	&0.998   \\
                 &Trans          &0        &0.098         &0.992 	&0.965 	&0.784 	&0.244      &0.418 	&0.909 	&0.993 	&0.997   \\
                 &               &5        &0.054         &0.993 	&0.963 	&0.732 	&0.162      &0.376 	&0.909 	&0.990 	&0.997   \\
                 &               &10       &0.055         &0.993 	&0.956 	&0.738 	&0.186      &0.362 	&0.886 	&0.987 	&0.997   \\
                 &               &15       &0.057         &0.994 	&0.948 	&0.725 	&0.196      &0.358 	&0.874 	&0.989 	&0.997   \\
                 &               &20       &0.058         &0.996 	&0.950 	&0.737 	&0.210      &0.348 	&0.875 	&0.981 	&0.996   \\
                 &Pooled-Trans   &0        &0.175         &0.990 	&0.945 	&0.819 	&0.456      &0.120 	&0.383 	&0.670 	&0.889   \\
                 &               &5        &0.221         &0.994 	&0.973 	&0.872 	&0.518      &0.157 	&0.475 	&0.771 	&0.932   \\
                 &               &10       &0.196         &0.995 	&0.973 	&0.888 	&0.481      &0.160 	&0.602 	&0.865 	&0.974   \\
                 &               &15       &0.051         &0.994 	&0.953 	&0.769 	&0.196      &0.256 	&0.816 	&0.982 	&0.996   \\
                 &               &20       &0.057         &0.990 	&0.958 	&0.740 	&0.212      &0.333 	&0.873 	&0.984 	&0.997   \\
\hline
\label{gumbel_power1}
\end{tabular}}
\end{footnotesize}
\end{table}

\clearpage
\newpage

\begin{table}[H]
\begin{footnotesize}
\centering
\caption{The size and power under the significance level of $5\%$ with
heterogenous normal model errors for hypothesis test (\ref{trans_hp2})
at the quantile level of $\tau=0.75$}
{\setlength{\tabcolsep}{2.1mm}
\begin{tabular}{cccccccccc}
\hline	
\hline
&&& &\multicolumn{6}{c}{\hspace{0in}Power}\\
\cline{5-10}
$h$& Method&$\vert\mcT_h\vert$   &Size
 &$a_{50}=-1$   &$a_{50}=-0.7$
&$a_{50}=-0.3$    &$a_{50}=0.3$ &$a_{50}=0.7$ &$a_{50}=1$ \\
\hline
3                &Oracle-Trans    &0     &0.088     &1.000 	&1.000 	&0.947    	 	&0.953 	&1.000 	&1.000         \\
                 &                &5     &0.078     &1.000 	&1.000 	&0.932    	 	&0.940 	&0.999 	&1.000         \\
                 &                &10    &0.089     &1.000 	&0.999 	&0.930    	 	&0.923 	&0.997 	&1.000         \\
                 &                &15    &0.081     &1.000 	&0.999 	&0.913    	 	&0.903 	&0.998 	&1.000         \\
                 &                &20    &0.076     &0.998 	&0.996 	&0.918    	 	&0.911 	&0.996 	&1.000         \\
                 &Trans           &0     &0.098     &1.000 	&0.999 	&0.946    	 	&0.961 	&1.000 	&1.000         \\
                 &                &5     &0.081     &1.000 	&0.999 	&0.938    	 	&0.946 	&1.000 	&1.000         \\
                 &                &10    &0.076     &0.999 	&0.999 	&0.939    	 	&0.926 	&0.999 	&0.999         \\
                 &                &15    &0.079     &1.000 	&0.998 	&0.934    	 	&0.920 	&1.000 	&1.000         \\
                 &                &20    &0.077     &1.000 	&0.997 	&0.917    	 	&0.913 	&0.996 	&1.000         \\
                 &Pooled-Trans    &0     &0.003     &0.953 	&0.693 	&0.068    	 	&0.074 	&0.697 	&0.943         \\
                 &                &5     &0.005     &0.975 	&0.843 	&0.142    	 	&0.137 	&0.872 	&0.980         \\
                 &                &10    &0.007     &0.997 	&0.937 	&0.267    	 	&0.253 	&0.949 	&0.998         \\
                 &                &15    &0.044     &1.000 	&0.998 	&0.761    	 	&0.760 	&0.997 	&1.000         \\
                 &                &20    &0.072     &0.999 	&0.999 	&0.917    	 	&0.912 	&0.998 	&0.999         \\
\hline
6                &Oracle-Trans    &0     &0.119      &1.000 	&1.000 	&0.942     	 	&0.929 	&1.000 	&1.000           \\
                 &                &5     &0.066      &1.000 	&0.995 	&0.933     	 	&0.938 	&0.996 	&1.000           \\
                 &                &10    &0.071      &1.000 	&0.999 	&0.932     	 	&0.936 	&0.995 	&1.000           \\
                 &                &15    &0.068      &0.998 	&0.999 	&0.920     	 	&0.916 	&0.998 	&0.999           \\
                 &                &20    &0.069      &0.999 	&0.998 	&0.935     	 	&0.913 	&0.997 	&1.000           \\
                 &Trans           &0     &0.097      &1.000 	&1.000 	&0.950     	 	&0.945 	&0.999 	&1.000           \\
                 &                &5     &0.064      &1.000 	&1.000 	&0.929     	 	&0.939 	&0.999 	&1.000           \\
                 &                &10    &0.059      &0.999 	&1.000 	&0.935     	 	&0.929 	&0.996 	&1.000           \\
                 &                &15    &0.060      &0.999 	&0.996 	&0.914     	 	&0.913 	&0.998 	&1.000           \\
                 &                &20    &0.067      &1.000 	&0.999 	&0.906     	 	&0.908 	&0.999 	&1.000           \\
                 &Pooled-Trans    &0     &0.000      &0.951 	&0.694 	&0.076     	 	&0.086 	&0.708 	&0.954           \\
                 &                &5     &0.003      &0.990 	&0.833 	&0.144     	 	&0.149 	&0.849 	&0.981           \\
                 &                &10    &0.010      &0.997 	&0.932 	&0.259     	 	&0.262 	&0.942 	&0.999           \\
                 &                &15    &0.035      &1.000 	&0.998 	&0.741     	 	&0.746 	&0.998 	&0.999           \\
                 &                &20    &0.074      &0.999 	&0.998 	&0.923     	 	&0.919 	&0.997 	&0.999           \\

\hline
12               &Oracle-Trans    &0     &0.099       &1.000 	&0.999 	&0.940      	 	&0.939 	&1.000 	&1.000            \\
                 &                &5     &0.059       &0.999 	&0.998 	&0.937      	 	&0.931 	&0.999 	&1.000            \\
                 &                &10    &0.044       &1.000 	&1.000 	&0.934      	 	&0.935 	&0.998 	&0.999            \\
                 &                &15    &0.058       &0.999 	&0.999 	&0.930      	 	&0.901 	&0.997 	&1.000            \\
                 &                &20    &0.066       &1.000 	&1.000 	&0.922      	 	&0.905 	&0.999 	&1.000            \\
                 &Trans           &0     &0.099       &1.000 	&1.000 	&0.956      	 	&0.946 	&1.000 	&1.000            \\
                 &                &5     &0.055       &1.000 	&0.999 	&0.934      	 	&0.925 	&0.999 	&1.000            \\
                 &                &10    &0.057       &0.999 	&1.000 	&0.944      	 	&0.919 	&0.999 	&0.999            \\
                 &                &15    &0.050       &1.000 	&0.999 	&0.928      	 	&0.909 	&0.997 	&1.000            \\
                 &                &20    &0.063       &1.000 	&0.999 	&0.923      	 	&0.911 	&0.998 	&0.999            \\
                 &Pooled-Trans    &0     &0.002       &0.951 	&0.739 	&0.070      	 	&0.073 	&0.683 	&0.939            \\
                 &                &5     &0.003       &0.979 	&0.849 	&0.128      	 	&0.116 	&0.825 	&0.990            \\
                 &                &10    &0.001       &0.996 	&0.949 	&0.240      	 	&0.243 	&0.934 	&0.996            \\
                 &                &15    &0.033       &0.999 	&0.999 	&0.736      	 	&0.705 	&0.996 	&0.999            \\
                 &                &20    &0.060       &0.999 	&0.999 	&0.916      	 	&0.899 	&0.999 	&1.000            \\
\hline
\label{norm_power50}
\end{tabular}}
\end{footnotesize}
\end{table}

\clearpage
\newpage

\begin{table}[H]
\begin{footnotesize}
\centering
\caption{The size and power under the significance level of $5\%$ with
heterogenous $t_2$ model errors for hypothesis test (\ref{trans_hp2})
at the quantile level of $\tau=0.75$}
{\setlength{\tabcolsep}{2.1mm}
\begin{tabular}{cccccccccc}
\hline	
\hline
&&& &\multicolumn{6}{c}{\hspace{0in}Power}\\
\cline{5-10}
$h$& Method&$\vert\mcT_h\vert$   &Size
&$a_{50}=-1$   &$a_{50}=-0.7$
&$a_{50}=-0.3$    &$a_{50}=0.3$ &$a_{50}=0.7$ &$a_{50}=1$ \\
\hline
3                &Oracle-Trans    &0     &0.100        &0.988 	&0.971 	&0.712       	 	&0.689 	&0.970 	&0.990           \\
                 &                &5     &0.068        &0.986 	&0.961 	&0.678       	 	&0.656 	&0.944 	&0.984           \\
                 &                &10    &0.063        &0.986 	&0.955 	&0.679       	 	&0.666 	&0.946 	&0.983           \\
                 &                &15    &0.064        &0.987 	&0.954 	&0.700       	 	&0.676 	&0.957 	&0.985           \\
                 &                &20    &0.067        &0.983 	&0.945 	&0.690       	 	&0.668 	&0.946 	&0.987           \\
                 &Trans           &0     &0.086        &0.989 	&0.960 	&0.724       	 	&0.711 	&0.963 	&0.990           \\
                 &                &5     &0.071        &0.984 	&0.964 	&0.678       	 	&0.667 	&0.955 	&0.987           \\
                 &                &10    &0.072        &0.987 	&0.954 	&0.695       	 	&0.662 	&0.947 	&0.982           \\
                 &                &15    &0.069        &0.983 	&0.943 	&0.685       	 	&0.659 	&0.949 	&0.986           \\
                 &                &20    &0.068        &0.985 	&0.947 	&0.687       	 	&0.660 	&0.935 	&0.979           \\
                 &Pooled-Trans    &0     &0.004        &0.695 	&0.363 	&0.031       	 	&0.029 	&0.332 	&0.679           \\
                 &                &5     &0.007        &0.779 	&0.479 	&0.045       	 	&0.046 	&0.396 	&0.768           \\
                 &                &10    &0.010        &0.832 	&0.566 	&0.079       	 	&0.085 	&0.525 	&0.837           \\
                 &                &15    &0.038        &0.968 	&0.894 	&0.399       	 	&0.381 	&0.864 	&0.971           \\
                 &                &20    &0.065        &0.983 	&0.944 	&0.706       	 	&0.684 	&0.939 	&0.985           \\
\hline
6                &Oracle-Trans    &0     &0.105         &0.992 	&0.969 	&0.714        	 	&0.711 	&0.971 	&0.992            \\
                 &                &5     &0.057         &0.989 	&0.958 	&0.668        	 	&0.665 	&0.942 	&0.989            \\
                 &                &10    &0.057         &0.988 	&0.965 	&0.681        	 	&0.669 	&0.949 	&0.984            \\
                 &                &15    &0.058         &0.987 	&0.954 	&0.680        	 	&0.673 	&0.942 	&0.988            \\
                 &                &20    &0.054         &0.984 	&0.954 	&0.674        	 	&0.675 	&0.935 	&0.983            \\
                 &Trans           &0     &0.103         &0.992 	&0.973 	&0.722        	 	&0.715 	&0.963 	&0.992            \\
                 &                &5     &0.056         &0.983 	&0.961 	&0.678        	 	&0.676 	&0.951 	&0.987            \\
                 &                &10    &0.050         &0.988 	&0.962 	&0.667        	 	&0.674 	&0.949 	&0.985            \\
                 &                &15    &0.057         &0.985 	&0.956 	&0.674        	 	&0.659 	&0.939 	&0.992            \\
                 &                &20    &0.063         &0.979 	&0.953 	&0.675        	 	&0.670 	&0.946 	&0.987            \\
                 &Pooled-Trans    &0     &0.001         &0.666 	&0.343 	&0.030        	 	&0.040 	&0.330 	&0.674            \\
                 &                &5     &0.004         &0.743 	&0.437 	&0.048        	 	&0.490 	&0.416 	&0.746            \\
                 &                &10    &0.008         &0.825 	&0.553 	&0.082        	 	&0.083 	&0.540 	&0.818            \\
                 &                &15    &0.045         &0.962 	&0.876 	&0.407        	 	&0.436 	&0.867 	&0.973            \\
                 &                &20    &0.057         &0.983 	&0.956 	&0.690        	 	&0.679 	&0.938 	&0.986            \\
\hline
12               &Oracle-Trans    &0     &0.096          &0.987 	&0.966 	&0.716         	 	&0.725 	&0.962 	&0.989            \\
                 &                &5     &0.055          &0.975 	&0.953 	&0.630         	 	&0.672 	&0.942 	&0.989            \\
                 &                &10    &0.052          &0.977 	&0.946 	&0.639         	 	&0.674 	&0.931 	&0.980            \\
                 &                &15    &0.046          &0.980 	&0.941 	&0.668         	 	&0.666 	&0.941 	&0.987            \\
                 &                &20    &0.062          &0.979 	&0.948 	&0.665         	 	&0.686 	&0.933 	&0.981            \\
                 &Trans           &0     &0.105          &0.988 	&0.964 	&0.731         	 	&0.711 	&0.963 	&0.987            \\
                 &                &5     &0.060          &0.980 	&0.955 	&0.623         	 	&0.759 	&0.923 	&0.981            \\
                 &                &10    &0.054          &0.976 	&0.944 	&0.649         	 	&0.664 	&0.935 	&0.984            \\
                 &                &15    &0.061          &0.982 	&0.943 	&0.665         	 	&0.666 	&0.936 	&0.978            \\
                 &                &20    &0.049          &0.983 	&0.942 	&0.646         	 	&0.683 	&0.936 	&0.982            \\
                 &Pooled-Trans    &0     &0.004          &0.681 	&0.370 	&0.029         	 	&0.029 	&0.373 	&0.663            \\
                 &                &5     &0.003          &0.749 	&0.458 	&0.058         	 	&0.062 	&0.452 	&0.734            \\
                 &                &10    &0.010          &0.821 	&0.551 	&0.090         	 	&0.090 	&0.542 	&0.819            \\
                 &                &15    &0.035          &0.956 	&0.868 	&0.387         	 	&0.404 	&0.864 	&0.956            \\
                 &                &20    &0.054          &0.983 	&0.942 	&0.655         	 	&0.675 	&0.942 	&0.978            \\
\hline
\label{t2_power50}
\end{tabular}}
\end{footnotesize}
\end{table}

\clearpage
\newpage

\begin{table}[H]
\begin{footnotesize}
\centering
\caption{The size and power under the significance level of $5\%$ with
heterogenous Gumbel model errors for hypothesis test (\ref{trans_hp2})
at the quantile level of $\tau=0.75$}
{\setlength{\tabcolsep}{2.1mm}
\begin{tabular}{cccccccccc}
\hline	
\hline
&&& &\multicolumn{6}{c}{\hspace{0in}Power}\\
\cline{5-10}
$h$& Method&$\vert\mcT_h\vert$   &Size
&$a_{50}=-1$   &$a_{50}=-0.7$
&$a_{50}=-0.3$    &$a_{50}=0.3$ &$a_{50}=0.7$ &$a_{50}=1$ \\
\hline
3                &Oracle-Trans    &0     &0.102  &1.000 	&0.997 	&0.831 	 	&0.819 	&0.999 	&1.000       \\
                 &                &5     &0.075  &1.000 	&0.997 	&0.855 	 	&0.862 	&0.998 	&1.000       \\
                 &                &10    &0.076  &1.000 	&0.993 	&0.844 	 	&0.832 	&0.993 	&1.000       \\
                 &                &15    &0.068  &0.999 	&0.993 	&0.833 	 	&0.828 	&0.996 	&0.999       \\
                 &                &20    &0.072  &1.000 	&0.994 	&0.831 	 	&0.807 	&0.994 	&1.000       \\
                 &Trans           &0     &0.107  &0.999 	&0.997 	&0.848 	 	&0.838 	&0.996 	&1.000       \\
                 &                &5     &0.084  &1.000 	&0.996 	&0.840 	 	&0.854 	&0.996 	&1.000       \\
                 &                &10    &0.076  &1.000 	&0.996 	&0.849 	 	&0.832 	&0.995 	&1.000       \\
                 &                &15    &0.075  &1.000 	&0.994 	&0.837 	 	&0.819 	&0.996 	&1.000       \\
                 &                &20    &0.071  &1.000 	&0.991 	&0.824 	 	&0.812 	&0.993 	&0.998       \\
                 &Pooled-Trans    &0     &0.001  &0.842 	&0.536 	&0.050 	 	&0.043 	&0.551 	&0.873       \\
                 &                &5     &0.008  &0.911 	&0.688 	&0.085 	 	&0.085 	&0.697 	&0.936       \\
                 &                &10    &0.012  &0.978 	&0.811 	&0.156 	 	&0.153 	&0.861 	&0.980       \\
                 &                &15    &0.039  &0.997 	&0.979 	&0.590 	 	&0.565 	&0.985 	&0.999       \\
                 &                &20    &0.076  &0.999 	&0.991 	&0.822 	 	&0.809 	&0.996 	&1.000       \\
\hline
6                &Oracle-Trans    &0     &0.091   &0.999 	&0.998 	&0.852  	 	&0.845 	&1.000 	&1.000        \\
                 &                &5     &0.071   &0.999 	&0.997 	&0.868  	 	&0.863 	&0.995 	&0.999        \\
                 &                &10    &0.055   &0.999 	&0.998 	&0.856  	 	&0.832 	&0.996 	&1.000        \\
                 &                &15    &0.050   &0.997 	&0.996 	&0.831  	 	&0.833 	&0.996 	&1.000        \\
                 &                &20    &0.062   &0.998 	&0.994 	&0.835  	 	&0.833 	&0.991 	&0.998        \\
                 &Trans           &0     &0.096   &1.000 	&0.998 	&0.864  	 	&0.857 	&0.997 	&1.000        \\
                 &                &5     &0.067   &0.996 	&0.996 	&0.862  	 	&0.852 	&0.997 	&1.000        \\
                 &                &10    &0.061   &0.998 	&0.995 	&0.853  	 	&0.833 	&0.995 	&0.998        \\
                 &                &15    &0.050   &0.997 	&0.995 	&0.831  	 	&0.839 	&0.993 	&0.999        \\
                 &                &20    &0.059   &0.998 	&0.995 	&0.829  	 	&0.835 	&0.993 	&1.000        \\
                 &Pooled-Trans    &0     &0.002   &0.853 	&0.545 	&0.063  	 	&0.051 	&0.547 	&0.863        \\
                 &                &5     &0.007   &0.926 	&0.683 	&0.105  	 	&0.092 	&0.663 	&0.931        \\
                 &                &10    &0.008   &0.976 	&0.836 	&0.172  	 	&0.154 	&0.995 	&0.981        \\
                 &                &15    &0.035   &0.998 	&0.970 	&0.601  	 	&0.572 	&0.993 	&0.999        \\
                 &                &20    &0.055   &0.997 	&0.990 	&0.838  	 	&0.828 	&0.993 	&1.000        \\
\hline
12               &Oracle-Trans    &0     &0.118    &1.000 	&0.998 	&0.841   	 	&0.843 	&0.997 	&0.999         \\
                 &                &5     &0.059    &1.000 	&0.999 	&0.847   	 	&0.836 	&0.999 	&1.000         \\
                 &                &10    &0.057    &1.000 	&0.996 	&0.845   	 	&0.810 	&0.996 	&1.000         \\
                 &                &15    &0.051    &1.000 	&0.995 	&0.857   	 	&0.819 	&0.990 	&0.999         \\
                 &                &20    &0.045    &0.999 	&0.994 	&0.843   	 	&0.820 	&0.992 	&0.999         \\
                 &Trans           &0     &0.110    &0.999 	&0.998 	&0.854   	 	&0.855 	&1.000 	&1.000         \\
                 &                &5     &0.070    &1.000 	&0.999 	&0.848   	 	&0.847 	&1.000 	&0.999         \\
                 &                &10    &0.060    &0.999 	&0.996 	&0.859   	 	&0.819 	&0.992 	&1.000         \\
                 &                &15    &0.046    &1.000 	&0.994 	&0.841   	 	&0.804 	&0.994 	&1.000         \\
                 &                &20    &0.054    &1.000 	&0.995 	&0.849   	 	&0.818 	&0.994 	&1.000         \\
                 &Pooled-Trans    &0     &0.002    &0.869 	&0.543 	&0.056   	 	&0.046 	&0.529 	&0.850         \\
                 &                &5     &0.004    &0.932 	&0.684 	&0.093   	 	&0.086 	&0.663 	&0.920         \\
                 &                &10    &0.008    &0.982 	&0.827 	&0.161   	 	&0.148 	&0.812 	&0.979         \\
                 &                &15    &0.039    &0.998 	&0.984 	&0.590   	 	&0.549 	&0.981 	&1.000         \\
                 &                &20    &0.053    &0.998 	&0.996 	&0.853   	 	&0.824 	&0.992 	&1.000         \\
\hline
\label{gumbel_power50}
\end{tabular}}
\end{footnotesize}
\end{table}

\clearpage
\newpage
\begin{figure}[tbp]
\centerline{
\includegraphics[height=0.75\textwidth,width=\textwidth]{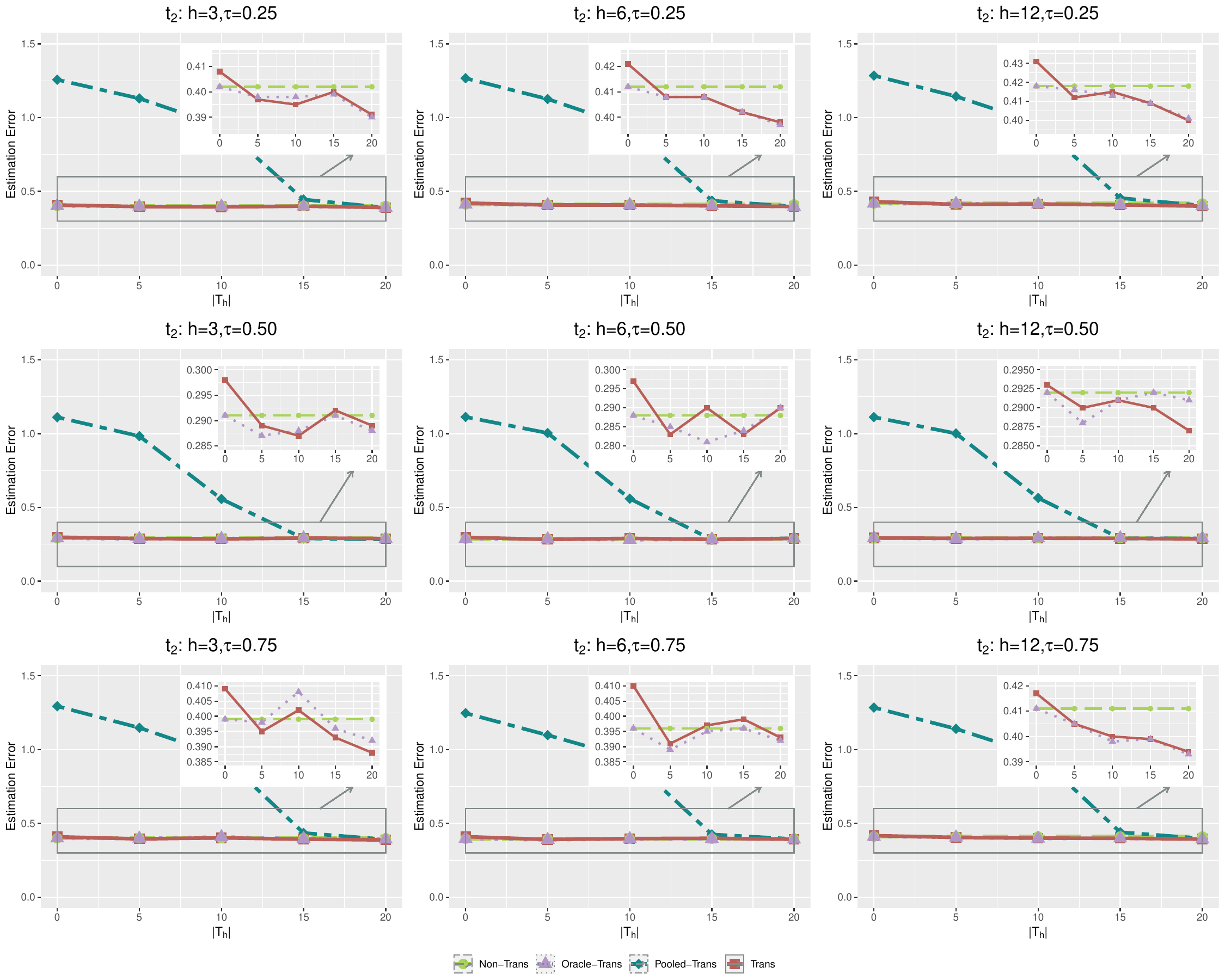}}
\caption{Averaged $\ell_2$ estimation errors with heterogenous $t_2$ model errors.}
\label{t2}
\end{figure}

\clearpage
\newpage

\begin{figure}[tbp]
\centerline{
\includegraphics[height=0.75\textwidth,width=\textwidth]{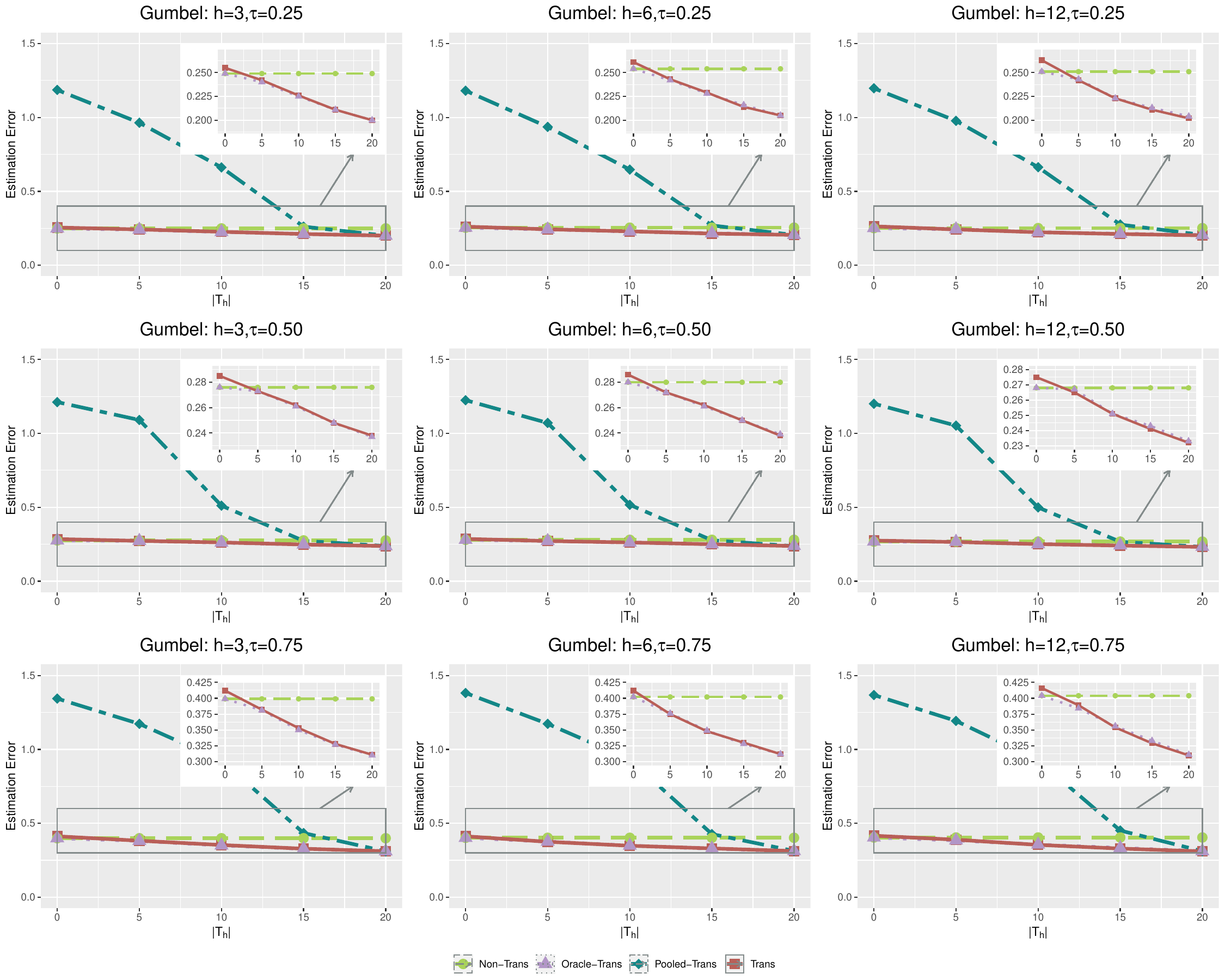}}
\caption{Averaged $\ell_2$ estimation errors with heterogenous Gumbel model errors.}
\label{gumbel}
\end{figure}

\clearpage
\newpage
\begin{figure}
\includegraphics[height=0.75\textwidth,width=\textwidth]{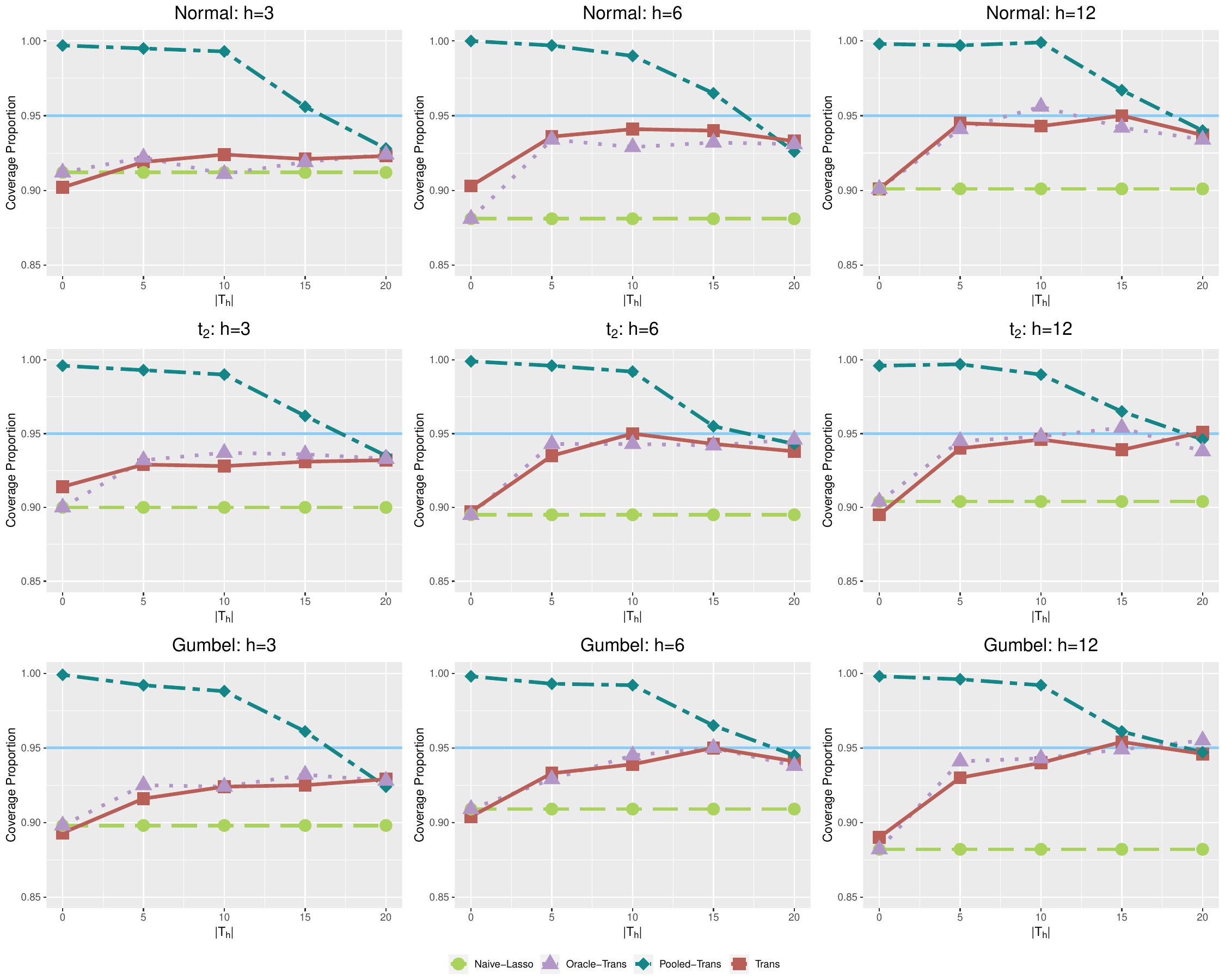}
\caption{Coverage proportion of confidence interval for the $50$-th component
 of quantile regression coefficients with $\tau=0.75$.}
\label{cp50_75}
\end{figure}

\addcontentsline{toc}{section}{References}
\bibliographystyle{mystyle}
\bibliography{ref}

\end{document}